\documentclass{article}

\PassOptionsToPackage{numbers, sort&compress}{natbib}
\usepackage[preprint]{neurips_2026}

\usepackage{graphicx}
\usepackage{subcaption}
\usepackage{booktabs}
\usepackage{wrapfig}
\usepackage{enumitem}
\usepackage{csquotes}
\usepackage{hyperref}

\usepackage[table]{xcolor}
\usepackage{xspace}

\usepackage{amsmath}

\usepackage{amssymb}
\usepackage{mathtools}
\usepackage{amsthm}

\usepackage{multirow}

\usepackage[most]{tcolorbox}
\usepackage{tabularx}
\usepackage{makecell}
\usepackage{array}
\newcommand{\eg}{{\it e.g.,~}}
\newcommand{\ie}{{\it i.e.,~}}

\usepackage[capitalize,noabbrev]{cleveref}

\usepackage[textsize=tiny]{todonotes}

\usepackage{algorithm}
\usepackage{algpseudocode}
\usepackage{placeins}

\usepackage{xcolor}
\usepackage{pifont}

\newcommand{\cmark}{\textcolor{green!60!black}{\ding{51}}}
\newcommand{\xmark}{\textcolor{red!70!black}{\ding{55}}}

\theoremstyle{plain}
\newtheorem{theorem}{Theorem}[section]
\newtheorem{proposition}[theorem]{Proposition}

\theoremstyle{definition}
\newtheorem{definition}[theorem]{Definition}

\theoremstyle{remark}

\newcommand{\xhdr}[1]{\vspace{0mm}\noindent{\bfseries #1.}}

\newcommand{\K}{\mathcal{K}}

\newcommand{\W}{\mathcal{W}}
\newcommand{\X}{\mathcal{X}}

\newcommand{\PD}{\rm{PD}}

\newcommand{\method}{\textsc{TopoTuner}\xspace}

\newcommand{\graycell}[1]{\textcolor{black!60}{#1}}

\title{TopoTuner: Topological Finetuning of Large Language Models}

\author{
  Abdulkadir Erol \\
  Department of Computer Science \\
  University of Central Florida \\
  Orlando, USA \\
  \texttt{abdulkadir.erol@ucf.edu}
  \And
  Yash Mahajan \\
  Department of Computer Science \\
  University of Central Florida \\
  Orlando, USA \\
  \texttt{yash.mahajan@ucf.edu}
  \And
  Vepaul Hariprashad \\
  Department of Computer Science \\
  University of Central Florida \\
  Orlando, USA \\
  \texttt{vepaul.hariprashad@ucf.edu}
  \And
  Baha Rababah \\
  Department of Computer Science \\
  University of Manitoba \\
  Canada \\
  \texttt{rababahb@myumanitoba.ca}
  \And
  Santu Karmaker \\
  Department of Computer Science \\
  University of Central Florida \\
  Orlando, USA \\
  \texttt{santu.karmaker@ucf.edu}
  \And
  Cuneyt G. Akcora \\
  Department of Finance \\
  University of Central Florida \\
  Orlando, USA \\
  \texttt{cuneyt.akcora@ucf.edu}
  \And
  Mubarak Shah \\
  Department of Computer Science \\
  University of Central Florida \\
  Orlando, USA \\
  \texttt{mubarak.shah@ucf.edu}
}
 
\begin{document}

\maketitle

\begin{abstract}
Full fine-tuning remains a strong way to adapt pretrained LLMs, but it updates all weights and can be expensive. LoRA reduces the number of trainable parameters, but it does not directly answer which pretrained components should be trained and which can be frozen during adaptation. We introduce \method, a topology-guided fine-tuning framework for selective freezing of attention projection matrices. \method treats each projection matrix as a row cloud and uses Wasserstein distances between persistence diagrams to measure how its topology changes during fine-tuning.

\method learns a reusable freezing profile from a source dataset and transfers it to efficiently fine-tune models on out-of-domain datasets, evaluating whether task-specific topological drift generalizes across question answering and sentiment analysis tasks.

 Across LLaMA-3.1-8B, Mistral-7B-v0.3, and Qwen3-8B-Base, \method is competitive with full fine-tuning while training only 1-2\% of the model parameters, and outperforms LoRA in 7 out of 9 model-dataset settings, which can change up to 39.57\% of the projection parameters. Along with minimized updates, \method reduces training time by 20.4\% relative to full fine-tuning and 5.5\% relative to LoRA on average. \method opens a new direction for reusable freezing profiles, where fine-tuning behavior learned on one dataset can be shared across multiple tasks.
\end{abstract}

%
%
%

\section{Introduction}

Fine-tuning~\cite{kumar2025llm,xu2026parameter} is the standard mechanism for adapting pretrained large language models (LLMs) to downstream tasks, such as question answering~\cite{yue2025survey} . Full fine-tuning updates all model weights and remains a strong baseline, but it is expensive and can damage capabilities acquired during pretraining with risks of catastrophic forgetting~\cite{luo2025empirical}, overfitting to narrow task distributions~\cite{huang2025confident}, and instability in alignment behavior~\cite{liu2025targeted}. 

Parameter-efficient methods (PEFT)~\cite{xu2026parameter}, such as LoRA~\cite{hu2022lora}, reduce the number of trainable parameters and often recover strong task performance. However, matching downstream accuracy does not answer a basic structural question: which parts of the pretrained model actually need to change during adaptation? Furthermore, although LoRA uses low-rank updates, it can still apply updates to many, or even all, weight matrices without explicit control over pretrained components. 

The question matters because fine-tuning is rarely single-task in practice~\cite{hu2025beyond}. Models are adapted across domains and often reused across datasets. A method that only controls the number of trainable parameters does not show which parts of the model change, whether the same parts change across related tasks, or at what epoch the main structural change has already occurred.  Standard summaries such as spectra, norms, and sparsity measure the size or rank structure of an update, but they do not directly describe how the structure of a weight matrix changes~\cite{shuttleworth2025lora}. 

Topology~\cite{carlsson2009topology} offers a principled way to study how model parameters reorganize during fine-tuning. We can treat each attention projection matrix as a point cloud formed by its rows and use topological data analysis~\cite{chazal2021introduction} to obtain stable multiscale summaries of this geometry. The distance between the persistence diagrams of pretrained and fine-tuned matrices then gives a matrix-level measure of structural change, which can be used to rank matrices for freezing. This view also suggests a sufficient condition for reusing such freezing profiles across tasks: the fine-tuning trajectories need not be identical; it suffices that the induced gradient fields remain close to the pretrained model.

This insight allows us to introduce \method, a topology-guided framework for measuring and controlling LLM fine-tuning. The method tracks topological drift, the structural change of each weight matrix relative to its pretrained state, and uses this signal to build reusable freezing profiles and to stop training once the main structural change has occurred. This reframes fine-tuning from a parameter-count problem into a structural-control problem: instead of asking only how many parameters to update, \method asks which pretrained components should be allowed to reorganize.

We evaluate \method on three language models across six task families: question answering, sentiment analysis, information retrieval, summarization, instruction following, and code generation. Our results show that drift profiles are model-dependent, ruling out a fixed projection-level rule across architectures. Yet, within a fixed model, profiles learned from one dataset can transfer to other tasks, reducing full fine-tuning time by $20.4\%$ and LoRA fine-tuning time by $5.5\%$ on average while preserving competitive accuracy.

Topological drift also reveals how freezing changes the adaptation path. Freezing low-drift matrices can leave the main route open and spread updates across many layers, while freezing high-drift matrices, as shown in Figure~\ref{fig:driftintro}, can block that route and force the remaining trainable matrices to absorb updates in a more concentrated pattern, reducing overfitting. This redistribution explains why drift is useful for matrix selection, task comparison, and early stopping.

%
%

\begin{figure}
    \centering
    \vspace{-35px}
    \includegraphics[width=0.9\linewidth]{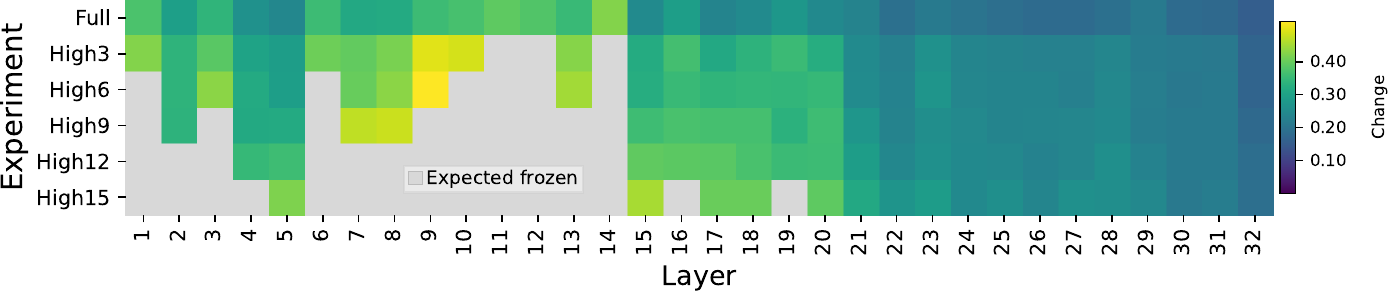}
    \caption{LLaMA $O$-projection change under high-drift Wasserstein freezing ($3$ to $15$ layers). As more matrices are frozen during training, the remaining trainable matrices absorb updates in a more concentrated manner;   selective freezing redistributes adaptation across layers.}
    \vspace{-10px}
    \label{fig:driftintro}
\end{figure}

Our contributions are as follows.
\begin{itemize}[leftmargin=*] 
\item We introduce \method, a topology-guided framework that measures how attention projection matrices reorganize during fine-tuning and uses this signal to observe and control which matrices remain trainable. \textbf{To our knowledge, this is the first work that uses topological summaries of LLM weights to construct reusable freezing profiles across tasks.}

\item We define matrix-level topological drift through Wasserstein distances between persistence diagrams of pretrained and fine-tuned weights. We show that this drift captures structural change beyond entrywise magnitude, and use it for selective matrix freezing and topology-based early stopping.

\item We compare \method against PEFT and full-fine-tuning baselines. Across transferred-knowledge experiments, \method outperforms LoRA in $7$ out of $9$ model-dataset settings while reducing wall-clock training time by $20.4\%$ on average and using $88.36\%$ fewer parameter updates than LoRA. \method stops fine-tuning after two to three epochs in most runs, reducing the six-epoch budget by $50\%$--$66.7\%$ while retaining near-final validation accuracy.
\end{itemize}


%

\begin{figure}
\vspace{-15px}
    \centering
    \includegraphics[width=\linewidth]{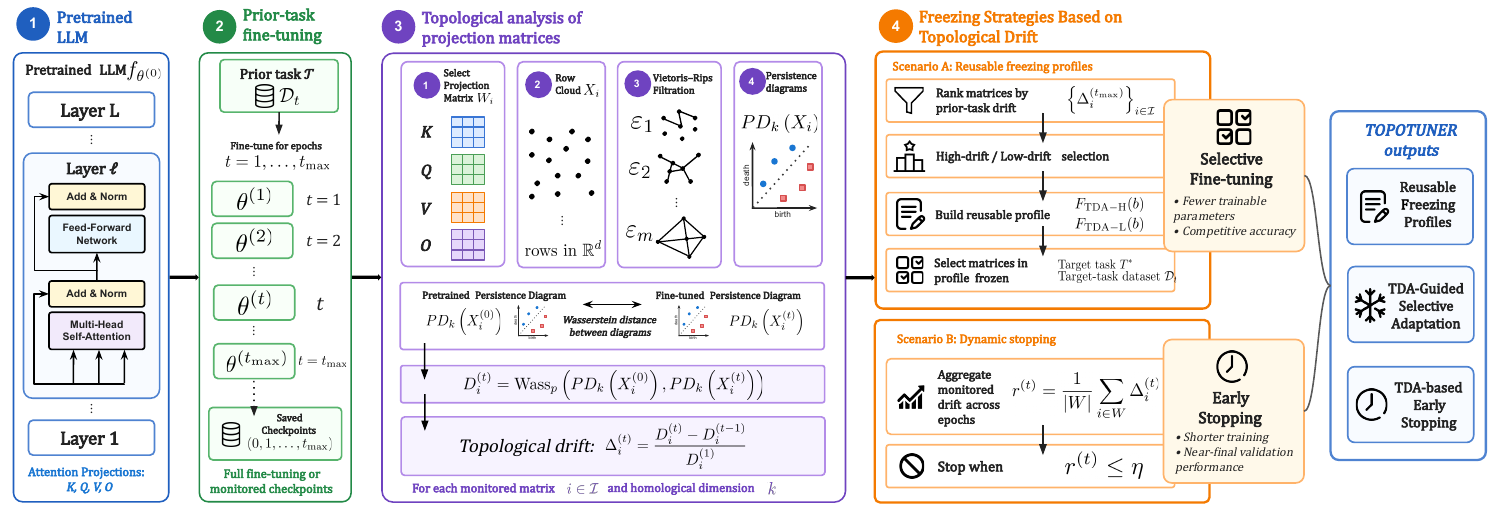}
    \caption{\textbf{Overview of \method.} \method uses a prior fine-tuning task to compute the topological drift of attention projection matrices. The computed topological drift is used to (A) build reusable freezing profiles for selective fine-tuning on new datasets and tasks, and to (B) track structural change across epochs for early stopping. Freezing profiles are provided in Appendix~\ref{app:freezing-profiles}.}
    \vspace{-20px}
    \label{fig:system}
\end{figure}

\vspace{-5px}
\section{Related Work}
\vspace{-10px}

\xhdr{Topological methods} Persistent homology (PH) has emerged as a powerful tool for analyzing neural network representations. Pérez-Fernández et al. treat a neural network as an abstract simplicial complex and extract its topological fingerprint via persistent homology~\cite{perez2021characterizing}. The authors define a PH-based descriptor that can serve as a similarity measure between trained models. Moreover, persistent homology can reveal layer-wise organizational differences. Purvine et al. analyze hidden representations of CNNs by treating each layer’s activations as a high-dimensional point cloud~\cite{purvine2023experimental}. The authors compute persistence-based distances between layers’ activation spaces and show that these distances capture meaningful differences in layer behavior. These works establish that persistent homology, via persistence diagrams and their distances, provides a rigorous framework to quantify structural similarities in deep networks.  However, to the best of our knowledge, no prior work has used topological signals to guide LLM fine-tuning via selective freezing.

\xhdr{Fine-tuning strategies} Fine-tuning large transformer models can be approached at the granularity of individual weight matrices or entire layers. A variety of matrix-level adaptation techniques, often termed parameter-efficient fine-tuning (PEFT~\cite{xu2026parameter}), restrict which parameters are updated to reduce computational cost. A prominent example is LoRA~\cite{hu2022lora}, which inserts low-rank update matrices into weight layers. Building on this idea,  “Deep LoRA”~\cite{yarascompressible} applies a low-rank decomposition to multiple transformer layers simultaneously, to confine the fine-tuning updates to a low-dimensional subspace at each layer. This strategy mitigates overfitting with limited data. On the other end of the spectrum, layer-level strategies decide which whole layers to train or freeze. Zheng et al.~\cite{zhengspurious} freeze the bottom ~$25\%$ of transformer layers, and achieve better downstream performance in continual learning scenarios. In addition to static freezing, recent methods dynamically adjust training at the layer level. A representative approach is DropBP, which randomly omits gradient backpropagation through certain layers at each update, guided by a sensitivity metric that estimates each layer’s impact on the training loss~\cite{woo2024dropbp}. Layers with lower sensitivity receive a higher drop rate, effectively being temporarily frozen during some training steps. This adaptive schedule trains only the most \textquote{critical} layers at full frequency and reduces the time costs. In contrast, we use topological drift to measure how each attention projection matrix reorganizes relative to the pretrained model, and use this signal to build reusable freezing profiles and to decide when fine-tuning has stabilized.

\vspace{-5px}
\section{Preliminaries}
\vspace{-10px}

Let $f_\theta$ denote a pretrained large language model, parameterized by a collection of weight matrices $\theta = \{ W_i \}_{i=1}^{m}$, where $W_i \in \mathbb{R}^{d_i \times d_i'}$. We use $\mathcal{I} \subseteq \{1,\dots,m\}$ to denote the index set of matrices.

\xhdr{Task and fine-tuning}
Given a downstream task $\mathcal{T}$ with training data $\mathcal{D}_{\mathcal{T}} = \{(x_r, y_r)\}_{r=1}^{N}$, fine-tuning solves $\tilde{\theta} =
\arg\min_{\vartheta}\frac{1}{N}\sum_{r=1}^{N}\ell\big(y_r,\ f_{\vartheta}(x_r)\big),$
where $\ell$ is a supervised loss function. The resulting model is denoted by $f_{\tilde{\theta}}$ with parameters $\tilde{\theta} = \{ \tilde W_i \}_{i=1}^{m}$.

\xhdr{Topological tools} Persistent homology is a tool from topological data analysis that summarizes how topology evolves across a filtration~\citep{chazal2021introduction,hensel2021survey}. Let $\X={x_1,\dots,x_n}\subset \mathbb{R}^{d}$ be a point cloud equipped with a distance function $\delta$. We construct the Vietoris--Rips filtration $\{\mathrm{VR}_\varepsilon(\X)\}_{\varepsilon \ge 0}$, where simplices are added according to pairwise distance at scale $\varepsilon$~\cite{chazal2021introduction}. For simplicity, we write this filtration as $\K_1 \subset \K_2 \subset \dots \subset \K_n$.

\xhdr{Topological features and persistence diagrams} For each complex $\K_i$ in the filtration, let $H_k(\K_i)$ denote its $k$th homology group, whose nontrivial classes represent $k$-dimensional topological features such as connected components for $k=0$, loops for $k=1$, and voids for $k=2$. As the filtration grows, features appear and disappear;  $q_j = (\textcolor{teal}{b_j}, \textcolor{orange}{d_j}) \in \PD(\X)$ represents the \textcolor{teal}{birth} and \textcolor{orange}{death} times of a topological feature in $\X$ within a Persistence Diagram ($\PD$). The $k$th persistence diagram, denoted by $\mathrm{PD}_k(\X)$, is the multiset of \textcolor{teal}{birth}-\textcolor{orange}{death} pairs for all such features. Features with larger persistence $d_j-b_j$ are often interpreted as more salient. 

\xhdr{$p$-Wasserstein distance} Given two persistence diagrams $\PD_k(\X)$ and $\PD_k(\X')$ in the same homological dimension $k$, their $p$-Wasserstein distance is
$\W{ass}_p(\PD_k(\X),\PD_k(\X')) = \min_{\phi}\left( \sum_j \|q_j-\phi(q_j)\|_\infty^p\right)^{1/p},$
where $\phi$ ranges over partial matchings with diagonal matching allowed. 
This distance measures how much the underlying topological summaries are different between two point clouds.

\vspace{-7px}
\section{Topological Characteristics of LLM Fine-tuning}
\label{sec:sec4}

Topological data analysis is based on the principle that data has shape and shape has meaning \cite{carlsson2009topology}. Accordingly, we treat each projection (i.e., weight matrix) in an LLM as a geometric object and study how its internal structure evolves during fine-tuning across epochs.  Our topology-based hypothesis is that persistence diagrams of a projection provide a topological summary of this organization, and that the distance between pre- and post-fine-tuning diagrams yields a useful projection-level measure of structural change. Our goal is to use this measure to make fine-tuning more controlled: identify which projections can be frozen, reuse those freezing profiles across datasets, and stop training once the remaining trainable projections have stabilized.

Let $f_\theta$ be a pretrained transformer with layer set $\mathcal{L} = \{1,\dots,L\}$. For each layer $l \in \mathcal{L}$, let $W_l^{(q)}$, $W_l^{(k)}$, $W_l^{(v)}$, and $W_l^{(o)}$ denote the query, key, value, and output projections. Let $\mathcal{I}$ denote the collection of matrices considered for ranking and freezing.

\xhdr{Topological sensitivity pipeline}
For each matrix $W_i \in \mathbb{R}^{d_i \times d_i'}$, we form the row cloud $X_i = \{x_1^{(i)},\dots,x_{d_i}^{(i)}\} \subset \mathbb{R}^{d_i'}$, where $x_a^{(i)}$ is the $a$th row of $W_i$.  We equip $X_i$ with a distance metric $\delta_i$ and, for each threshold $\varepsilon \ge 0$, build the graph $\mathcal{G}_\varepsilon^{(i)}$ with vertex set $X_i$ and edges between two data points $(a,b)$ whenever their distance $\delta_i(a,b) \le \varepsilon$. We use rows because they correspond to the output channels produced by the projection, which are the units whose relations we want to compare. Columns describe how one input coordinate contributes across all output channels and therefore define a different object. Promoting every clique in $\mathcal{G}_\varepsilon^{(i)}$ to a simplex gives the Vietoris--Rips complex $\mathrm{VR}_\varepsilon^{(i)}$~\cite{hensel2021survey}, and the nested family $\{\mathrm{VR}_\varepsilon^{(i)}\}_{\varepsilon \ge 0}$ defines a filtration. From this filtration we compute persistence diagrams $PD_k(X_i)$ in homological dimensions $k \in \{0,1,\ldots\}$. 

\begin{definition}[Topological distance to pretrained model]
\label{def:distance}
Let $W^{(0)}_i$ be the pretrained version of matrix $i \in \mathcal{I}$, and let $\tilde{W}_i^{(t)}$ denote its version at the fine-tuning epoch $t\ge 1$. For a fixed homological dimension $k$, we define the topological distance of matrix $i$ at $t$ by $D_i^{(t)} := \W{ass}_p(\mathrm{PD}_k(X^{(0)}_i), \mathrm{PD}_k(X_i^{(t)}))$.
\end{definition}

Distances are always computed with respect to the pretrained model (denoted as epoch $t=0$). We next define drift, which uses the distance from the pretrained model to the first epoch as a reference.

\begin{definition}[Topological drift]
\label{def:topological-drift}
Let $D_i^{(t)}$ denote the fine-tuning induced topological distance for matrix $i$ at the fine-tuning epoch $t$. We define the  topological drift at $t \geq 2$ by
$\Delta_i^{(t)} := \frac{D_i^{(t)} - D_i^{(t-1)}}{D_i^{(1)} }$.
\end{definition}

The denominator (first epoch as a reference) provides a per-layer, epoch-to-epoch scaling of change.

\xhdr{Stability}
The usefulness of $\Delta_i$ rests on the stability principle of PH: small perturbations in the underlying metric induce small perturbations in the resulting diagrams. In our setting, this means that if fine-tuning changes pairwise distances between rows only slightly, then the corresponding topological drift must also remain small. The next proposition makes this statement explicit.

\begin{proposition}[Stability of topological sensitivity]
\label{def:sensitivity}
Let $X=\{x_1,\dots,x_d\}$ and $\tilde X=\{\tilde x_1,\dots,\tilde x_d\}$ be the row clouds of $W_i$ and $\tilde W_i$, indexed consistently. Let $d$ and $\tilde d$ be the metrics used to construct the Vietoris--Rips filtrations of $X$ and $\tilde X$. If $\|d-\tilde d\|_\infty := \max_{a,b}|d(a,b)-\tilde d(a,b)| \le \eta$, then for every $k \ge 0$ we have $d_B(PD_k(X),PD_k(\tilde X)) \le \eta$, and therefore $W_p(PD_k(X),PD_k(\tilde X)) \le N_k^{1/p}\eta$, where $d_B$ is the bottleneck distance and $N_k$ is the maximum number of off-diagonal points in the two diagrams. Due to space limitations, the proof is given in Appendix~\ref{proof:proposition}.
\end{proposition}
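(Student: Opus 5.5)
The plan has two stages. First, get the bottleneck bound from the standard stability theorem for Vietoris--Rips filtrations built on a common vertex set. Second, pass from the bottleneck distance to the $p$-Wasserstein distance with a counting argument over a single matching.

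\textbf{Stage one: the bottleneck bound.} Because the rows are indexed consistently, both filtrations live on the same vertex set $\{1,\dots,d\}$. Define filtration functions on the full simplex on $d$ vertices by
\[
f(\sigma)=\max_{a,b\in\sigma} d(a,b), \qquad \tilde f(\sigma)=\max_{a,b\in\sigma}\tilde d(a,b).
\]
These are the Vietoris--Rips filtration values. If $\|d-\tilde d\|_\infty\le\eta$, then for every simplex $\sigma$ we have $|f(\sigma)-\tilde f(\sigma)|\le\eta$, since the maximum of a set of numbers is $1$-Lipschitz in sup norm.

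This gives an $\eta$-interleaving of the two filtrations:
\[
\mathrm{VR}_\varepsilon(X)\subseteq \mathrm{VR}_{\varepsilon+\eta}(\tilde X), \qquad \mathrm{VR}_\varepsilon(\tilde X)\subseteq \mathrm{VR}_{\varepsilon+\eta}(X).
\]
Applying $H_k$ yields an $\eta$-interleaving of the persistence modules. The algebraic stability (isometry) theorem of Chazal et al., or equivalently the function-stability theorem of Cohen-Steiner, Edelsbrunner and Harer for simplexwise filtrations of a finite complex, then gives $d_B(PD_k(X),PD_k(\tilde X))\le\eta$ for every $k$.

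This stage is the main conceptual step, but it reduces to citing a known theorem. The only subtlety is finiteness of persistence, which holds because the complex is finite. The one essential $H_0$ class has infinite death in both diagrams; I would either match the two infinite points to each other (their births are both $0$) or use reduced homology.

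\textbf{Stage two: the Wasserstein bound.} Take an optimal bottleneck matching $\phi$. For a finite diagram, an optimal matching exists and realizes $d_B$. In $\phi$, every off-diagonal point of either diagram is paired with a point at $\ell_\infty$-distance at most $\eta$, possibly its diagonal projection.

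The number of nontrivial pairs in $\phi$ needs care. A matching may pair off-diagonal points of both diagrams, or send points from each to the diagonal, so the number of pairs with positive cost is at most $|PD_k(X)|+|PD_k(\tilde X)|\le 2N_k$. That would only give $(2N_k)^{1/p}\eta$.

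To reach $N_k^{1/p}\eta$, I would argue as follows.
\begin{itemize}
\item Pairs with zero cost contribute nothing, so only pairs involving at least one off-diagonal point matter.
\item I would try to restructure $\phi$ so that it has at most $N_k$ non-diagonal-to-diagonal pairs. Suppose an off-diagonal point $x$ of one diagram goes to the diagonal while an off-diagonal point $y$ of the other diagram also goes to the diagonal. Replace these two pairs by the single pair $(x,y)$, whenever $\|x-y\|_\infty\le\eta$.
\end{itemize}
This restructuring is the step I expect to be the real obstacle, because it can fail: when $\|x-y\|_\infty>\eta$, the two points cannot be paired directly. If it does fail, I would fall back on the convention that $N_k$ counts the total number of off-diagonal points appearing in the matching, meaning the pairs with positive cost.

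With at most $N_k$ pairs, each of cost at most $\eta$, the Wasserstein sum satisfies
\[
\sum_j \|q_j-\phi(q_j)\|_\infty^p\le N_k\,\eta^p.
\]
The Wasserstein minimum is bounded by this particular matching's cost, so $W_p\le N_k^{1/p}\eta$.
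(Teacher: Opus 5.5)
Your Stage 1 is the paper's argument. The paper also compares the Rips values $\max_{a,b\in S}d(a,b)$ and $\max_{a,b\in S}\tilde d(a,b)$ simplexwise, gets an $\eta$-interleaving, and cites interleaving stability. The paper then disposes of Stage 2 in one sentence, asserting that a bottleneck matching ``moves at most $N_k$ off-diagonal points.'' That is exactly the count you rightly refused to take for granted.

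The restructuring you attempt cannot be completed. With $N_k=\max(|PD_k(X)|,|PD_k(\tilde X)|)$, the Wasserstein bound is false for $k\ge 1$. Take $0<\eta\le 1/4$ and two clusters $\{1,2,3,4\}$ and $\{5,6,7,8\}$, with every cross-cluster distance equal to $10$ in both metrics.

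In $d$, set $d=1$ on the sides $\{1,2\},\{2,3\},\{3,4\},\{4,1\}$ and $d=1+2\eta$ on the diagonals $\{1,3\},\{2,4\}$. Set $d=3+\eta$ on $\{5,6\},\{6,7\},\{7,8\},\{8,5\},\{5,7\}$ and $d=3+2\eta$ on $\{6,8\}$.

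In $\tilde d$, set $\tilde d=1+\eta$ on the four sides of the first cluster and on $\{1,3\}$, and $\tilde d=1+2\eta$ on $\{2,4\}$. Set $\tilde d=3$ on $\{5,6\},\{6,7\},\{7,8\},\{8,5\}$ and $\tilde d=3+2\eta$ on $\{5,7\},\{6,8\}$.

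Both are metrics, and $\|d-\tilde d\|_\infty=\eta$. The $H_1$ diagrams follow from three facts:
\begin{itemize}
\item A cluster whose four sides enter strictly before both diagonals carries one $H_1$ class. It is born at the side length and dies when the full tetrahedron enters.
\item A cluster in which one diagonal enters together with the sides is two triangles glued along that diagonal, so it has no $H_1$.
\item At scale $10$ everything becomes a full simplex.
\end{itemize}
Hence $PD_1(X)=\{(1,1+2\eta)\}$, $PD_1(\tilde X)=\{(3,3+2\eta)\}$, $N_1=1$, and $d_B=\eta$ as promised. The two points are at $\ell_\infty$-distance $2>\eta$, so your merge step is unavailable. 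The optimal matching sends both points to the diagonal, giving $W_p=2^{1/p}\eta>N_1^{1/p}\eta$ for every finite $p$.

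So no argument proves the proposition as stated for $k\ge 1$, and the paper's proof fails at exactly the step you flagged. What your Stage 2 does prove is the fallback. It should be recorded as a corrected statement, not a ``convention.'' Every pair in a bottleneck matching costs at most $d_B\le\eta$, and at most $|PD_k(X)|+|PD_k(\tilde X)|$ pairs involve an off-diagonal point. Therefore $W_p\le\bigl(|PD_k(X)|+|PD_k(\tilde X)|\bigr)^{1/p}\eta\le(2N_k)^{1/p}\eta$.

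The stated max-version is true for $k=0$, which is the case used in the paper's ranking argument and in all experiments. But it comes from a different matching than the bottleneck one:
\begin{itemize}
\item All finite $H_0$ points are $(0,\ell)$, with $\ell$ a minimum-spanning-tree edge length.
\item The threshold-graph inclusions $G_\varepsilon(d)\subseteq G_{\varepsilon+\eta}(\tilde d)$ and $G_\varepsilon(\tilde d)\subseteq G_{\varepsilon+\eta}(d)$ compare component counts. This forces the $j$-th smallest death times of the two diagrams to differ by at most $\eta$.
\item Match deaths in sorted order, sending a point to the diagonal when its partner has length $0$, and pair the two essential classes at zero cost. This gives at most $\max(|PD_0(X)|,|PD_0(\tilde X)|)$ pairs of positive cost, each at most $\eta$.
\end{itemize}
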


\section{Freezing Strategies Based on Topological Drift}
\label{sec:sec5}

We use topological drift to guide efficient adaptation and to study how matrix topology reorganizes during fine-tuning. Our framework is visualized in Figure~\ref{fig:system}. We consider two settings: prior-based freezing and dynamic stopping. The first uses a prior fine-tuning task to construct a fine-tuning profile, which is then reused for fast selective fine-tuning on new tasks and datasets. The second uses topological drift observed during the ongoing fine-tuning run to track structural change across epochs and terminate fine-tuning as early as possible. An algorithm for the freezing procedure is provided in Appendix ~\ref{app:freezing-pseudocode}. 

\subsection*{Scenario A: Fine-tuning with Freezing Priors}

We posit a subtle but important insight: reusable freezing profiles do not require identical fine-tuning trajectories across tasks; it suffices that the induced gradient fields remain close to the pretrained model, limiting drift (\ie model change) that could otherwise cause catastrophic forgetting.

Let $\theta^{(t)}$ denote the model obtained by fine-tuning the pretrained network $f_\theta$ on a single prior task $\mathcal T$ for $t =\{1,\ldots,t_{max}\}$ epochs. For each matrix  $W_i \in \mathcal I$, we first compute its prior-task topological distance 
$D_i := \W{ass}_p(\mathrm{PD}_k(X^{(0)}_i), \mathrm{PD}_k(X_i^{(t_{max})})),$
where $X_i^{(t_{max})}$ is the point cloud of the post-fine-tuning version of $W_i$. Next, we also measure the  magnitude change of $W_i$ by
$M_i := \frac{1}{|W_i|}\sum_{a,b} \left|\frac{W_{i,ab}^{(t_{max})}-W_{i,ab}^{(0)}}{|W_{i,ab}^{(0)}|}\right|$ element-wise. We use this magnitude-based change as a baseline to discuss the usefulness of a TDA-guided strategy. Using the prior-task model, we build distance-based matrix rankings that are later reused on a new task $\mathcal T^\ast$.

We now rank matrices from the prior task and reuse the resulting freezing set on the target task. For a budget $b$, let $\mathcal F_{\mathrm{prior}}^{\mathrm{TDA\text{-}H}}(b)$ be the $b$ matrices with the largest values of $D_i$, and let $\mathcal F_{\mathrm{prior}}^{\mathrm{TDA\text{-}L}}(b)$ be the $b$ matrices with the smallest values. The magnitude baseline uses the same ranking procedure with $M_i$. The selected matrices are frozen from the start of fine-tuning on $\mathcal T^\ast$. The theorem below gives a sufficient condition under which the prior-task ranking is preserved on $\mathcal T^\ast$, making the freezing profile reusable.

\begin{theorem}[Stability of reusable freezing profiles]
\label{thm:profile-stability}
Fix a pretrained transformer $f_{\theta^{(0)}}$ and a monitored matrix set $\mathcal{I}$. Let $\mathcal{T}$ be a prior task and let $\mathcal{T}^{\ast}$ be a target task. For each task, let $\theta_{\mathcal{T}}^{(t)}=\{W_{i,\mathcal{T}}^{(t)}\}_{i=1}^m$ and $\theta_{\mathcal{T}^{\ast}}^{(t)}=\{W_{i,\mathcal{T}^{\ast}}^{(t)}\}_{i=1}^m$ denote the fine-tuned parameters after $t$ optimization steps, both initialized at $\theta^{(0)}$. Let $L_{\mathcal{T}}(\theta)$ and $L_{\mathcal{T}^{\ast}}(\theta)$ be the corresponding population fine-tuning objectives, and write $G_{\mathcal{T}}(\theta)=\nabla_\theta L_{\mathcal{T}}(\theta)$ and $G_{\mathcal{T}^{\ast}}(\theta)=\nabla_\theta L_{\mathcal{T}^{\ast}}(\theta)$. Assume both trajectories remain in a ball $B(\theta^{(0)},R)$, both gradient fields are $L$-Lipschitz on this ball, and $\sup_{\theta\in B(\theta^{(0)},R)}\|G_{\mathcal{T}}(\theta)-G_{\mathcal{T}^{\ast}}(\theta)\|_2\leq\epsilon$. If both tasks are fine-tuned with the same gradient descent update and step size $\alpha$, then for every $t\leq t_{\max}$ we have $\|\theta_{\mathcal{T}}^{(t)}-\theta_{\mathcal{T}^{\ast}}^{(t)}\|_2\leq R_t$, where $R_t:=\alpha\epsilon\sum_{r=0}^{t-1}(1+\alpha L)^r$.

Assume that the row-cloud metric $\delta_i$ is fixed across tasks and stable under row perturbations. For each $i\in\mathcal{I}$ and fixed homological dimension $k$, define $D_{i,\mathcal{T}}^{(t)}:=\W{ass}_p(\mathrm{PD}_k(X_i^{(0)}),\mathrm{PD}_k(X_{i,\mathcal{T}}^{(t)}))$ and $D_{i,\mathcal{T}^{\ast}}^{(t)}:=\W{ass}_p(\mathrm{PD}_k(X_i^{(0)}),\mathrm{PD}_k(X_{i,\mathcal{T}^{\ast}}^{(t)}))$. Then $|D_{i,\mathcal{T}}^{(t)}-D_{i,\mathcal{T}^{\ast}}^{(t)}|\leq 2N_{i,k}^{1/p}R_t$, where $N_{i,k}$ is the maximum number of off-diagonal points in the two PDs for matrix $i$ in dimension $k$.

Take $k=0$ and define the prior and target ranking scores by $S_i^{\mathrm{prior}}:=D_{i,\mathcal{T}}^{(t_{\max})}$ and $S_i^{\ast}:=D_{i,\mathcal{T}^{\ast}}^{(t_{\max})}$. Let $S_{(1)}^{\mathrm{prior}}\geq \cdots \geq S_{(|\mathcal{I}|)}^{\mathrm{prior}}$ be the sorted prior scores, and let $g_H(b):=S_{(b)}^{\mathrm{prior}}-S_{(b+1)}^{\mathrm{prior}}$ be the high-drift margin for freezing budget $b$. Define $\mathcal F_{\ast}^{\mathrm{TDA\text{-}H}}(b)$ and $\mathcal F_{\ast}^{\mathrm{TDA\text{-}L}}(b)$ analogously from the target scores $S_i^{\ast}$.
 Let $\delta_{\max}:=2\max_{i\in\mathcal{I}}N_{i,0}^{1/p}R_{t_{\max}}$. If $g_H(b)>2\delta_{\max}$, then $F_{\mathrm{prior}}^{\mathrm{TDA-H}}(b)=F_{\ast}^{\mathrm{TDA-H}}(b)$.  The same statement holds for $F_{\mathrm{prior}}^{\mathrm{TDA-L}}(b)$ after sorting the scores in increasing order and using the corresponding low-drift margin. Due to space limitations, the proof is given in Appendix~\ref{proof:theorem}.
\end{theorem}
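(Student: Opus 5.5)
The proof has three stages, one for each conclusion of the theorem.

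\emph{Trajectory bound.} Write $e_t:=\|\theta_{\mathcal T}^{(t)}-\theta_{\mathcal T^\ast}^{(t)}\|_2$, with $e_0=0$ since both runs start at $\theta^{(0)}$. Subtracting the two gradient descent updates and inserting $\pm\,G_{\mathcal T}(\theta_{\mathcal T^\ast}^{(t)})$ gives
\[
e_{t+1}\le e_t+\alpha\|G_{\mathcal T}(\theta_{\mathcal T}^{(t)})-G_{\mathcal T}(\theta_{\mathcal T^\ast}^{(t)})\|_2+\alpha\|G_{\mathcal T}(\theta_{\mathcal T^\ast}^{(t)})-G_{\mathcal T^\ast}(\theta_{\mathcal T^\ast}^{(t)})\|_2\le(1+\alpha L)e_t+\alpha\epsilon .
\]
Both iterates lie in $B(\theta^{(0)},R)$, so the Lipschitz bound and the uniform gradient gap $\epsilon$ both apply. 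Unrolling this recursion from $e_0=0$ by a discrete Gr\"onwall induction gives $e_t\le\alpha\epsilon\sum_{r=0}^{t-1}(1+\alpha L)^r=R_t$.

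\emph{Distance bound.} Fix $i\in\mathcal I$. Each row of $W_{i,\mathcal T}^{(t)}-W_{i,\mathcal T^\ast}^{(t)}$ has norm at most the Frobenius norm of that difference. The Frobenius norm is in turn at most $\|\theta_{\mathcal T}^{(t)}-\theta_{\mathcal T^\ast}^{(t)}\|_2\le R_t$.

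I read ``stable under row perturbations'' as meaning that rows moving by at most $\rho$ change every pairwise distance by at most $2\rho$. This holds for the Euclidean metric by the triangle inequality. Under this reading, $\|\delta_{i,\mathcal T}-\delta_{i,\mathcal T^\ast}\|_\infty\le 2R_t$. Proposition~\ref{def:sensitivity} then yields
\[
\W{ass}_p\bigl(\PD_k(X_{i,\mathcal T}^{(t)}),\PD_k(X_{i,\mathcal T^\ast}^{(t)})\bigr)\le N_{i,k}^{1/p}\,2R_t .
\]
The reverse triangle inequality for $\W{ass}_p$, applied with the common reference diagram $\PD_k(X_i^{(0)})$, gives $|D_{i,\mathcal T}^{(t)}-D_{i,\mathcal T^\ast}^{(t)}|\le 2N_{i,k}^{1/p}R_t$.

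One point needs care here. $N_{i,k}$ must bound the matching cardinality for the pair of fine-tuned diagrams being compared, so I will take it as the maximum over the relevant diagrams. If the stability assumption is instead read with constant $1$, the factor $2$ is pure slack and the argument is unchanged.

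\emph{Ranking preservation.} Set $k=0$ and $t=t_{\max}$. The previous step gives $|S_i^{\mathrm{prior}}-S_i^\ast|\le\delta_{\max}$ for all $i$. Let $A=\mathcal F_{\mathrm{prior}}^{\mathrm{TDA\text{-}H}}(b)$. For $i\in A$ and $j\notin A$ we have $S_i^{\mathrm{prior}}\ge S_{(b)}^{\mathrm{prior}}$ and $S_j^{\mathrm{prior}}\le S_{(b+1)}^{\mathrm{prior}}$. Therefore
\[
S_i^\ast-S_j^\ast\ge g_H(b)-2\delta_{\max}>0 .
\]
Every element of $A$ thus strictly exceeds every element outside $A$ under the target scores, so the top-$b$ set under $S^\ast$ is exactly $A$, with no ties at the cutoff. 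The low-drift case follows by applying the same argument to $-S$ with the margin $S_{(b+1)}-S_{(b)}$ in increasing order.

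The main obstacle is the second stage: turning the parameter-space bound into a metric sup-norm bound. The constant depends on the unspecified notion of a ``stable'' metric, and $N_{i,k}$ must be interpreted so that Proposition~\ref{def:sensitivity} applies to the fine-tuned pair. I would fix these by the explicit convention above, namely a Lipschitz metric with constant $2$ and $N_{i,k}$ taken over all diagrams involved.
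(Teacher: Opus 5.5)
Your proposal is correct and follows the paper's proof essentially step for step. It uses the same recursion $e_{t+1}\le(1+\alpha L)e_t+\alpha\epsilon$ unrolled from $e_0=0$, the same passage from the Frobenius block bound to a $2R_t$ sup-norm bound on the row-cloud metrics, Proposition~\ref{def:sensitivity} with $\eta=2R_t$ combined with the triangle inequality through the shared pretrained diagram, and the same margin argument for the top-$b$ set. The conventions you state explicitly are the ones the paper's proof uses implicitly: a stability constant of $2$ for the metric, and $N_{i,k}$ referring to the pair of fine-tuned diagrams to which the proposition is applied.
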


\subsection*{Scenario B: Fine-tuning with Dynamic Stopping}

We now consider the second setting, where no prior fine-tuning profile is available. Let $\{\theta^{(t)}\}_{t=0}^{t_{\max}}$ denote the checkpoint sequence of a single full fine-tuning run. In this setting, our goal is to stop fine-tuning once the monitored matrices have stabilized topologically. Prior work offers evidence that \textquote{a few epochs of fine-tuning are typically sufficient to achieve strong performance}~\cite{sun2026theoretical}, with later epochs often providing limited additional gains.

Using the topological distance $D_i^{(t)}$ from Definition~\ref{def:distance}, we compute the normalized epoch-to-epoch drift $\Delta_i^{(t)}$ from Definition~\ref{def:topological-drift}. For a monitored set of matrices $\mathcal W \subseteq \mathcal I$, we average this drift into a model-level stabilization score as follows.

\begin{definition}[Topology-based stopping score]
\label{def:stopping}
Let $\mathcal W \subseteq \mathcal I$ be the monitored set of matrices. For $t \geq 2$, we define
$r^{(t)} := \frac{1}{|\mathcal W|}\sum_{i \in \mathcal W} |\Delta_i^{(t)}|$.
This score measures the average normalized topological drift of the monitored matrices between consecutive checkpoints.
\end{definition}

Given a threshold $\eta \in (0,1)$, we choose the stopping epoch $t^\ast := \min\{t \geq 2 : r^{(t)} \leq \eta\}$. If no such epoch is found within the training budget, we stop at the final epoch. The rule stops only when the average structural drift of the monitored matrices has become small; if substantial drift continues at a later epoch, then $r^{(t)}$ remains large and training continues.

\xhdr{Computational complexity} \method is computationally efficient. For each monitored matrix $W_i \in \mathbb R^{d_i \times d^\prime_i}$, computing pairwise row distances costs $O(d_i^2 d^\prime_i)$. Constructing the Vietoris--Rips filtration and computing persistence has worst-case exponential complexity in $d_i$, since the number of simplices can grow exponentially. In practice, we compute only low-dimensional persistence, \ie $H_0$ and limit the analysis to selected attention projection matrices. Across $t$ checkpoints and monitored matrices $\mathcal W$, the post-hoc topological analysis costs
$O\!\left(t \sum_{i \in \mathcal W} (d_i^2 d^\prime_i + \mathrm{PH}(d_i))\right)$, where $\mathrm{PH}(d_i)$ denotes the persistence computation cost for the row cloud of matrix $i$.

\vspace{-5px}
\section{Experiments}
\label{sec:results}
\vspace{-10px}

\xhdr{Hardware and implementation} All experiments were conducted on NVIDIA H200 GPUs and AMD EPYC 9555 64-Core Processor CPUs with our PyTorch-based code. We compute persistence diagrams with Ripser~\cite{tralie2018ripser} and evaluate diagram distances using Gudhi’s Wasserstein implementation~\cite{maria2014gudhi}. Our implementation is available as open-source code on GitHub (https://anonymous.4open.science/r/TopoTuner/)



\xhdr{Datasets} We evaluate \method on a diverse set of downstream tasks spanning question answering, sentiment analysis, information retrieval, summarization, instruction following, and code generation. Table~\ref{tab:datasets_metrics} summarizes the datasets, splits, and evaluation metrics. We use \texttt{QA} and \texttt{SA} datasets for fine-tuning and evaluation, and all twelve datasets for catastrophic forgetting after fine-tuning. For large datasets, training is capped at 20,000 examples.

\begin{wraptable}{r}{0.56\textwidth}
\vspace{-1.2em}
\centering
\caption{Datasets and evaluation metrics used in our experiments. \texttt{QA}: question answering, \texttt{SA}: sentiment analysis, IR: information retrieval, \texttt{SU}: summarization, \texttt{IF}: instruction following, \texttt{CG}: code generation.}
\resizebox{0.56\textwidth}{!}{%
\setlength{\tabcolsep}{3pt}
\begin{tabular}{lllll}
\toprule
Task & Dataset & Train size & Eval size & Evaluation metric \\
\midrule
\texttt{QA} & \texttt{GSM8K}~\cite{cobbe2021gsm8k} & 7,473 & 1,319 & Accuracy \\
\texttt{QA} & \texttt{MMLU}~\cite{hendryckstest2021, hendrycks2021ethics} & 99,842 & 14,042 & Accuracy \\
\texttt{SA} & \texttt{IMDB}~\cite{maas-EtAl:2011:ACL-HLT2011}  & 25,000 & 25,000 & Accuracy \\
\texttt{SA} & \texttt{SST-2}~\cite{socher-etal-2013-recursive}  & 67,349 & 872 & Accuracy \\
\midrule
\multicolumn{5}{l}{\textit{Forgetting}} \\
\texttt{IR} & \texttt{HotpotQA}~\cite{yang2018hotpotqa}  & 90,447 & 7,405 & EM and F1 \\
\texttt{IR} & \texttt{SQuAD v1.1}~\cite{rajpurkar-etal-2016-squad}  & 87,599 & 10,570 & EM and F1 \\
\texttt{SU} & \texttt{XSum}~\cite{Narayan2018DontGM}  & 204,045 & 11,334 & ROUGE-1, ROUGE-2, ROUGE-L \\
\texttt{SU} & \texttt{CNN/DailyMail}~\cite{see-etal-2017-get}  & 287,113 & 11,490 & ROUGE-1, ROUGE-2, ROUGE-L \\
\texttt{IF} & \texttt{Databricks-dolly-15k}~\cite{DatabricksBlog2023DollyV2}  & 15,011 & 15,011 & ROUGE-1/2/L, BLEU, METEOR \\
\texttt{IF} & \texttt{Alpaca}~\cite{alpaca}  & 52,002 & 52,002 & ROUGE-1/2/L, BLEU, METEOR \\
\texttt{CG} & \texttt{HumanEval}~\cite{chen2021evaluating}  & 164 & 164 & Pass@1 \\
\texttt{CG} & \texttt{MBPP}~\cite{austin2021program}  & 257 & 257 & Pass@1 \\
\bottomrule
\end{tabular}%
}
\label{tab:datasets_metrics}
\vspace{-1.0em}
\end{wraptable}

\xhdr{Models and training procedure} We evaluate three open-source pretrained language models: \texttt{LLaMA-3.1-8B}~\cite{grattafiori2024llama} (LLaMA), \texttt{Mistral-7B-v0.3}~\cite{jiang2023mistral} (Mistral), and \texttt{Qwen3-8B-Base}~\cite{qwen3technicalreport} (Qwen). For each model, we run both full fine-tuning and LoRA-based fine-tuning on each downstream task. We also tune the prominent models Spectrum~\cite{spectrum2024} and  DropBP~\cite{woo2024dropbp}, as explained in Appendix~\ref{app:competitors}. We use cross-entropy loss and a fixed epoch budget of six for all runs to ensure consistency across models and tasks. Persistent homology uses the cosine distance and $H_0$. The experimental setup and hyperparameters are provided in the Appendix~\ref{app:hyperparameters}.  

\xhdr{Evaluation metrics}  Our goal is to achieve the highest accuracy with the fewest parameter changes and the shortest fine-tuning time. We evaluate each method along three axes: downstream task performance, trainable-parameter efficiency, and training time. Downstream performance is measured on held-out evaluation sets using the task-specific metrics listed in Table~\ref{tab:datasets_metrics}. Efficiency is measured by i) the percentage of trainable model weights (Train. \%) and ii) the percentage of updated model weights (Upd. \%). Computational cost is measured by wall-clock training time in seconds on fixed hardware. Accuracy (Acc.) is computed on the full held-out evaluation split, while Std is estimated from three fixed, non-overlapping random subsets of the same evaluation split.

\subsection{Fine-tuning with Freezing Priors}
We use the question-answering dataset \texttt{QA:GSM8K} as the prior because it provides a standard reasoning task with clean, exact-answer evaluation. However, as Figure~\ref{fig:distances_llama_V} shows, for a fixed LLM, we find that the topological change patterns observed on \texttt{QA:GSM8K} remain consistent across the other datasets in our benchmark (topological drifts of all models on datasets in Appendix~\ref{app:topological-drift}). 

\begin{wrapfigure}{r}{0.48\textwidth}
 \vspace{-15px}
    \centering
    \includegraphics[width=0.46\textwidth]{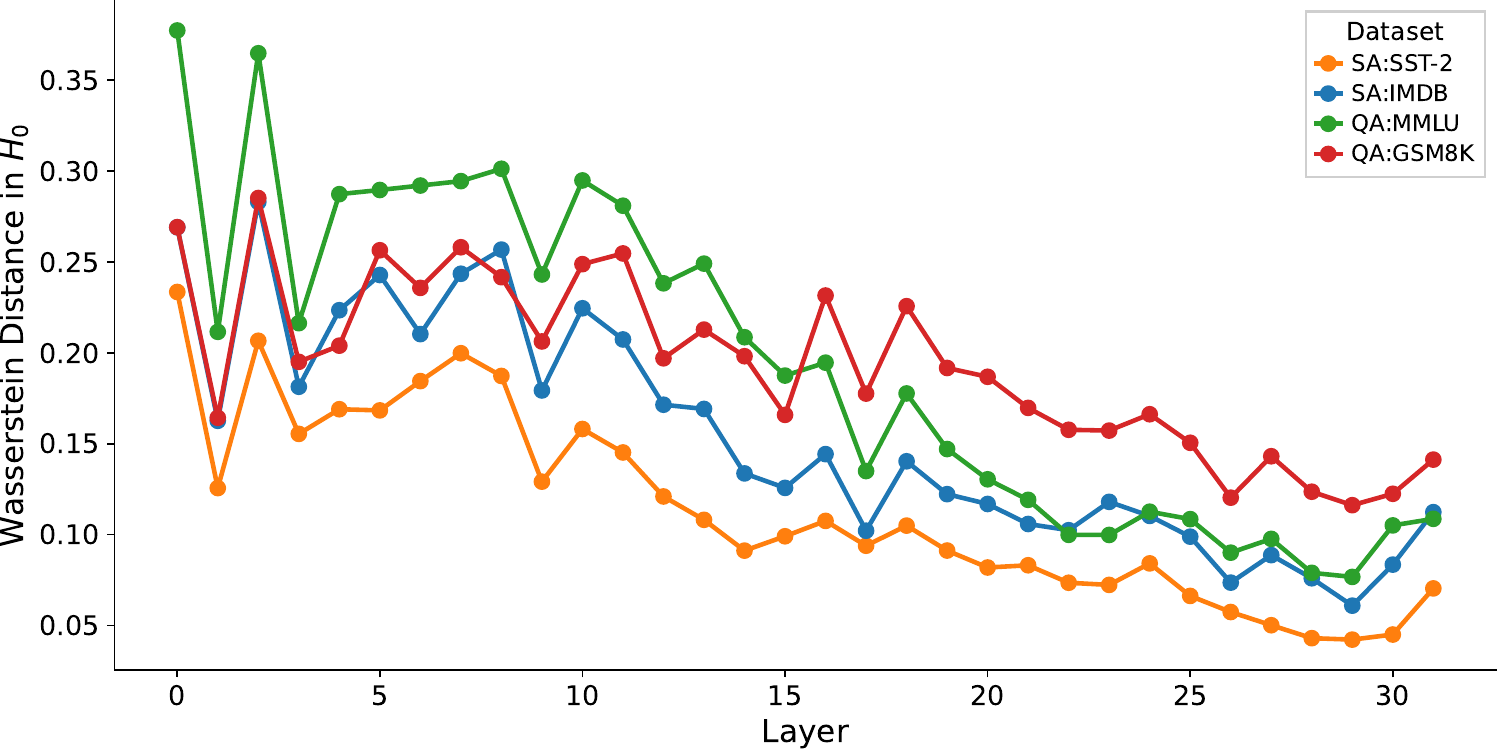}
    \caption{Epoch-6 Wasserstein $H_0$ distances for the $V$ projection of LLaMA across datasets. Full Wasserstein distance profiles across models are reported in Appendix~\ref{app:wasserstein-distances}.}
    \vspace{-15px}
\label{fig:distances_llama_V}
\end{wrapfigure}






We compare full fine-tuning, LoRA~\cite{hu2022lora}, Spectrum~\cite{spectrum2024}, DropBP~\cite{woo2024dropbp}, the magnitude-based selector, and our \method and report our metrics.

We evaluate low-drift and high-drift freezing over budgets $b\in\{3,6,9,12,15\}$. The low-drift strategy tests the hypothesis that stable matrices are less necessary for adaptation. The high-drift strategy  tests the hypothesis that blocking the dominant adaptation path can constrain task-specific movement, reduce overfitting, and force the remaining trainable matrices to absorb the update. In our experiments, \textbf{the high-drift strategy consistently yields stronger performance}. Accordingly, we omit the low-drift results from the main text for clarity and report them in Appendix~\ref{app:transferred-accuracy-curves}.

We freeze MLP weights and selectively freeze only the attention projections $K,Q,V,O$. This keeps the trainable budget comparable to PEFT methods, since MLP weights form most of the model parameters (\eg 62\% for LLaMA). Our projection-level ranks in Appendix Table~\ref{tab:eltwise-relative-avg-ranks-normalized} show that the dominant adaptation route is model-dependent: LLaMA and Mistral concentrate drift in $V/O$, while Qwen concentrates drift in $K/Q$; therefore, the candidate set includes all four attention projections.

We run both \textit{perfect-knowledge} and \textit{transferred-knowledge} experiments. The perfect-knowledge setting selects frozen projections from the same \texttt{QA:GSM8K} full-fine-tuning profile used for evaluation; this setup is mainly for diagnostics. As shown in Appendix~\ref{app:perfect-knowledge}, TDA-High3 outperforms LoRA by $2.43$ points on LLaMA and $3.59$ points on Mistral while changing only $2.64\%$--$2.82\%$ of model parameters.

\begin{table*}[!htbp]
\centering
\small
\setlength{\tabcolsep}{2pt}
\renewcommand{\arraystretch}{1.08}

\caption{\textbf{Transferred knowledge.} Fine-tuning results across \texttt{SA:SST-2}, \texttt{SA:IMDB}, and \texttt{QA:MMLU}. \method selective tuning freezes projections learned from \texttt{QA:GSM8K}. \tiny{*: LoRA trains only 0.17--0.19\% extra parameters, but after merging can affect up to 100\% of the targeted projection matrices}.}
\resizebox{0.98\textwidth}{!}{
\begin{tabular}{l|l|cccc|cccc|cccc}
\toprule
Dataset & Method
& \multicolumn{4}{c|}{\texttt{Qwen3-8B-Base}}
& \multicolumn{4}{c|}{\texttt{LLaMA-3.1-8B}}
& \multicolumn{4}{c}{\texttt{Mistral-7B-v0.3}} \\
& 
& Train.\% & Upd. \% & Acc.$\pm$Std. ($\uparrow$) & Time (min)
& Train.\% & Upd. \% &  Acc.$\pm$Std. ($\uparrow$) & Time (min)
& Train.\% & Upd. \% & Acc.$\pm$Std. ($\uparrow$) & Time (min) \\
\midrule

\multirow{5}{*}{\centering \texttt{SST-2}}
& \graycell{Full Fine-tuning}
& \graycell{100} & \graycell{13.01} & \graycell{$95.76 \pm 1.00$} & \graycell{32.7}
& \graycell{100} & \graycell{21.88} & \graycell{$96.10 \pm 0.53$} & \graycell{30.3}
& \graycell{100} & \graycell{25.00} & \graycell{$96.10 \pm 0.87$} & \graycell{32.9} \\

& LoRA
& 100* & 11.02 & $95.75 \pm 1.78$ & 24.8
& 100* & 16.62 & $95.64 \pm 2.22$ & 21.5
& 100* & 18.41 & $96.10 \pm 1.44$ & 24.7 \\

& DropBP
& 100* & 18.19 & $85.66 \pm 1.12$ & 34.0
& 100* & 16.61 & $95.53 \pm 0.34$ & 26.4
& 100* & 18.42 & $96.10 \pm 0.21$ & 34.2 \\

& Spectrum
& 17.55 & 0.47 & $92.78 \pm 0.68$ & 30.3
& 15.80 & 1.24 & $95.76 \pm 0.53$ & 26.9
& 6.71 & 1.23 & $95.53 \pm 0.92$ & 30.9 \\

& \method
& 8.45 & 1.15 & $\mathbf{95.99 \pm 0.41}$ & 23.9
& 7.57 & 2.02 & $\mathbf{95.98 \pm 1.44}$ & 21.2
& 8.39 & 2.44 & $\mathbf{96.21 \pm 0.70}$ & 24.0 \\
\midrule

\multirow{5}{*}{\centering \texttt{IMDB}}
& \graycell{Full Fine-tuning}
& \graycell{100} & \graycell{8.34} & \graycell{$82.83 \pm 0.31$} & \graycell{195.1}
& \graycell{100} & \graycell{24.95} & \graycell{$90.66 \pm 0.21$} & \graycell{172.4}
& \graycell{100} & \graycell{28.67} & \graycell{$83.10 \pm 0.18$} & \graycell{164.8} \\

& LoRA
& 100* & 18.38 & $82.89 \pm 0.35$ & 177.2
& 100* & 16.67 & $83.36 \pm 0.30$ & 153.2
& 100* & 18.49 & $82.72 \pm 0.16$ & 147.9 \\

& DropBP
& 100* & 18.29 & $70.79 \pm 0.30$ & 214.6
& 100* & 16.63 & $\mathbf{95.30 \pm 0.20}$ & 158.4
& 100* & 18.40 & $93.64 \pm 0.22$ & 164.2 \\

& Spectrum
& 17.55 & 0.82 & $81.62 \pm 0.28$ & 172.0
& 15.80 & 2.56 & $94.98 \pm 0.22$ & 150.9
& 6.71 & 1.82 & $\mathbf{94.82 \pm 0.20}$ & 143.6 \\

& \method
& 8.45 & 0.78 & $\mathbf{82.97 \pm 0.34}$ & 163.6
& 7.57 & 2.36 & $83.55 \pm 0.26$ & 143.0
& 8.39 & 2.92 & $79.76 \pm 0.30$ & 137.7 \\
\midrule

\multirow{5}{*}{\centering \texttt{MMLU}}
& \graycell{Full Fine-tuning}
& \graycell{100} & \graycell{9.39} & \graycell{$73.22 \pm 2.15$} & \graycell{157.6}
& \graycell{100} & \graycell{31.29} & \graycell{$60.03 \pm 0.05$} & \graycell{139.6}
& \graycell{100} & \graycell{35.38} & \graycell{$61.85 \pm 2.62$} & \graycell{132.2} \\

& LoRA
& 100* & 18.41 & $\mathbf{74.13 \pm 2.96}$ & 141.9
& 100* & 16.67 & $61.86 \pm 1.67$ & 122.9
& 100* & 18.49 & $56.82 \pm 2.85$ & 118.4 \\

& DropBP
& 100* & 18.19 & $37.54 \pm 0.36$ & 165.1
& 100* & 16.61 & $54.19 \pm 0.91$ & 122.3
& 100* & 18.42 & $50.01 \pm 0.51$ & 126.1 \\

& Spectrum
& 17.55 & 0.66 & $66.88 \pm 0.62$ & 143.3
& 15.80 & 2.81 & $54.79 \pm 0.94$ & 126.2
& 6.71 & 2.00 & $52.31 \pm 0.63$ & 119.0 \\

& \method
& 8.45 & 0.68 & $72.83 \pm 1.97$ & 131.7
& 7.57 & 2.49 & $\mathbf{63.75 \pm 1.79}$ & 115.4
& 8.39 & 2.98 & $\mathbf{60.68 \pm 3.06}$ & 110.3 \\

\bottomrule
\end{tabular}}
\label{tab:mainPerformance}
\end{table*}

In the second and main setting, Table~\ref{tab:mainPerformance} evaluates whether a freezing profile learned from \texttt{QA:GSM8K} transfers to new datasets (competitors are not transferred but are fine-tuned on the same dataset). In this setting, we report TDA-High3 as \method because it is the smallest high-drift budget, already improves over LoRA in $7$ out of $9$ settings, reduces training time, shows a smaller train-test generalization gap, indicating reduced overfitting (see Appendix~\ref{app:overfitting}) and keeps parameter change below $3\%$; larger budgets are reported in Appendix Tables~\ref{tab:llama38GSM8KFull}, \ref{tab:qwenGSM8KFull} and \ref{tab:mistralGSM8KFull}.  The gains are clearest for LLaMA, where \method outperforms LoRA across all three datasets, with the largest improvement on \texttt{QA:MMLU}. For Mistral, \method improves over LoRA on \texttt{SA:SST-2} and \texttt{QA:MMLU}. For Qwen, \method is strongest on \texttt{SA:SST-2} and \texttt{SA:IMDB}. Across all settings, \method is also faster than competitors and trains on (Train. \%) substantially fewer parameters, indicating that the transferred topological profile can reduce training time and parameter updates while preserving or improving accuracy in most cases.  

The forgetting results in Appendix Tables~\ref{tab:forgetting-llama-11datasets}, \ref{tab:forgetting-qwen-11datasets}, and \ref{tab:forgetting-mistral-11datasets} show the same pattern: \method often preserves stronger out-of-domain performance on \texttt{QA}, \texttt{SA}, and \texttt{IR} evaluations, with more mixed effects on generation-style tasks.




%


\subsection{Fine-tuning with Dynamic Stopping}

We next test whether topological drift can decide when fine-tuning should stop. For each fine-tuning epoch, we compute the topological drift of the monitored projection matrices from initialization and aggregate the displacement into the stopping score from Definition~\ref{def:stopping}. Training stops once the normalized epoch-to-epoch change falls below a threshold based on the model's stabilization.

\begin{wrapfigure}{r}{0.48\textwidth}
    \centering
    \includegraphics[width=0.46\textwidth]{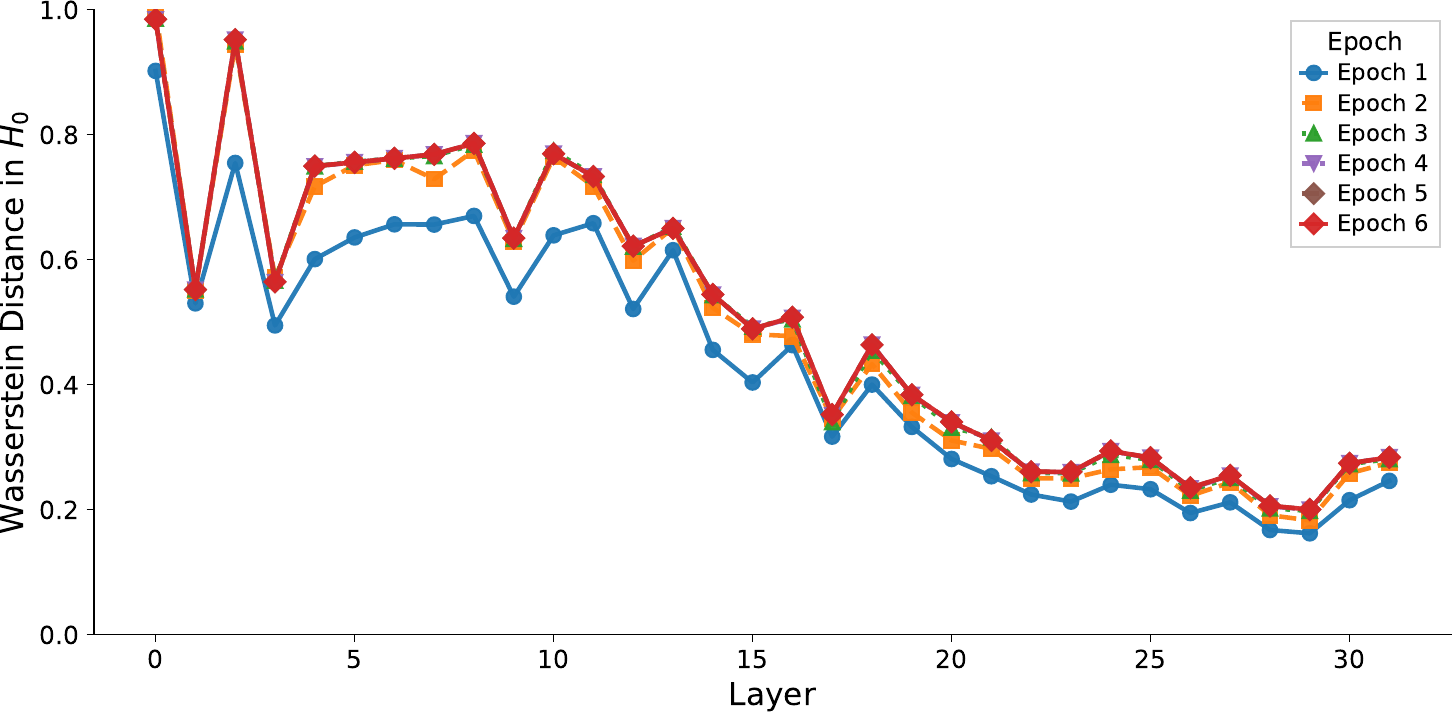}
    \caption{Evolution of Wasserstein $H_0$ distances $D^{(t)}_i$ over layers for the $V$ projection of LLaMA fully fine-tuned on \texttt{QA:MMLU}. Each line is a distance to the pretrained model.}
    \vspace{-10px}
    \label{fig:dynamic}
\end{wrapfigure}

Figure~\ref{fig:dynamic} shows the Wasserstein distance $D^{(t)}_i$ across epochs for LLaMA fully fine-tuned on \texttt{QA:MMLU}. The first epoch produces the largest change in early layers, while subsequent epochs contribute little additional change there.   Figure~\ref{fig:llama31-main-driftbars} shows the corresponding epoch-to-epoch drift bars for LLaMA on \texttt{QA:MMLU}, \texttt{SA:SST-2}, and \texttt{SA:IMDB} (see other models in the Appendix~\ref{app:topological-drift}). Drift values in Figure~\ref{fig:llama31-main-driftbars} show that fine-tuning on \texttt{SA:SST-2}, and \texttt{SA:IMDB} can stop after the second epoch. The accuracy curves in Appendix Figure~\ref{fig:transferred-accuracy-curves} yield the same conclusion regarding performance. Accuracy rises early and then moves within a narrow range, while the topological drift has already decayed. Training beyond this point mostly involves small residual updates.

\begin{figure}[!htbp]
\centering
\setlength{\tabcolsep}{2pt}

\begin{tabular}{ccc}
\includegraphics[width=0.32\linewidth]{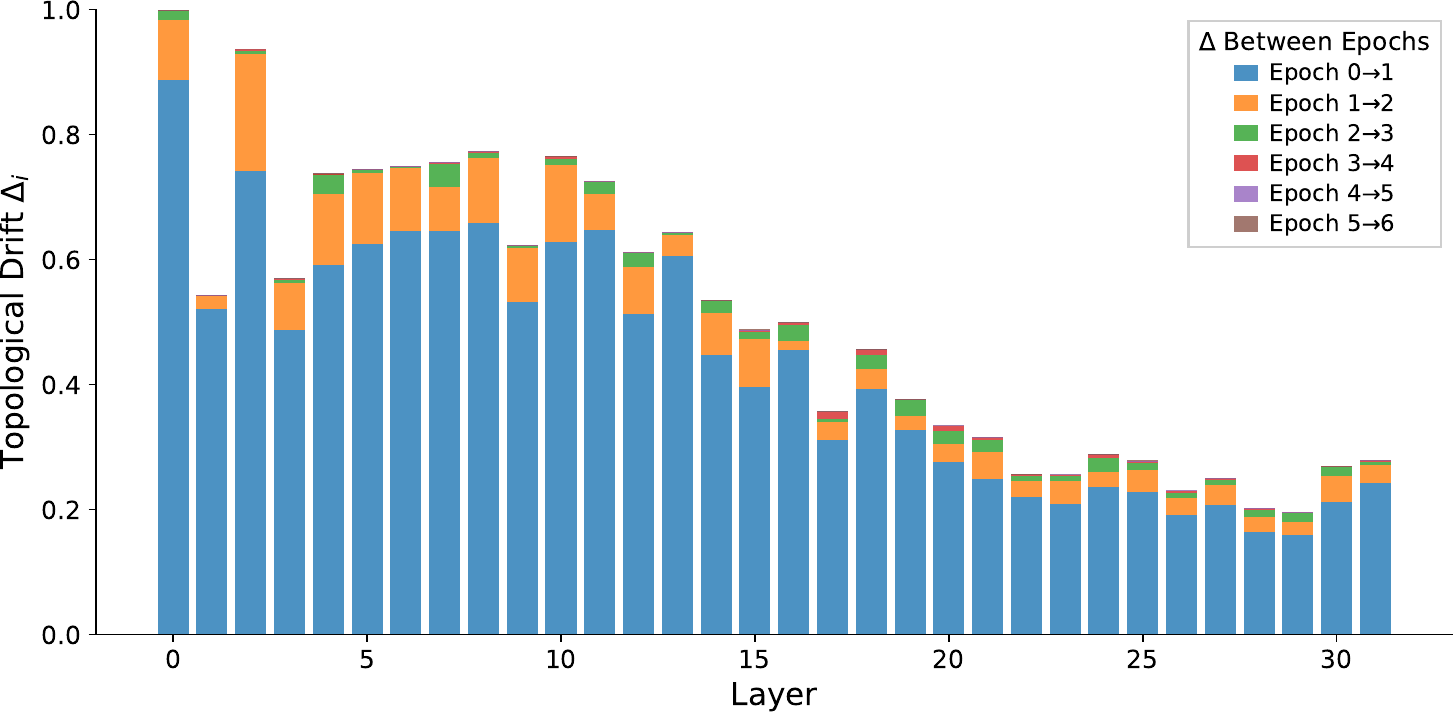} &
\includegraphics[width=0.32\linewidth]{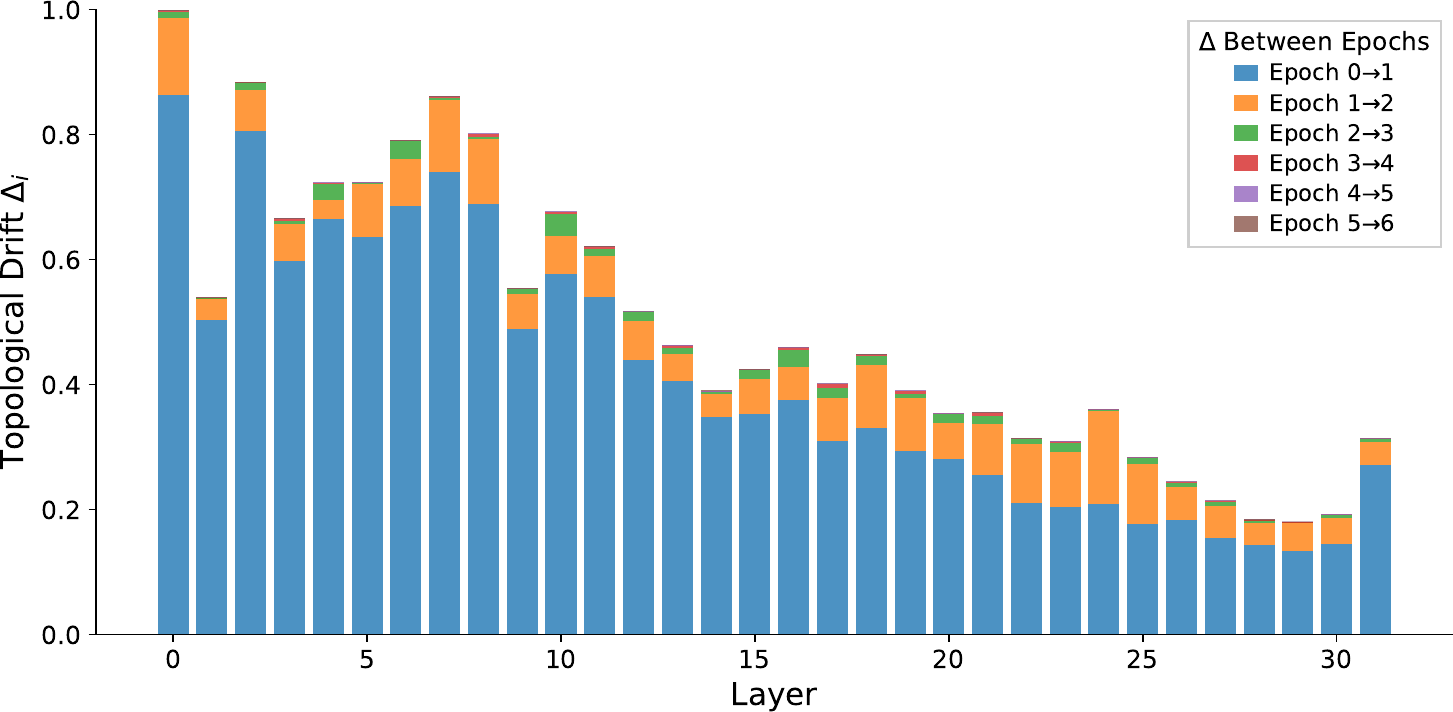} &
\includegraphics[width=0.32\linewidth]{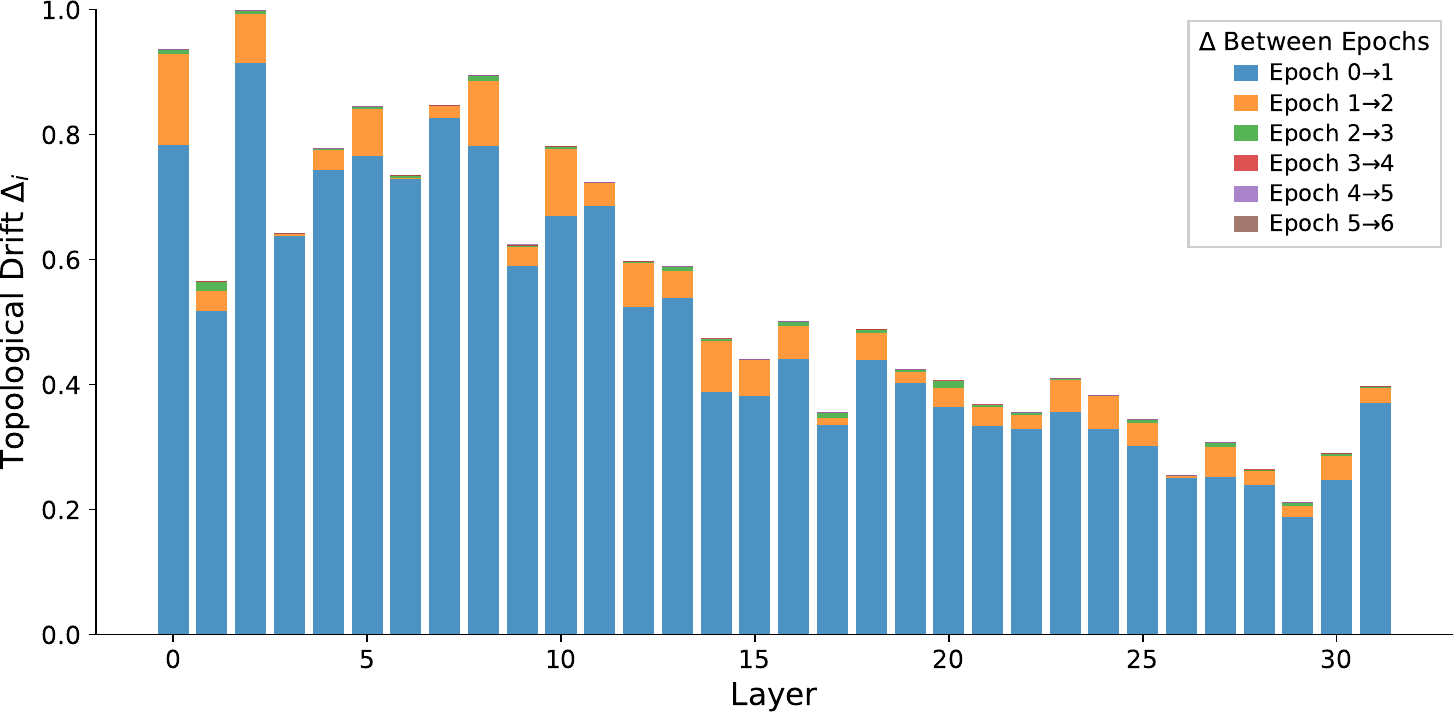} \\

\small \texttt{QA:MMLU} & \small \texttt{SA:SST-2} & \small \texttt{SA:IMDB} \\[0.4em]
\end{tabular}
\caption{Epoch-to-epoch topological drift bars for the $V$ projections of LLaMA under $H_0$ during full fine-tuning. Most layers stabilize by epoch $2$, with subsequent epochs showing only small residual changes. See all datasets and models in Appendix~\ref{app:topological-drift}.}
\label{fig:llama31-main-driftbars}
\end{figure}

Figure~\ref{fig:eta-stop-sst2-only} shows how the selected stopping epoch changes with $\eta$ on \texttt{SA:SST-2}. The rule first stops at epoch 3 for LLaMA and Qwen at $\eta=0.05$, and reaches epoch-2 stopping for Qwen at $\eta=0.15$. Appendix~\ref{app:dynamic-stopping} provides the stopping-epoch curves for \texttt{SA:IMDB} and \texttt{QA:MMLU} across the three model families.

\begin{wrapfigure}{r}{0.4\textwidth}
    \centering
    \vspace{-10px}
    \includegraphics[width=0.36\textwidth]{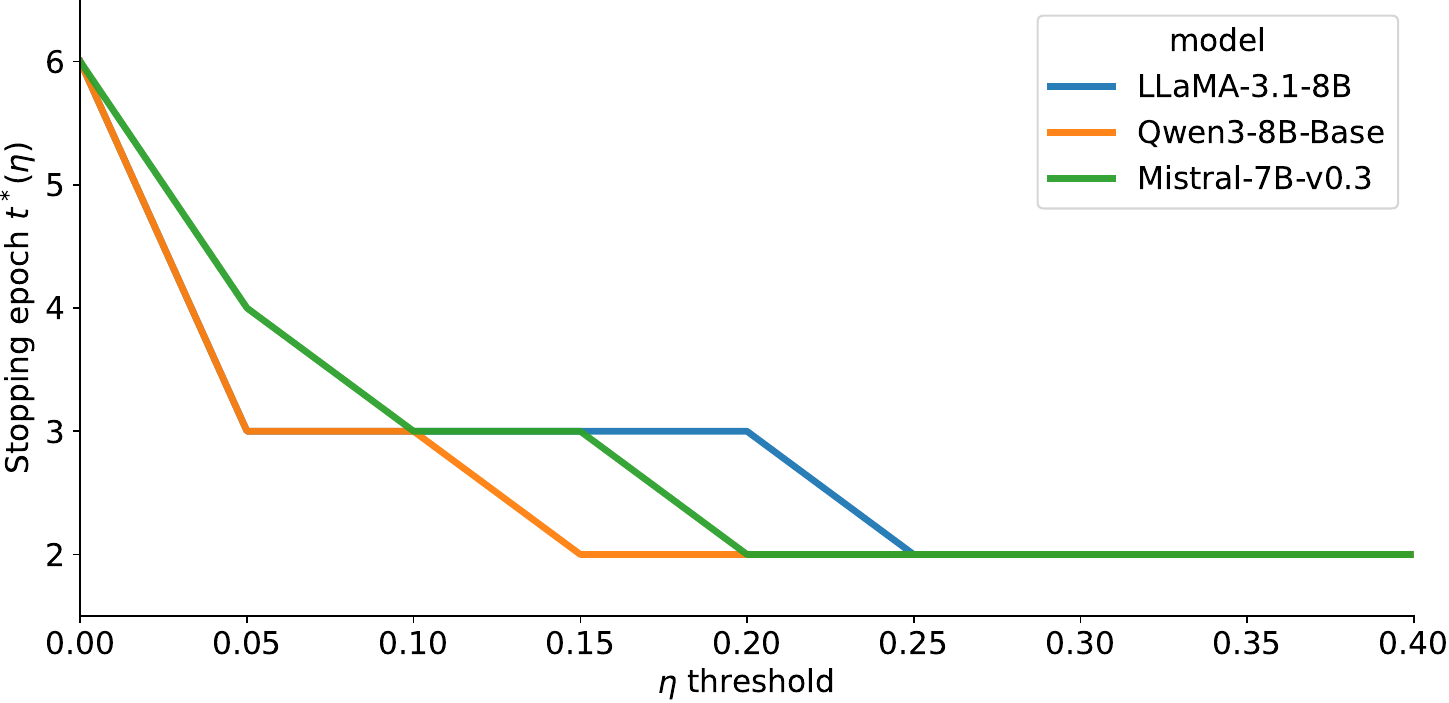}
    \caption{Topology-based stopping epochs under full fine-tuning. The plots show selected stopping epochs across $\eta$ thresholds for \texttt{SA:SST-2} across the three model families.}
    \vspace{-15px}
\label{fig:eta-stop-sst2-only}
\end{wrapfigure}

\xhdr{Time efficiency}
On \texttt{QA:GSM8K}, persistence computation takes approximately 2.1 min for LLaMA, 2.4 min for Qwen, and 2.2 min for Mistral per epoch; the subsequent Wasserstein computation is shorter, taking approximately 1.8 min, 2.1 min, and 2.0 min. These costs are CPU-side, highly parallelizable, and can be amortized because the resulting profiles are reused for freezing and transfer experiments. This cost is paid offline after checkpoints are saved and does not change the asymptotic cost of gradient-based fine-tuning; the online savings come from freezing selected projection matrices and stopping training early. \begin{wrapfigure}{r}{0.31\textwidth}
\vspace{-2.5em}
\centering

\definecolor{profileblue}{RGB}{238,245,255}
\definecolor{profileborder}{RGB}{65,105,160}

\tcbset{
  modelprofile/.style={
    enhanced,
    colback=profileblue,
    colframe=profileborder,
    boxrule=0.7pt,
    arc=2mm,
    left=1.2mm,
    right=1.2mm,
    top=1mm,
    bottom=1mm,
    fonttitle=\ttfamily\scriptsize,
    fontupper=\ttfamily\tiny,
    coltitle=profileborder,
    attach boxed title to top center={yshift=-1.2mm},
    boxed title style={
      colback=white,
      colframe=profileborder,
      boxrule=0.5pt,
      arc=1mm,
      left=0.8mm,
      right=0.8mm,
      top=0.4mm,
      bottom=0.4mm
    }
  }
}

\newcommand{\cardrule}{%
  \\ \multicolumn{2}{@{}c@{}}{\smash{{\rule{0.75\linewidth}{0.5pt}}}} \\
}

\begin{tcolorbox}[modelprofile,title={TDA-High3-LLaMA}]
\renewcommand{\arraystretch}{1}
\begin{tabularx}{\linewidth}{@{}p{1cm}X@{}}

\textbf{Model} 
& LLaMA-3.1-8B
\cardrule

\textbf{Prior task} & {\href{https://huggingface.co/datasets/openai/gsm8k}{QA:GSM8K}}
\cardrule

\textbf{Freezing strategy} 
&
\begin{tabular}[t]{@{}l@{}}
Cosine-Wasserstein \\
$H_0$ High-$3$
\end{tabular}
\cardrule

\textbf{Frozen IDs} 
&
\begin{tabular}[t]{@{}l@{}}
$V$:$\{V_{8}, V_{2}, V_{0}\}$ \\
$O$:$\{O_{11}, O_{10}, O_{13}\}$
\end{tabular}
\cardrule

\multicolumn{2}{c}{\textbf{Transferred to}}\\
\multicolumn{2}{c}{
\tiny
\begin{tabular}[t]{rl}
\multicolumn{2}{c}{\href{https://huggingface.co/datasets/stanfordnlp/sst2}{SA:SST-2}} \\
Acc\({\pm}\)Std & \(95.98{\pm}1.44\)\\
ETA ({\href{https://www.nvidia.com/en-us/data-center/h200/}{H200}}) & \(21.2\) min\\
Upd.\% & \(2.02\%\)\\

\multicolumn{2}{c}{\href{https://huggingface.co/datasets/stanfordnlp/imdb}{SA:IMDB}} \\
Acc\({\pm}\)Std & \(83.55{\pm}0.26\)\\
ETA ({\href{https://www.nvidia.com/en-us/data-center/h200/}{H200}}) & \(143.0\) min\\
Upd.\% & \(2.36\%\)\\

\multicolumn{2}{c}{\href{https://huggingface.co/datasets/cais/mmlu}{QA:MMLU}} \\
Acc\({\pm}\)Std & \(63.75{\pm}1.79\)\\
ETA ({\href{https://www.nvidia.com/en-us/data-center/h200/}{H200}}) & \(115.4\) min\\
Upd.\% & \(2.49\%\) \\ \\
Forgetting & 
    \begin{tabular}[t]{@{}l@{}}
    \cmark {\href{https://huggingface.co/datasets/cais/mmlu}{MMLU}}, \\ \cmark  {\href{https://huggingface.co/datasets/stanfordnlp/sst2}{SST-2}}, \cmark {\href{https://huggingface.co/datasets/stanfordnlp/imdb}{IMDB}}, \\
    \cmark {\href{https://huggingface.co/datasets/rajpurkar/squad}{SQuAD}}, 
    \cmark {\href{https://huggingface.co/datasets/hotpotqa/hotpot_qa}{HotpotQA}}, \\ 
    \cmark {\href{https://huggingface.co/datasets/EdinburghNLP/xsum}{XSum}}, \cmark {\href{https://huggingface.co/datasets/databricks/databricks-dolly-15k}{Dolly-15k}}, \\
    \xmark {\href{https://huggingface.co/datasets/abisee/cnn_dailymail}{CNN/DM}},
    \xmark
    {\href{https://huggingface.co/datasets/tatsu-lab/alpaca}{Alpaca}}, \\
    \xmark {\href{https://huggingface.co/datasets/openai/openai_humaneval}{HumanEval}},
    \xmark {\href{https://huggingface.co/datasets/Muennighoff/mbpp}{MBPP}}
    \end{tabular}
\end{tabular}
}
\cardrule

\textbf{Stopping $\eta$}
& {SA:SST-2} $\eta$=0.25, {SA:IMDB} $\eta$=0.20, {QA:MMLU} $\eta$=0.25

\end{tabularx}
\end{tcolorbox}

\vspace{-0.8em}
\caption{Reusable freezing profile for LLaMA-3.1-8B learned from \texttt{QA:GSM8K}.}
\label{fig:llama-freezing-profile}
\vspace{-1em}
\end{wrapfigure}  

As a result, Table~\ref{tab:mainPerformance} shows that \method is more time-efficient than the competitors in transferred fine-tuning, and the time gains are significant on large datasets such as \texttt{QA:MMLU}.

\subsection{Reusable Fine-Tuning}
\label{sec:reusable-fine-tuning}

Our results and descriptive analysis in the Appendix  Section~\ref{sec:weightChangeStatistics} suggest that fine-tuning changes LLMs in a structured but highly constrained manner. Entrywise weight movements are small, and most parameters remain close to their pretrained values. This makes purely magnitude-based descriptions incomplete: the important question is not only how much a matrix moves but also whether the topology changes coherently. Wasserstein distances on persistence diagrams provide such a signal.

A first result is that projection-level drift is architecture-dependent. As Appendix Tables~\ref{tab:eltwise-relative-avg-ranks-normalized}  and \ref{tab:eltwise-relative-topn-counts} show, LLaMA and Mistral usually place the largest changes in $V$ and $O$, while Qwen often shows the opposite pattern, with $K$ and $Q$ receiving stronger drift. This rules out a universal projection rule such as always tuning $V/O$ or always freezing $K/Q$.

This model dependence is paired with a second (useful and surprising) observation: for a fixed model, the drift profile is reusable across tasks. This does not imply that all tasks induce the same geometry (we analyze layer change similarity across models for tasks in Appendix~\ref{app:cross-task-geometry}). However, each pretrained model appears to have a stable set of adaptation channels. 

We propose a reusable profile card that summarizes which attention projections consistently absorb topological change during a prior fine-tuning run. The card records the source task, the topological ranking strategy, the selected frozen projection IDs, the stopping threshold, transfer performance on downstream tasks, parameter update rates, and forgetting behavior across held-out evaluations. Figure~\ref{fig:llama-freezing-profile} shows an example profile learned from \texttt{QA:GSM8K} for LLaMA-3.1-8B using a cosine distance-Wasserstein $H_0$ High-3 strategy. The same frozen projection set transfers to sentiment analysis, question answering, and summarization tasks without recomputing a new freezing search. This suggests that pretrained models contain reusable adaptation routes that persist across downstream fine-tuning runs. The profile card therefore acts as a compact transferable prior over how a model changes during adaptation.

\xhdr{Limitations} \method requires access to checkpoints and attention projection weights, so closed-weight API models cannot be studied without provider-side access to weights and freezing controls. Future work should test larger open-weight models, broader task families, and profile reuse across longer fine-tuning sequences.

\section{Conclusion}

We have proposed \method, a topology-guided approach for controlling LLM fine-tuning through matrix-level structural drift. \method measures how attention projection matrices reorganize relative to the pretrained model and uses this signal for selective freezing and early stopping. Across our experiments, \method outperforms LoRA in $7$ out of $9$ transferred-knowledge settings, reduces training time by $20.4\%$ on average, and improves over LoRA by $2.43\%$ on LLaMA and $3.59\%$ on Mistral. Compared with baselines, \method is faster and changes fewer projection matrices, only about 2.64\%--2.82\%, which gives a more constrained adaptation process with lower risk of unintended changes. More importantly, \method produces reusable freezing profiles that can instruct future users which projection matrices to train when fine-tuning the same model on new datasets.
\bibliography{topotuner}
\bibliographystyle{unsrtnat}

\newpage
\appendix
\onecolumn
 
\section*{Appendix}
\label{Appendix}

\section{Broader Impacts}
\label{app:broader-impacts}

This work improves the efficiency and control of fine-tuning for large language models by identifying which components need to change and when training can stop. A primary benefit is reduced computational cost. By training a small fraction of parameters and avoiding unnecessary epochs, the method lowers energy consumption and shortens development cycles. This makes fine-tuning more accessible to smaller research groups and organizations with limited compute resources.

The method also contributes to better understanding of model adaptation. By measuring structural change at the level of individual weight matrices, it provides insight into how different tasks affect pretrained models. This can support more reliable deployment by helping practitioners identify stable and unstable components during adaptation. In settings where consistency is important, such as finance or data-driven decision systems, controlled updates can reduce unintended deviations from pretrained behavior.

Another positive aspect is reusability. Freezing profiles learned on one task can be applied to new tasks, which reduces repeated exploration and simplifies the fine-tuning process. This can accelerate experimentation and enable more systematic development of downstream applications without requiring full retraining.

\section{Reusable Freezing Profiles}
\label{app:freezing-profiles}

The following cards summarize the reusable freezing profiles produced by \method. Each profile is learned from a prior fine-tuning task, ranks attention projection matrices by prior-task topological drift, and freezes the selected drift matrices during target-task fine-tuning.

\definecolor{profileblue}{RGB}{238,245,255}
\definecolor{profileborder}{RGB}{65,105,160}
\definecolor{profilegray}{RGB}{248,248,248}

\tcbset{
  freezingprofile/.style={
    enhanced,
    colback=profileblue,
    colframe=profileborder,
    boxrule=0.7pt,
    arc=2mm,
    left=1.5mm,
    right=1.5mm,
    top=1.2mm,
    bottom=1.2mm,
    fonttitle=\ttfamily\small,
    fontupper=\ttfamily\scriptsize, 
    coltitle=profileborder,
    attach boxed title to top center={yshift=-1.5mm},
    boxed title style={
      colback=white,
      colframe=profileborder,
      boxrule=0.5pt,
      arc=1mm,
      left=1mm,
      right=1mm,
      top=0.5mm,
      bottom=0.5mm
    }
  },
  profiledefinition/.style={
    freezingprofile,
    colback=profilegray,
    colframe=black!80
  },
  modelprofile/.style={
    freezingprofile,
    valign=top
  }
}

\newcommand{\cardrule}{%
  \\ \multicolumn{2}{@{}c@{}}{\smash{{\rule{0.8\linewidth}{0.6pt}}}} \\
}

\begin{figure*}[!htbp]
\centering

\begin{minipage}[t]{0.9\textwidth}
\begin{tcolorbox}[profiledefinition,title={Profile Definition}]
\renewcommand{\arraystretch}{1.1}
\begin{tabularx}{\linewidth}{@{}p{1.4cm}X@{}}

\textbf{Profile ID} 
& Identifier of the reusable freezing profile, combining the freezing strategy and model.
\cardrule

\textbf{Model} 
& Pretrained model $f_{\theta}$ for which the freezing profile is learned. 
\cardrule

\textbf{Prior task} 
& Prior task $\mathcal{T}$ and prior dataset $\mathcal{D}_{\mathcal{T}}$ used to compute the matrix-level ranking scores. 
\cardrule

\textbf{Freezing strategy} 
& Ranking score and selection rule used to choose frozen matrices from the candidate set, e.g., Cosine-Wasserstein for ranking score and High-$3$ for selection rule. 
\cardrule

\textbf{Frozen IDs} 
& Selected matrices or layers frozen from the start of target-task fine-tuning. 
\cardrule

\textbf{Transferred to} 
& Acc\({\pm}\)Std, ETA (with GPU Used), Upd.\%, and $\Delta_f$, denoting accuracy with standard deviation, total training time, updated-parameter percentage, and forgetting datasets. \cmark denotes our profile card is forgetting less than full fine-tuning and \xmark denotes the opposite.
\cardrule

\textbf{Stopping $\eta$} 
& Topology-based stopping threshold. Fine-tuning stops at the first epoch where the normalized epoch-to-epoch topological drift falls below $\eta$.

\end{tabularx}
\end{tcolorbox}
\end{minipage}

\vspace{1em}

\begin{tabular}{@{}p{0.31\textwidth} p{0.31\textwidth} p{0.31\textwidth}@{}}

\begin{tcolorbox}[modelprofile,title={TDA-High3-LLaMA}]
\renewcommand{\arraystretch}{1}
\begin{tabularx}{\linewidth}{@{}p{1cm}X@{}}

\textbf{Model} 
& LLaMA-3.1-8B
\cardrule

\textbf{Prior task} & {\href{https://huggingface.co/datasets/openai/gsm8k}{QA:GSM8K}}
\cardrule

\textbf{Freezing strategy} 
&
\begin{tabular}[t]{@{}l@{}}
Cosine-Wasserstein \\
$H_0$ High-$3$
\end{tabular}
\cardrule

\textbf{Frozen IDs} 
&
\begin{tabular}[t]{@{}l@{}}
$V$:$\{V_{8}, V_{2}, V_{0}\}$ \\
$O$:$\{O_{11}, O_{10}, O_{13}\}$
\end{tabular}
\cardrule

\multicolumn{2}{c}{\textbf{Transferred to}}\\
\multicolumn{2}{c}{
\tiny
\begin{tabular}[t]{rl}
\multicolumn{2}{c}{\href{https://huggingface.co/datasets/stanfordnlp/sst2}{SA:SST-2}} \\
Acc\({\pm}\)Std & \(95.98{\pm}1.44\)\\
ETA ({\href{https://www.nvidia.com/en-us/data-center/h200/}{H200}}) & \(21.2\) min\\
Upd.\% & \(2.02\%\)\\

\multicolumn{2}{c}{\href{https://huggingface.co/datasets/stanfordnlp/imdb}{SA:IMDB}} \\
Acc\({\pm}\)Std & \(83.55{\pm}0.26\)\\
ETA ({\href{https://www.nvidia.com/en-us/data-center/h200/}{H200}}) & \(143.0\) min\\
Upd.\% & \(2.36\%\)\\

\multicolumn{2}{c}{\href{https://huggingface.co/datasets/cais/mmlu}{QA:MMLU}} \\
Acc\({\pm}\)Std & \(63.75{\pm}1.79\)\\
ETA ({\href{https://www.nvidia.com/en-us/data-center/h200/}{H200}}) & \(115.4\) min\\
Upd.\% & \(2.49\%\) \\ \\
Forgetting & 
    \begin{tabular}[t]{@{}l@{}}
    \cmark {\href{https://huggingface.co/datasets/cais/mmlu}{MMLU}}, \\
    \cmark  {\href{https://huggingface.co/datasets/stanfordnlp/sst2}{SST-2}},
    \cmark {\href{https://huggingface.co/datasets/stanfordnlp/imdb}{IMDB}}, \\
    \cmark {\href{https://huggingface.co/datasets/rajpurkar/squad}{SQuAD}}, 
    \cmark {\href{https://huggingface.co/datasets/hotpotqa/hotpot_qa}{HotpotQA}}, \\ 
    \cmark {\href{https://huggingface.co/datasets/EdinburghNLP/xsum}{XSum}},
    \cmark {\href{https://huggingface.co/datasets/databricks/databricks-dolly-15k}{Dolly-15k}}, \\
    \xmark {\href{https://huggingface.co/datasets/abisee/cnn_dailymail}{CNN/DM}},
    \xmark {\href{https://huggingface.co/datasets/tatsu-lab/alpaca}{Alpaca}}, \\
    \xmark {\href{https://huggingface.co/datasets/openai/openai_humaneval}{HumanEval}},
    \xmark {\href{https://huggingface.co/datasets/Muennighoff/mbpp}{MBPP}}
    \end{tabular}
\end{tabular}
}
\cardrule

\textbf{Stopping $\eta$}
& {SA:SST-2} $\eta$=0.25, {SA:IMDB} $\eta$=0.20, {QA:MMLU} $\eta$=0.25

\end{tabularx}
\end{tcolorbox}

&

\begin{tcolorbox}[modelprofile,title={TDA-High3-Qwen}]
\renewcommand{\arraystretch}{1}
\begin{tabularx}{\linewidth}{@{}p{1cm}X@{}}

\textbf{Model} 
& Qwen3-8B-Base
\cardrule

\textbf{Prior task} & {\href{https://huggingface.co/datasets/openai/gsm8k}{QA:GSM8K}}
\cardrule

\textbf{Freezing strategy} 
&
\begin{tabular}[t]{@{}l@{}}
Cosine-Wasserstein \\
$H_0$ High-$3$
\end{tabular}
\cardrule

\textbf{Frozen IDs} 
&
\begin{tabular}[t]{@{}l@{}}
$K$:$\{K_{29}, K_{34}, K_{33}\}$ \\
$Q$:$\{Q_{21}, Q_{32}, Q_{33}\}$
\end{tabular}
\cardrule

\multicolumn{2}{c}{\textbf{Transferred to}}\\[-1pt]
\multicolumn{2}{c}{
\centering
\tiny
\begin{tabular}[t]{@{}rl@{}}
\multicolumn{2}{c}{\href{https://huggingface.co/datasets/stanfordnlp/sst2}{SA:SST-2}} \\
Acc\({\pm}\)Std & \(95.99{\pm}0.41\)\\
ETA ({\href{https://www.nvidia.com/en-us/data-center/h200/}{H200}}) & \(23.9\) min\\
Upd.\% & \(1.15\%\)\\

\multicolumn{2}{c}{\href{https://huggingface.co/datasets/stanfordnlp/imdb}{SA:IMDB}} \\
Acc\({\pm}\)Std & \(82.97{\pm}0.34\)\\
ETA ({\href{https://www.nvidia.com/en-us/data-center/h200/}{H200}}) & \(163.6\) min\\
Upd.\% & \(0.78\%\)\\

\multicolumn{2}{c}
{\href{https://huggingface.co/datasets/cais/mmlu}{QA:MMLU}} \\
Acc\({\pm}\)Std & \(72.83{\pm}1.97\)\\
ETA ({\href{https://www.nvidia.com/en-us/data-center/h200/}{H200}}) & \(131.7\) min\\
Upd.\% & \(0.68\%\) \\ \\
Forgetting & 
    \begin{tabular}[t]{@{}l@{}}
    \cmark {\href{https://huggingface.co/datasets/stanfordnlp/imdb}{IMDB}},
    \cmark {\href{https://huggingface.co/datasets/hotpotqa/hotpot_qa}{HotpotQA}}, \\ 
    \cmark {\href{https://huggingface.co/datasets/EdinburghNLP/xsum}{XSum}},
    \cmark {\href{https://huggingface.co/datasets/databricks/databricks-dolly-15k}{Dolly-15k}}, \\
    \cmark {\href{https://huggingface.co/datasets/abisee/cnn_dailymail}{CNN/DM}},
    \cmark {\href{https://huggingface.co/datasets/tatsu-lab/alpaca}{Alpaca}}, \\
    \cmark {\href{https://huggingface.co/datasets/openai/openai_humaneval}{HumanEval}},
    \cmark {\href{https://huggingface.co/datasets/Muennighoff/mbpp}{MBPP}}, \\
    \xmark {\href{https://huggingface.co/datasets/cais/mmlu}{MMLU}},
    \xmark {\href{https://huggingface.co/datasets/stanfordnlp/sst2}{SST-2}}, \\
    \xmark {\href{https://huggingface.co/datasets/rajpurkar/squad}{SQuAD}}, 
    \end{tabular}

\end{tabular}
}
\cardrule

\textbf{Stopping $\eta$}
& {SA:SST-2} $\eta$=0.15, {SA:IMDB} $\eta$=0.2, {QA:MMLU} $\eta$=0.25

\end{tabularx}
\end{tcolorbox}

&

\begin{tcolorbox}[modelprofile,title={TDA-High3-Mistral}]
\renewcommand{\arraystretch}{1}
\begin{tabularx}{\linewidth}{@{}p{1cm}X@{}}

\textbf{Model} 
& Mistral-7B-v0.3
\cardrule

\textbf{Prior task} & {\href{https://huggingface.co/datasets/openai/gsm8k}{QA:GSM8K}}
\cardrule

\textbf{Freezing strategy} 
&
\begin{tabular}[t]{@{}l@{}}
Cosine-Wasserstein \\
$H_0$ High-$3$
\end{tabular}
\cardrule

\textbf{Frozen IDs} 
&
\begin{tabular}[t]{@{}l@{}}
$V$:$\{V_{0}, V_{10}, V_{12}\}$ \\
$O$:$\{O_{0}, O_{1}, O_{2}\}$
\end{tabular}
\cardrule

\multicolumn{2}{c}{\textbf{Transferred to}}\\[-1pt]
\multicolumn{2}{c}{
\centering
\tiny
\begin{tabular}[t]{@{}rl@{}}

\multicolumn{2}{c}{\href{https://huggingface.co/datasets/stanfordnlp/sst2}{SA:SST-2}} \\
Acc\({\pm}\)Std & \(96.21{\pm}0.70\)\\
ETA ({\href{https://www.nvidia.com/en-us/data-center/h200/}{H200}}) & \(24.0\) min\\
Upd.\% & \(2.44\%\)\\

\multicolumn{2}{c}{\href{https://huggingface.co/datasets/stanfordnlp/imdb}{SA:IMDB}} \\
Acc\({\pm}\)Std & \(79.76{\pm}0.30\)\\
ETA ({\href{https://www.nvidia.com/en-us/data-center/h200/}{H200}}) & \(137.7\) min\\
Upd.\% & \(2.92\%\)\\

\multicolumn{2}{c}
{\href{https://huggingface.co/datasets/cais/mmlu}{QA:MMLU}} \\
Acc\({\pm}\)Std & \(60.68{\pm}3.06\)\\
ETA ({\href{https://www.nvidia.com/en-us/data-center/h200/}{H200}}) & \(110.3\) min\\
Upd.\% & \(2.98\%\)\\ \\
Forgetting & 
    \begin{tabular}[t]{@{}l@{}}
    \cmark {\href{https://huggingface.co/datasets/cais/mmlu}{MMLU}}, \\
    \cmark {\href{https://huggingface.co/datasets/stanfordnlp/sst2}{SST-2}},
    \cmark {\href{https://huggingface.co/datasets/stanfordnlp/imdb}{IMDB}}, \\
    \cmark {\href{https://huggingface.co/datasets/hotpotqa/hotpot_qa}{HotpotQA}}, 
    \cmark {\href{https://huggingface.co/datasets/EdinburghNLP/xsum}{XSum}}, \\
    \cmark {\href{https://huggingface.co/datasets/abisee/cnn_dailymail}{CNN/DM}},
    \cmark {\href{https://huggingface.co/datasets/openai/openai_humaneval}{HumanEval}}, \\
    \cmark {\href{https://huggingface.co/datasets/Muennighoff/mbpp}{MBPP}}
    \xmark {\href{https://huggingface.co/datasets/rajpurkar/squad}{SQuAD}}, \\
    \xmark {\href{https://huggingface.co/datasets/databricks/databricks-dolly-15k}{Dolly-15k}},
    \xmark {\href{https://huggingface.co/datasets/tatsu-lab/alpaca}{Alpaca}}, \\
    \end{tabular}
\end{tabular}
}
\cardrule

\textbf{Stopping $\eta$}
& {SA:SST-2} $\eta$=0.2, {SA:IMDB} $\eta$=0.15, {QA:MMLU} $\eta$=0.2

\end{tabularx}
\end{tcolorbox}

\end{tabular}
\caption{\textbf{Reusable Freezing Profiles.} We define a reusable freezing profile learned from a prior-task fine-tuning run. Each model card summarizes the model, prior task, candidate matrix set, freezing strategy, selected frozen IDs, and outcome on target datasets. We report three profiles used in our experiments: TDA-High3-LLaMA, TDA-High3-Qwen, and TDA-High3-Mistral.}
\label{fig:freezing-profile-cards}
\end{figure*}
\FloatBarrier
\section{Experimental Setup and Hyperparameters}
\label{app:hyperparameters}

Tables~\ref{tab:training-hyperparameters} and~\ref{tab:evaluation-settings} summarize the training and evaluation setup used in our experiments. Table~\ref{tab:training-hyperparameters} reports task-level training settings, model-specific learning rates, and LoRA configurations, while Table~\ref{tab:evaluation-settings} reports the decoding and checkpoint-scoring settings.

\begin{table*}[!htbp]
\centering
\scriptsize
\caption{Training hyperparameters used in our fine-tuning experiments. Freeze denotes the selective tuning setting used by the Eltwise- and TDA-based methods.}
\label{tab:training-hyperparameters}
\resizebox{\textwidth}{!}{
\begin{tabular}{l|llll}
\toprule
Setting & \texttt{QA:GSM8K} & \texttt{SA:SST-2} & \texttt{SA:IMDB} & \texttt{QA:MMLU} \\
\midrule

\multicolumn{5}{l}{\textit{Task-level training settings}} \\
Epochs
& $6$ & $6$ & $6$ & $6$ \\

Train subset
& $7{,}473$ & $20{,}000$ & $25{,}000$ & $20{,}000$ \\

Batch size
& $16$ & $64$ & $32$ & $32$ \\

Gradient accumulation
& $1$ & $1$ & $1$ & $1$ \\

Max sequence length
& $512$ & $1024$ & $1024$ & $512$ \\

Precision
& bf16 & bf16 & bf16 & bf16 \\

Scheduler
& cosine & cosine & cosine & cosine \\

Optimizer
& AdamW & AdamW & AdamW & AdamW \\

Warmup Ratio
& 0.03 & 0.03 & 0.03 & 0.03 \\

Max grad norm
& $1.0$ & $1.0$ & $1.0$ & $1.0$ \\

Gradient checkpointing
& enabled & enabled & enabled & enabled \\

Seed
& $42 / 123 / 456$ & $42 / 123 / 456$ & $42 / 123 / 456$ & $42 / 123 / 456$ \\

\midrule
\multicolumn{5}{l}{\textit{Model-specific learning rates}} \\

Qwen3-8B-Base full FT LR
& $2{\times}10^{-5}$ & $1{\times}10^{-5}$ & $1{\times}10^{-5}$ & $1{\times}10^{-5}$ \\

LLaMA-3.1-8B full FT LR
& $2{\times}10^{-5}$ & $2{\times}10^{-5}$ & $2{\times}10^{-5}$ & $2{\times}10^{-5}$ \\

Mistral-7B-v0.3 full FT LR
& $5{\times}10^{-6}$ & $5{\times}10^{-6}$ & $5{\times}10^{-6}$ & $5{\times}10^{-6}$ \\

LoRA LR
& $2{\times}10^{-4}$ & $2{\times}10^{-4}$ & $2{\times}10^{-4}$ & $2{\times}10^{-4}$ \\

Spectrum
& same as full FT LR & same as full FT LR & same as full FT LR & same as full FT LR \\

DropBP
& same as LoRA LR & same as LoRA FT LR & same as LoRA FT LR & same as LoRA FT LR \\

Freeze LR
& same as full FT LR & same as full FT LR & same as full FT LR & same as full FT LR \\

\midrule
\multicolumn{5}{l}{\textit{LoRA configuration}} \\

Rank / alpha / dropout
& $16/32/0.05$ & $16/32/0.05$ & $16/32/0.05$ & $16/32/0.05$ \\

Target modules
& $q,k,v,o$ & $q,k,v,o$ & $q,k,v,o$ & $q,k,v,o$ \\

\bottomrule
\end{tabular}}
\end{table*}

\begin{table}[!htbp]
\centering
\scriptsize
\caption{Evaluation settings used for \texttt{SA:SST-2}, \texttt{SA:IMDB}, \texttt{QA:MMLU} checkpoint scoring in the transferred-knowledge experiments.}
\label{tab:evaluation-settings}
\resizebox{0.72\textwidth}{!}{
\begin{tabular}{llll}
\toprule
Setting & \texttt{SA:SST-2} & \texttt{SA:IMDB} & \texttt{QA:MMLU} \\
\midrule
Split & validation & test & validation \\
Decoding & greedy & greedy & greedy \\
Sampling & disabled & disabled & disabled \\
Max new tokens & $4$ & $4$ & $2$ \\
Prompt max length & $256$ & $512$ & $1024$ \\
Metric & Accuracy & Accuracy & Accuracy \\
Subset reporting & three fixed folds & three fixed folds & three fixed folds \\
\bottomrule
\end{tabular}}
\end{table}
\FloatBarrier
\section{Matrix Types, Weight Changes, and Freezing Profiles}
\label{sec:selectionJustification}


To quantify projection-level changes, we compare each epoch-6 fine-tuned matrix with its pretrained counterpart. For each projection matrix $W \in \{K,Q,V,O\}$, we compute mean element-wise relative drift:
$$
d(W)=\frac{1}{n}\sum_{i=1}^{n}\frac{|W_{6,i}-W_{0,i}|}{|W_{0,i}|},
$$
where $W_0$ and $W_6$ denote the pretrained and epoch-6 weights, $n$ is the number of matrix entries.




We justify our matrix selection design by ranking all candidate matrices by their average distance to the pretrained model. We compute the average rank of each matrix family ($K$, $Q$, $V$, and $O$) and count how many $K$ or $Q$ matrices appear among the top distance matrices. As Table~\ref{tab:eltwise-relative-avg-ranks-normalized} and Table~\ref{tab:eltwise-relative-topn-counts} show, empirically, $K$ and $Q$ occupy low positions in the ranking and are largely absent from the top-ranked set, which supports restricting selective tuning to $V$ and $O$.



\begin{table}[!htbp]
\centering
\caption{Normalized average projection ranks (rank divided by total count). Lower values indicate larger drift. \texttt{LLaMA-3.1-8B}/ \texttt{Mistral-7B-v0.3} are normalized by $128$, \texttt{Qwen3-8B-Base} by $144$.}
\label{tab:eltwise-relative-avg-ranks-normalized}
\scriptsize
\setlength{\tabcolsep}{4pt}
\renewcommand{\arraystretch}{1.08}
\begin{tabular}{llccc}
\toprule
Dataset & Statistic & \texttt{LLaMA-3.1-8B} & \texttt{Mistral-7B-v0.3} & \texttt{Qwen3-8B-Base} \\
\midrule
\multirow{4}{*}{\texttt{QA:GSM8K}}
& Avg K rank & 0.85 & 0.71 & 0.29 \\
& Avg Q rank & 0.58 & 0.48 & 0.33 \\
& Avg V rank & 0.26 & 0.41 & 0.75 \\
& Avg O rank & 0.32 & 0.41 & 0.65 \\
\midrule
\multirow{4}{*}{\texttt{SA:IMDB}}
& Avg K rank & 0.79 & 0.61 & 0.41 \\
& Avg Q rank & 0.51 & 0.53 & 0.44 \\
& Avg V rank & 0.35 & 0.44 & 0.62 \\
& Avg O rank & 0.36 & 0.43 & 0.55 \\
\midrule
\multirow{4}{*}{\texttt{QA:MMLU}}
& Avg K rank & 0.72 & 0.66 & 0.43 \\
& Avg Q rank & 0.54 & 0.52 & 0.42 \\
& Avg V rank & 0.37 & 0.43 & 0.64 \\
& Avg O rank & 0.39 & 0.41 & 0.53 \\
\midrule
\multirow{4}{*}{\texttt{SA:SST2}}
& Avg K rank & 0.76 & 0.61 & 0.39 \\
& Avg Q rank & 0.51 & 0.51 & 0.44 \\
& Avg V rank & 0.36 & 0.44 & 0.60 \\
& Avg O rank & 0.39 & 0.45 & 0.59 \\
\bottomrule
\end{tabular}
\end{table}

\begin{table*}[!htbp]
\centering
\caption{Top-$N$ projection counts under mean element-wise relative drift. Each cell reports $K\ Q\ (K{+}Q)\ |\ V\ O\ (V{+}O)$ among the top-$N$ most changed projections. \texttt{LLaMA-3.1-8B}/\texttt{Mistral-7B-v0.3} runs contain 128 projection matrices from 32 layers, while \texttt{Qwen3-8B-Base} runs contain 144 projection matrices from 36 layers.}
\label{tab:eltwise-relative-topn-counts}
\scriptsize
\setlength{\tabcolsep}{3.5pt}
\renewcommand{\arraystretch}{1.08}
\begin{tabular}{llccc}
\toprule
Dataset & Top-$N$ & \texttt{LLaMA-3.1-8B} & \texttt{Mistral-7B-v0.3} & \texttt{Qwen3-8B-Base} \\
\midrule
\multirow{3}{*}{\texttt{QA:GSM8K}}
& Top-10 & 1 1 (2) $|$ 6 2 (8) & 1 4 (5) $|$ 3 2 (5) & 8 2 (10) $|$ 0 0 (0) \\
& Top-20 & 1 1 (2) $|$ 15 3 (18) & 3 7 (10) $|$ 7 3 (10) & 14 6 (20) $|$ 0 0 (0) \\
& Top-30 & 1 1 (2) $|$ 21 7 (28) & 3 10 (13) $|$ 12 5 (17) & 16 14 (30) $|$ 0 0 (0) \\
\midrule
\multirow{3}{*}{\texttt{SA:IMDB}}
& Top-10 & 1 1 (2) $|$ 6 2 (8) & 0 0 (0) $|$ 7 3 (10) & 6 4 (10) $|$ 0 0 (0) \\
& Top-20 & 1 1 (2) $|$ 12 6 (18) & 1 0 (1) $|$ 13 6 (19) & 9 11 (20) $|$ 0 0 (0) \\
& Top-30 & 1 2 (3) $|$ 14 13 (27) & 1 1 (2) $|$ 14 14 (28) & 15 13 (28) $|$ 0 2 (2) \\
\midrule
\multirow{3}{*}{\texttt{QA:MMLU}}
& Top-10 & 1 1 (2) $|$ 6 2 (8) & 1 0 (1) $|$ 6 3 (9) & 6 4 (10) $|$ 0 0 (0) \\
& Top-20 & 1 1 (2) $|$ 13 5 (18) & 1 0 (1) $|$ 12 7 (19) & 10 7 (17) $|$ 1 2 (3) \\
& Top-30 & 1 1 (2) $|$ 15 13 (28) & 1 0 (1) $|$ 14 15 (29) & 10 12 (22) $|$ 4 4 (8) \\
\midrule
\multirow{3}{*}{\texttt{SA:SST2}}
& Top-10 & 1 1 (2) $|$ 6 2 (8) & 0 0 (0) $|$ 7 3 (10) & 8 2 (10) $|$ 0 0 (0) \\
& Top-20 & 1 1 (2) $|$ 11 7 (18) & 0 0 (0) $|$ 12 8 (20) & 13 7 (20) $|$ 0 0 (0) \\
& Top-30 & 1 2 (3) $|$ 14 13 (27) & 0 1 (1) $|$ 15 14 (29) & 17 11 (28) $|$ 1 1 (2) \\
\bottomrule
\end{tabular}
\end{table*}

\FloatBarrier
\section{Model Weight-Change Statistics}
\label{sec:weightChangeStatistics}

We report full-model weight-change statistics by comparing each fine-tuned checkpoint against its pretrained initialization. Changed Params. quantifies how widely the checkpoint differs from the pretrained model by counting parameters whose absolute update exceeds a fixed threshold:
\[
|w_{\mathrm{final}} - w_{\mathrm{base}}| > 10^{-6}.
\]
Mean Rel. $\Delta$ measures the average relative magnitude of the parameter update:
\[
\mathrm{Mean\ Rel.}\ \Delta
=
\mathrm{mean}\left(
\frac{|w_{\mathrm{final}} - w_{\mathrm{base}}|}{|w_{\mathrm{base}}|}
\right).
\]
Together, these statistics capture both the spread of parameter changes across the model and the average size of those changes relative to the pretrained weights.



\begin{table*}[!htbp]
\centering
\scriptsize
\caption{Model weight-change statistics relative to the pretrained checkpoint. TDA-High3 uses $86.25\%$ less parameter updates than LoRA across the 12 LLM-dataset settings.}
\label{tab:full-model-weight-change}
\setlength{\tabcolsep}{3pt}
\renewcommand{\arraystretch}{1.08}
\resizebox{\textwidth}{!}{%
\begin{tabular}{l|rrrr|rrrr|rrrr}
\toprule
\multirow{2}{*}{Method}
& \multicolumn{4}{c|}{\texttt{Qwen3-8B-Base}}
& \multicolumn{4}{c|}{\texttt{LLaMA-3.1-8B}}
& \multicolumn{4}{c}{\texttt{Mistral-7B-v0.3}} \\
\cmidrule(lr){2-5}\cmidrule(lr){6-9}\cmidrule(lr){10-13}
& \shortstack[c]{Total\\Params.} 
& \shortstack[c]{Changed\\Params.} 
& \shortstack[c]{Upd.\\(\%)} 
& \shortstack[c]{Mean Rel.\\$\Delta$ (\%)} 
& \shortstack[c]{Total\\Params.} 
& \shortstack[c]{Changed\\Params.} 
& \shortstack[c]{Upd.\\(\%)} 
& \shortstack[c]{Mean Rel.\\$\Delta$ (\%)} 
& \shortstack[c]{Total\\Params.} 
& \shortstack[c]{Changed\\Params.} 
& \shortstack[c]{Upd.\\(\%)} 
& \shortstack[c]{Mean Rel.\\$\Delta$ (\%)} \\
\midrule

\multicolumn{13}{l}{\texttt{QA:GSM8K}} \\
Full
& 8.19B & 1.198B & 14.63 & 12.57
& 8.03B & 2.211B & 27.54 & 26.12
& 7.25B & 2.280B & 31.46 & 25.57 \\
LoRA
& 8.19B & 1.508B & 18.41 & 3.89
& 8.03B & 1.341B & 16.70 & 9.13
& 7.25B & 1.340B & 18.49 & 36.57 \\
TDA-High3
& 8.19B & 0.100B & 1.22 & 1.65
& 8.03B & 0.212B & 2.64 & 3.17
& 7.25B & 0.204B & 2.82 & 3.02 \\
TDA-High6
& 8.19B & 0.091B & 1.12 & 1.55
& 8.03B & 0.187B & 2.33 & 2.89
& 7.25B & 0.183B & 2.52 & 2.79 \\
TDA-High9
& 8.19B & 0.082B & 1.00 & 1.45
& 8.03B & 0.165B & 2.05 & 2.60
& 7.25B & 0.162B & 2.23 & 2.55 \\
Eltwise-High3
& 8.19B & 0.099B & 1.21 & 1.62
& 8.03B & 0.220B & 2.74 & 3.17
& 7.25B & 0.213B & 2.94 & 3.11 \\
Eltwise-High6
& 8.19B & 0.091B & 1.11 & 1.52
& 8.03B & 0.196B & 2.44 & 2.88
& 7.25B & 0.194B & 2.67 & 2.85 \\
Eltwise-High9
& 8.19B & 0.082B & 1.01 & 1.41
& 8.03B & 0.175B & 2.18 & 2.63
& 7.25B & 0.167B & 2.30 & 2.53 \\
\midrule

\multicolumn{13}{l}{\texttt{SA:SST-2}} \\
Full
& 8.19B & 1.066B & 13.01 & 8.78
& 8.03B & 1.757B & 21.88 & 11.07
& 7.25B & 1.812B & 25.00 & 12.56 \\
LoRA
& 8.19B & 0.903B & 11.02 & 1.01
& 8.03B & 1.335B & 16.62 & 2.03
& 7.25B & 1.334B & 18.41 & 15.87 \\
TDA-High3
& 8.19B & 0.094B & 1.15 & 0.93
& 8.03B & 0.162B & 2.02 & 1.27
& 7.25B & 0.177B & 2.44 & 1.44 \\
\midrule

\multicolumn{13}{l}{\texttt{SA:IMDB}} \\
Full
& 8.19B & 0.683B & 8.34 & 6.69
& 8.03B & 2.004B & 24.95 & 16.87
& 7.25B & 2.078B & 28.67 & 16.88 \\
LoRA
& 8.19B & 1.505B & 18.38 & 3.11
& 8.03B & 1.338B & 16.67 & 5.04
& 7.25B & 1.340B & 18.49 & 39.57 \\
TDA-High3
& 8.19B & 0.064B & 0.78 & 0.78
& 8.03B & 0.189B & 2.36 & 1.88
& 7.25B & 0.211B & 2.92 & 1.96 \\
\midrule

\multicolumn{13}{l}{\texttt{QA:MMLU}} \\
Full
& 8.19B & 0.770B & 9.39 & 7.16
& 8.03B & 2.513B & 31.29 & 21.72
& 7.25B & 2.564B & 35.38 & 21.22 \\
LoRA
& 8.19B & 1.508B & 18.41 & 4.02
& 8.03B & 1.339B & 16.67 & 5.33
& 7.25B & 1.340B & 18.49 & 34.39 \\
TDA-High3
& 8.19B & 0.056B & 0.68 & 0.86
& 8.03B & 0.200B & 2.49 & 2.06
& 7.25B & 0.216B & 2.98 & 2.13 \\

\bottomrule
\end{tabular}%
}
\end{table*}

\FloatBarrier
\section{Magnitude Changes in Models}
\label{app:magnitude-changes}

\xhdr{Entrywise weight change}
We first report entrywise weight movement as a magnitude-based baseline. For each fine-tuned matrix, we compare the epoch-6 weights against the pretrained weights and summarize the raw mean absolute change for the \(V\) and \(O\) projections. The entrywise plots in Figure~\ref{fig:eltwise_distances} provide a direct view of parameter movement across datasets and model families. This magnitude-based view is later compared with the topology-based distance used for matrix ranking and freezing.

\begin{figure}[!htbp]
\centering
\setlength{\tabcolsep}{2pt}

\begin{tabular}{ccc}
\multicolumn{3}{c}{\small \textit{\(V\) Projection}} \\
\includegraphics[width=0.32\linewidth]{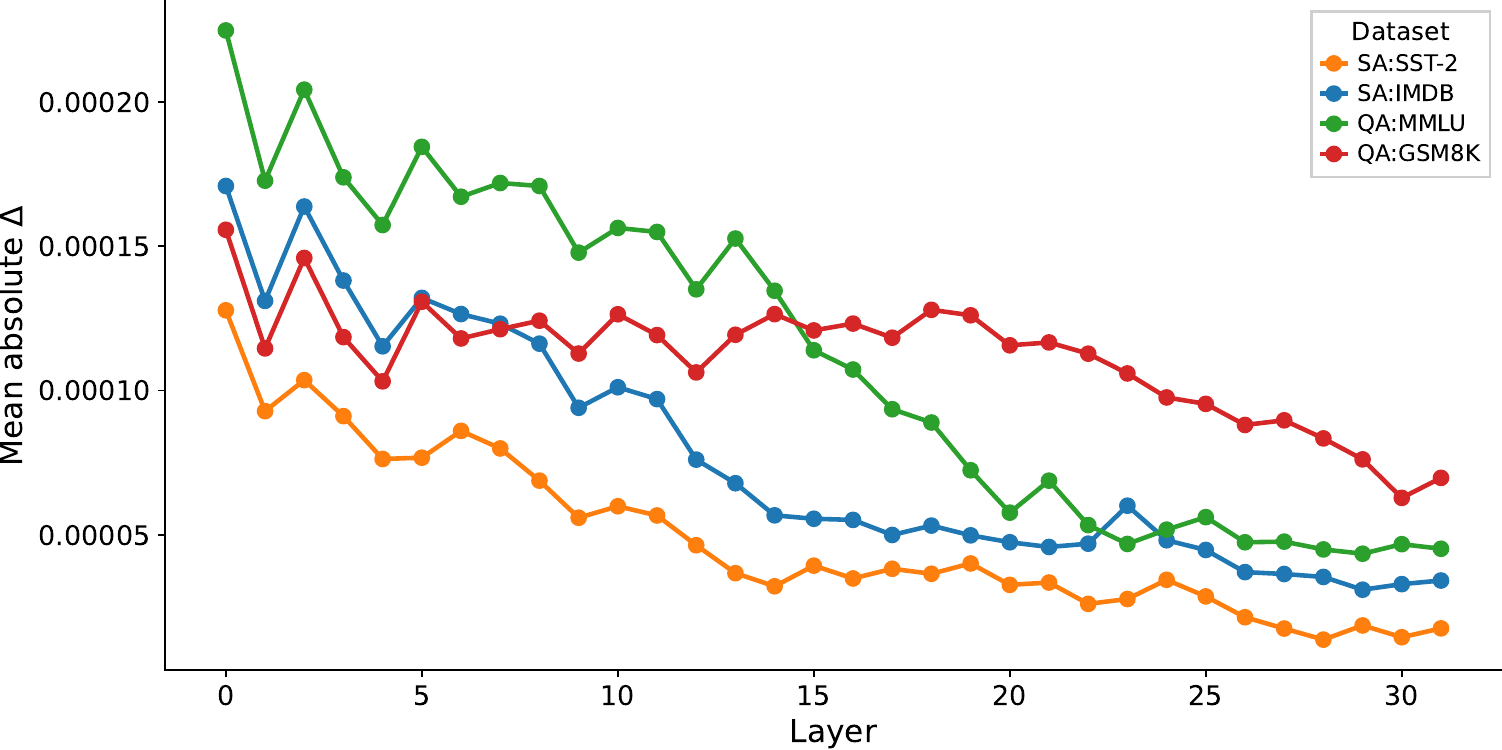} &
\includegraphics[width=0.32\linewidth]{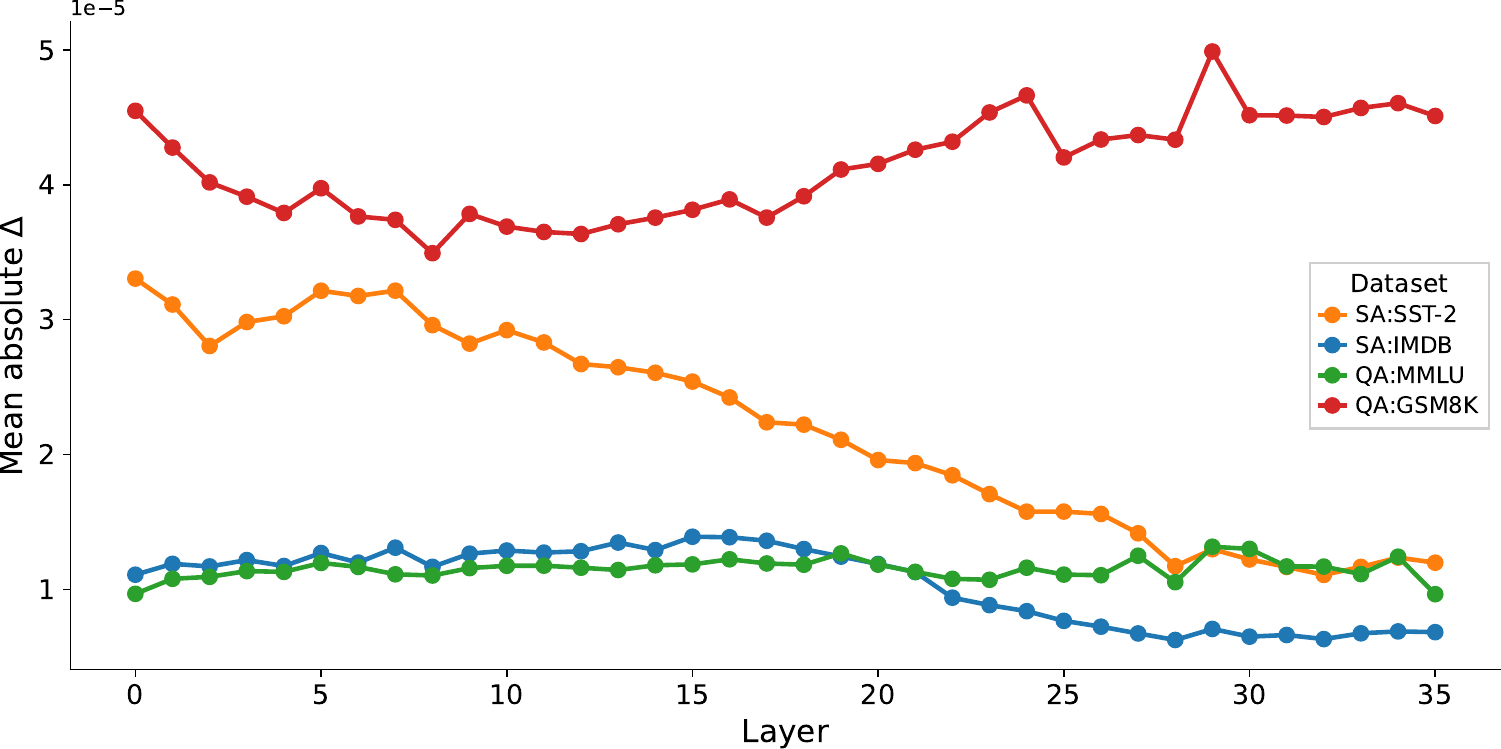} &
\includegraphics[width=0.32\linewidth]{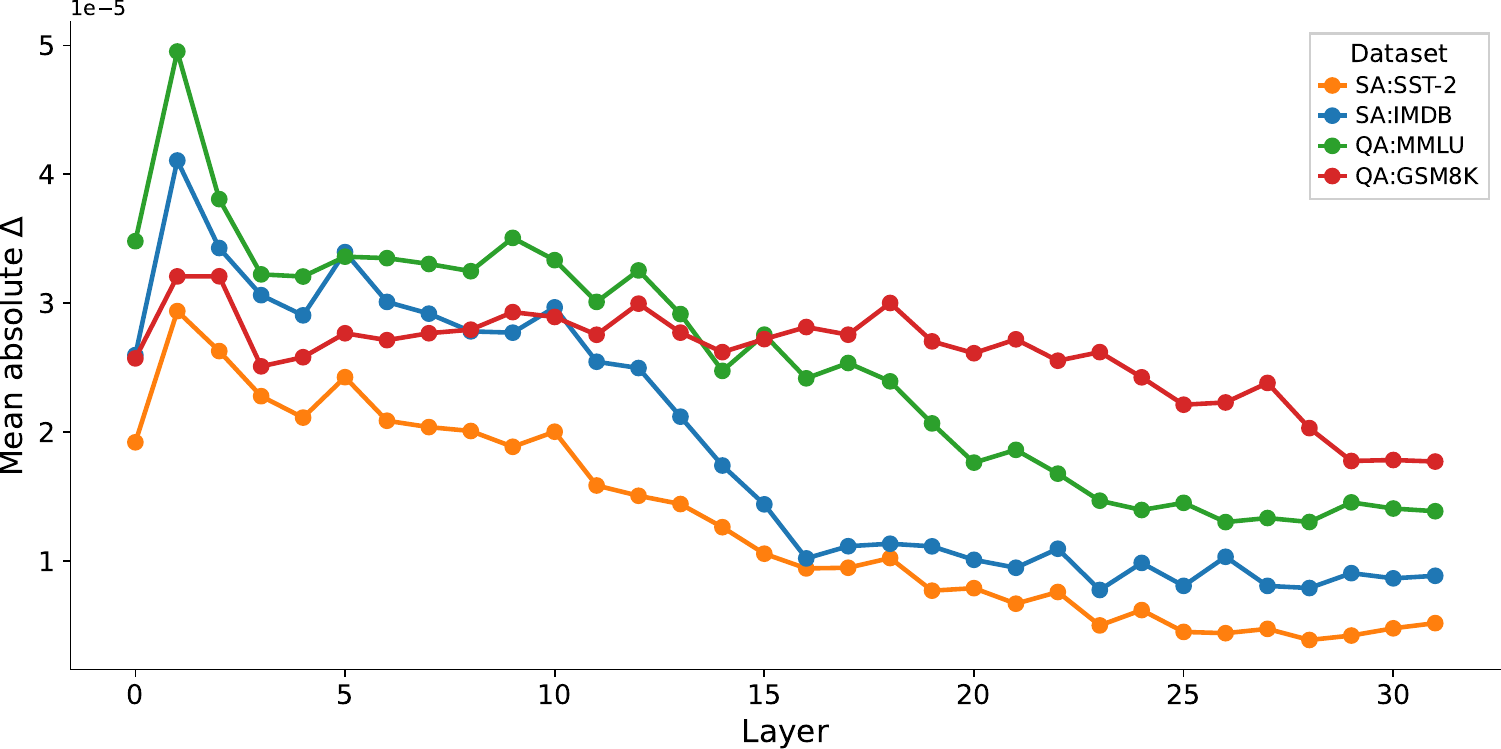} \\
[2em]
\multicolumn{3}{c}{\small \textit{\(O\) Projection}} \\
\includegraphics[width=0.32\linewidth]
{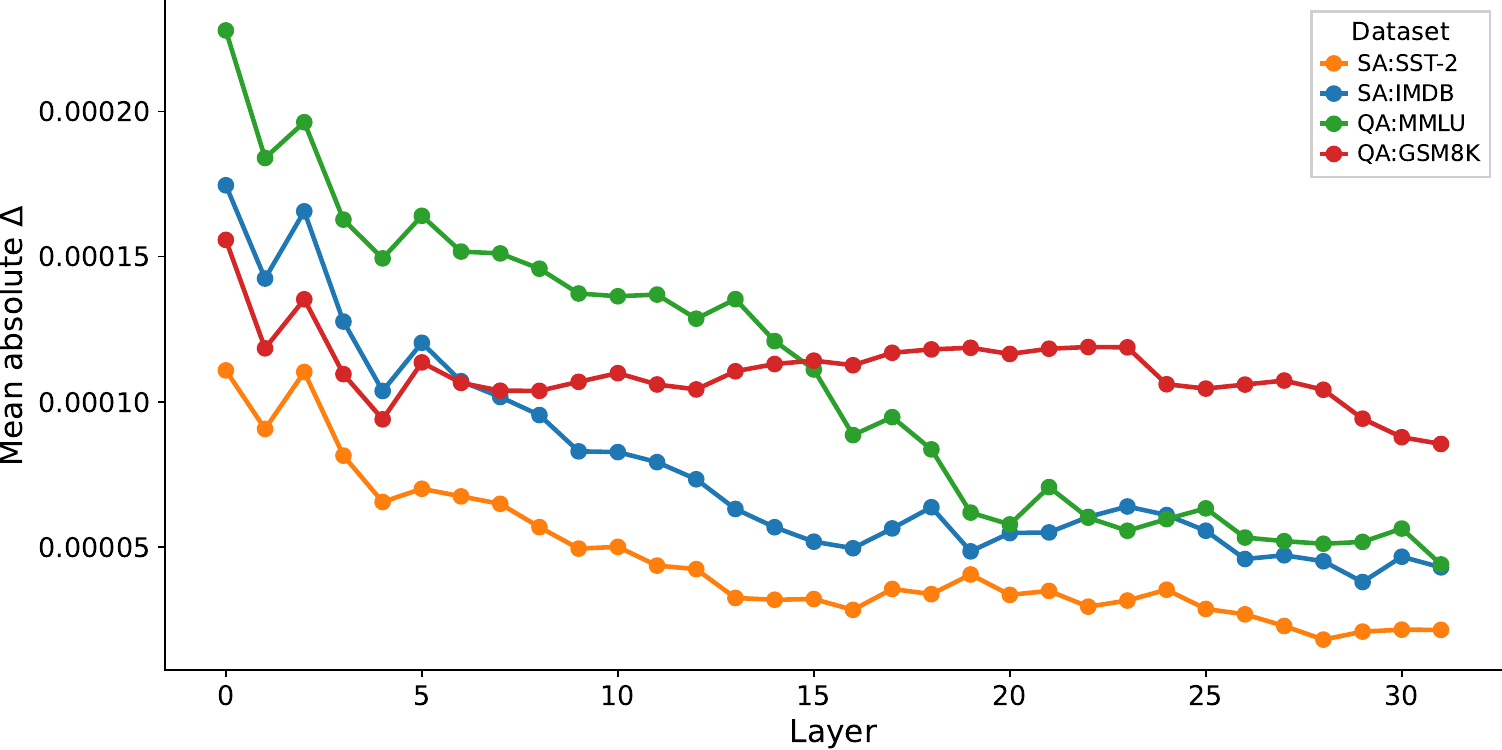} &
\includegraphics[width=0.32\linewidth]{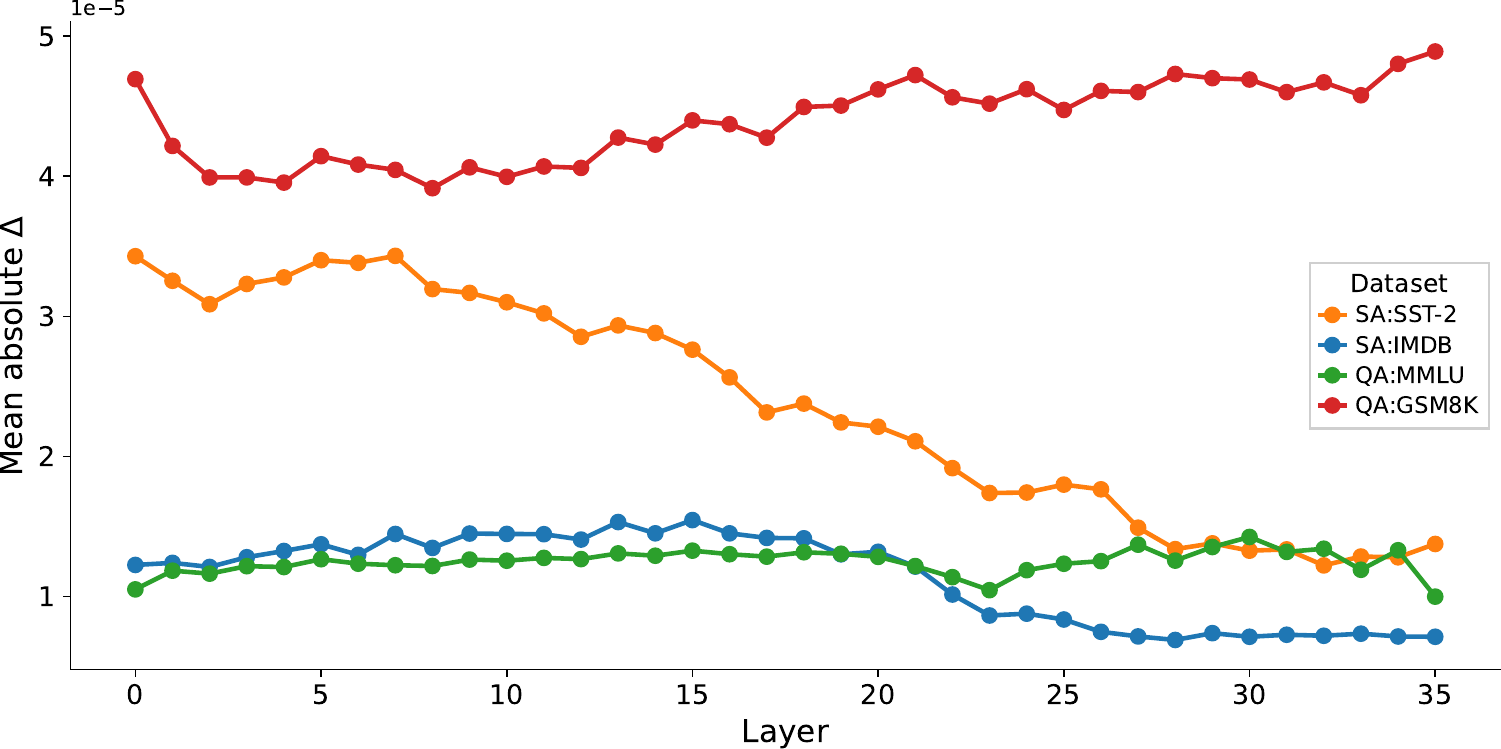} &
\includegraphics[width=0.32\linewidth]{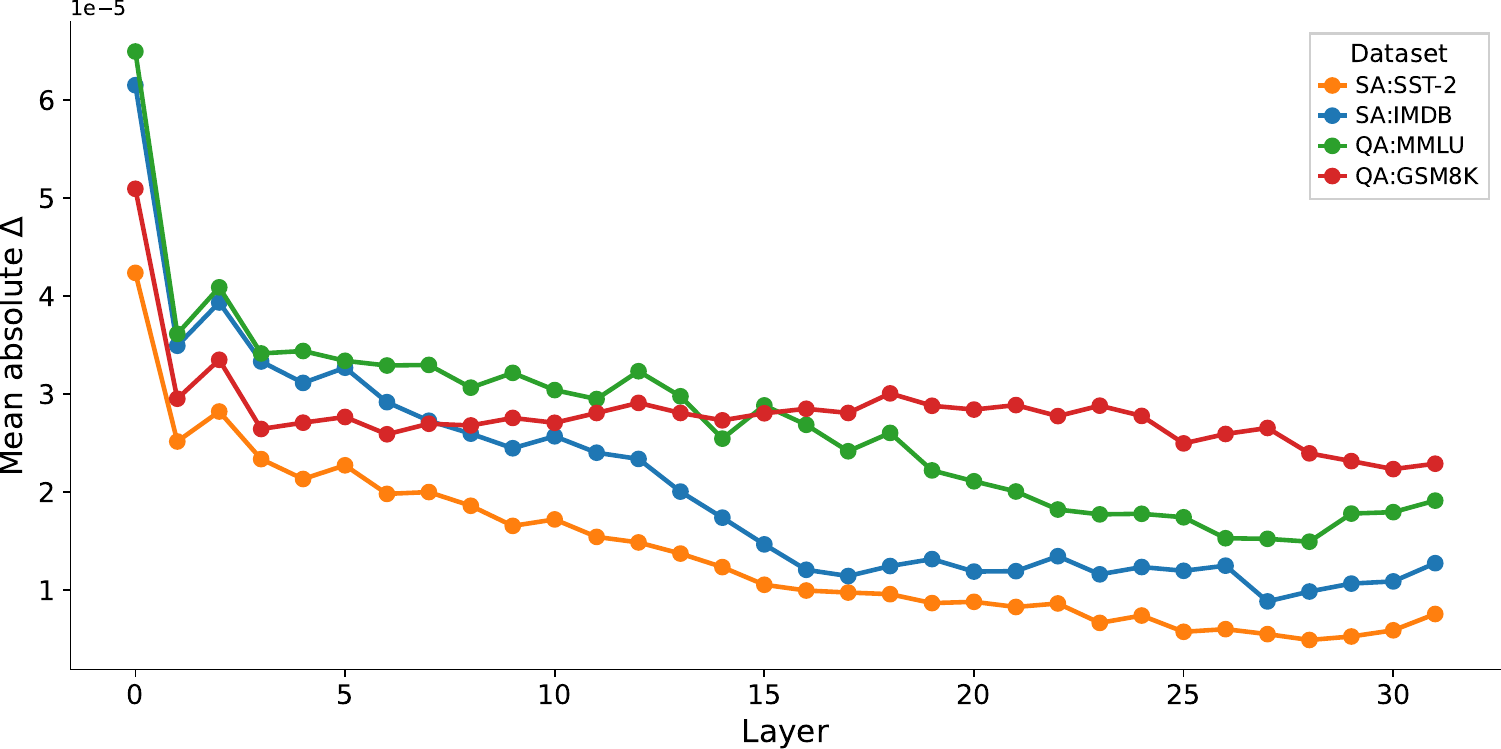}\\
\small \texttt{LLaMA-3.1-8B} & \small \texttt{Qwen3-8B-Base} & \small \texttt{Mistral-7B-v0.3} \\
\end{tabular}

\caption{Epoch-6 Eltwise distances for the \(V\) and \(O\) projection matrices under $H_0$.}
\label{fig:eltwise_distances}
\end{figure}

\newpage
\FloatBarrier
\section{Wasserstein Distances in Models}
\label{app:wasserstein-distances}

\xhdr{Wasserstein distance}
We also report the Wasserstein-based structural distance used by our topological analysis. As defined later in Definition~\ref{def:distance}, each projection matrix is treated as a row cloud and compared to its pretrained version through the Wasserstein distance between persistence diagrams. The Wasserstein plots in Figure~\ref{fig:wass_distances} show this epoch-6 structural change for the \(V\) and \(O\) projections under the same model and dataset ordering as the entrywise plots.

\begin{figure}[!htbp]
\centering
\setlength{\tabcolsep}{2pt}

\begin{tabular}{ccc}
\multicolumn{3}{c}{\small \textit{\(V\) Projection}} \\
\includegraphics[width=0.32\linewidth]{figs/figs_wass_full_by_task/wasserstein_distance_dataset_comparison/llama_snapshot_epoch6_v.pdf} &
\includegraphics[width=0.32\linewidth]{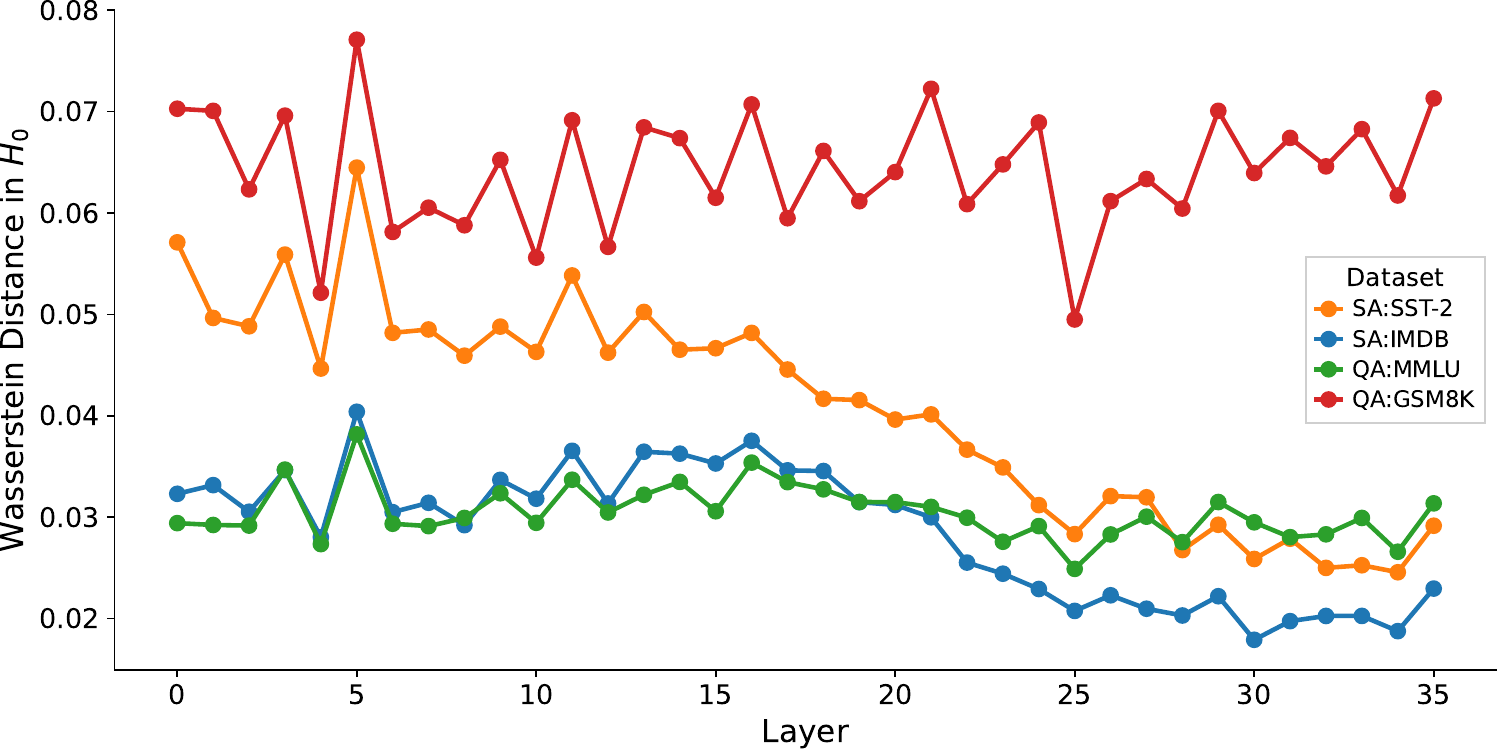} &
\includegraphics[width=0.32\linewidth]{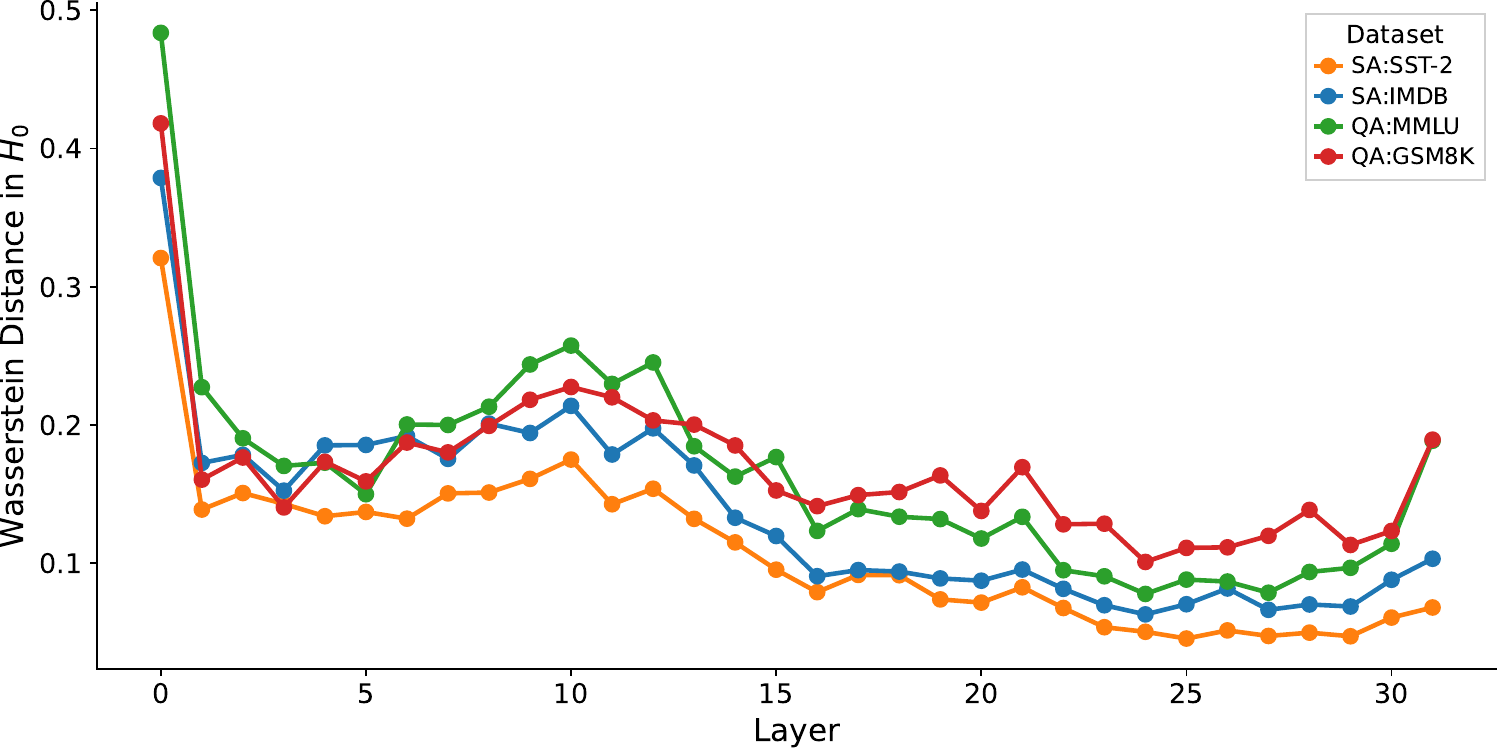} \\
[2em]
\multicolumn{3}{c}{\small \textit{\(O\) Projection}} \\
\includegraphics[width=0.32\linewidth]{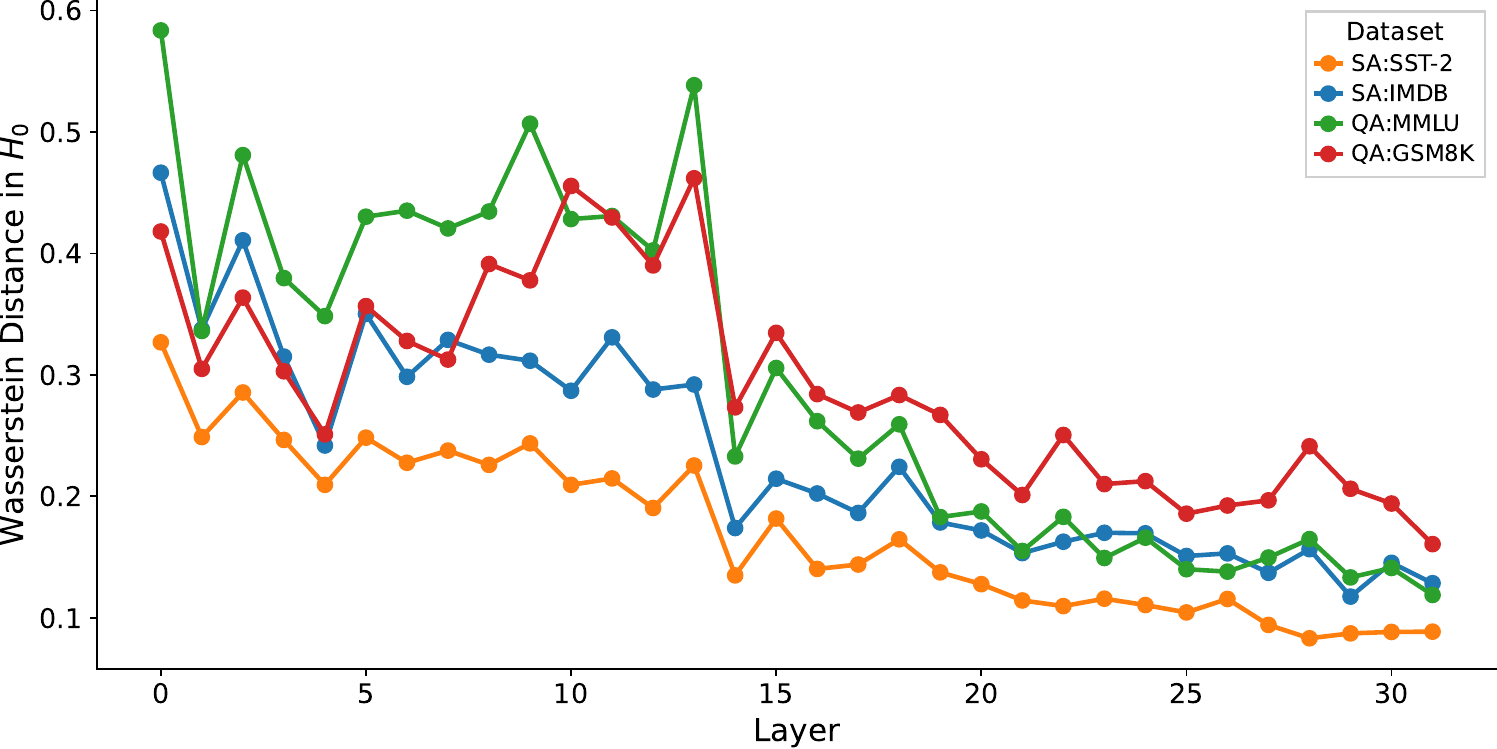} &
\includegraphics[width=0.32\linewidth]{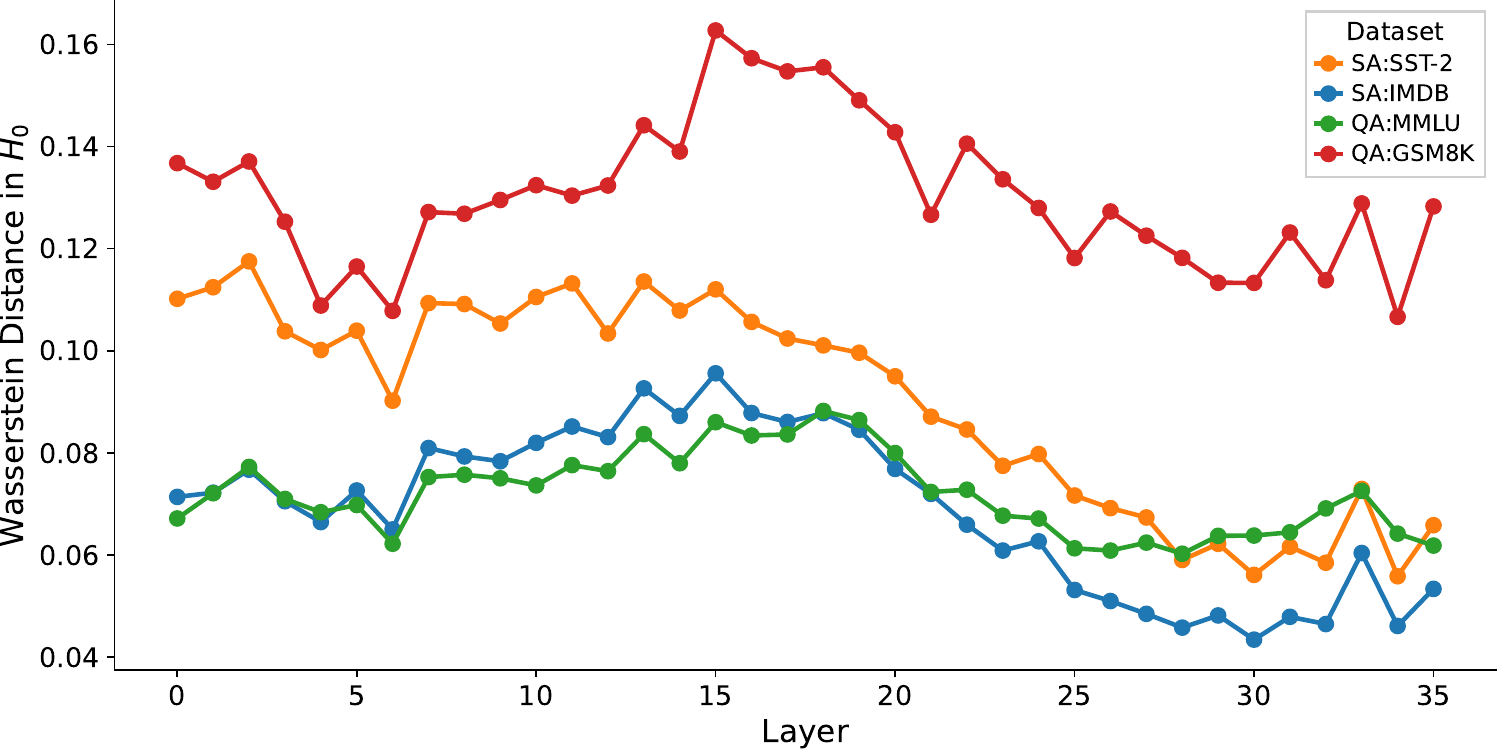} &
\includegraphics[width=0.32\linewidth]{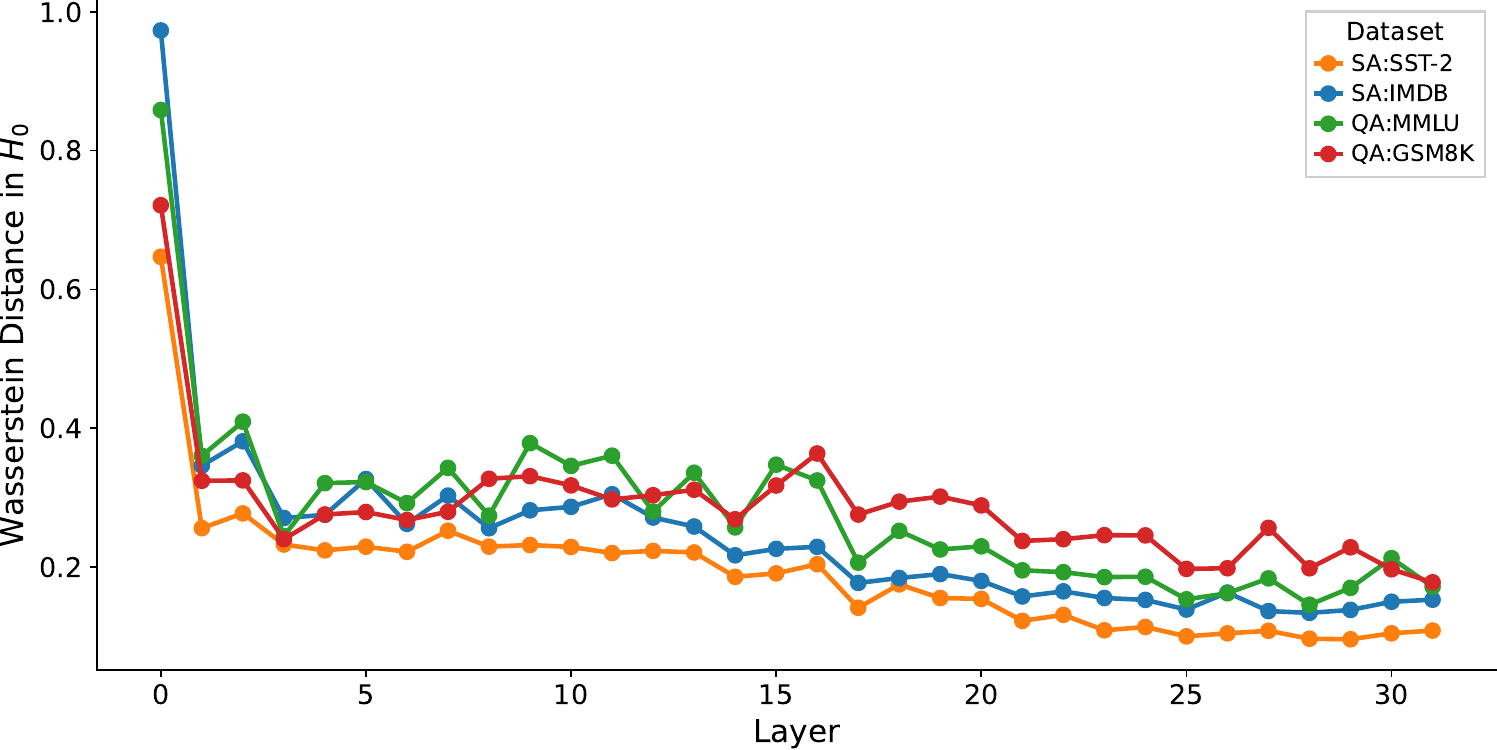}\\

\small \texttt{LLaMA-3.1-8B} & \small \texttt{Qwen3-8B-Base} & \small \texttt{Mistral-7B-v0.3} \\


\end{tabular}

\caption{Epoch-6 Wasserstein distances for the \(V\) and \(O\) projection matrices under $H_0$.}
\label{fig:wass_distances}
\end{figure}
\FloatBarrier
\section{Freezing Pseudocode}
\label{app:freezing-pseudocode}

Algorithms~\ref{alg:layer-orderings}--\ref{alg:selective-kqvo-freezing} summarize the full \method selection pipeline. Algorithm~\ref{alg:layer-orderings} computes projection-specific layer orderings using Wasserstein topological drift, Algorithm~\ref{alg:frozen-sets} converts these orderings into frozen \(K/Q/V/O\) layer sets under the Low or High regime, and Algorithm~\ref{alg:selective-kqvo-freezing} applies the resulting sets by freezing all parameters and re-enabling gradients only for the selected unfrozen \(K/Q/V/O\) projections.

\begin{algorithm}[t]
\caption{Computing TDA-Based Wasserstein Layer Orderings}
\label{alg:layer-orderings}
\begin{algorithmic}[1]
\Require Pretrained model \(f_{\theta^{(0)}}\); fully fine-tuned model \(f_{\theta^{(T)}}\); projection set \(\mathcal{P}=\{K,Q,V,O\}\); Wasserstein order \(p\)
\Ensure Ordered layer lists \(\pi_K,\pi_Q,\pi_V,\pi_O\)

\For{$P \in \mathcal{P}$}
    \For{each transformer layer \(\ell\)}
        \State Extract pretrained matrix \(W_{\ell,P}^{(0)}\) and fine-tuned matrix \(W_{\ell,P}^{(T)}\)
        \State \(X_{\ell,P}^{(0)} \gets \mathrm{rows}(W_{\ell,P}^{(0)})\), \quad \(X_{\ell,P}^{(T)} \gets \mathrm{rows}(W_{\ell,P}^{(T)})\)
        \State \(D_{\ell,P}^{(0)} \gets \mathrm{PD}(X_{\ell,P}^{(0)})\), \quad \(D_{\ell,P}^{(T)} \gets \mathrm{PD}(X_{\ell,P}^{(T)})\)
        \State \(s_{\ell,P} \gets W_p\left(D_{\ell,P}^{(0)},D_{\ell,P}^{(T)}\right)\) \Comment{Wasserstein distance between persistence diagrams}
    \EndFor
    \State \(\pi_P \gets \mathrm{argsort}_{\ell}(s_{\ell,P})\) \Comment{low to high Wasserstein score}
\EndFor
\State \Return \(\pi_K,\pi_Q,\pi_V,\pi_O\)
\end{algorithmic}
\end{algorithm}

\begin{algorithm}[!htbp]
\caption{Constructing Frozen Layer Sets from Wasserstein Layer Orderings}
\label{alg:frozen-sets}
\begin{algorithmic}[1]
\Require Ordered layer lists $\pi_K,\pi_Q,\pi_V,\pi_O$ sorted from low to high Wasserstein score; number of layers $L$; freezing budget $B$; regime $r\in\{\textsc{Low},\textsc{High}\}$
\Ensure Frozen layer sets $\mathcal{F}_K,\mathcal{F}_Q,\mathcal{F}_V,\mathcal{F}_O$

\For{$P \in \{K,Q,V,O\}$}
    \If{$r=\textsc{Low}$}
        \State $\mathcal{F}_P \gets \{\pi_P[0],\ldots,\pi_P[B-1]\}$
    \ElsIf{$r=\textsc{High}$}
        \State $\mathcal{F}_P \gets \{\pi_P[L-B],\ldots,\pi_P[L-1]\}$
    \EndIf
\EndFor

\State \Return $\mathcal{F}_K,\mathcal{F}_Q,\mathcal{F}_V,\mathcal{F}_O$
\end{algorithmic}
\end{algorithm}

\begin{algorithm}[!htbp]
\caption{Selective \(K/Q/V/O\) Tuning from Frozen Layer Sets}
\label{alg:selective-kqvo-freezing}
\begin{algorithmic}[1]
\Require Model $f_\theta$; frozen layer sets $\mathcal{F}_K,\mathcal{F}_Q,\mathcal{F}_V,\mathcal{F}_O$
\Ensure Model with selected \(K/Q/V/O\) parameters trainable

\ForAll{parameters $p \in \theta$}
    \State $p.\mathrm{requires\_grad} \gets \mathrm{False}$
\EndFor

\ForAll{named parameters $(n,p)$ in $f_\theta$}
    \State Parse transformer layer index $\ell$ from parameter name $n$
    \If{$n$ contains \texttt{k\_proj} and $\ell \notin \mathcal{F}_K$}
        \State $p.\mathrm{requires\_grad} \gets \mathrm{True}$
    \ElsIf{$n$ contains \texttt{q\_proj} and $\ell \notin \mathcal{F}_Q$}
        \State $p.\mathrm{requires\_grad} \gets \mathrm{True}$
    \ElsIf{$n$ contains \texttt{v\_proj} and $\ell \notin \mathcal{F}_V$}
        \State $p.\mathrm{requires\_grad} \gets \mathrm{True}$
    \ElsIf{$n$ contains \texttt{o\_proj} and $\ell \notin \mathcal{F}_O$}
        \State $p.\mathrm{requires\_grad} \gets \mathrm{True}$
    \EndIf
\EndFor

\State Keep all non-\(K/Q/V/O\) parameters frozen
\State \Return $f_\theta$
\end{algorithmic}
\end{algorithm}
\FloatBarrier
\section{Topological Drift Across Models and Datasets}
\label{app:topological-drift}

We use the topological distance to initialization and topological drift defined in Definition~\ref{def:distance} and Definition~\ref{def:topological-drift} to visualize how attention projection matrices change during fine-tuning. Figures~\ref{fig:topodrift-gsm8k-v-h0}-\ref{fig:topodrift-sst2-v-h0} show this quantity for the \(V\) and \(O\) projections across datasets and models. The drift-bar plots summarize epoch-to-epoch changes, while the line plots show the Wasserstein distance from initialization across layers for each epoch.


\begin{figure*}[!htbp]
\centering
\scriptsize
\setlength{\tabcolsep}{2pt}
\begin{tabular}{ccc}
\multicolumn{3}{c}{\small \textit{\(V\) Projection}} \\
\includegraphics[width=0.32\textwidth]{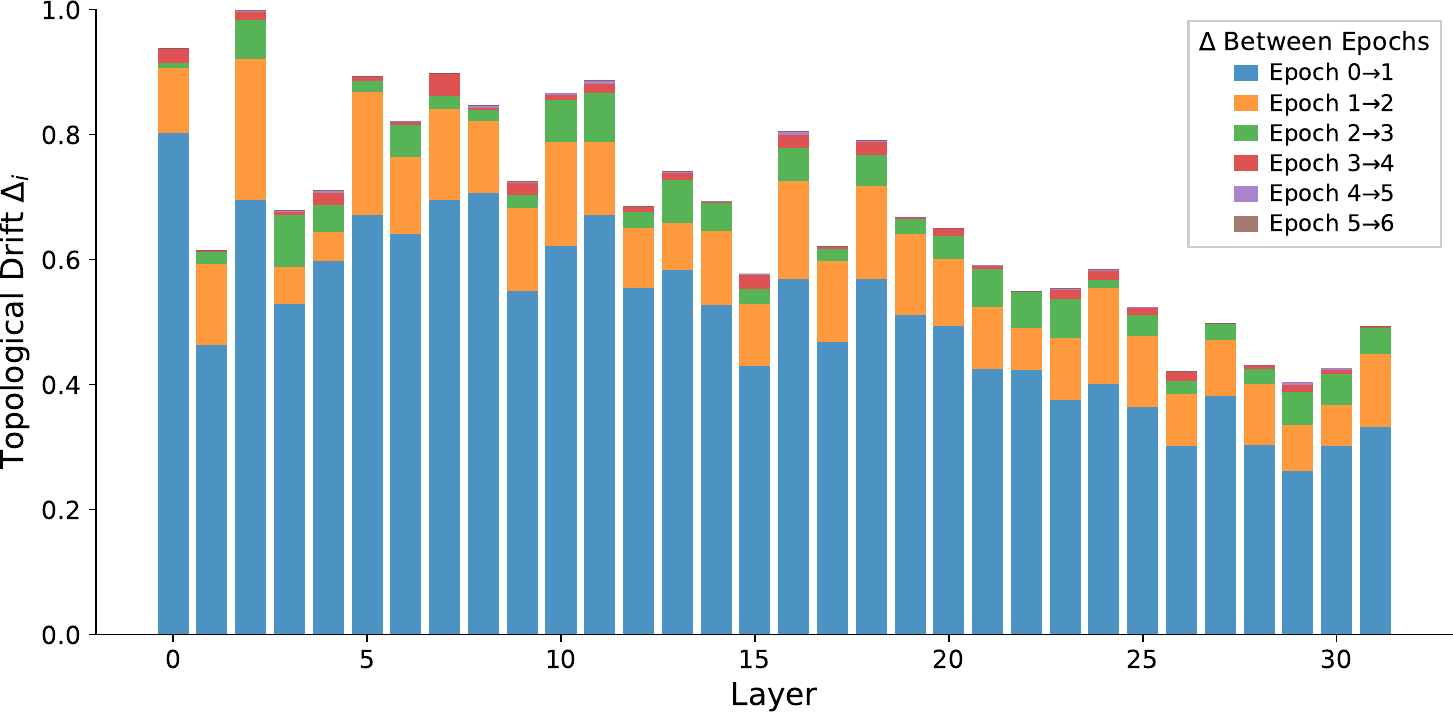} &
\includegraphics[width=0.32\textwidth]{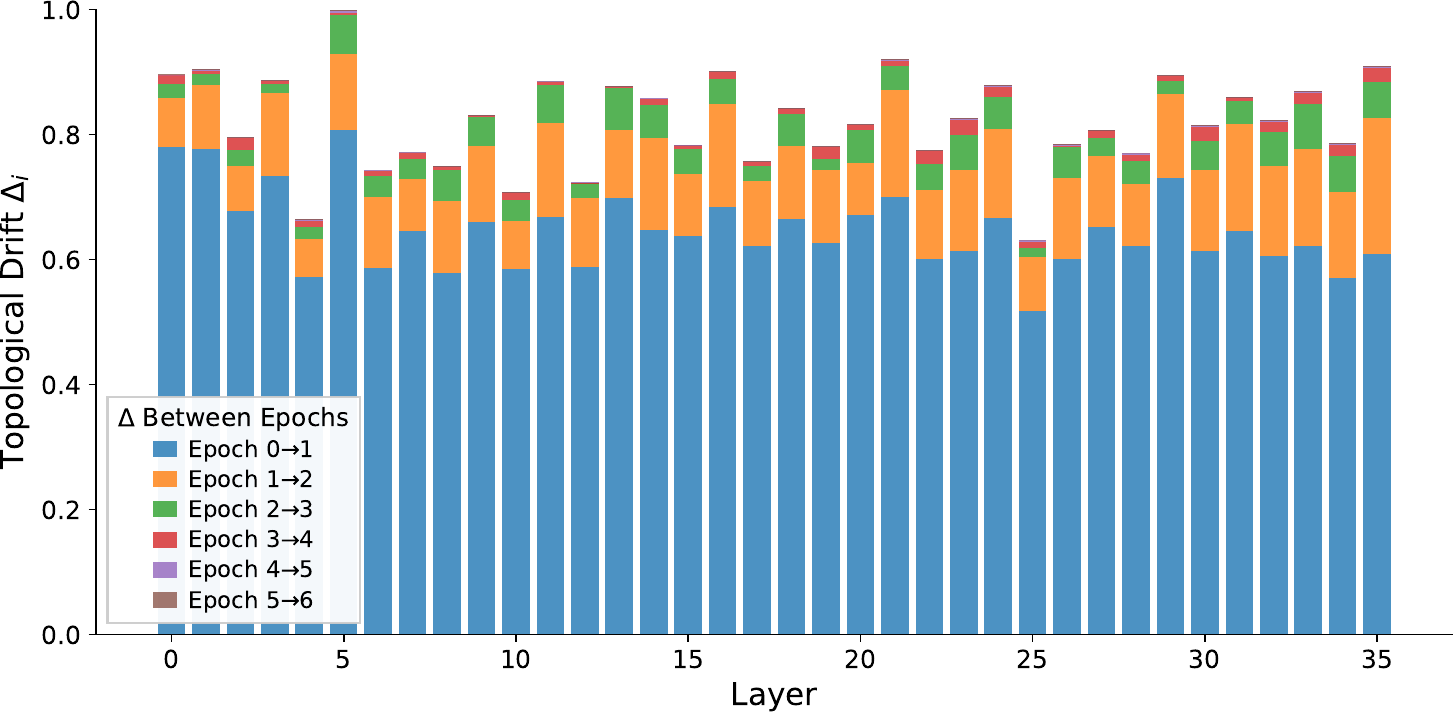} &
\includegraphics[width=0.32\textwidth]{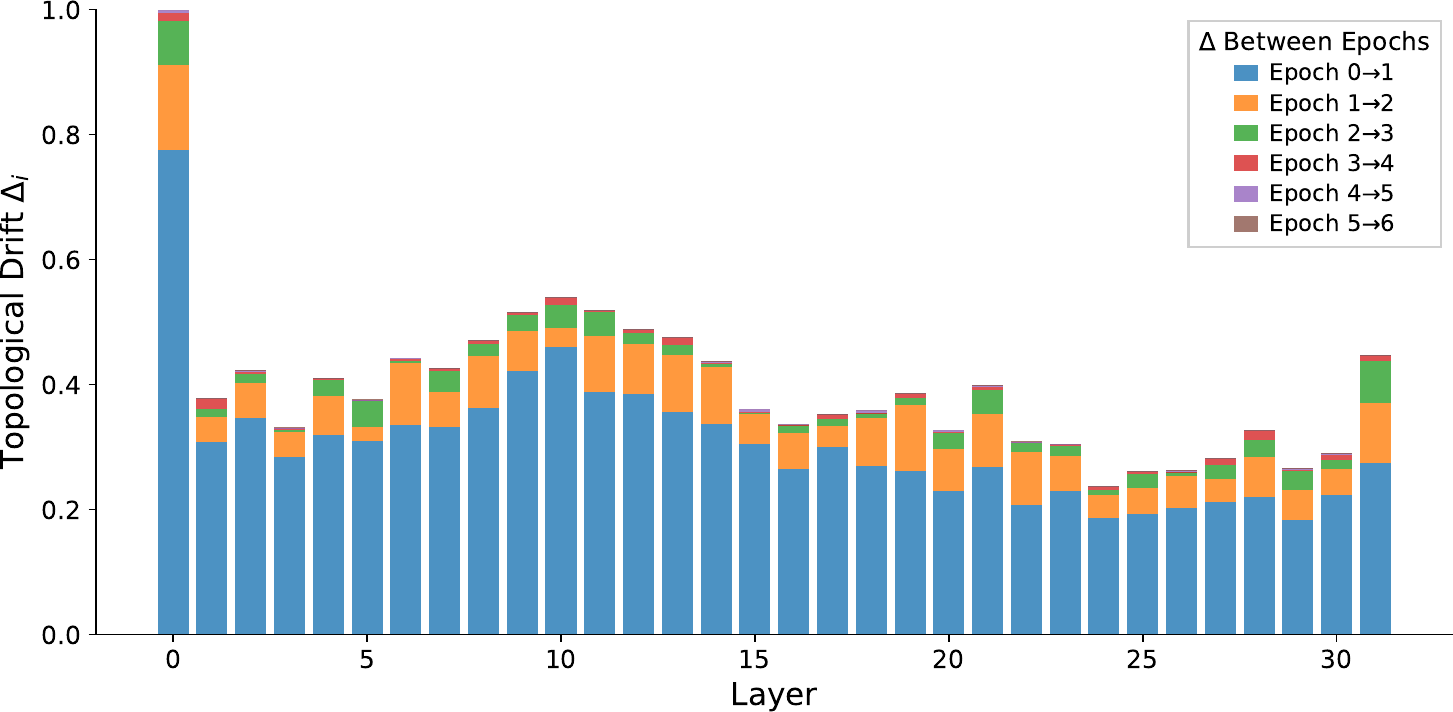} \\
[1em]
\includegraphics[width=0.32\textwidth]{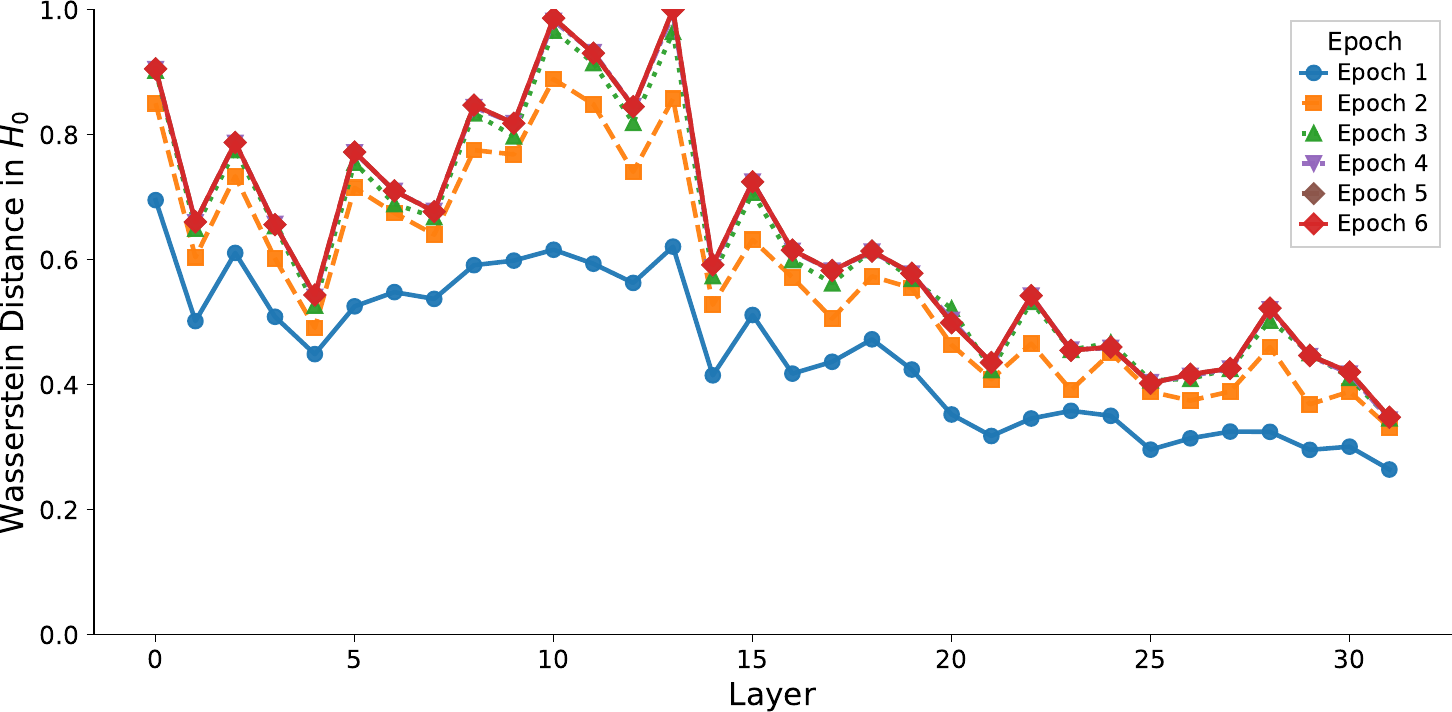} &
\includegraphics[width=0.32\textwidth]{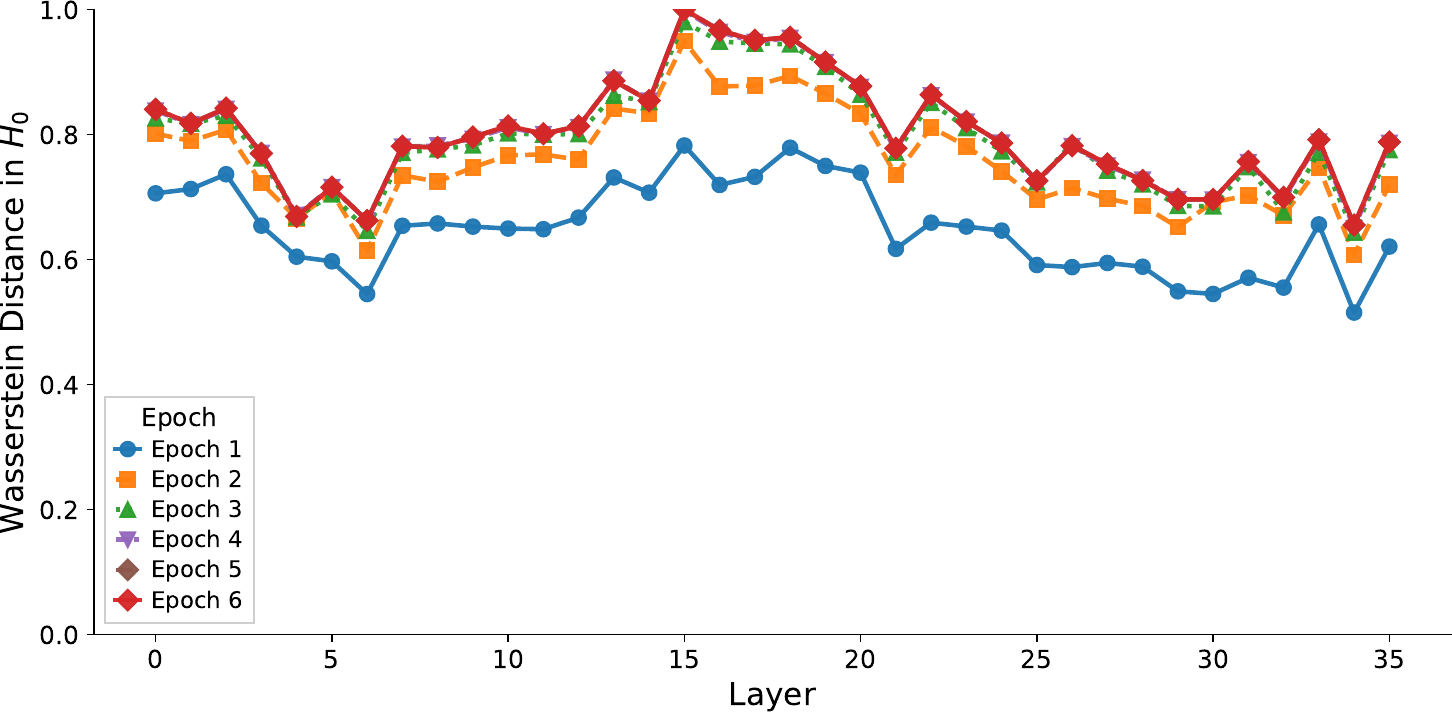} &
\includegraphics[width=0.32\textwidth]{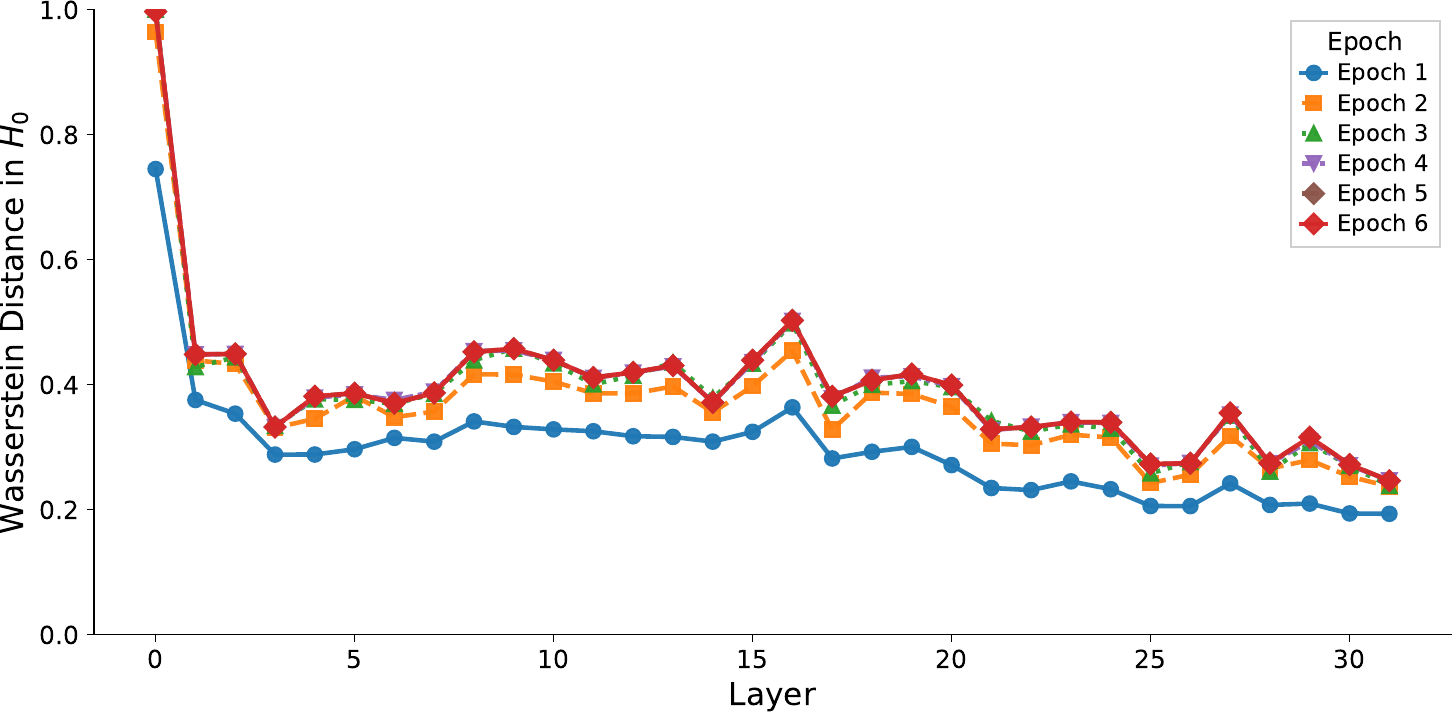} \\
[2em]
\multicolumn{3}{c}{\small \textit{\(O\) Projection}} \\
\includegraphics[width=0.32\textwidth]{figs/figs_wass_full_by_task/topological_drift/llama31_8b_gsm8k_v_h0_driftbars.pdf} &
\includegraphics[width=0.32\textwidth]{figs/figs_wass_full_by_task/topological_drift/qwen_8b_gsm8k_v_h0_driftbars.pdf} &
\includegraphics[width=0.32\textwidth]{figs/figs_wass_full_by_task/topological_drift/mistral7b_gsm8k_v_h0_driftbars.pdf} \\
[1em]
\includegraphics[width=0.32\textwidth]{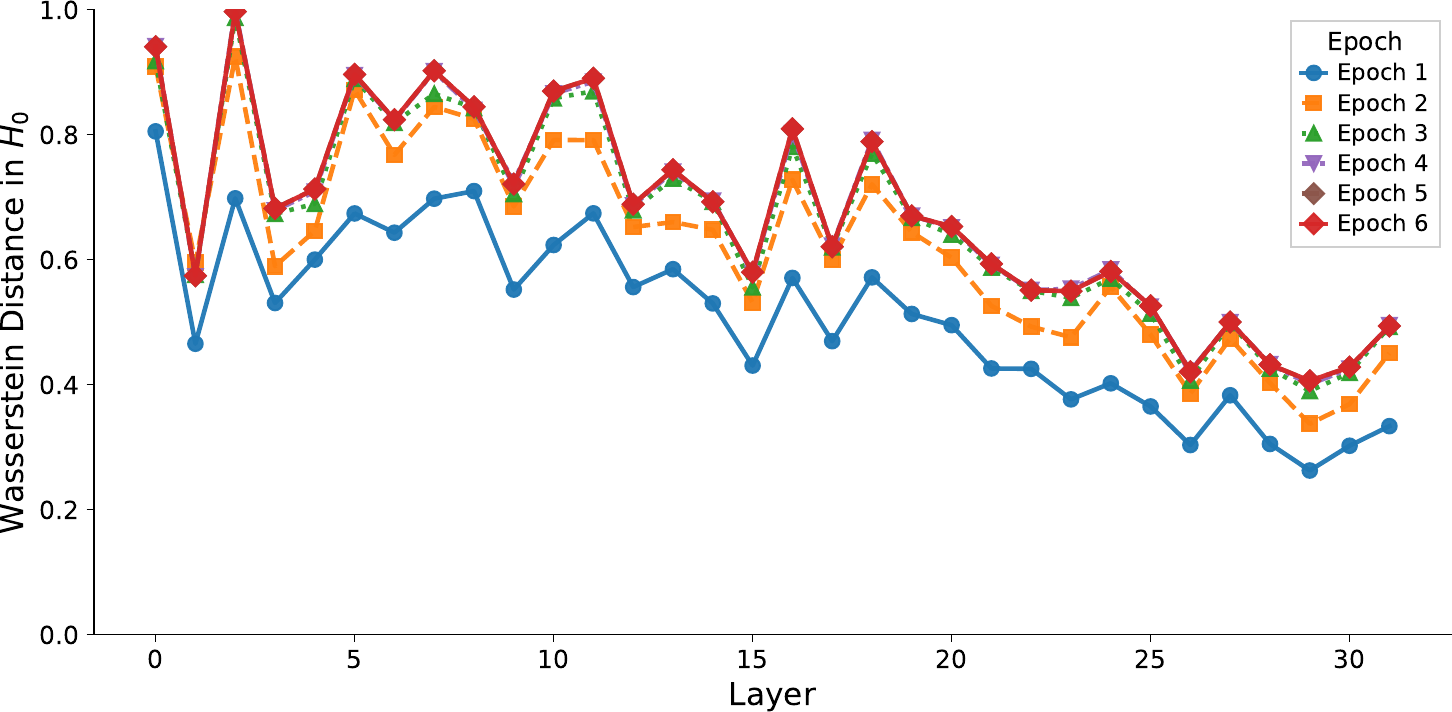} &
\includegraphics[width=0.32\textwidth]{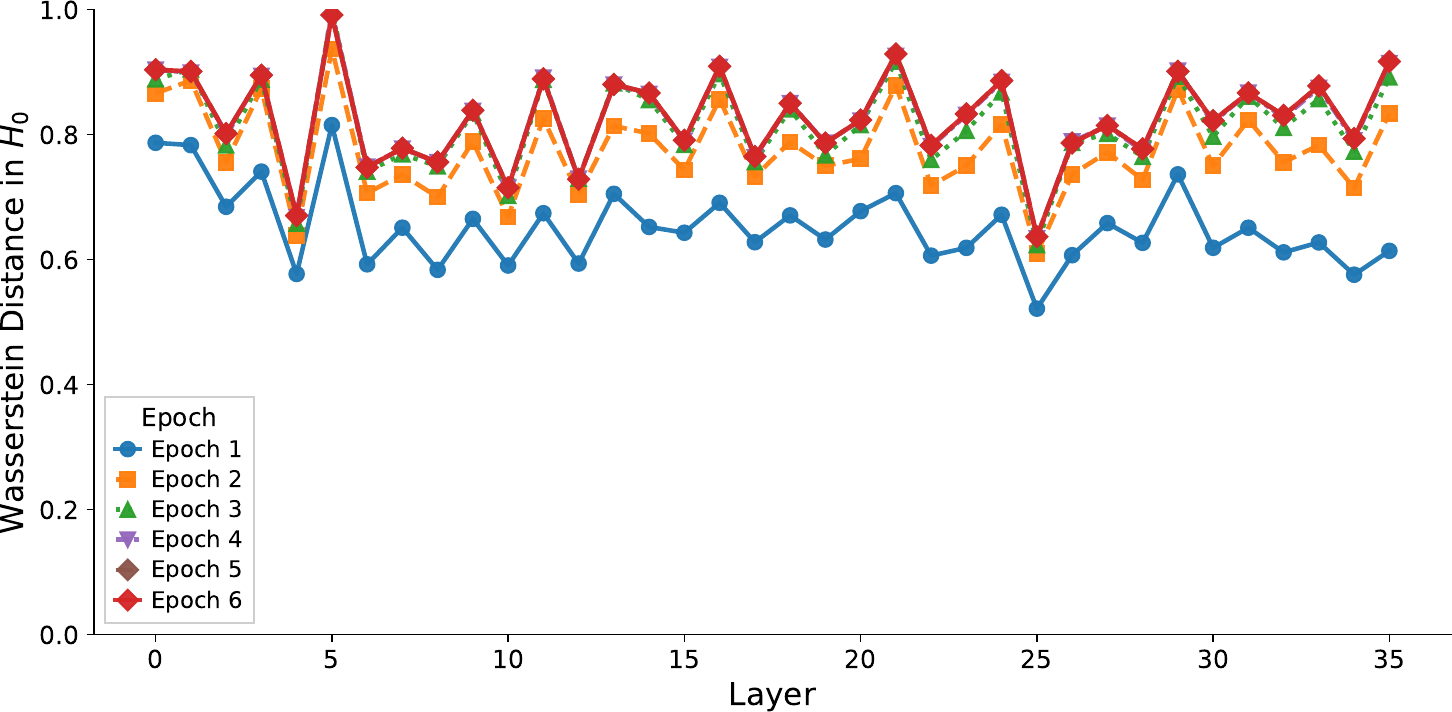} &
\includegraphics[width=0.32\textwidth]{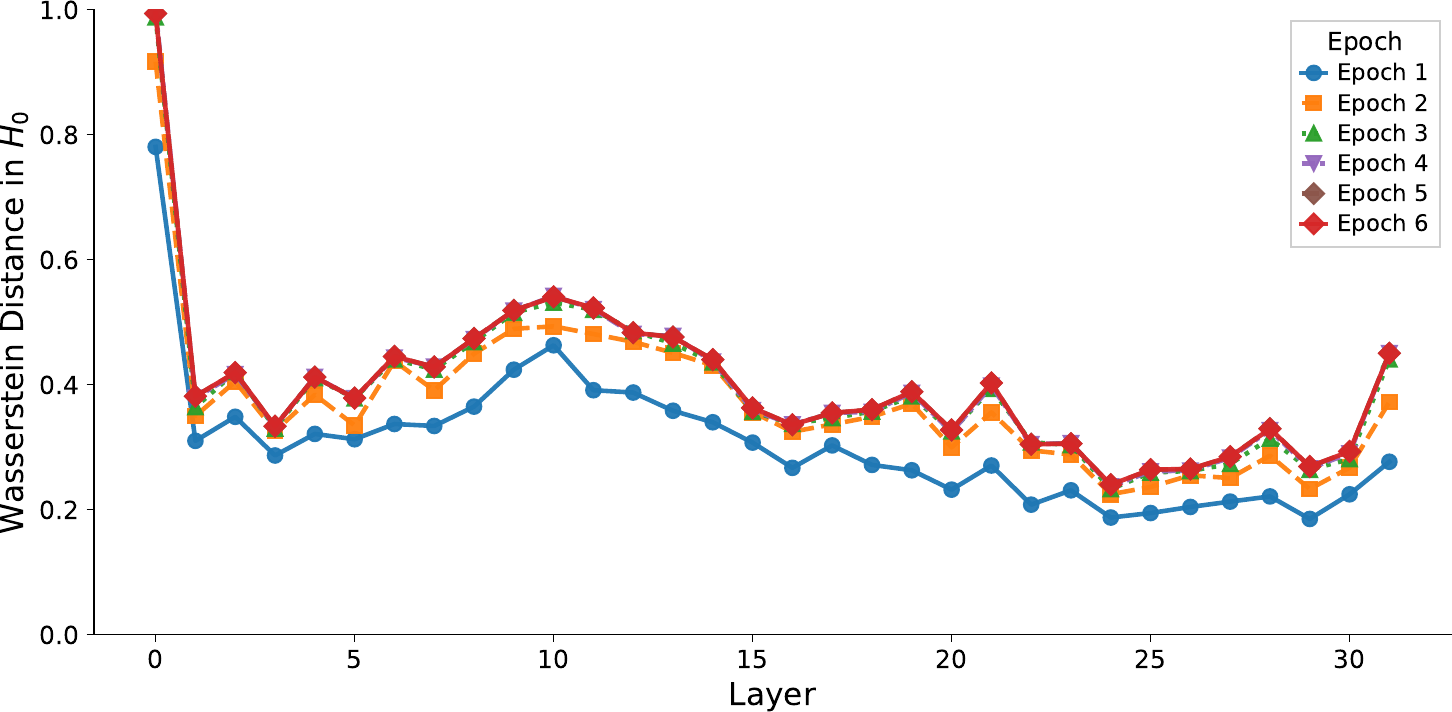} \\

\small \texttt{LLaMA-3.1-8B} & \small \texttt{Qwen3-8B-Base} & \small \texttt{Mistral-7B-v0.3}
\end{tabular}
\caption{Topological drift of the $V$ and $O$ projection on \texttt{QA:GSM8K} under $H_0$. The top row shows epoch-to-epoch drift bars, and the bottom row shows Wasserstein distance curves across layers.}
\label{fig:topodrift-gsm8k-v-h0}
\end{figure*}

\begin{figure*}[!htbp]
\centering
\scriptsize
\setlength{\tabcolsep}{2pt}
\begin{tabular}{ccc}

\multicolumn{3}{c}{\small \textit{\(V\) Projection}} \\
\includegraphics[width=0.32\textwidth]{figs/figs_wass_full_by_task/topological_drift/llama31_8b_mmlu_v_h0_driftbars.pdf} &
\includegraphics[width=0.32\textwidth]{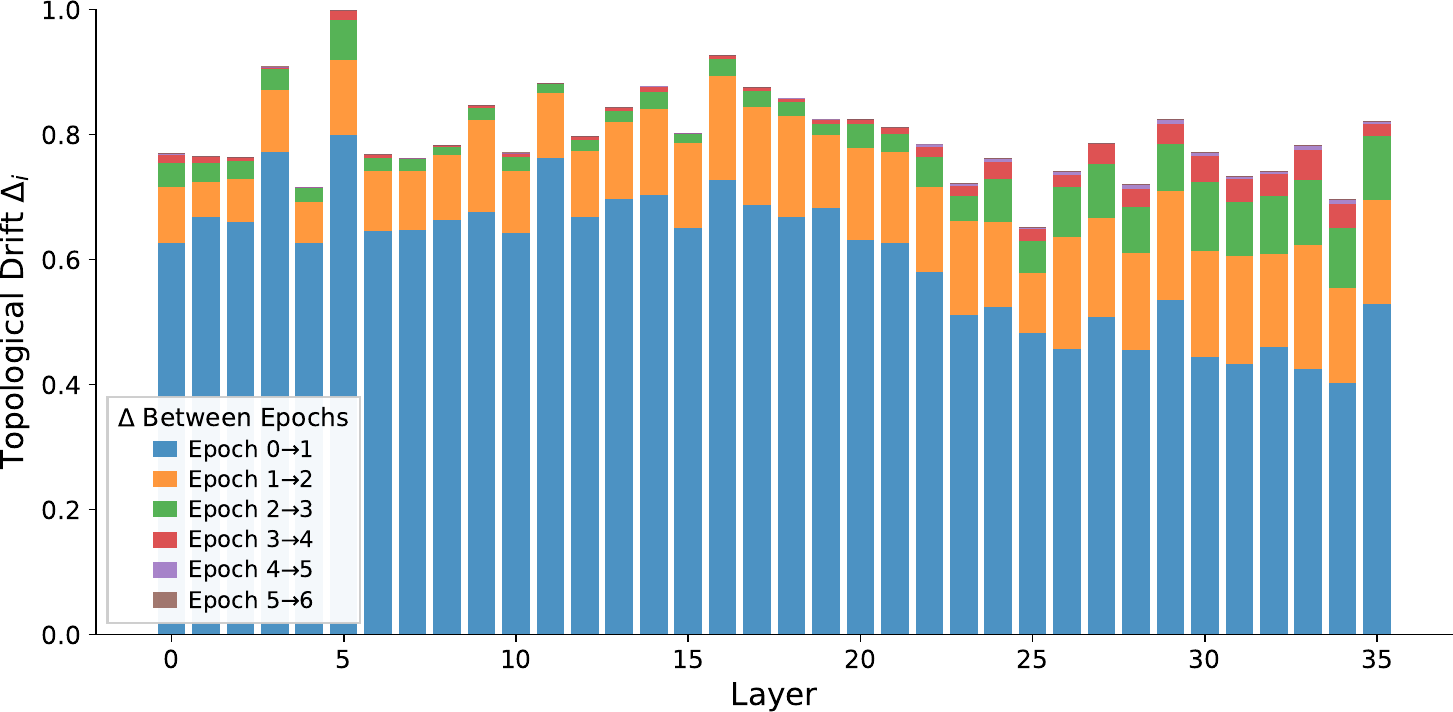} &
\includegraphics[width=0.32\textwidth]{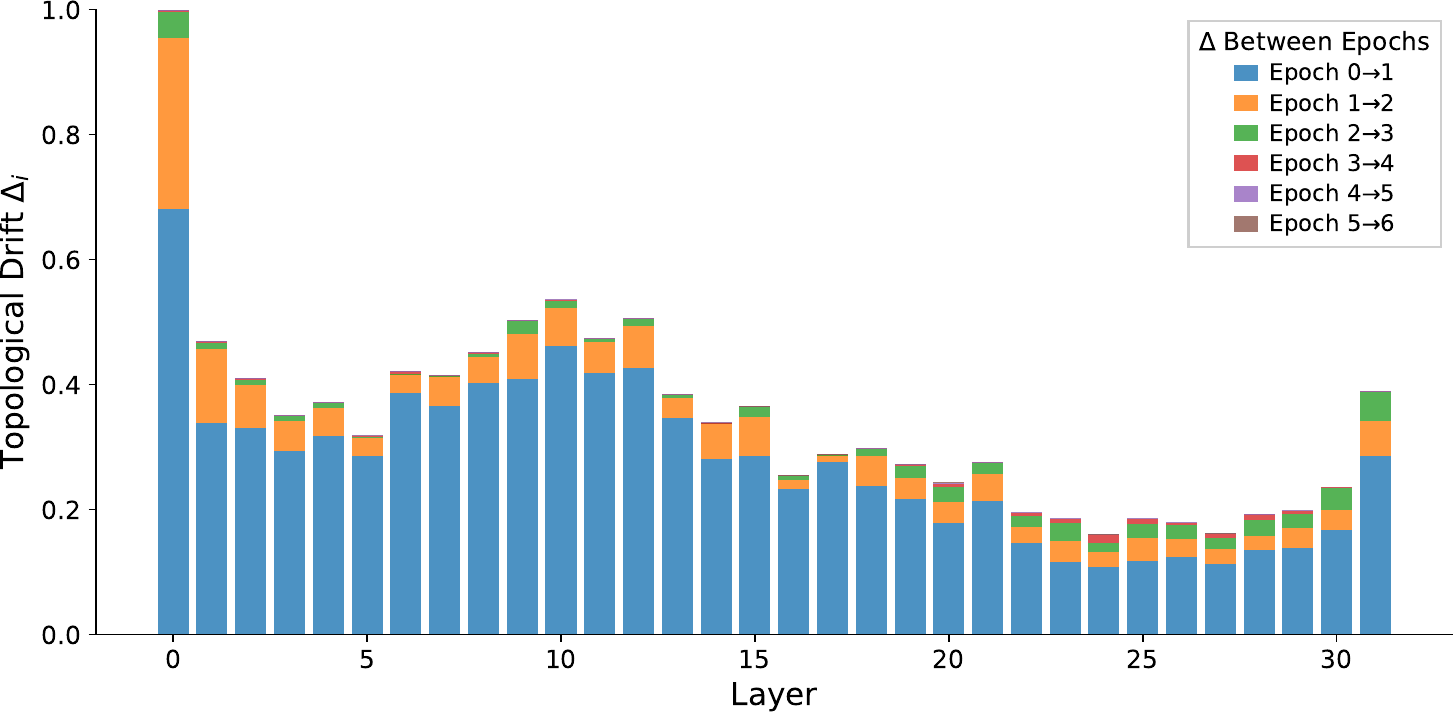} \\
[1em]
\includegraphics[width=0.32\textwidth]{figs/figs_wass_full_by_task/topological_drift/llama31_8b_mmlu_v_h0_lines.pdf} &
\includegraphics[width=0.32\textwidth]{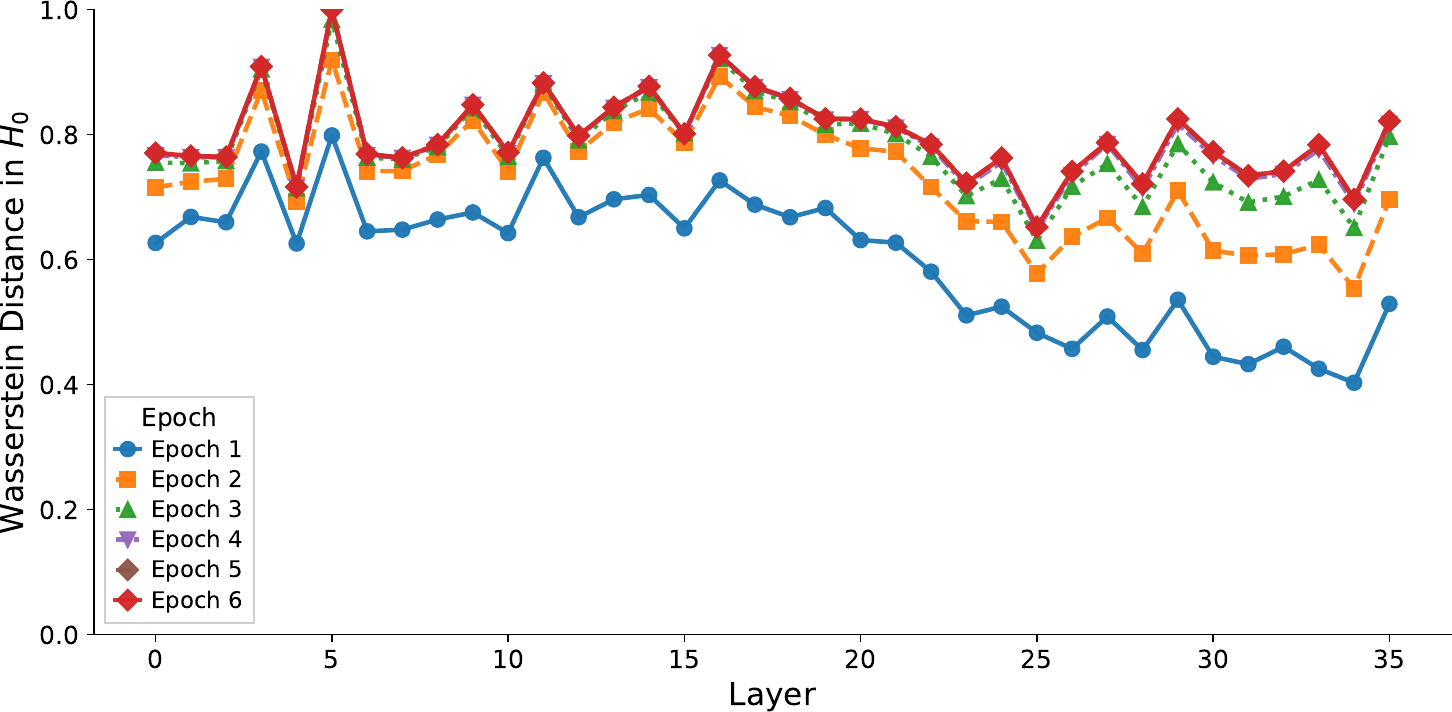} &
\includegraphics[width=0.32\textwidth]{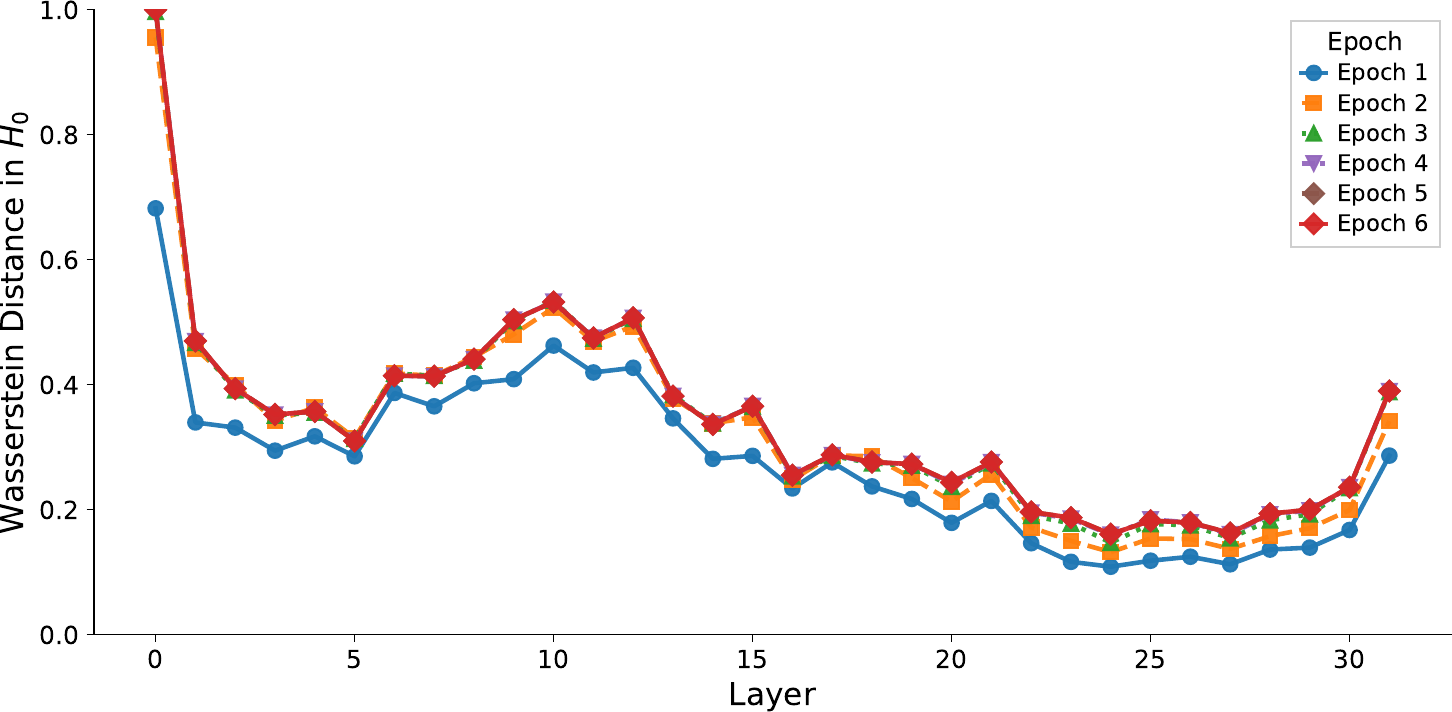} \\
[2em]
\multicolumn{3}{c}{\small \textit{\(O\) Projection}} \\
\includegraphics[width=0.32\textwidth]{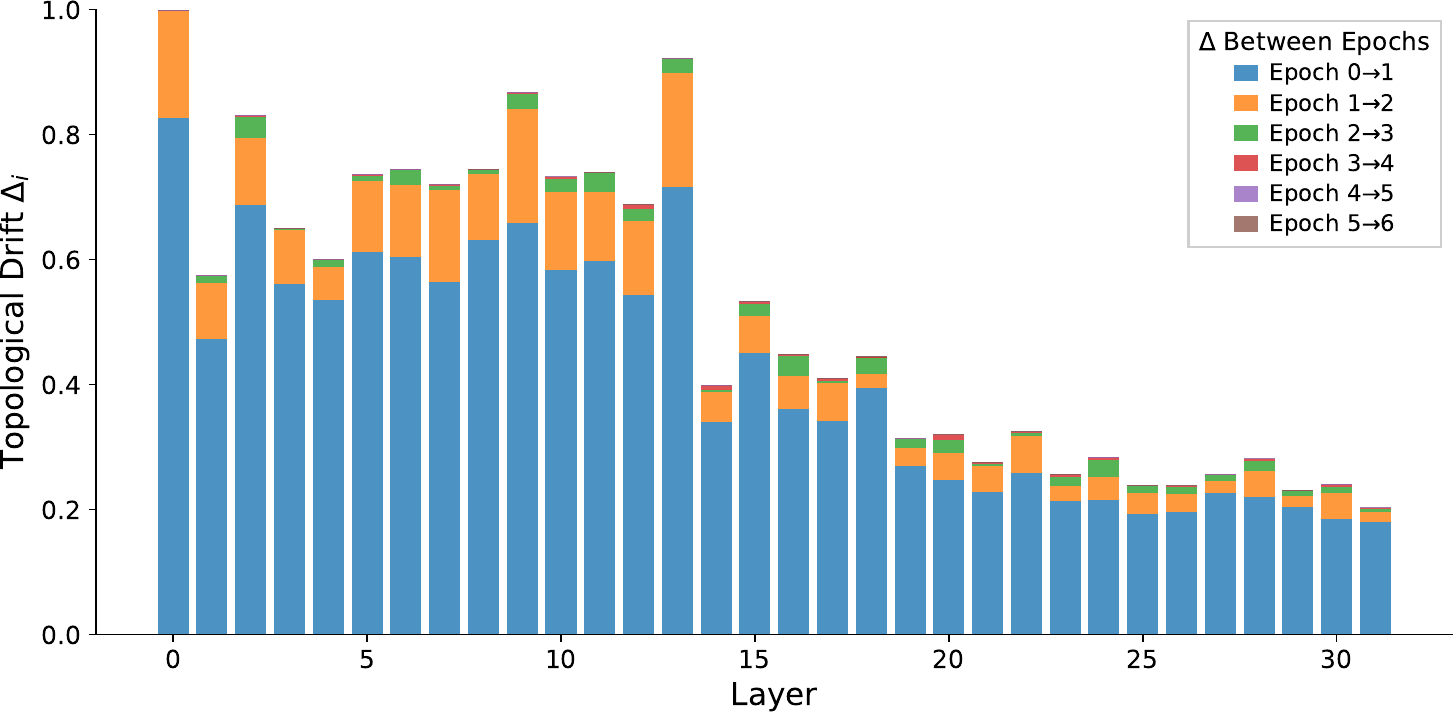} &
\includegraphics[width=0.32\textwidth]{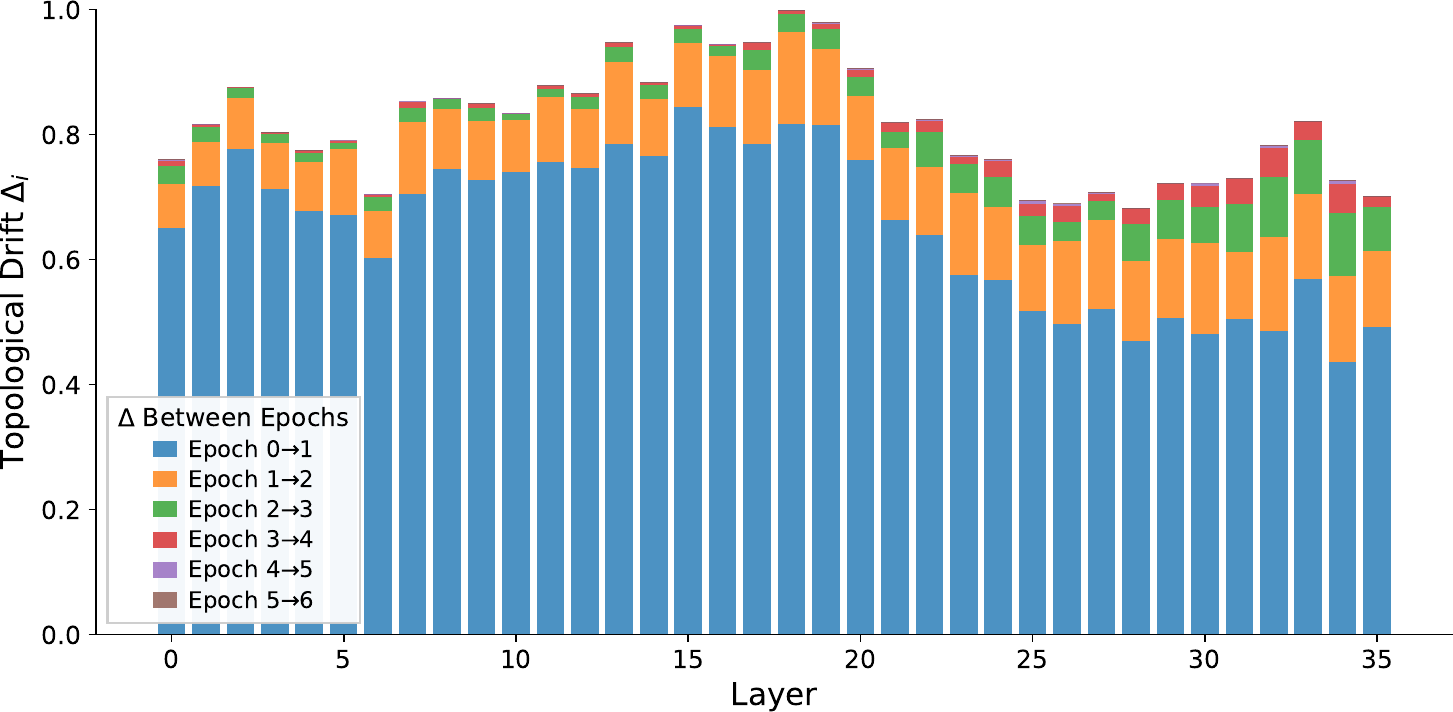} &
\includegraphics[width=0.32\textwidth]{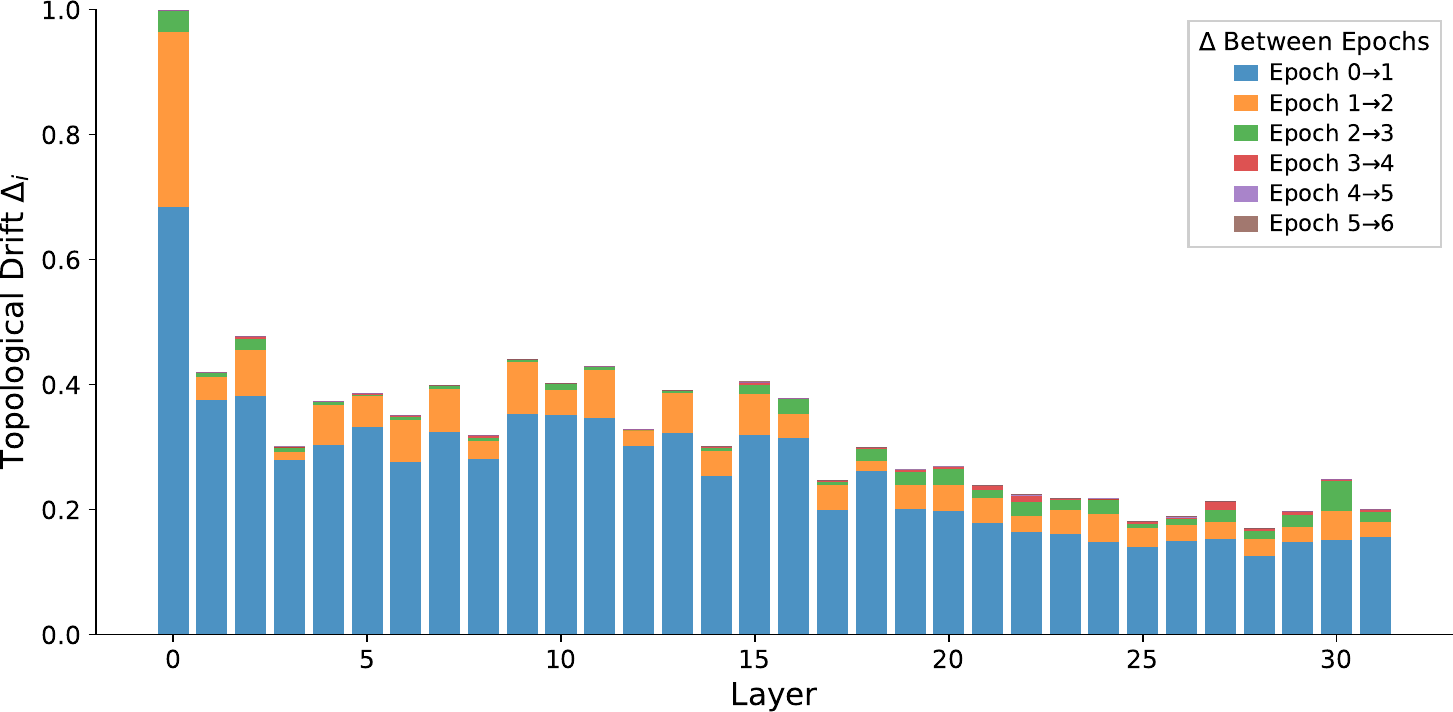} \\
[1em]
\includegraphics[width=0.32\textwidth]{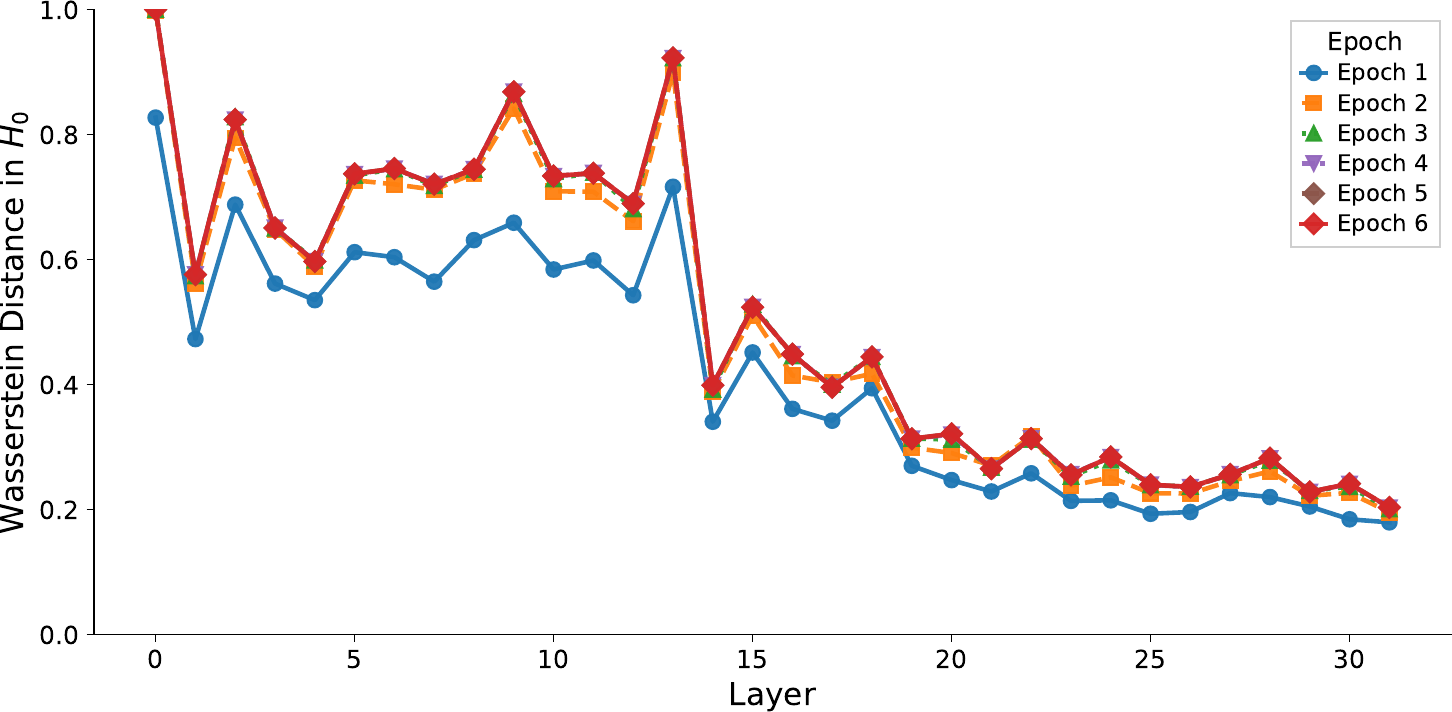} &
\includegraphics[width=0.32\textwidth]{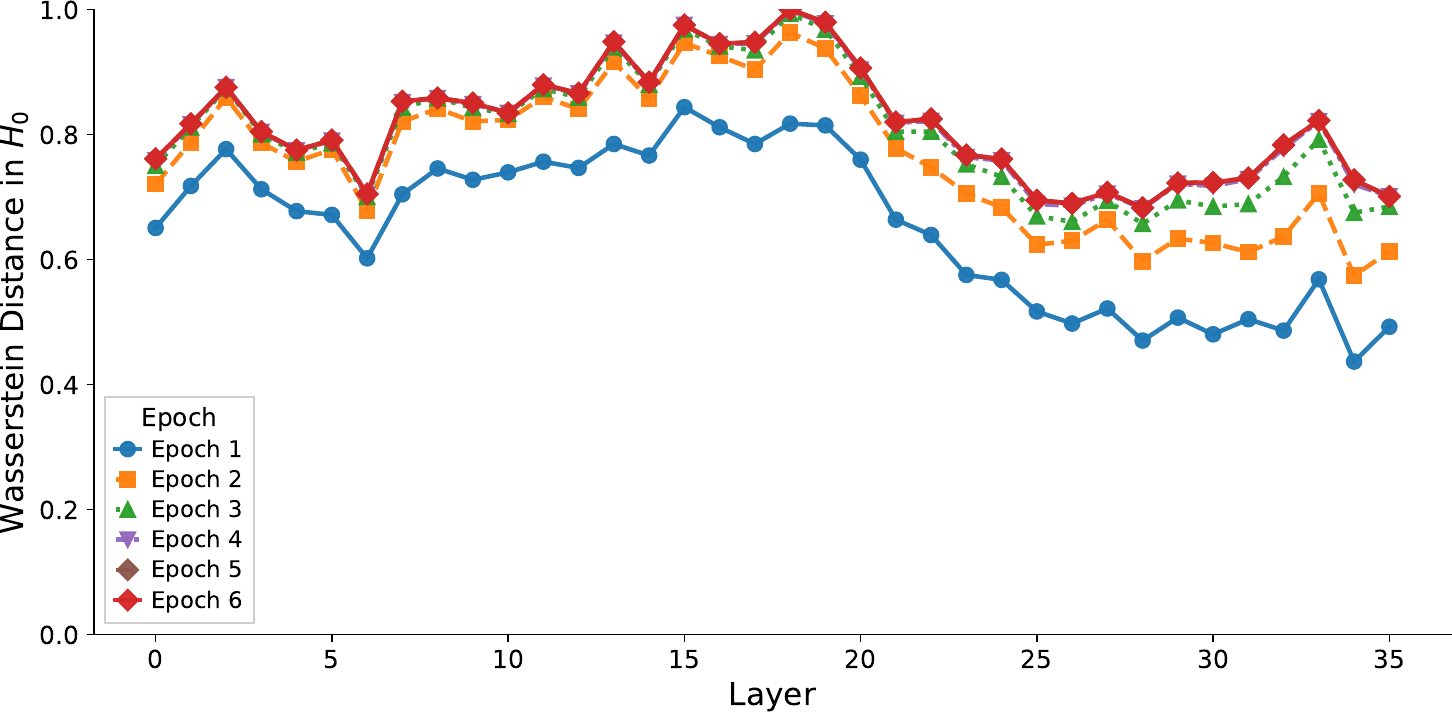} &
\includegraphics[width=0.32\textwidth]{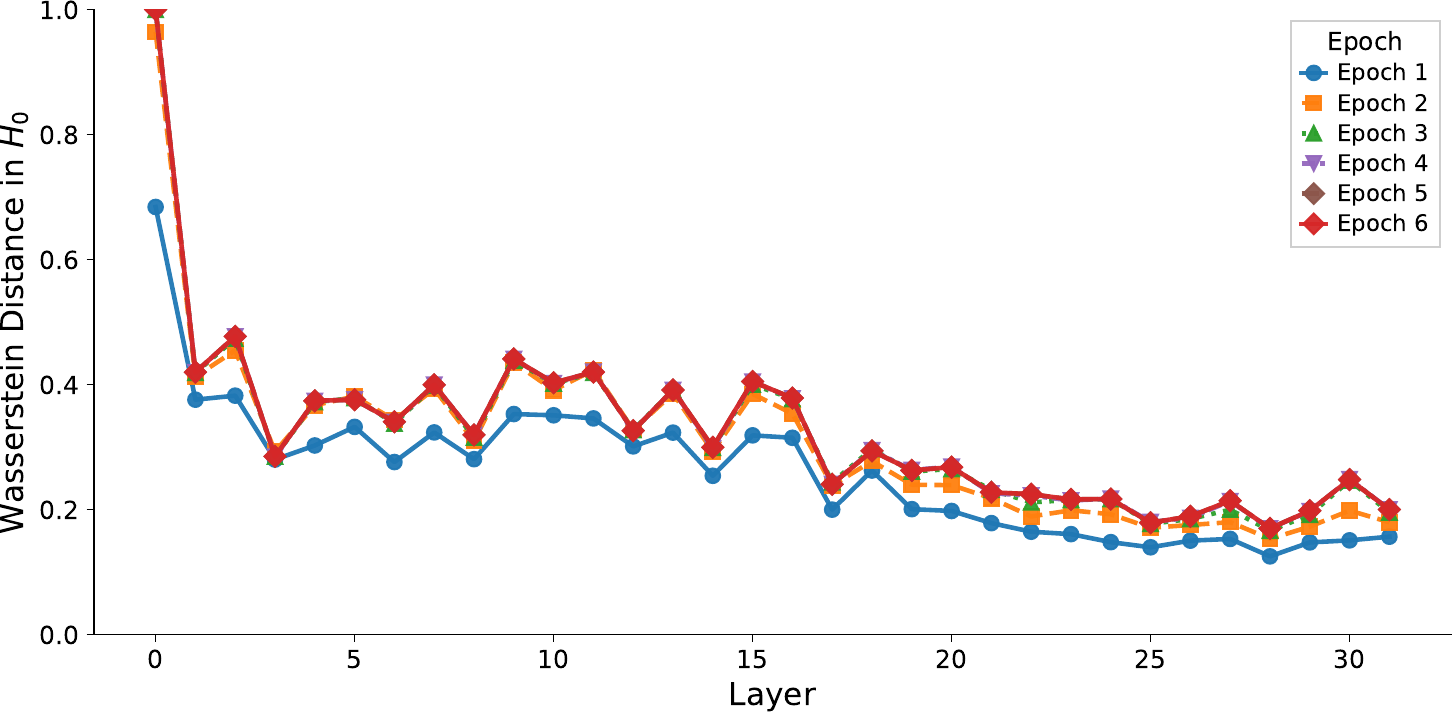} \\
\small \texttt{LLaMA-3.1-8B} & \small \texttt{Qwen3-8B-Base} & \small \texttt{Mistral-7B-v0.3} \\
\end{tabular}
\caption{Topological drift of the $V$ and $O$ projection on \texttt{QA:MMLU} under $H_0$. The top row shows epoch-to-epoch drift bars, and the bottom row shows Wasserstein distance curves across layers.}
\label{fig:topodrift-mmlu-v-h0}
\end{figure*}

\begin{figure*}[!htbp]
\centering
\scriptsize
\setlength{\tabcolsep}{2pt}
\begin{tabular}{ccc}
\multicolumn{3}{c}{\small \textit{\(V\) Projection}} \\
\includegraphics[width=0.32\textwidth]{figs/figs_wass_full_by_task/topological_drift/llama31_8b_imdb_v_h0_driftbars.pdf} &
\includegraphics[width=0.32\textwidth]{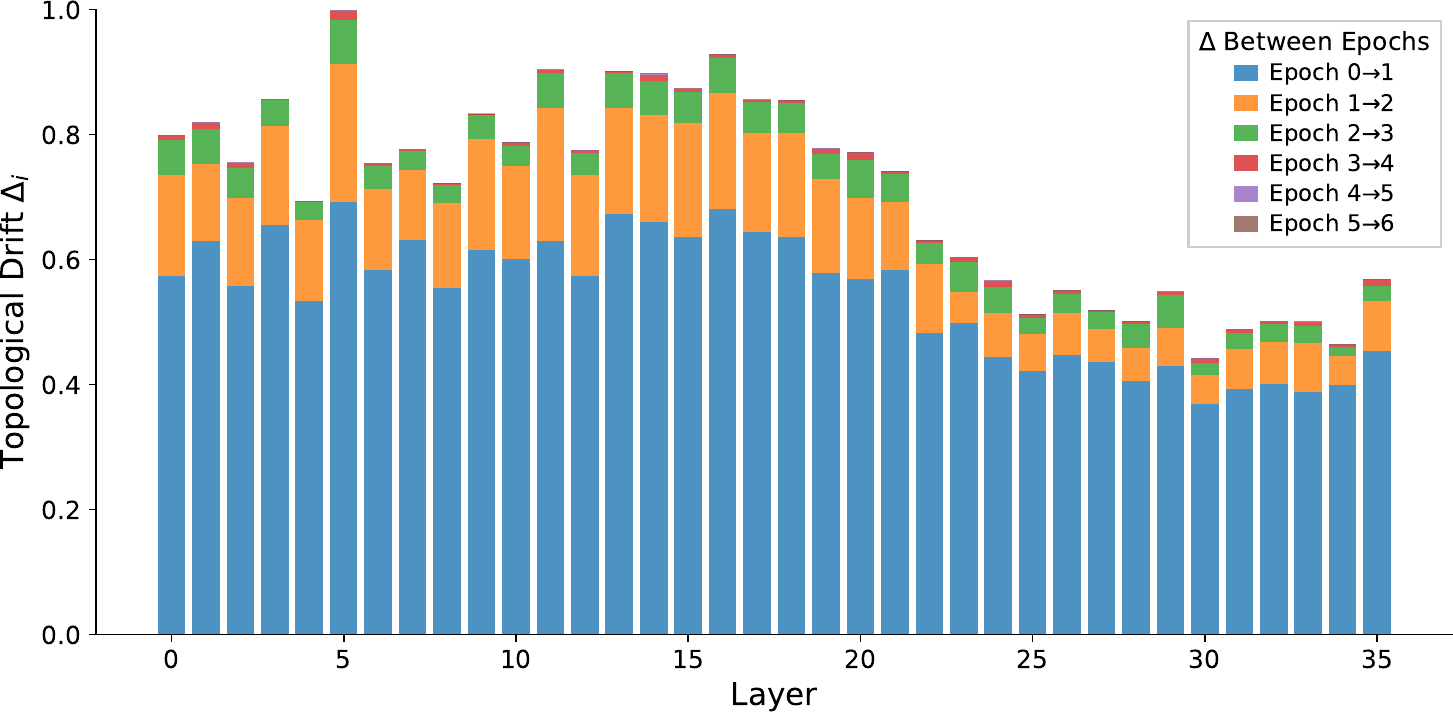} &
\includegraphics[width=0.32\textwidth]{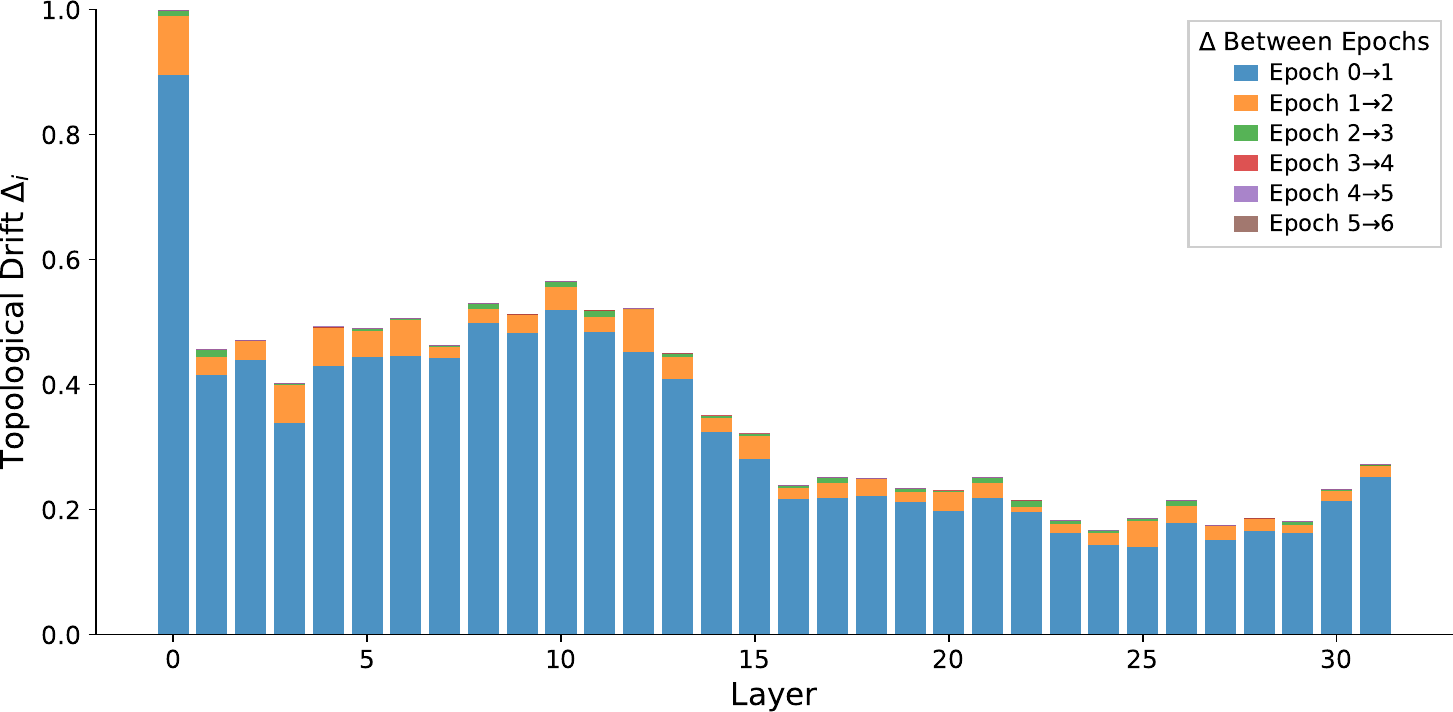} \\
[1em]
\includegraphics[width=0.32\textwidth]{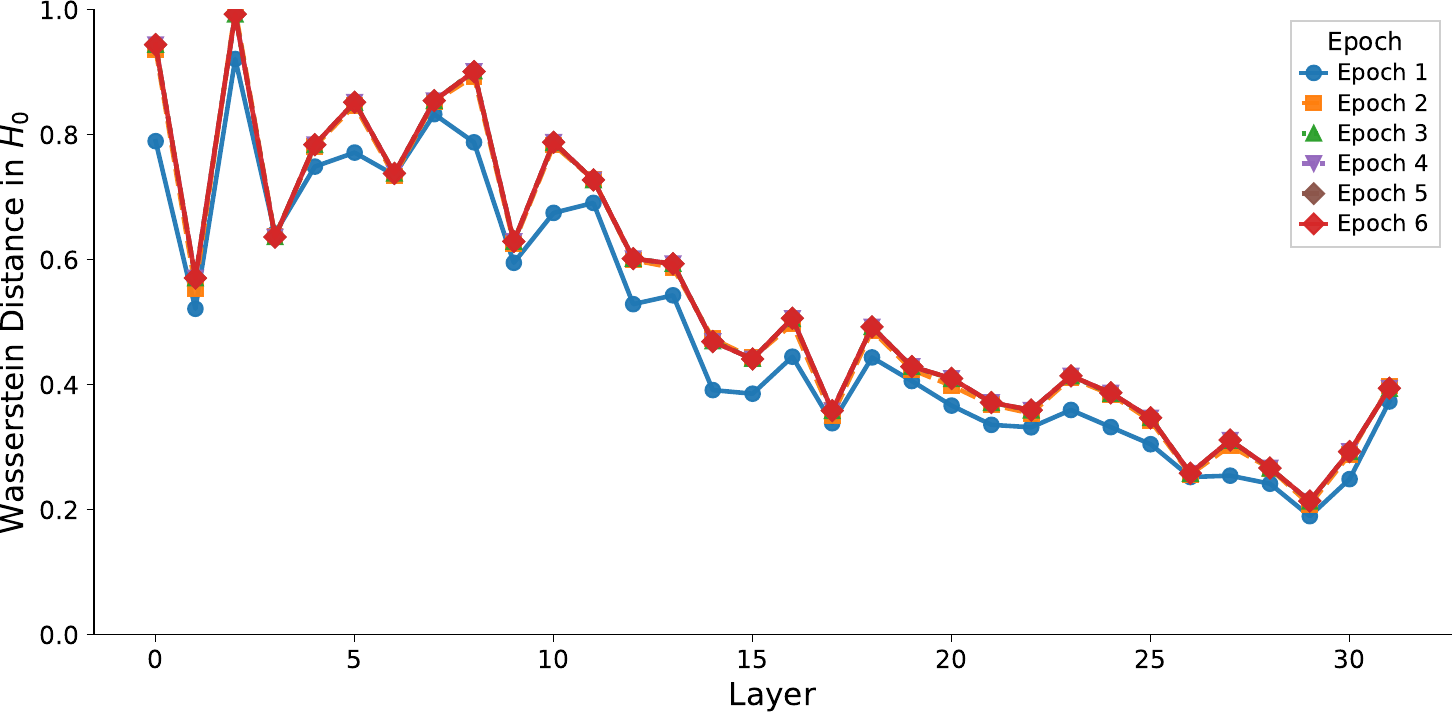} &
\includegraphics[width=0.32\textwidth]{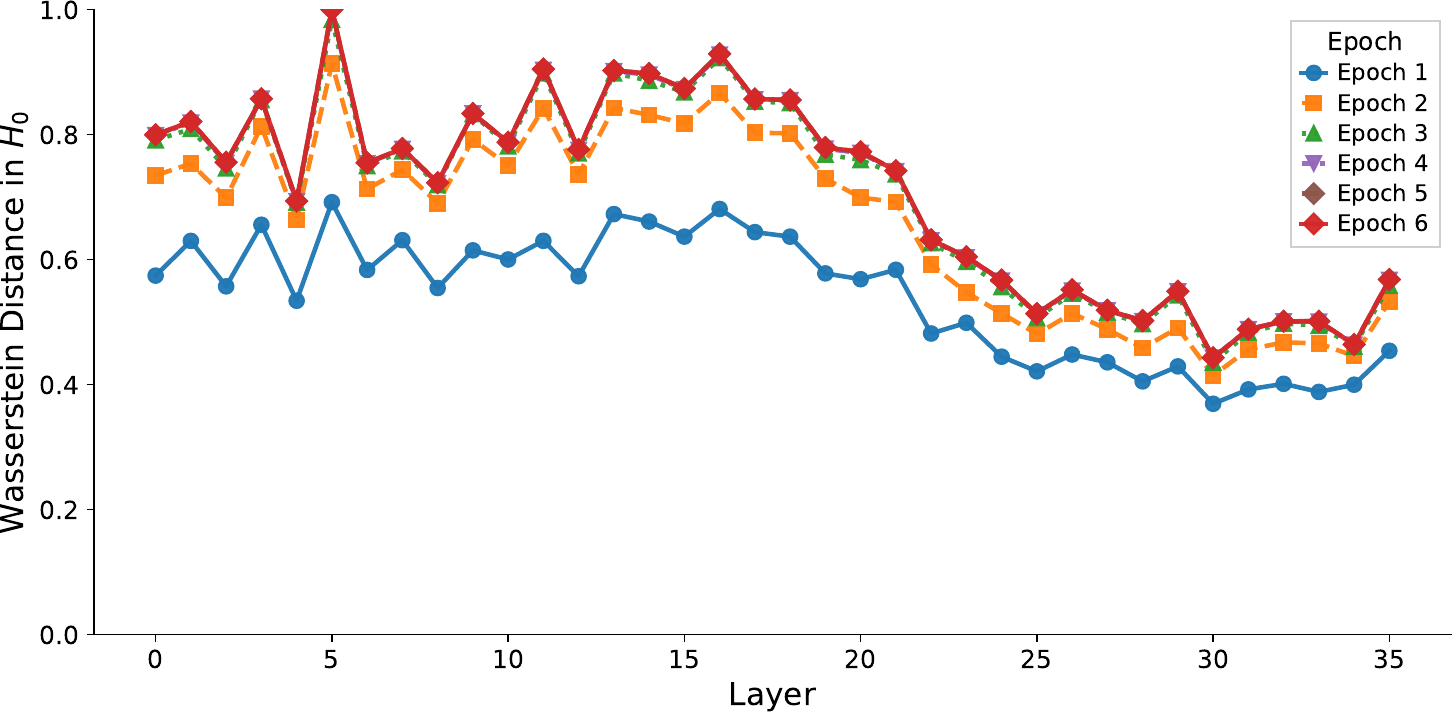} &
\includegraphics[width=0.32\textwidth]{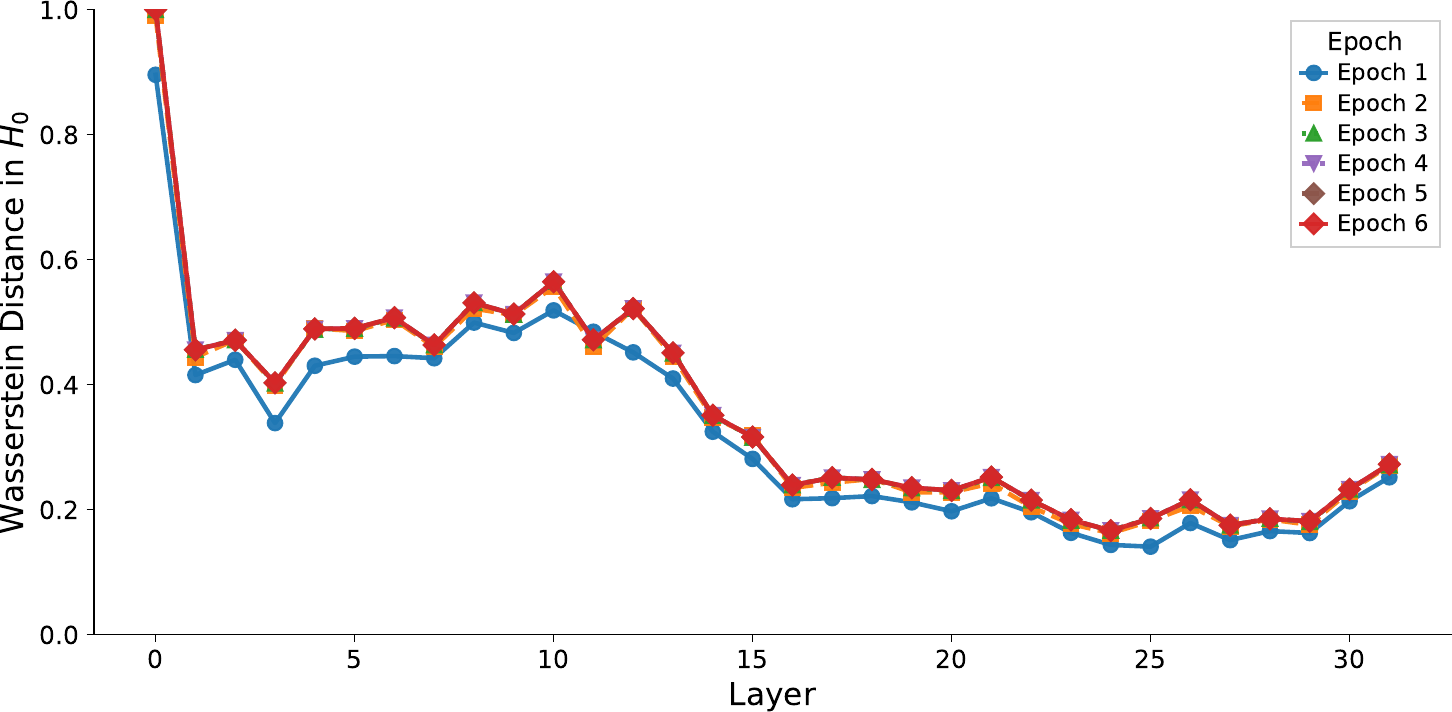} \\
[2em]
\multicolumn{3}{c}{\small \textit{\(O\) Projection}} \\
\includegraphics[width=0.32\textwidth]{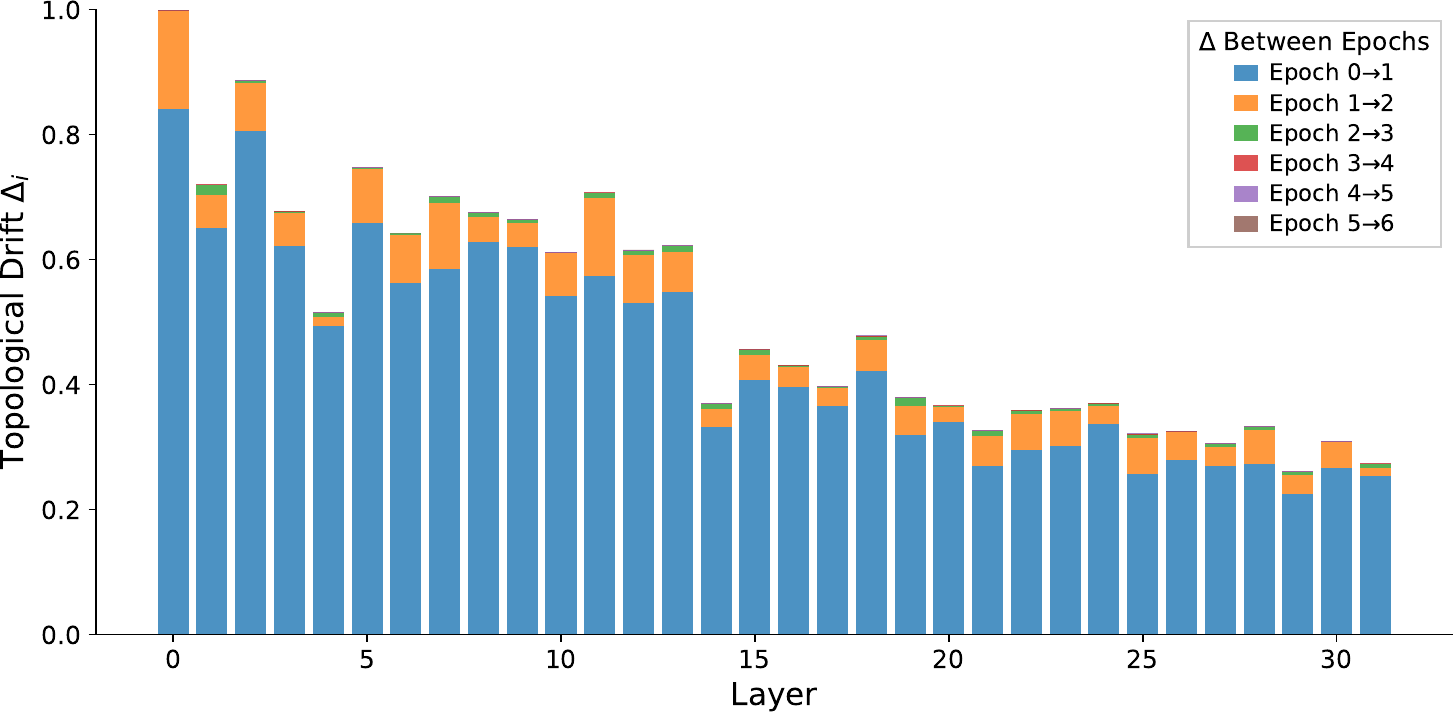} &
\includegraphics[width=0.32\textwidth]{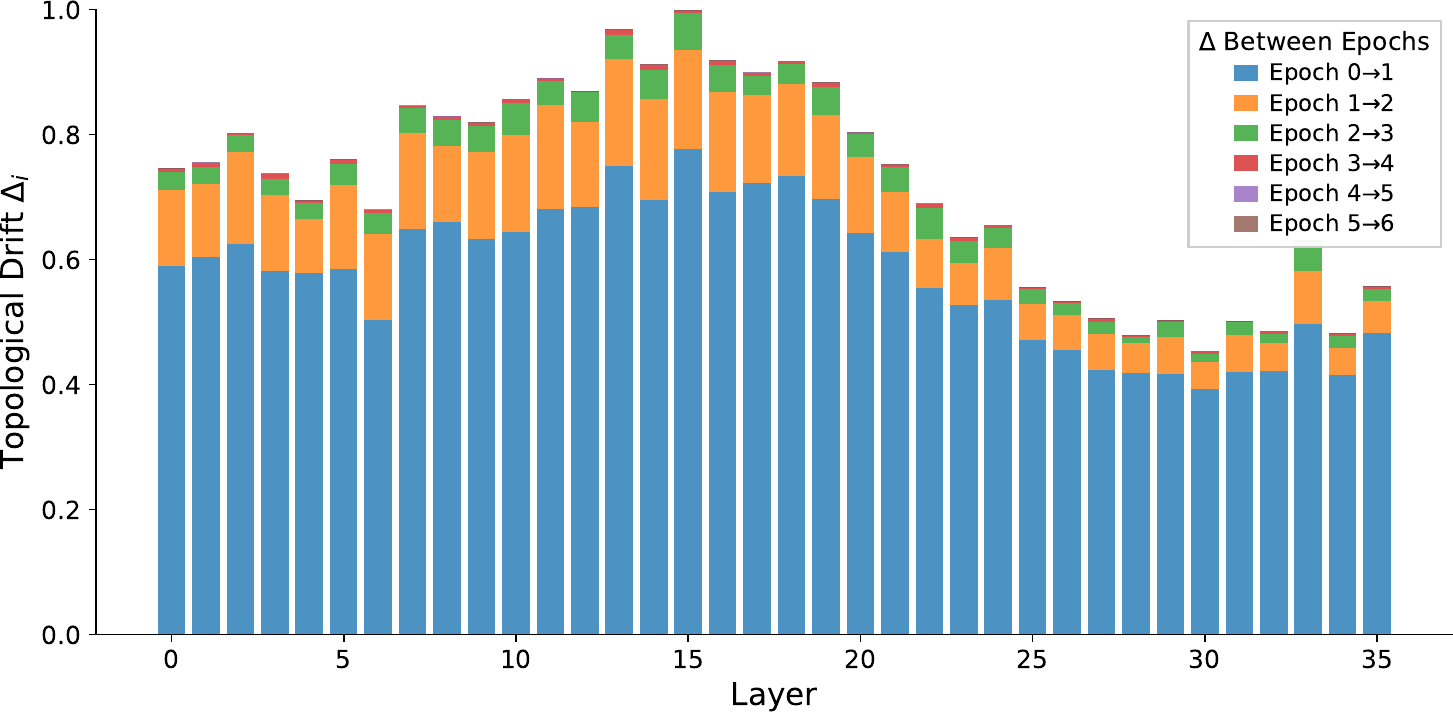} &
\includegraphics[width=0.32\textwidth]{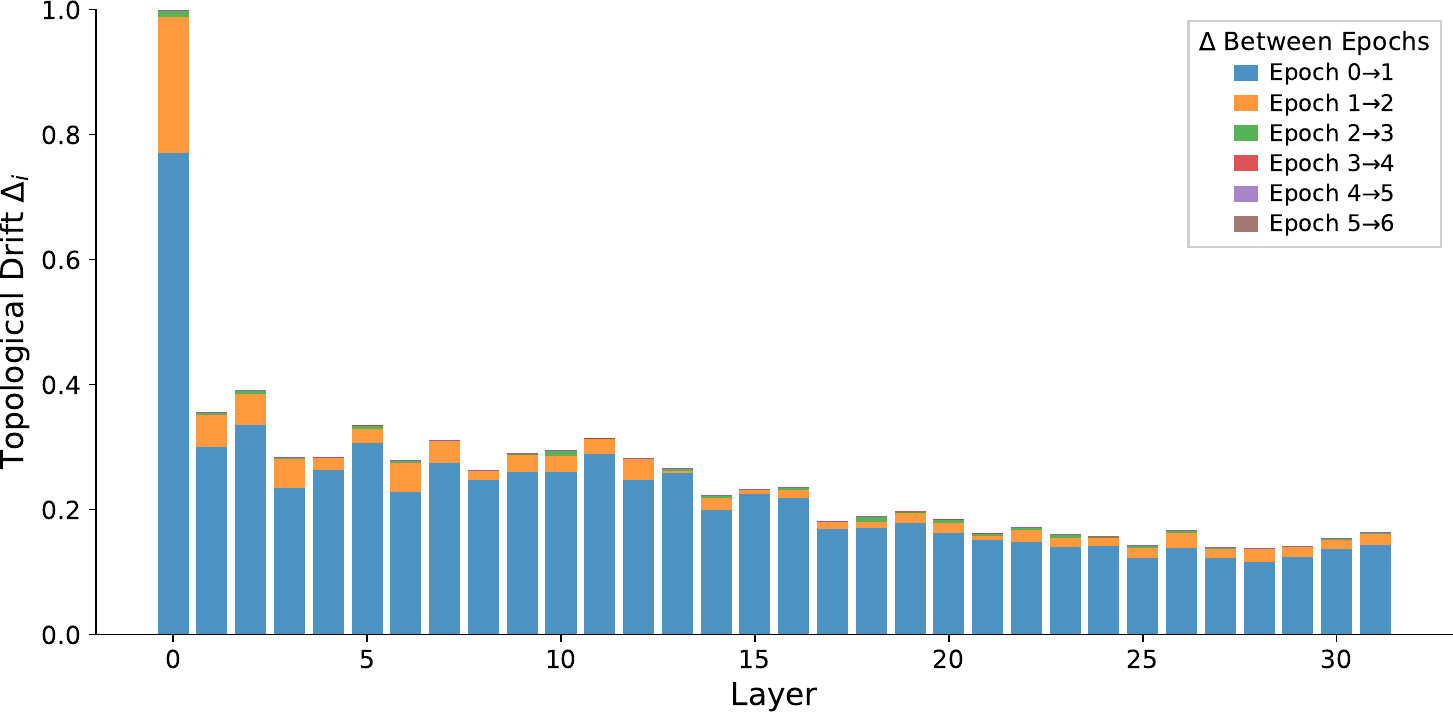} \\
[1em]
\includegraphics[width=0.32\textwidth]{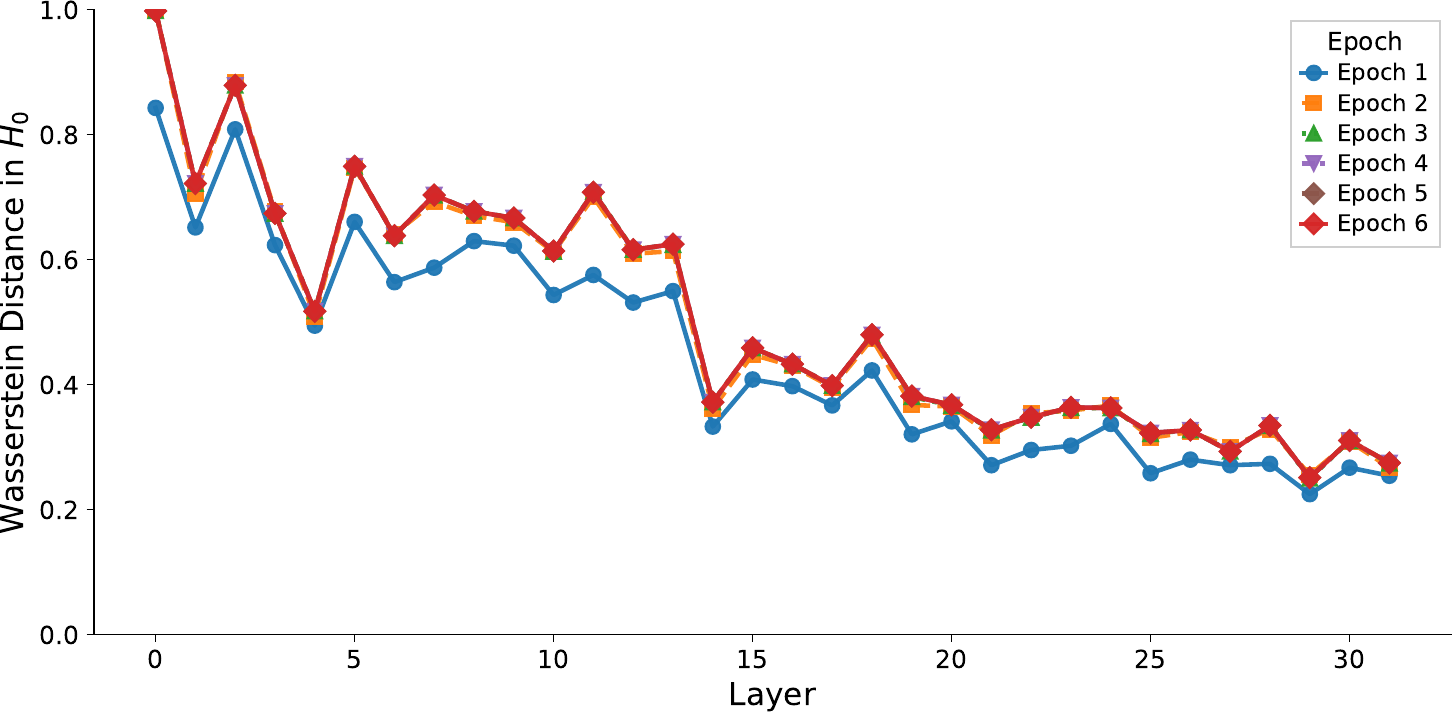} &
\includegraphics[width=0.32\textwidth]{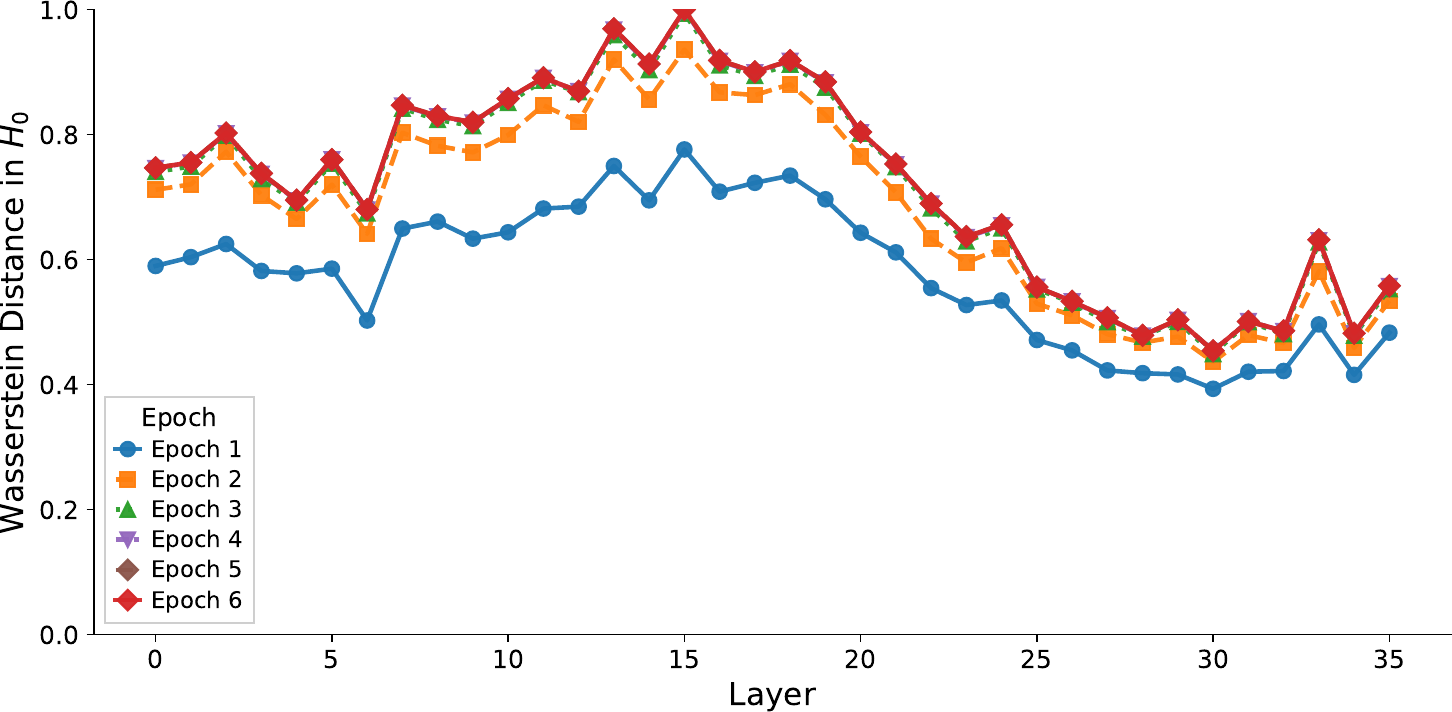} &
\includegraphics[width=0.32\textwidth]{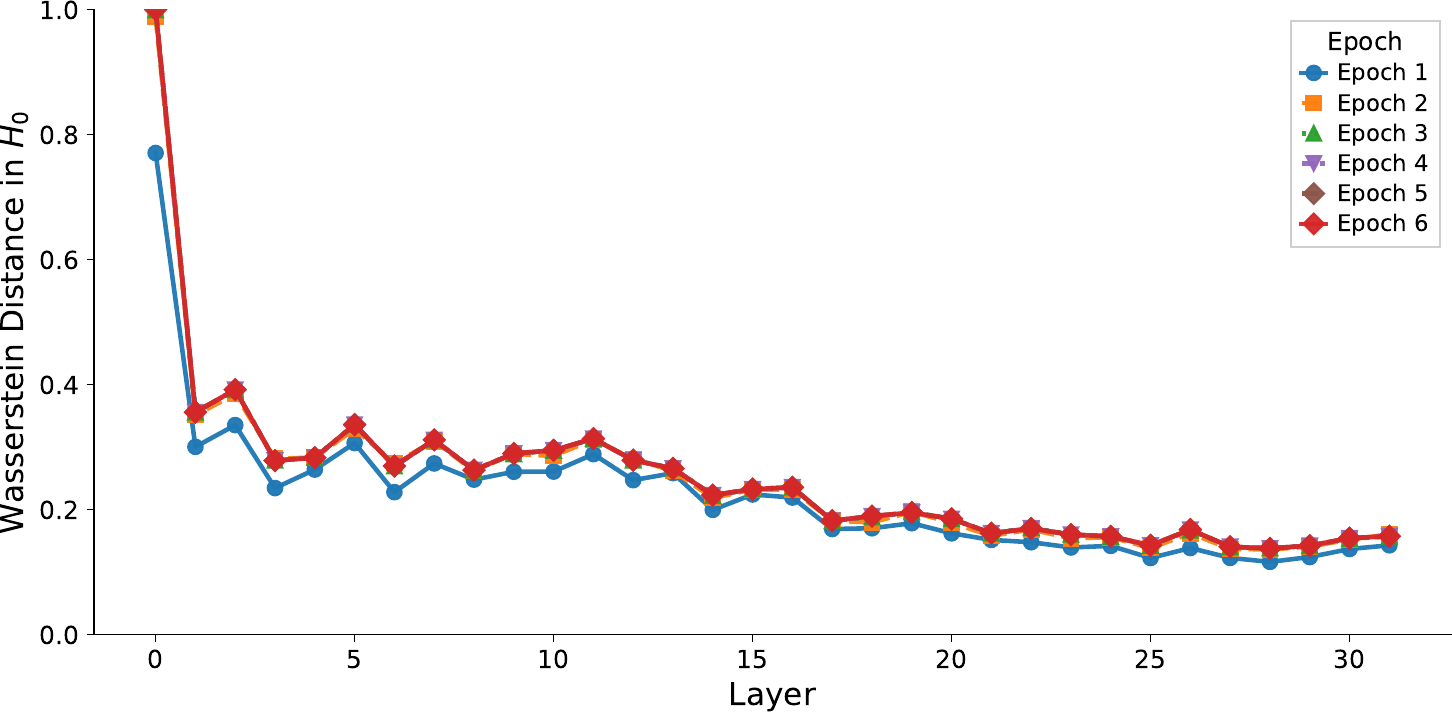} \\

\small \texttt{LLaMA-3.1-8B} & \small \texttt{Qwen3-8B-Base} & \small \texttt{Mistral-7B-v0.3} \\
\end{tabular}
\caption{Topological drift of the $V$ projection on \texttt{SA:IMDB} under $H_0$. The top row shows epoch-to-epoch drift bars, and the bottom row shows Wasserstein distance curves across layers.}
\label{fig:topodrift-imdb-v-h0}
\end{figure*}

\begin{figure*}[!htbp]
\centering
\scriptsize
\setlength{\tabcolsep}{2pt}
\begin{tabular}{ccc}
\multicolumn{3}{c}{\small \textit{\(V\) Projection}} \\
\includegraphics[width=0.32\textwidth]{figs/figs_wass_full_by_task/topological_drift/llama31_8b_sst2_v_h0_driftbars.pdf} &
\includegraphics[width=0.32\textwidth]{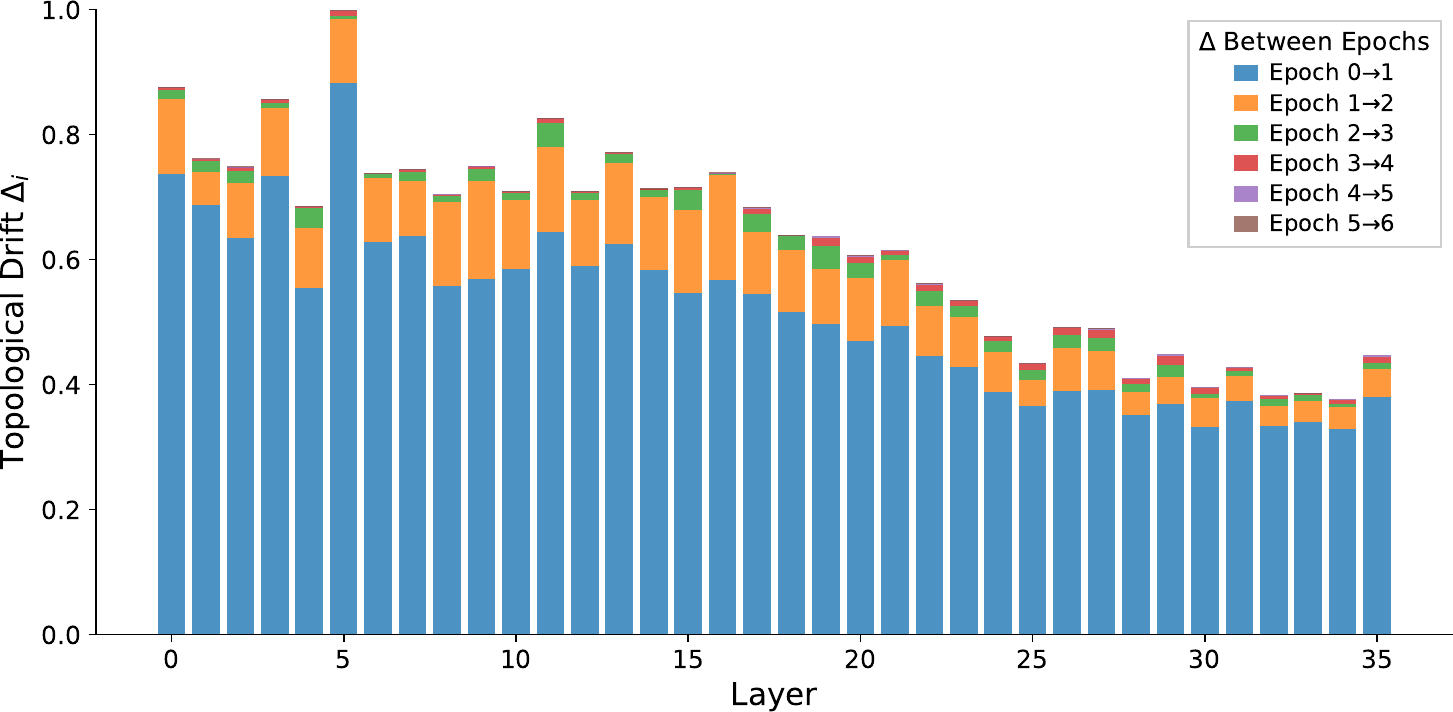} &
\includegraphics[width=0.32\textwidth]{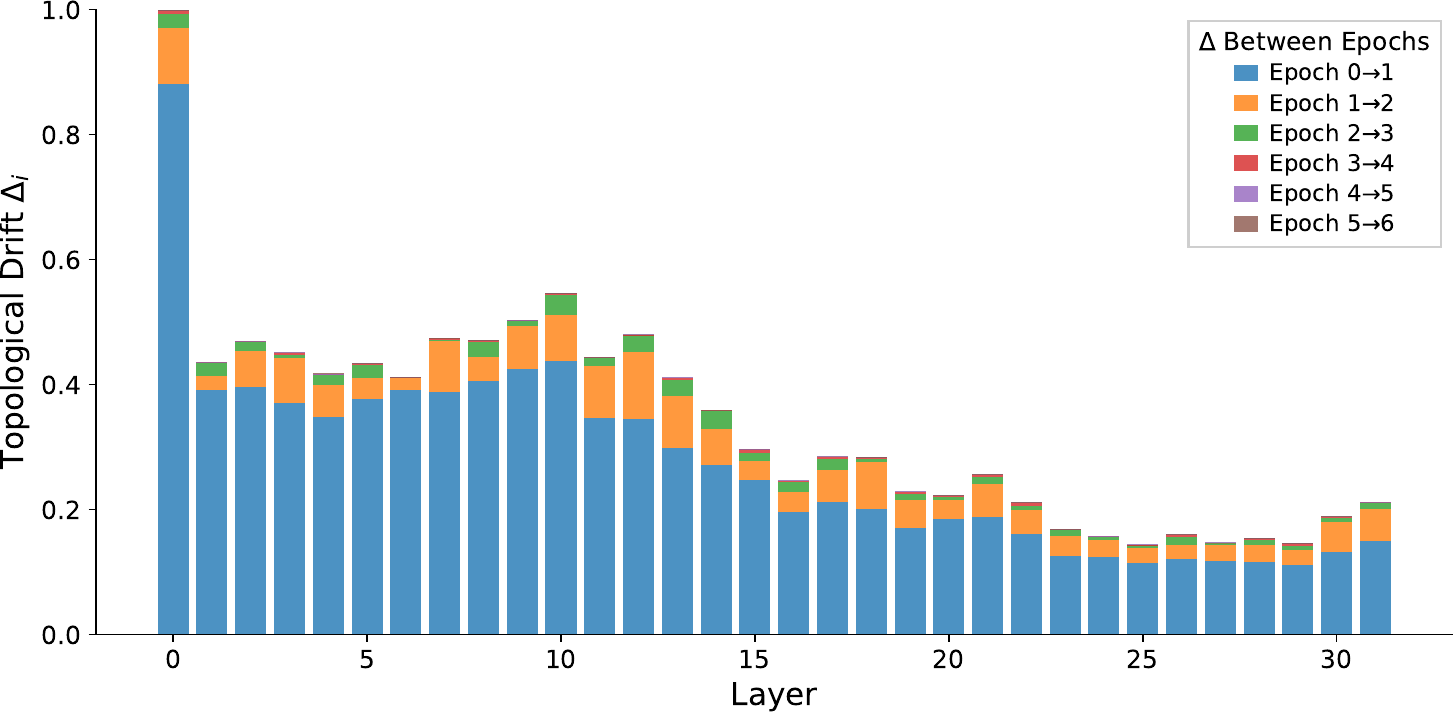} \\
[1em]
\includegraphics[width=0.32\textwidth]{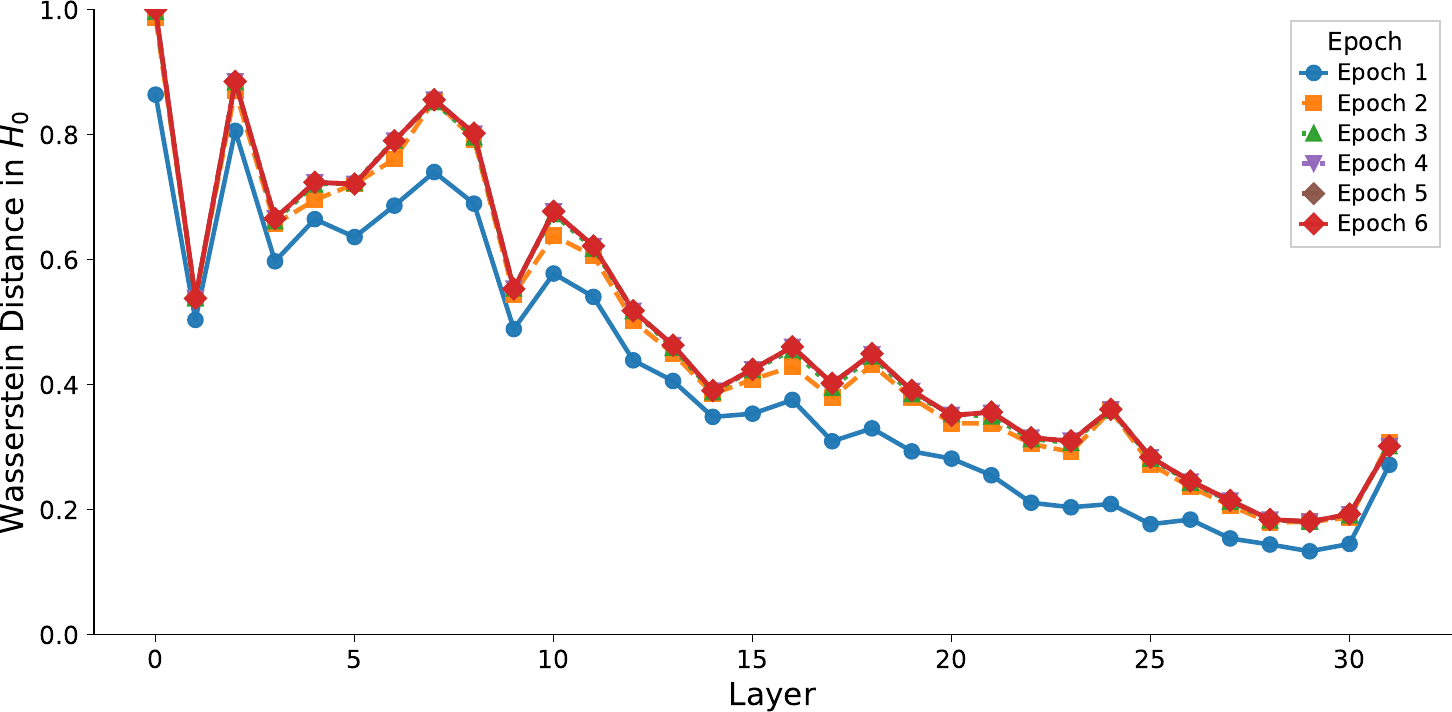} &
\includegraphics[width=0.32\textwidth]{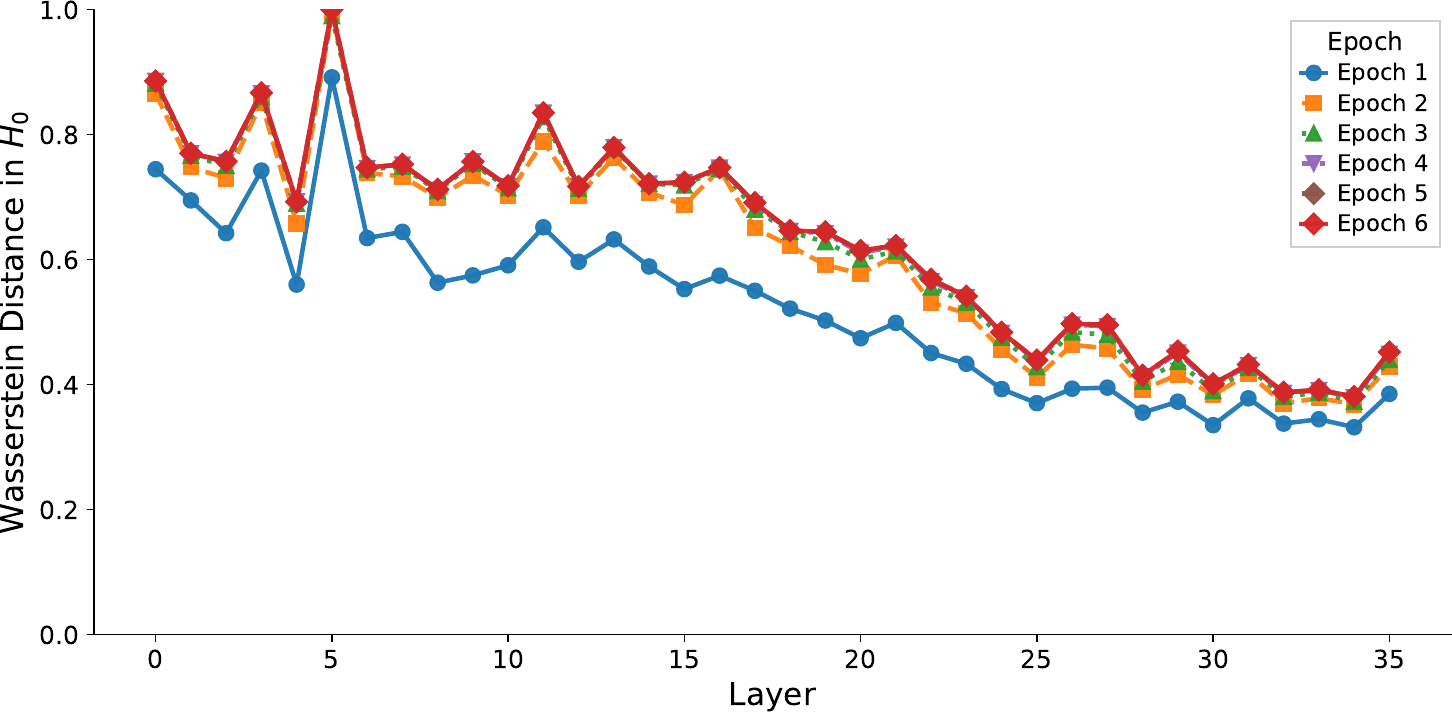} &
\includegraphics[width=0.32\textwidth]{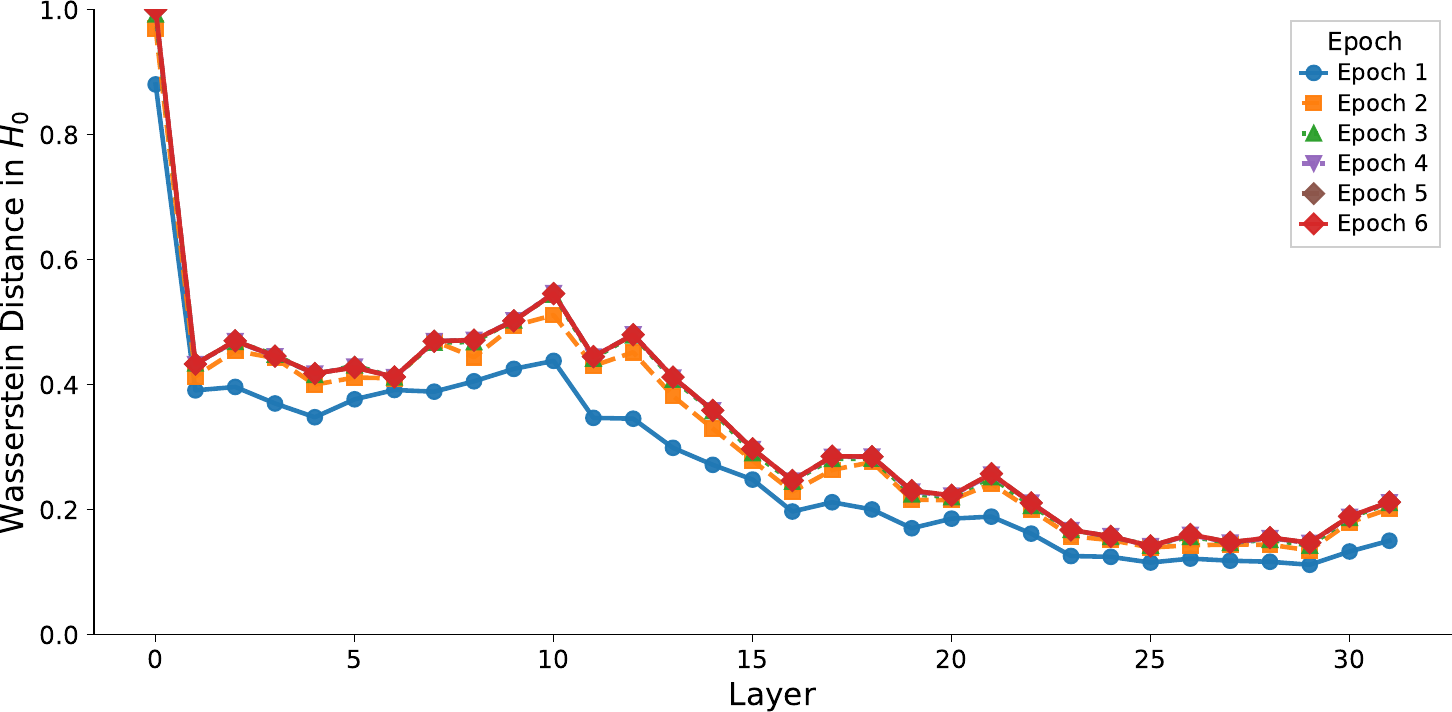} \\
[2em]
\multicolumn{3}{c}{\small \textit{\(O\) Projection}} \\
\includegraphics[width=0.32\textwidth]
{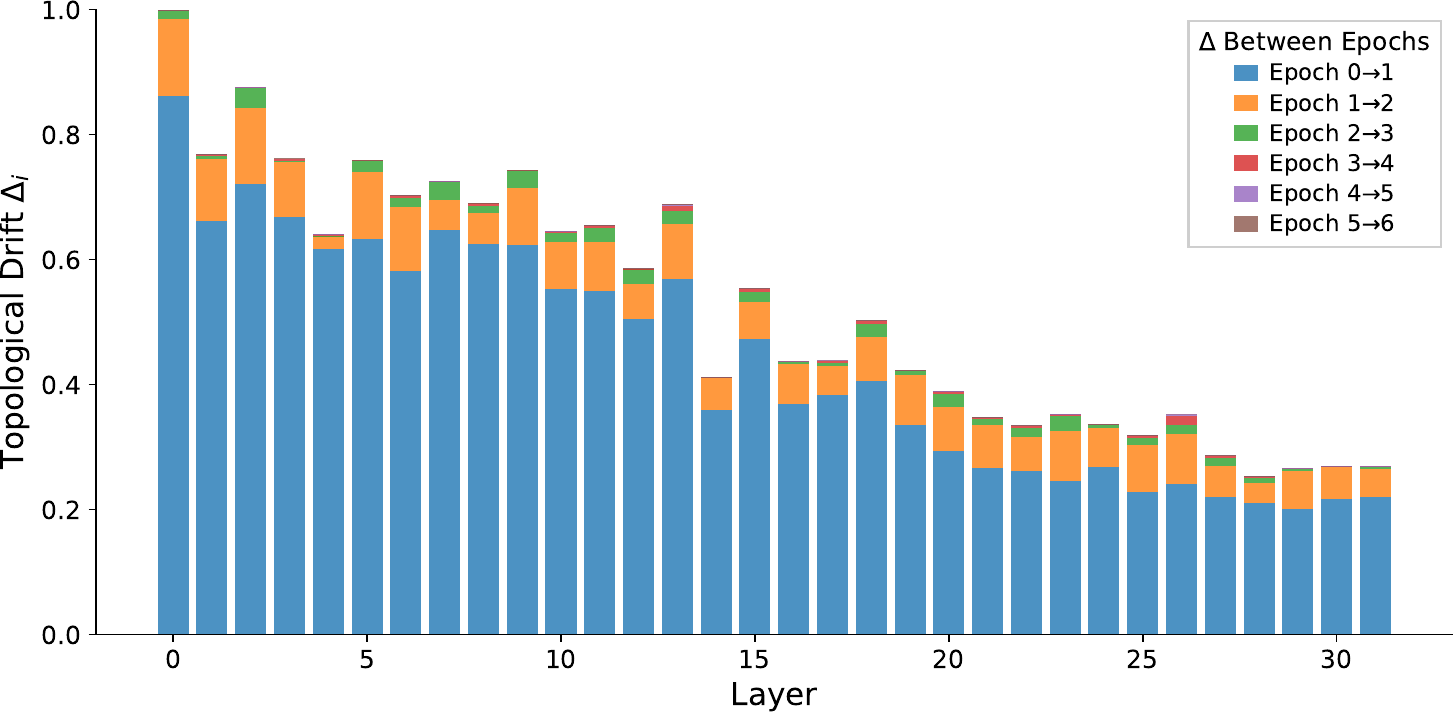} &
\includegraphics[width=0.32\textwidth]{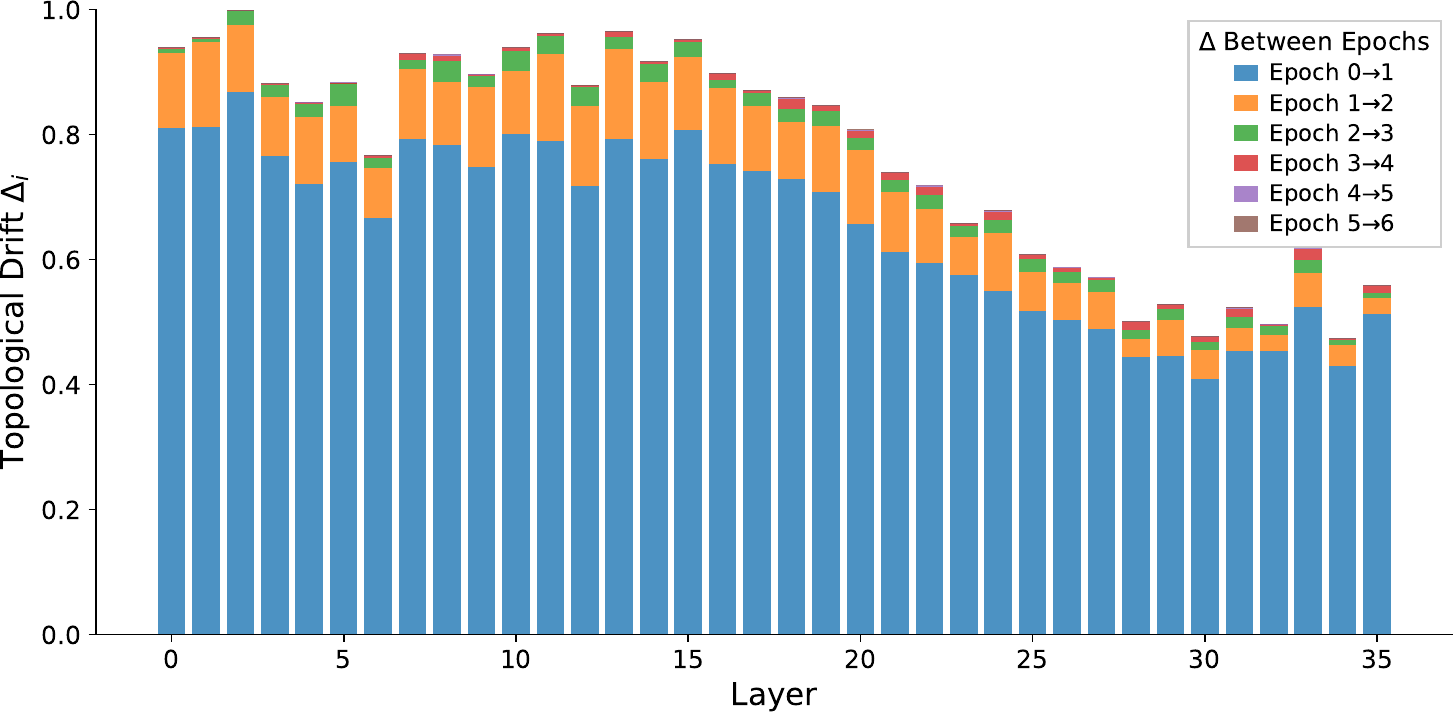} &
\includegraphics[width=0.32\textwidth]{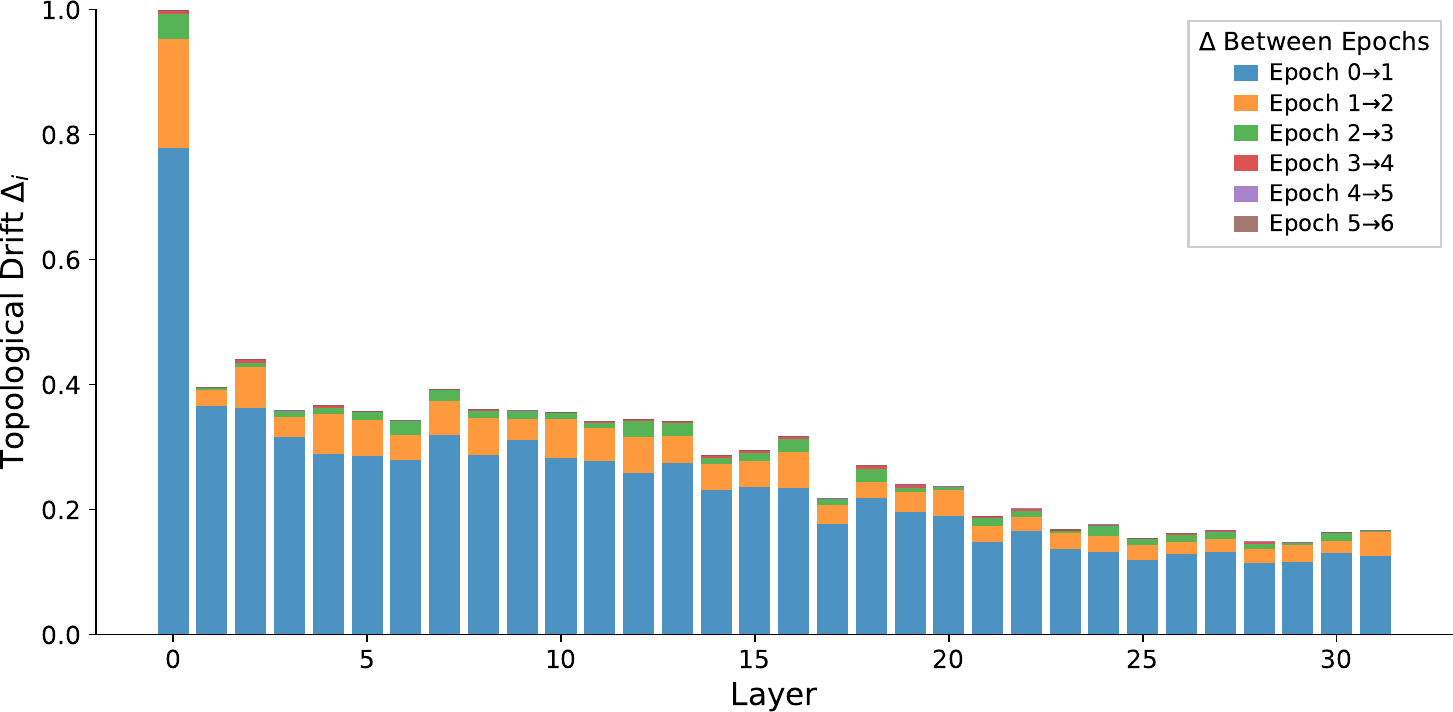} \\
[1em]
\includegraphics[width=0.32\textwidth]{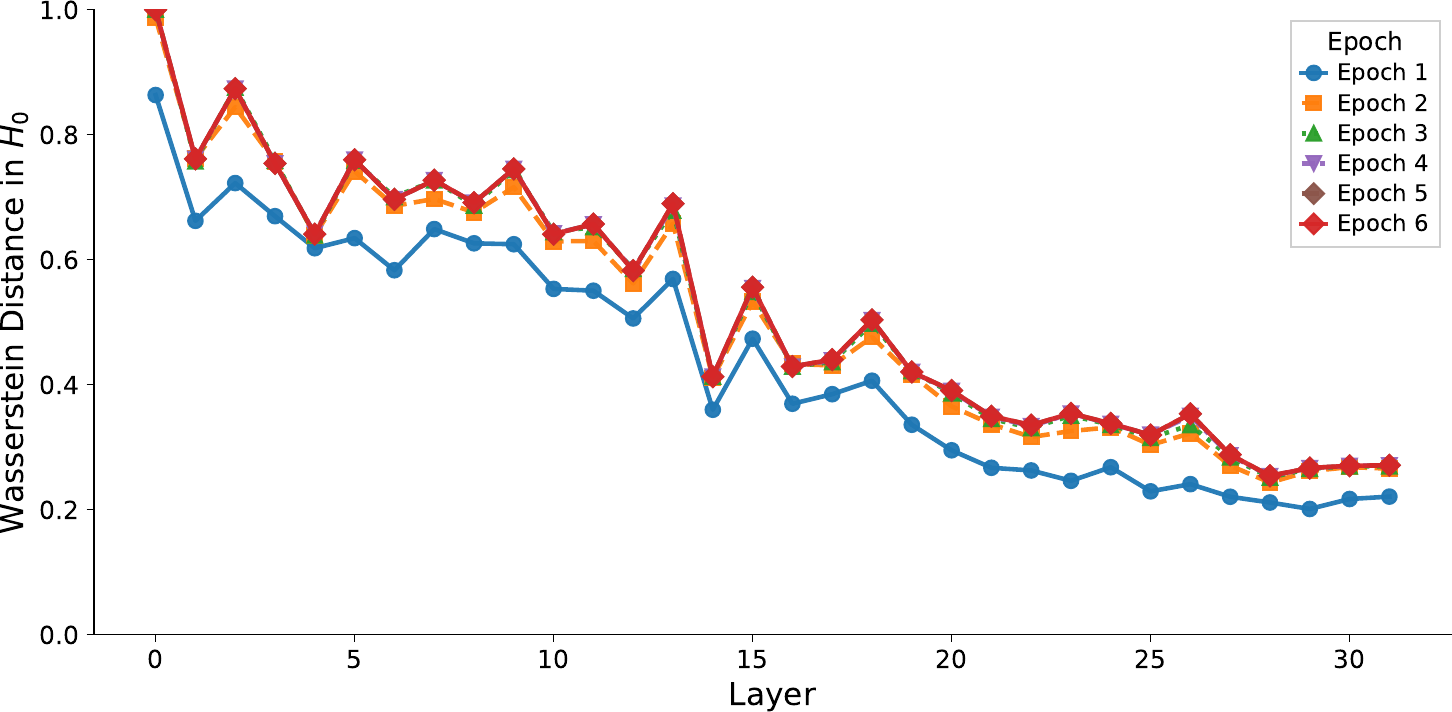} &
\includegraphics[width=0.32\textwidth]{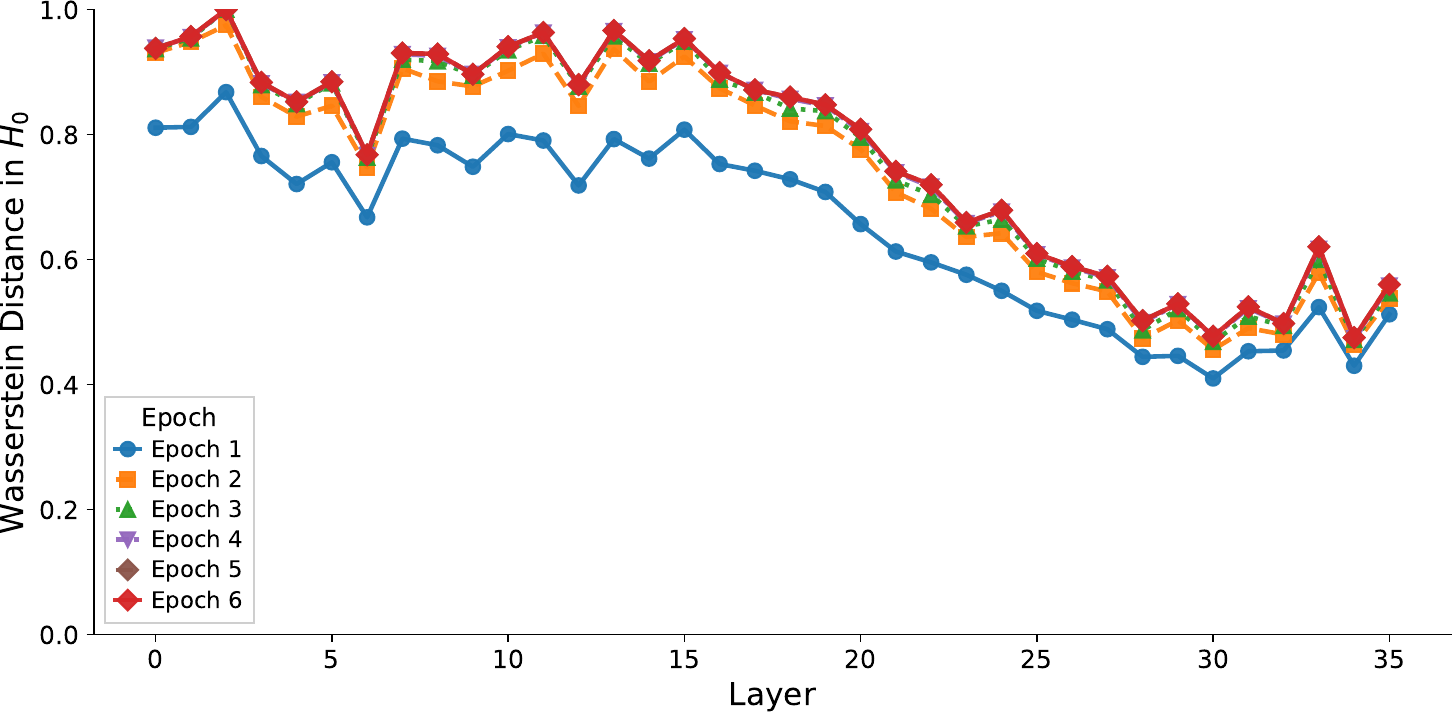} &
\includegraphics[width=0.32\textwidth]{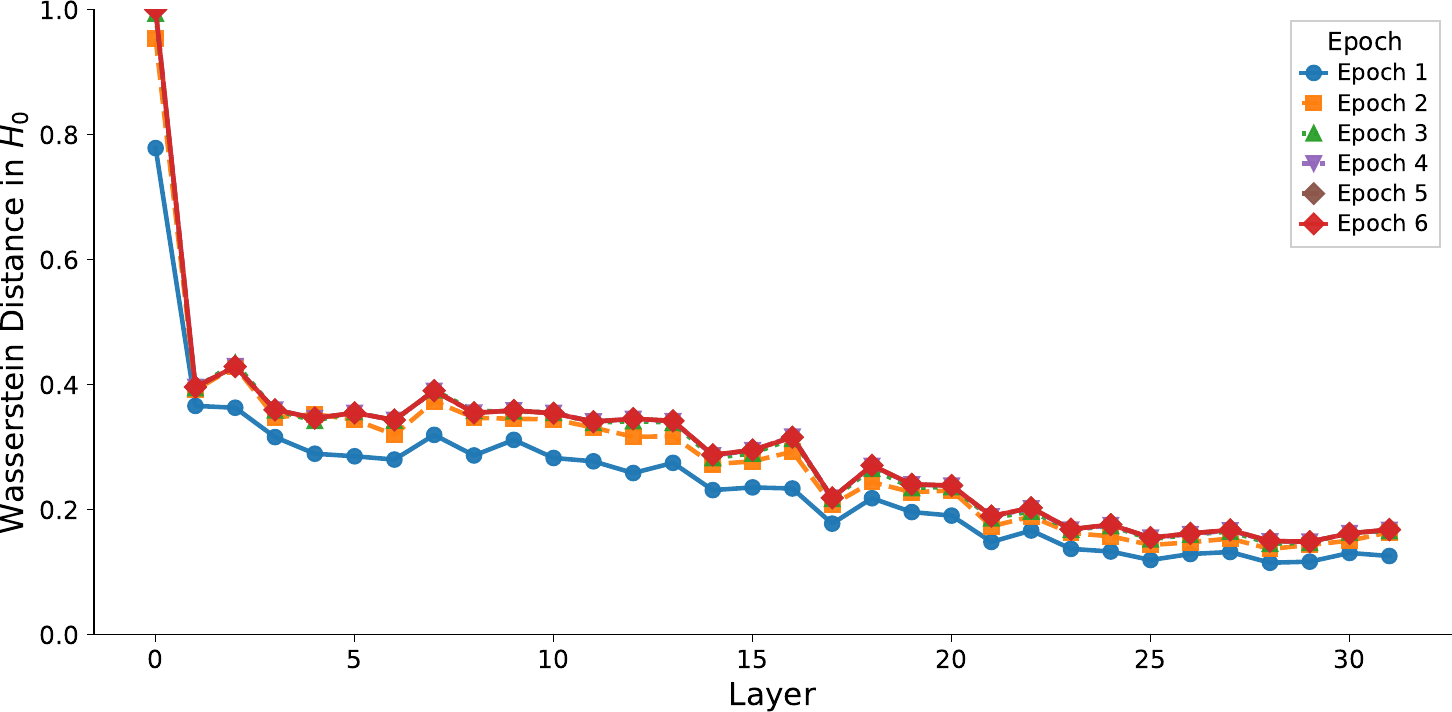} \\

\small \texttt{LLaMA-3.1-8B} & \small \texttt{Qwen3-8B-Base} & \small \texttt{Mistral-7B-v0.3} \\
\end{tabular}
\caption{Topological drift of the $V$ and $O$ projection on \texttt{SA:SST-2} under $H_0$. The top row shows epoch-to-epoch drift bars, and the bottom row shows Wasserstein distance curves across layers.}
\label{fig:topodrift-sst2-v-h0}
\end{figure*}

\newpage
\FloatBarrier
\section{Fine-tuning with Perfect Knowledge}
\label{app:perfect-knowledge}

Table~\ref{tab:combined-gsm8k} reports the perfect-knowledge \texttt{QA:GSM8K} results for \texttt{Qwen3-8B-Base}, \texttt{LLaMA-3.1-8B}, and \texttt{Mistral-7B-v0.3}. Across all three models, selective freezing recovers much of the full-fine-tuning performance while training only about $6\%$ to $8\%$ of the weights. For the best selective configurations in \texttt{Qwen3-8B-Base}, TDA-High$3$ reaches $87.62 \pm 0.18$ with $8.45\%$ trainable parameters, close to LoRA at $87.89 \pm 0.90$ and above full fine-tuning at $86.27 \pm 0.75$. For \texttt{LLaMA-3.1-8B}, TDA-High$6$ reaches $61.17 \pm 0.75$ with $6.79\%$ trainable parameters, slightly above full fine-tuning at $60.70 \pm 0.87$ and above LoRA at $56.90 \pm 1.92$. For \texttt{Mistral-7B-v0.3}, TDA-High$3$ reaches $52.87 \pm 0.46$ with $8.39\%$ trainable parameters, compared with full fine-tuning at $59.59 \pm 0.85$ and LoRA at $49.28 \pm 1.94$.

\begin{table}[!htbp]
\centering
\small
\setlength{\tabcolsep}{2pt}
\renewcommand{\arraystretch}{1.08}
\caption{\textbf{Perfect Knowledge.} Fine-tuning models on \texttt{QA:GSM8K} with selected $V$ and $O$ freezing. Wasserstein distances use $H_0$. Projection IDs and results of other $k$ values are given in Table ~\ref{tab:llama38GSM8KFull}.}
\resizebox{\linewidth}{!}{
\begin{tabular}{l|cccc|cccc|cccc}
\toprule
 & \multicolumn{4}{c|}{\texttt{Qwen3-8B-Base}} & \multicolumn{4}{c|}{\texttt{Llama-3.1-8B}} & \multicolumn{4}{c}{\texttt{Mistral-7B-v0.3}} \\
Method 
& Train.\% & Upd. \% & Acc.$\pm$Std ($\uparrow$) & Time (min) 
& Train.\% & Upd. & Acc.$\pm$Std ($\uparrow$) & Time (min)
& Train.\% & Upd. & Acc.$\pm$Std ($\uparrow$) & Time (min) \\
\midrule
\graycell{~~Full Fine-tuning}
& \graycell{100} & \graycell{14.63} & \graycell{$86.27 \pm 0.75$} & \graycell{47.3}
& \graycell{100} & \graycell{27.54} & \graycell{$60.70 \pm 0.87$} & \graycell{38.7}
& \graycell{100} & \graycell{31.46} & \graycell{$59.59 \pm 0.85$} & \graycell{43.3} \\

\midrule
\multicolumn{10}{l}{\textit{PEFT}} \\
~~LoRA 
& 0.19 & 18.41 & $\mathbf{87.89 \pm 0.90}$ & 35.4 
& 0.17 & 16.70 & $56.90 \pm 1.92$ & 28.6 
& 0.19 & 18.49 & $49.28 \pm 1.94$ & 33.5 \\





\midrule
\multicolumn{10}{l}{\textit{TDA-based tuning (ours)}} \\
~~High3 
& 8.45 & 1.22 & $\underline{87.62 \pm 0.18}$ & 34.5 
& 7.57 & 2.64 & $\underline{59.33 \pm 0.46}$ & 26.2 
& 8.39 & 2.82 & $\mathbf{52.87 \pm 0.46}$ & 30.5 \\

~~High6 
& 7.68 & 1.12 & $86.91 \pm 0.46$ & 34.4 
& 6.79 & 2.33 & $\mathbf{61.17 \pm 0.75}$ & 25.7 
& 6.8 & 2.52 & $\underline{52.64 \pm 0.99}$ & 30.5 \\

~~High9 
& 6.91 & 1.00 & $86.91 \pm 0.16$ & 34.4 
& 6.01 & 2.05 & $58.97 \pm 0.76$ & 25.5 
& 5.21 & 2.23 & $51.40 \pm 0.92$ & 30.3 \\

\bottomrule
\end{tabular}}
\label{tab:combined-gsm8k}
\end{table}

These results show two patterns. First, selective freezing yields large reductions in updated parameters with competitive performance across all three models. Second, TDA-based freezing gives the best selective result for for \texttt{Qwen3-8B-Base} and \texttt{Mistral-7B-v0.3}, and tied in \texttt{LLaMA-3.1-8B}. For more details, see Tables~\ref{tab:llama38GSM8KFull}, \ref{tab:qwenGSM8KFull}, and \ref{tab:mistralGSM8KFull}.

\begin{table*}[!htbp]
\centering
\scriptsize
\caption{\texttt{LLaMA-3.1-8B} on \texttt{QA:GSM8K} with separate $V$ and $O$ freezing. Among selective freezing settings, the best configuration is tied at TDA-High$6$ (\%61.17) and Eltwise-Low$9$ (\%61.17).}
\setlength{\tabcolsep}{3.5pt}
\resizebox{\textwidth}{!}{
\begin{tabular}{lllll}
\toprule
Method & Frozen Layers & Trainable \% & Avg. Accuracy$\pm$Std ($\uparrow$) & Time (min) \\
\midrule
\multicolumn{5}{l}{\textit{Baselines}} \\
Full & -- & $100$ & $60.70 \pm 0.87$ & 38.7 \\
LoRA & -- & $0.17$ & $56.90 \pm 1.92$ & 28.6 \\

\midrule
\multicolumn{5}{l}{\textit{Magnitude-based Tuning (ours)}} \\
Eltwise & Low3: $V{:}[9,10,21] \mid O{:}[29,31,23]$ & $7.57$ & $59.80 \pm 2.00$ & 26.4 \\
Eltwise & Low6: $V{:}[9,10,21,29,8,18] \mid O{:}[29,31,23,27,7,4]$ & $6.79$ & $59.33 \pm 1.93$ & 26.3 \\
\rowcolor{gray!25} Eltwise & Low9: $V{:}[9,10,21,29,8,18,7,2,5] \mid O{:}[29,31,23,27,7,4,3,28,12]$ & $6.01$ & $\mathbf{61.17 \pm 1.42}$ & 26.2 \\
Eltwise & Low12: $V{:}[9,10,21,29,8,18,7,2,5,30,17,23] \mid O{:}[29,31,23,27,7,4,3,28,12,9,6,14]$ & $5.22$ & $60.57 \pm 1.16$ & 26.3 \\
Eltwise & Low15: $V{:}[9,10,21,29,8,18,7,2,5,30,17,23,19,16,4] \mid O{:}[29,31,23,27,7,4,3,28,12,9,6,14,8,1,19]$ & $4.44$ & $60.50 \pm 1.04$ & 26.2 \\
Eltwise & High3: $V{:}[27,13,20] \mid O{:}[26,18,25]$ & $7.57$ & $59.73 \pm 2.06$ & 26.2 \\
Eltwise & High6: $V{:}[25,6,28,27,13,20] \mid O{:}[20,16,0,26,18,25]$ & $6.79$ & $59.60 \pm 0.35$ & 26.3 \\
Eltwise & High9: $V{:}[24,15,14,25,6,28,27,13,20] \mid O{:}[30,13,10,20,16,0,26,18,25]$ & $6.01$ & $59.47 \pm 0.97$ & 26.2 \\
Eltwise & High12: $V{:}[12,0,1,24,15,14,25,6,28,27,13,20] \mid O{:}[15,17,21,30,13,10,20,16,0,26,18,25]$ & $5.22$ & $58.27 \pm 0.31$ & 26.2 \\
Eltwise & High15: $V{:}[26,11,31,12,0,1,24,15,14,25,6,28,27,13,20] \mid O{:}[24,22,5,15,17,21,30,13,10,20,16,0,26,18,25]$ & $4.44$ & $58.30 \pm 1.60$ & 26.1 \\

\midrule

\multicolumn{5}{l}{\textit{TDA-based Tuning (ours)}} \\
$H0$ & Low3: $V{:}[29,26,30] \mid O{:}[31,30,26]$ & $7.57$ & $58.17 \pm 0.42$ & 25.4 \\
$H0$ & Low6: $V{:}[26,27,28,29,30,31] \mid O{:}[24,25,26,29,30,31]$ & $6.79$ & $57.43 \pm 0.25$ & 25.4 \\
$H0$ & Low9: $V{:}[22,23,25,26,27,28,29,30,31] \mid O{:}[21,23,24,25,26,27,29,30,31]$ & $6.01$ & $58.60 \pm 0.56$ & 25.6 \\
$H0$ & Low12: $V{:}[15,21,22,23,24,25,26,27,28,29,30,31] \mid O{:}[20,21,22,23,24,25,26,27,28,29,30,31]$ & $5.22$ & $58.43 \pm 1.40$ & 25.8 \\
$H0$ & Low15: $V{:}[29,26,30,28,31,27,25,22,23,15,24,21,17,3,1] \mid O{:}[31,30,26,29,25,24,27,21,23,28,22,20,16,4,19]$ & $4.44$ & $58.43 \pm 0.58$ & 25.1 \\
$H0$ & High3: $V{:}[8,2,0] \mid O{:}[11,10,13]$ & $7.57$ & $59.33 \pm 0.46$ & 26.2 \\
\rowcolor{gray!25} $H0$ & High6: $V{:}[0,2,7,8,10,11] \mid O{:}[0,5,9,10,11,13]$ & $6.79$ & $\mathbf{61.17 \pm 0.75}$ & 25.7 \\
$H0$ & High9: $V{:}[0,2,5,6,7,8,10,11,16] \mid O{:}[0,2,5,8,9,10,11,12,13]$ & $6.01$ & $58.97 \pm 0.76$ & 25.5 \\
$H0$ & High12: $V{:}[0,2,5,6,7,8,10,11,12,13,16,18] \mid O{:}[0,1,2,5,6,7,8,9,10,11,12,13]$ & $5.22$ & $58.37 \pm 0.60$ & 25.9 \\
$H0$ & High15: $V{:}[0,2,4,5,6,7,8,9,10,11,12,13,14,16,18] \mid O{:}[0,1,2,3,5,6,7,8,9,10,11,12,13,15,18]$ & $4.44$ & $57.80 \pm 1.18$ & 26.0 \\
\bottomrule
\end{tabular}}
\label{tab:llama38GSM8KFull}
\end{table*}

\begin{table*}[!htbp]
\centering
\scriptsize
\caption{\texttt{Qwen3-8B-Base} on \texttt{QA:GSM8K} with separate $V$ and $O$ freezing. Among selective freezing settings, the best configuration is TDA-High$3$ (\%87.62), followed by Eltwise-High$9$ (\%87.43) and Eltwise-Low$12$ (\%87.40).}
\setlength{\tabcolsep}{3.5pt}
\resizebox{\textwidth}{!}{
\begin{tabular}{lllll}
\toprule
Method & Frozen V/O Layers & Trainable \% & Avg. Accuracy$\pm$Std ($\uparrow$) & Time (min) \\
\midrule
\multicolumn{5}{l}{\textit{Baselines}} \\
Full & -- & $100$ & $86.27 \pm 0.75$ & 47.3 \\
LoRA & -- & $0.19$ & $87.89 \pm 0.90$ & 35.4 \\

\midrule
\multicolumn{5}{l}{\textit{Magnitude-based Tuning (ours)}} \\
Eltwise & Low3: $V{:}[11,24,8] \mid O{:}[32,34,24]$ & $8.45$ & $87.30 \pm 0.36$ & 34.6 \\
Eltwise & Low6: $V{:}[11,24,8,12,18,33] \mid O{:}[32,34,24,10,27,33]$ & $7.68$ & $87.23 \pm 0.12$ & 34.5 \\
Eltwise & Low9: $V{:}[11,24,8,12,18,33,5,34,15] \mid O{:}[32,34,24,10,27,33,19,6,4]$ & $6.91$ & $87.23 \pm 0.21$ & 34.4 \\
\rowcolor{gray!25} Eltwise & Low12: $V{:}[11,24,8,12,18,33,5,34,15,29,9,35] \mid O{:}[32,34,24,10,27,33,19,6,4,13,11,20]$ & $6.14$ & $\mathbf{87.40 \pm 0.44}$ & 34.2 \\
Eltwise & Low15: $V{:}[11,24,8,12,18,33,5,34,15,29,9,35,0,2,4] \mid O{:}[32,34,24,10,27,33,19,6,4,13,11,20,28,29,9]$ & $5.38$ & $86.80 \pm 0.17$ & 34.2 \\
Eltwise & High3: $V{:}[32,22,26] \mid O{:}[35,2,30]$ & $8.45$ & $87.33 \pm 0.15$ & 34.6 \\
Eltwise & High6: $V{:}[28,3,13,32,22,26] \mid O{:}[21,5,15,35,2,30]$ & $7.68$ & $87.17 \pm 0.15$ & 34.5 \\
\rowcolor{gray!25} Eltwise & High9: $V{:}[1,19,31,28,3,13,32,22,26] \mid O{:}[14,8,26,21,5,15,35,2,30]$ & $6.91$ & $\mathbf{87.43 \pm 0.57}$ & 34.4 \\
Eltwise & High12: $V{:}[17,27,10,1,19,31,28,3,13,32,22,26] \mid O{:}[17,22,16,14,8,26,21,5,15,35,2,30]$ & $6.14$ & $86.83 \pm 0.29$ & 34.2 \\
Eltwise & High15: $V{:}[25,20,23,17,27,10,1,19,31,28,3,13,32,22,26] \mid O{:}[1,3,25,17,22,16,14,8,26,21,5,15,35,2,30]$ & $5.38$ & $86.37 \pm 0.65$ & 34.1 \\

\midrule
\multicolumn{5}{l}{\textit{TDA-based Tuning (ours)}} \\
$H0$ & Low3: $V{:}[25,10,4] \mid O{:}[6,34,29]$ & $8.45$ & $86.40 \pm 0.38$ & 34.7 \\
$H0$ & Low6: $V{:}[25,10,4,6,12,17] \mid O{:}[6,34,29,25,28,4]$ & $7.68$ & $86.91 \pm 0.27$ & 34.6 \\
$H0$ & Low9: $V{:}[25,10,4,6,12,17,8,15,7] \mid O{:}[6,34,29,25,28,4,30,26,31]$ & $6.91$ & $86.86 \pm 0.42$ & 34.4 \\
$H0$ & Low12: $V{:}[25,10,4,6,12,17,8,15,7,19,26,34] \mid O{:}[6,34,29,25,28,4,30,26,31,32,9,35]$ & $6.14$ & $86.30 \pm 0.35$ & 34.0 \\
$H0$ & Low15: $V{:}[25,10,4,6,12,17,8,15,7,19,26,34,24,28,30] \mid O{:}[6,34,29,25,28,4,30,26,31,32,9,35,27,24,5]$ & $5.38$ & $86.33 \pm 0.19$ & 34.1 \\
\rowcolor{gray!25} $H0$ & High3: $V{:}[1,5,0] \mid O{:}[18,17,15]$ & $8.45$ & $\mathbf{87.62 \pm 0.18}$ & 34.5 \\
$H0$ & High6: $V{:}[31,29,1,5,0,13] \mid O{:}[14,19,17,18,15,16]$ & $7.68$ & $86.91 \pm 0.46$ & 34.4 \\
$H0$ & High9: $V{:}[32,3,21,13,31,29,1,5,0] \mid O{:}[20,13,22,16,14,19,17,18,15]$ & $6.91$ & $86.91 \pm 0.16$ & 34.4 \\
$H0$ & High12: $V{:}[16,33,35,32,3,21,13,31,29,1,5,0] \mid O{:}[2,11,0,20,13,22,16,14,19,17,18,15]$ & $6.14$ & $87.10 \pm 0.12$ & 34.0 \\
$H0$ & High15: $V{:}[14,20,11,16,33,35,32,3,21,13,31,29,1,5,0] \mid O{:}[1,12,23,2,11,0,20,13,22,16,14,19,17,18,15]$ & $5.38$ & $86.88 \pm 0.31$ & 34.1 \\
\bottomrule
\end{tabular}}

\label{tab:qwenGSM8KFull}
\end{table*}


\begin{table*}[!htbp]
\centering
\scriptsize
\caption{\texttt{Mistral-7B-v0.3} on \texttt{QA:GSM8K} with separate $V$ and $O$ freezing. Among selective freezing settings, the best configuration is TDA-Low6 (54.31\%), followed by Eltwise-High6 (53.96\%).}
\setlength{\tabcolsep}{3.5pt}
\resizebox{\textwidth}{!}{
\begin{tabular}{lllll}
\toprule
Method & Frozen Layers & Trainable \% & Avg. Accuracy$\pm$Std ($\uparrow$) & Time (min) \\
\midrule
\multicolumn{5}{l}{\textit{Baselines}} \\
Full & -- & $100$ & $59.59 \pm 0.85$ & 43.3 \\
LoRA & -- & $0.19$ & $49.28 \pm 1.94$ & 33.5 \\

\midrule
\multicolumn{5}{l}{\textit{Magnitude-based Tuning (ours)}} \\
Eltwise & Low3: $V{:}[6,10,26] \mid O{:}[14,18,19]$ & $8.39$ & $53.58 \pm 0.55$ & 30.5 \\
Eltwise & Low6: $V{:}[1,6,10,14,19,26] \mid O{:}[11,14,18,19,27,29]$ & $6.8$ & $53.78 \pm 1.09$ & 30.5 \\
Eltwise & Low9: $V{:}[1,3,6,7,10,11,14,19,26] \mid O{:}[1,11,13,14,18,19,21,27,29]$ & $5.21$ & $53.52 \pm 0.77$ & 30.4 \\
Eltwise & Low12: $V{:}[1,3,6,7,10,11,13,14,17,19,21,26] \mid O{:}[1,2,8,11,13,14,18,19,21,27,29,30]$ & $3.62$ & $52.29 \pm 0.61$ & 30.3 \\
Eltwise & Low15: $V{:}[1,2,3,6,7,10,11,13,14,17,19,21,23,26,31] \mid O{:}[1,2,5,8,11,13,14,15,18,19,21,25,27,29,30]$ & $2.03$ & $51.91 \pm 0.42$ & 30.1 \\
\rowcolor{gray!25}
Eltwise & High3: $V{:}[8,9,30] \mid O{:}[4,6,12]$ & $8.39$ & $\mathbf{53.90 \pm 2.63}$ & 31.3 \\
\rowcolor{gray!25}
Eltwise & High6: $V{:}[4,8,9,16,29,30] \mid O{:}[4,6,7,12,28,31]$ & $6.8$ & $\mathbf{53.96 \pm 1.02}$ & 30.4 \\
Eltwise & High9: $V{:}[0,4,8,9,16,18,25,29,30] \mid O{:}[0,3,4,6,7,12,22,28,31]$ & $5.21$ & $52.29 \pm 1.54$ & 30.3 \\
Eltwise & High12: $V{:}[0,4,8,9,12,16,18,20,24,25,29,30] \mid O{:}[0,3,4,6,7,10,12,16,22,23,28,31]$ & $3.62$ & $53.73 \pm 1.05$ & 30.3 \\
Eltwise & High15: $V{:}[0,4,5,8,9,12,16,18,20,22,24,25,28,29,30] \mid O{:}[0,3,4,6,7,9,10,12,16,17,20,22,23,28,31]$ & $2.03$ & $53.60 \pm 0.23$ & 30.1 \\

\midrule
\multicolumn{5}{l}{\textit{TDA-based Tuning (ours)}} \\
\rowcolor{gray!25}
$H0$ & Low3: $V{:}[24,25,29] \mid O{:}[25,28,31]$ & $8.39$ & $\mathbf{53.93 \pm 1.06}$ & 30.4 \\
\rowcolor{gray!25}
$H0$ & Low6: $V{:}[22,24,25,26,29,30] \mid O{:}[22,25,28,29,30,31]$ & $6.8$ & $\mathbf{54.31 \pm 1.64}$ & 30.5 \\
$H0$ & Low9: $V{:}[22,23,24,25,26,27,28,29,30] \mid O{:}[21,22,24,25,26,28,29,30,31]$ & $5.21$ & $53.88 \pm 0.23$ & 30.4 \\
$H0$ & Low12: $V{:}[16,20,22,23,24,25,26,27,28,29,30,31] \mid O{:}[17,21,22,23,24,25,26,27,28,29,30,31]$ & $3.62$ & $52.79 \pm 1.01$ & 30.1 \\
$H0$ & Low15: $V{:}[16,18,19,20,21,22,23,24,25,26,27,28,29,30,31] \mid O{:}[3,6,17,20,21,22,23,24,25,26,27,28,29,30,31]$ & $2.03$ & $52.82 \pm 0.56$ & 30.0 \\
$H0$ & High3: $V{:}[0,10,12] \mid O{:}[0,1,2]$ & $8.39$ & $52.87 \pm 0.46$ & 30.5 \\
$H0$ & High6: $V{:}[0,8,9,10,11,12] \mid O{:}[0,1,2,9,11,16]$ & $6.8$ & $52.64 \pm 0.99$ & 30.5 \\
$H0$ & High9: $V{:}[0,2,6,8,9,10,11,12,13] \mid O{:}[0,1,2,9,10,11,13,15,16]$ & $5.21$ & $51.40 \pm 0.92$ & 30.3 \\
$H0$ & High12: $V{:}[0,2,5,6,7,8,9,10,11,12,13,14] \mid O{:}[0,1,2,5,7,9,10,11,12,13,15,16]$ & $3.62$ & $50.64 \pm 0.95$ & 30.2 \\
$H0$ & High15: $V{:}[0,1,2,4,5,6,7,8,9,10,11,12,13,14,15] \mid O{:}[0,1,2,4,5,7,8,9,10,11,12,13,15,16,19]$ & $2.03$ & $48.60 \pm 0.60$ & 30.0 \\
\bottomrule
\end{tabular}}

\label{tab:mistralGSM8KFull}
\end{table*}

\FloatBarrier
\section{Transferred-Knowledge Accuracy Curves}
\label{app:transferred-accuracy-curves}

Figure~\ref{fig:transferred-accuracy-curves} shows the epoch-wise accuracy trajectories for \texttt{QA:MMLU}, \texttt{SA:SST-2}, and \texttt{SA:IMDB} in the transferred-knowledge setting. The curves compare full fine-tuning, LoRA, and \method after reusing the freezing profile learned from \texttt{QA:GSM8K}. These trajectories provide a per-epoch view of model adaptation across the transferred evaluation tasks.

\begin{figure*}[!htbp]
\centering
\scriptsize
\setlength{\tabcolsep}{2pt}
\begin{tabular}{ccc}
\multicolumn{3}{c}{\small \texttt{SA: SST-2}} \\
\includegraphics[width=0.32\linewidth]{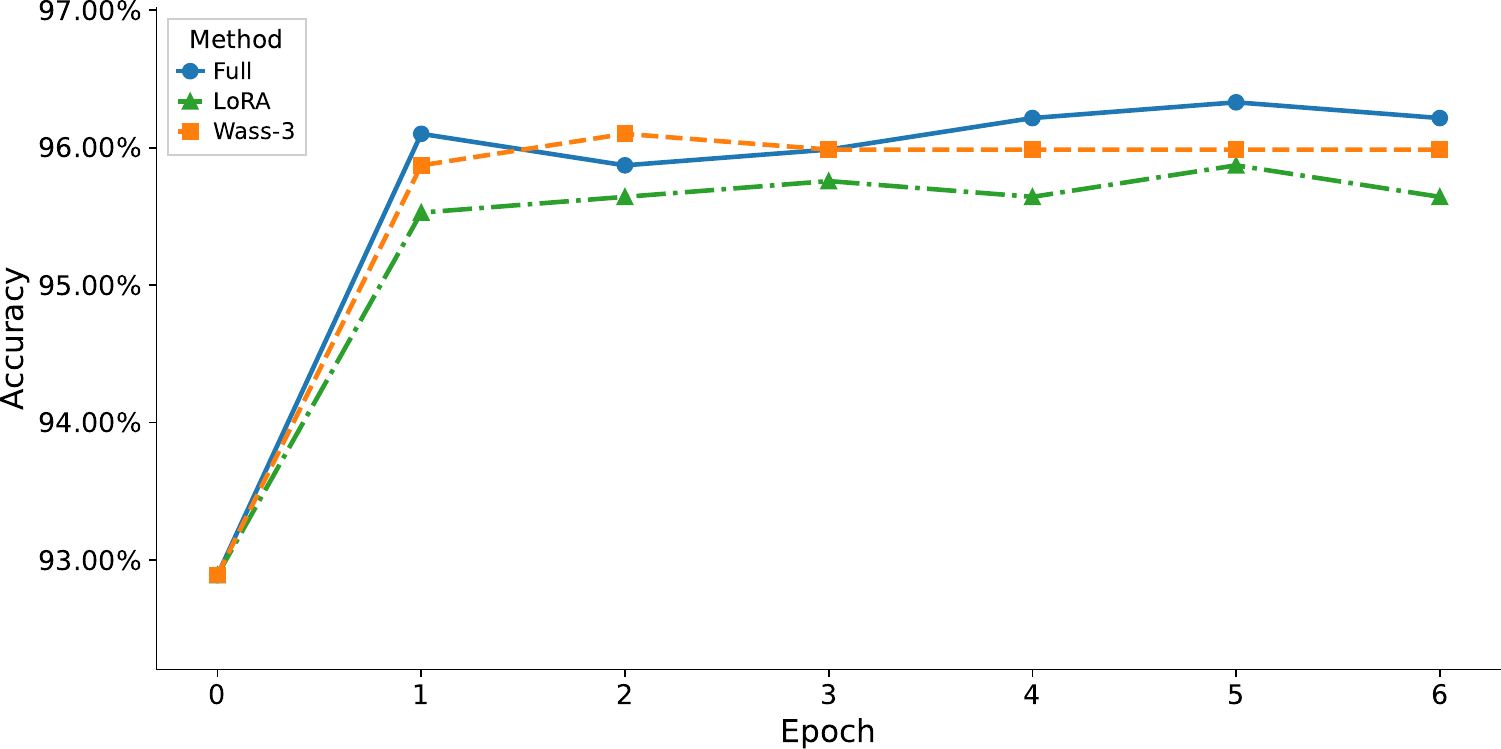} &
\includegraphics[width=0.32\linewidth]{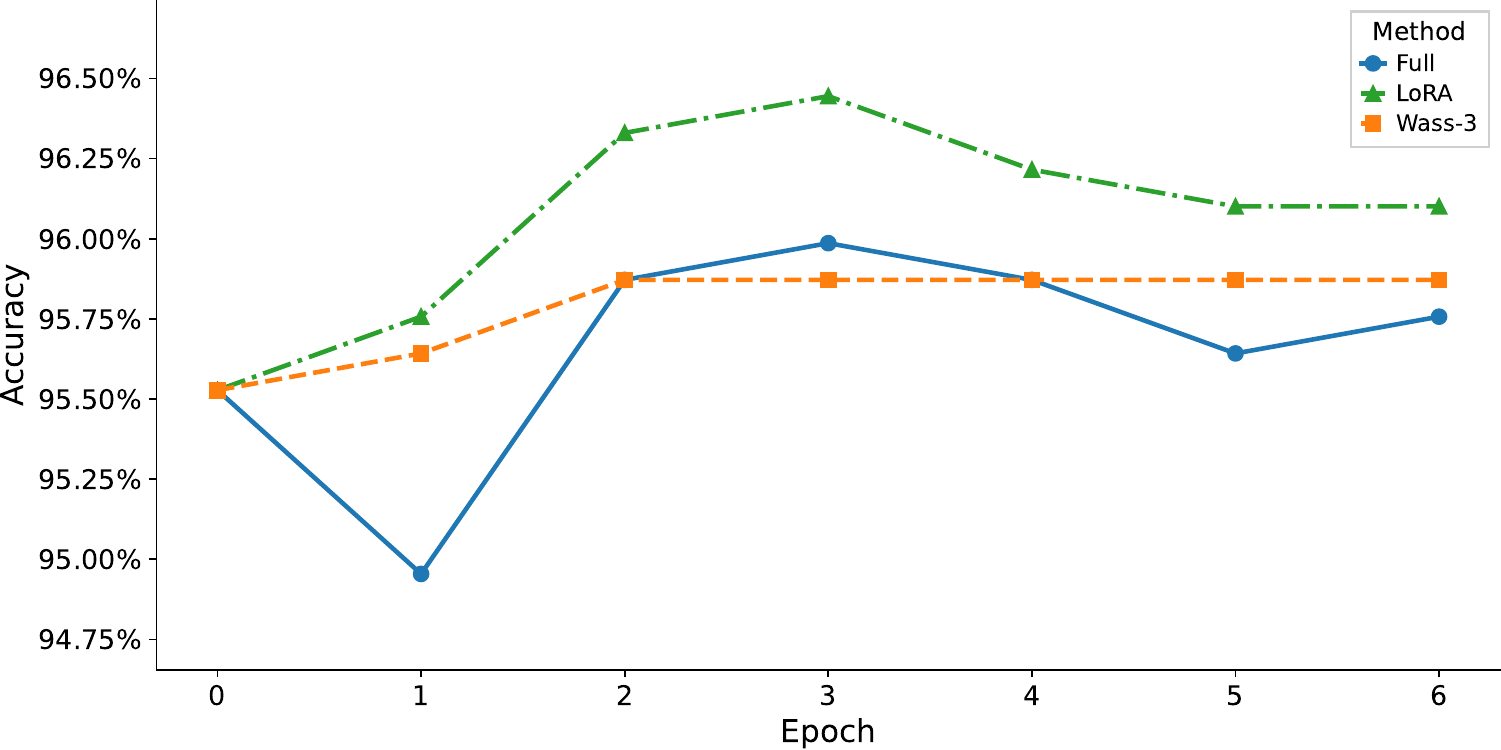} &
\includegraphics[width=0.32\linewidth]{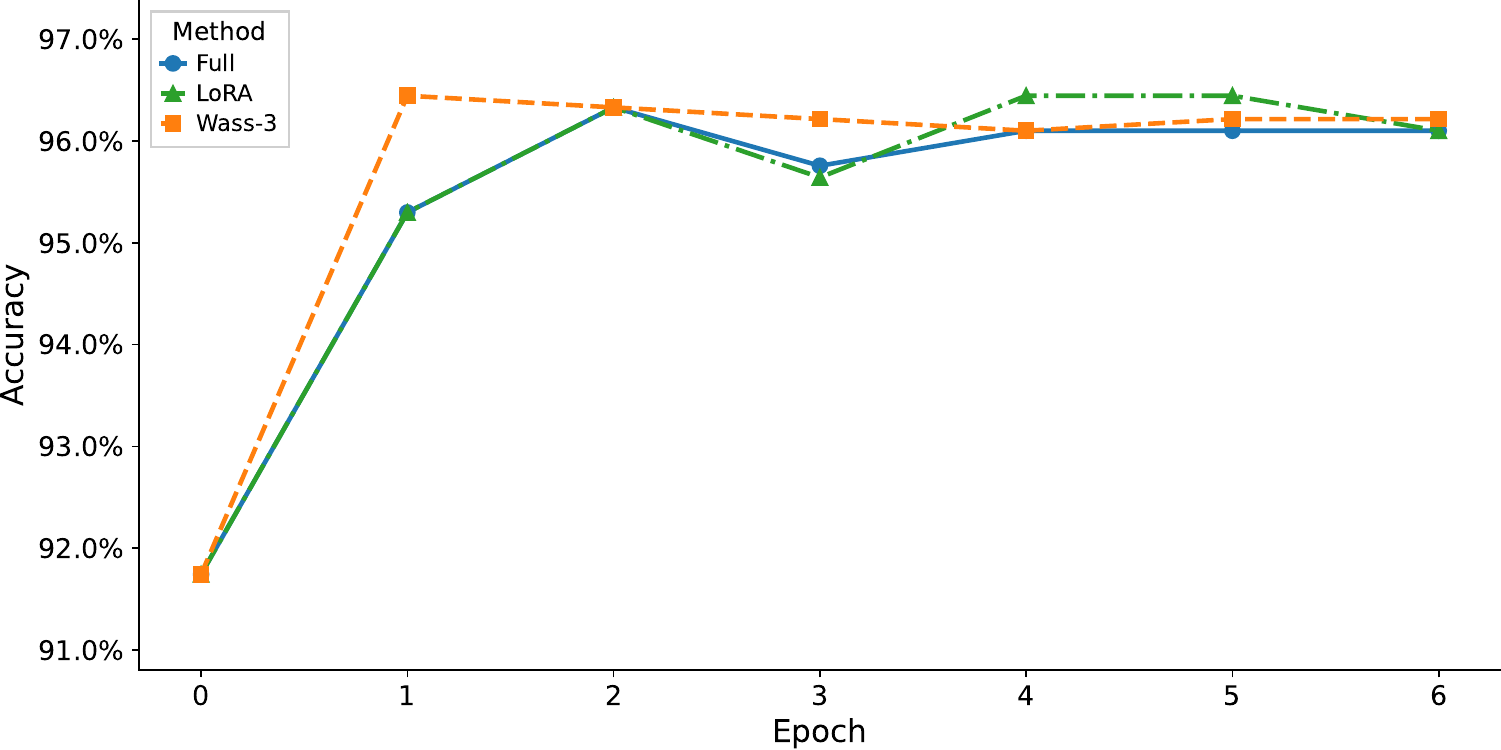} \\
[2em]
\multicolumn{3}{c}{\small \texttt{SA: IMDB}} \\
\includegraphics[width=0.32\linewidth]{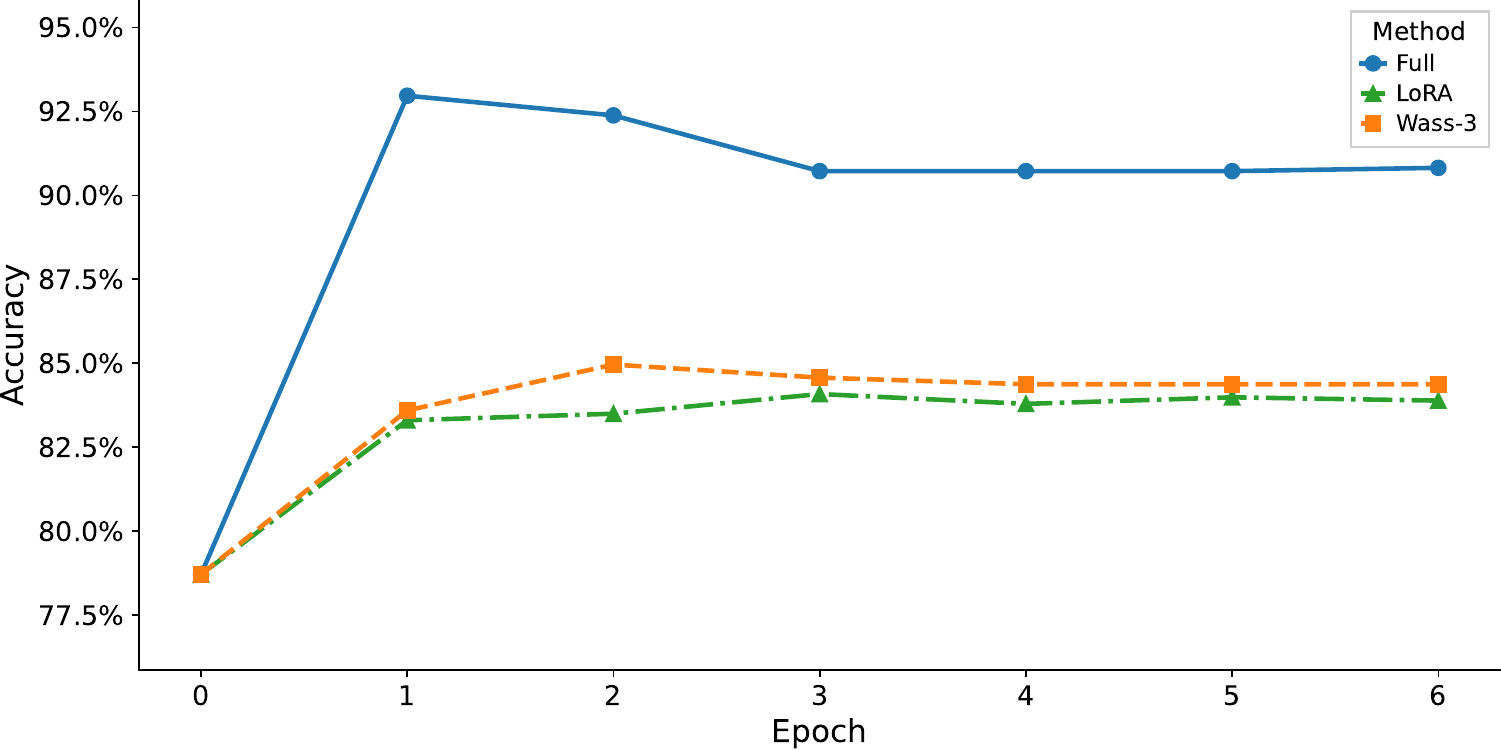} &
\includegraphics[width=0.32\linewidth]{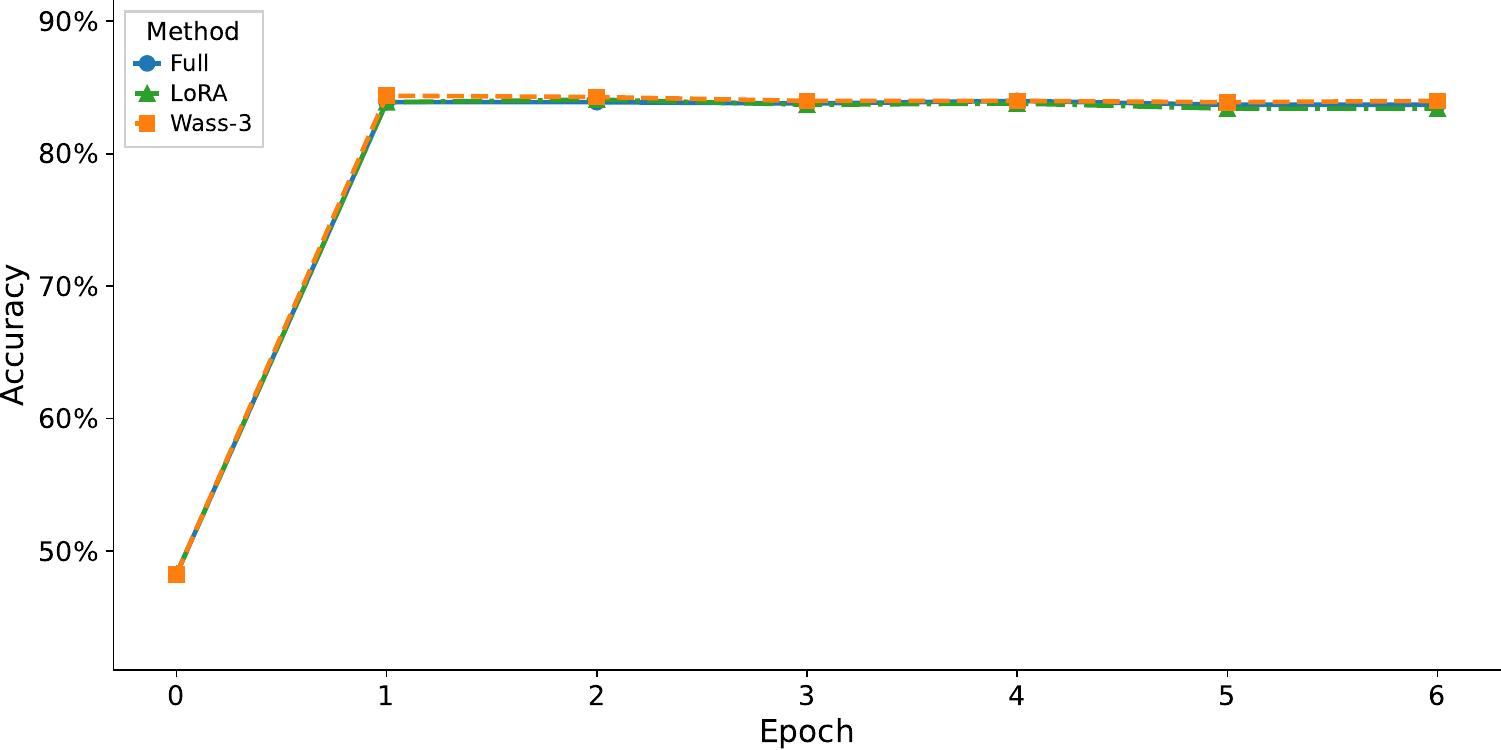} &
\includegraphics[width=0.32\linewidth]{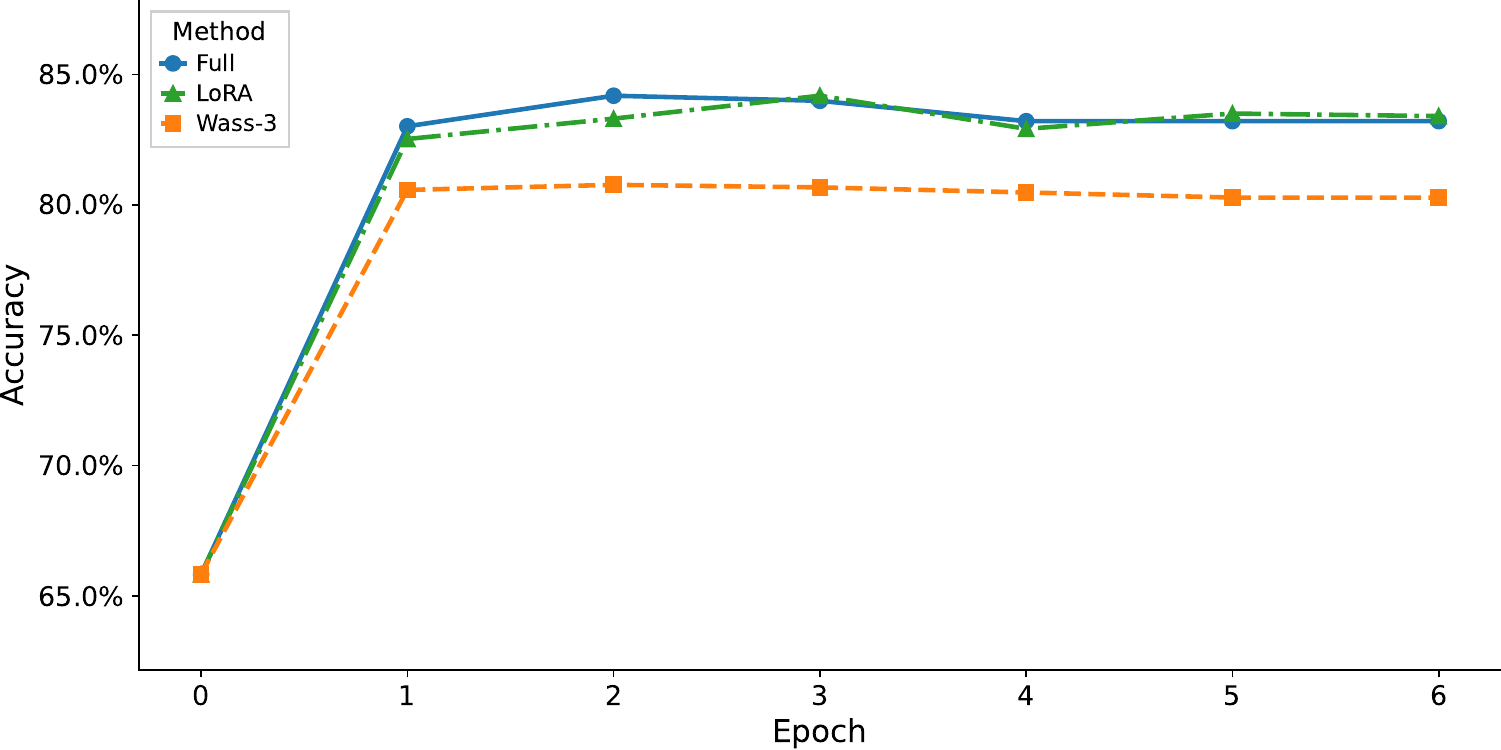} \\
[2em]
\multicolumn{3}{c}{\small \texttt{QA: MMLU}} \\
\includegraphics[width=0.32\linewidth]{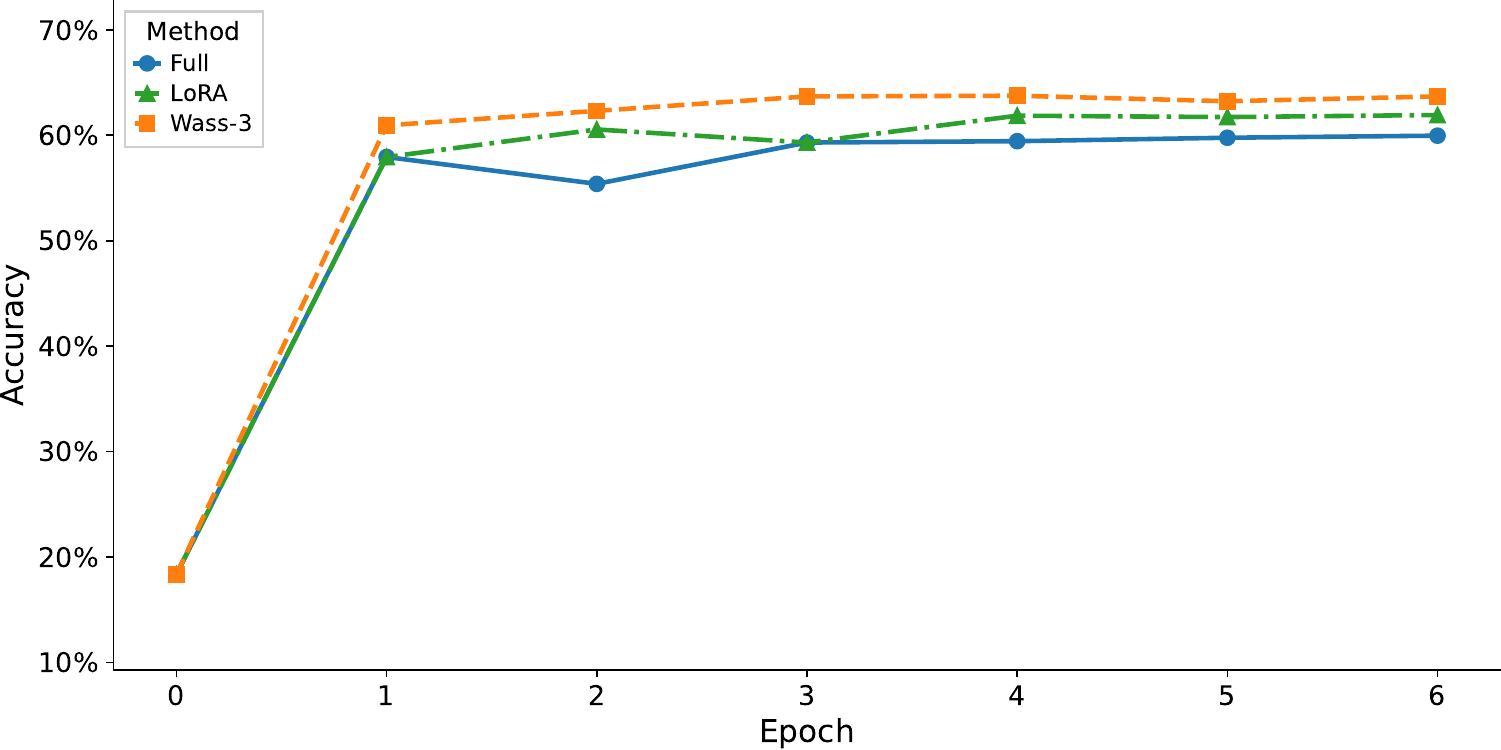} &
\includegraphics[width=0.32\linewidth]{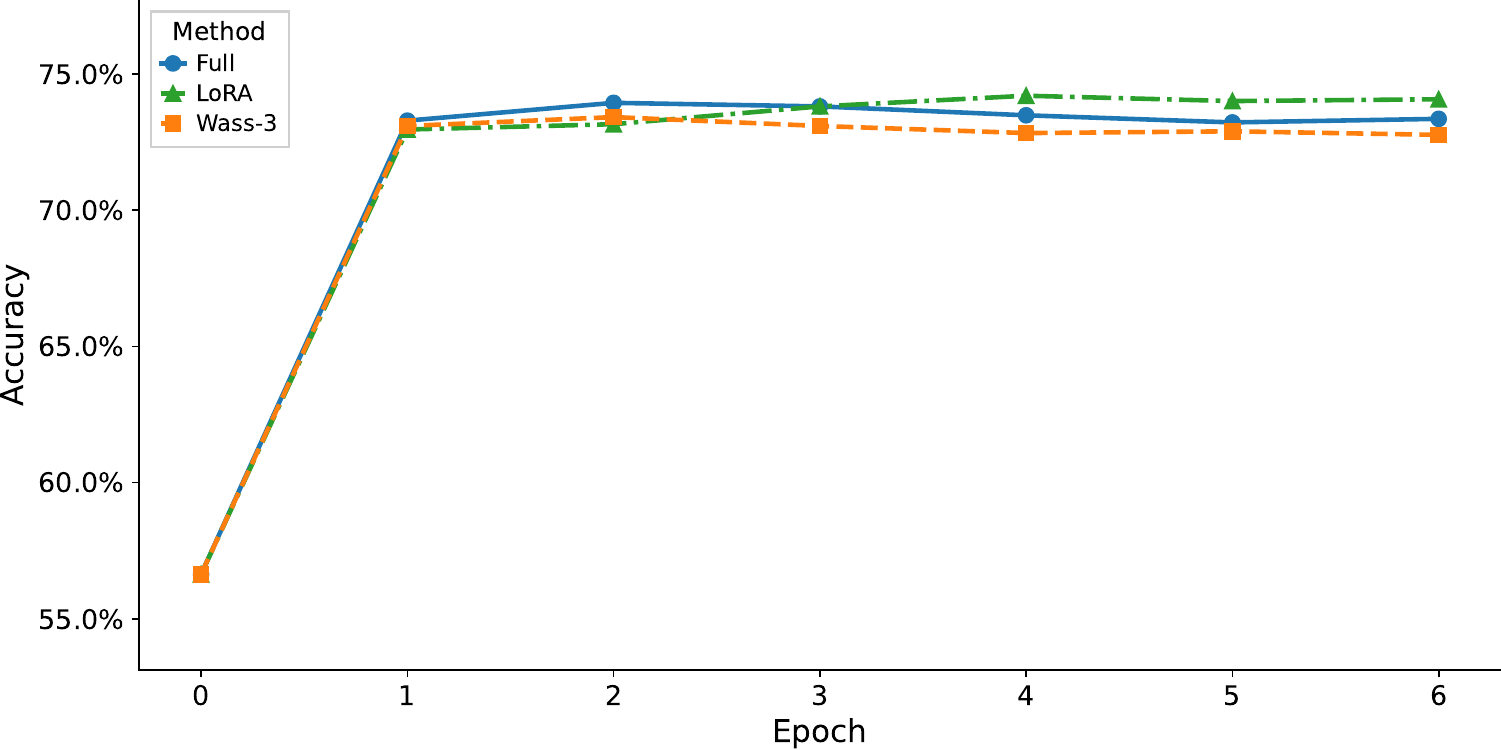} &
\includegraphics[width=0.32\linewidth]{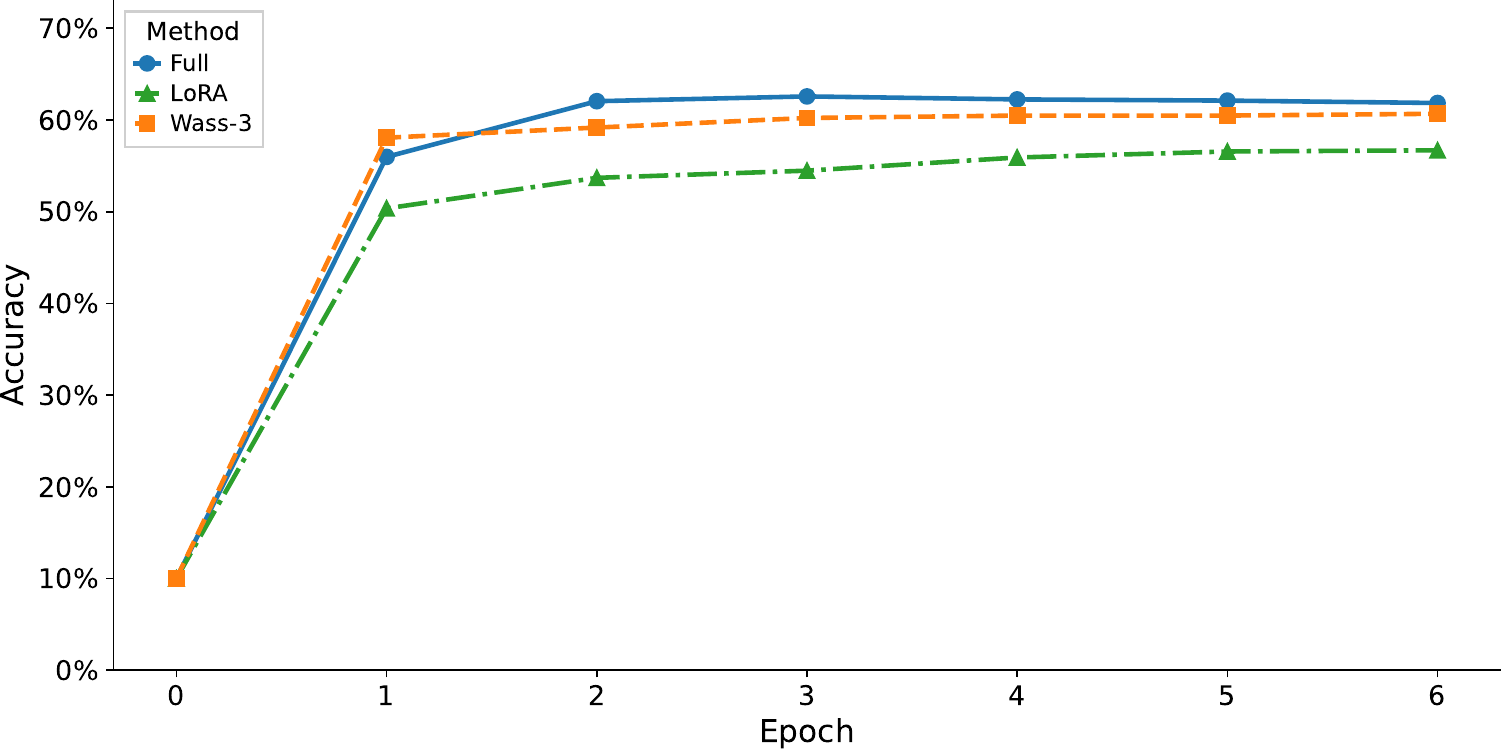} \\

\small \texttt{LLaMA-3.1-8B} & \small \texttt{Qwen3-8B-Base} & \small \texttt{Mistral-7B-v0.3} \\
\end{tabular}
\caption{Accuracies on each epoch for all the models across datasets}
\label{fig:transferred-accuracy-curves}
\end{figure*}
\FloatBarrier
\section{Spectrum and DropBP}
\label{app:competitors}



\paragraph{Spectrum.}
Spectrum~\cite{spectrum2024} is a parameter-selection baseline that uses signal-to-noise ratio (SNR) statistics to decide which parameters should remain trainable. In our implementation, we first tested several top-$N\%$ SNR configurations, including $N \in \{50,10,3\}$, to understand how the selected trainable budget changes with the masking ratio. We report the $N=5$ setting because its updatable parameter fraction is closest to \method, making the comparison more controlled in terms of effective parameter updates.

\paragraph{DropBP.}
DropBP~\cite{woo2024dropbp} is a compute- and memory-oriented training baseline that reduces backward computation during fine-tuning. Since DropBP can be combined with either full fine-tuning or LoRA, we considered both DropBP+Full and DropBP+LoRA as possible baselines. We use DropBP+LoRA in the main comparison because its updatable parameter fraction is closer to \method, making the LoRA-based DropBP setting the fairer comparison.

\paragraph{Experimental protocol.}
For both baselines, we keep the experimental setup aligned with the rest of our transfer experiments. Spectrum is run with selective full-parameter fine-tuning over the SNR-selected unfrozen weights, while DropBP is run with LoRA adapters on the attention projections. We use the same model-dataset grid, training data regime, and evaluation pipeline as our main \texttt{SA:SST-2}, \texttt{SA:IMDB}, and \texttt{QA:MMLU} experiments. The results are provided in Table~\ref{tab:mainPerformance}. Experiment setups and hyperparameter details are provided in the Appendix~\ref{app:hyperparameters}.
\FloatBarrier
\section{Fine-tuning with Dynamic Stopping}
\label{app:dynamic-stopping}






\begin{figure}[!htbp]
\centering
\setlength{\tabcolsep}{2pt}

\begin{tabular}{ccc}
\includegraphics[width=0.48\linewidth]{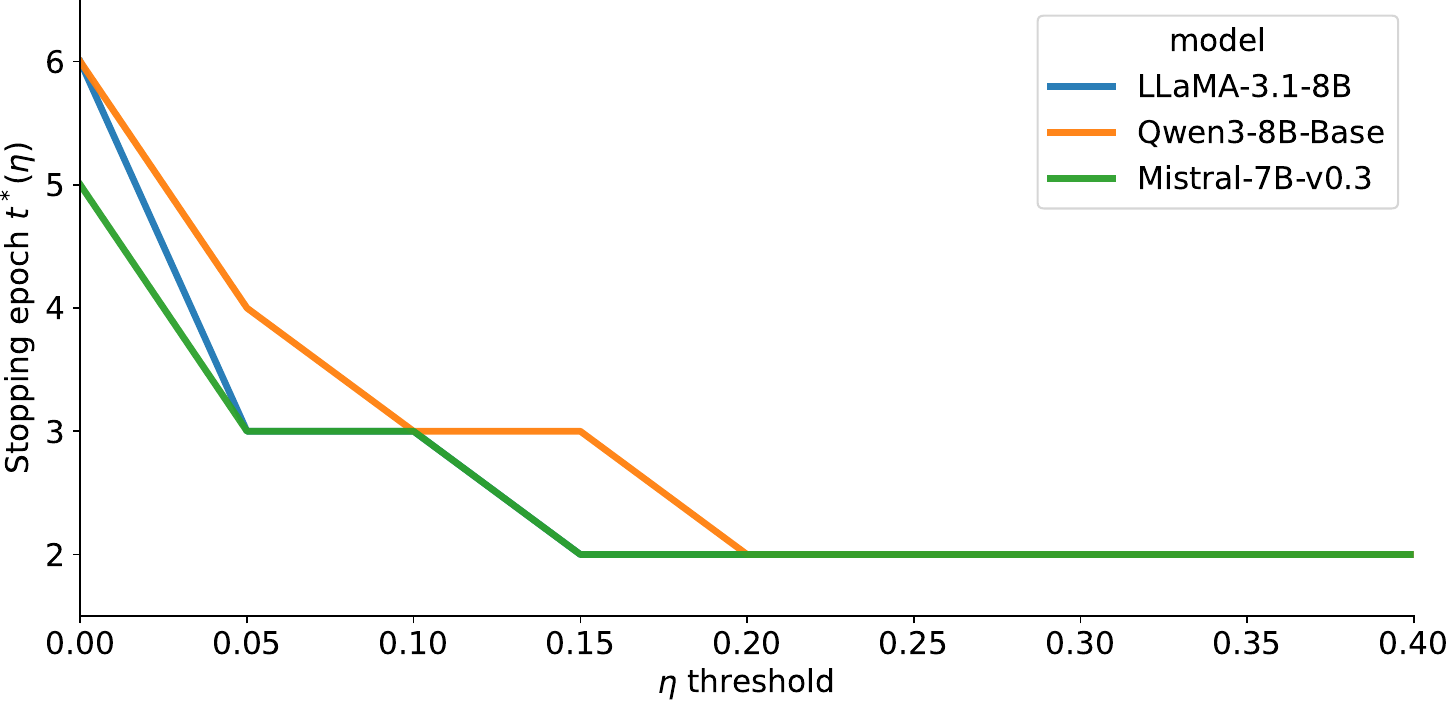} &
\includegraphics[width=0.48\linewidth]{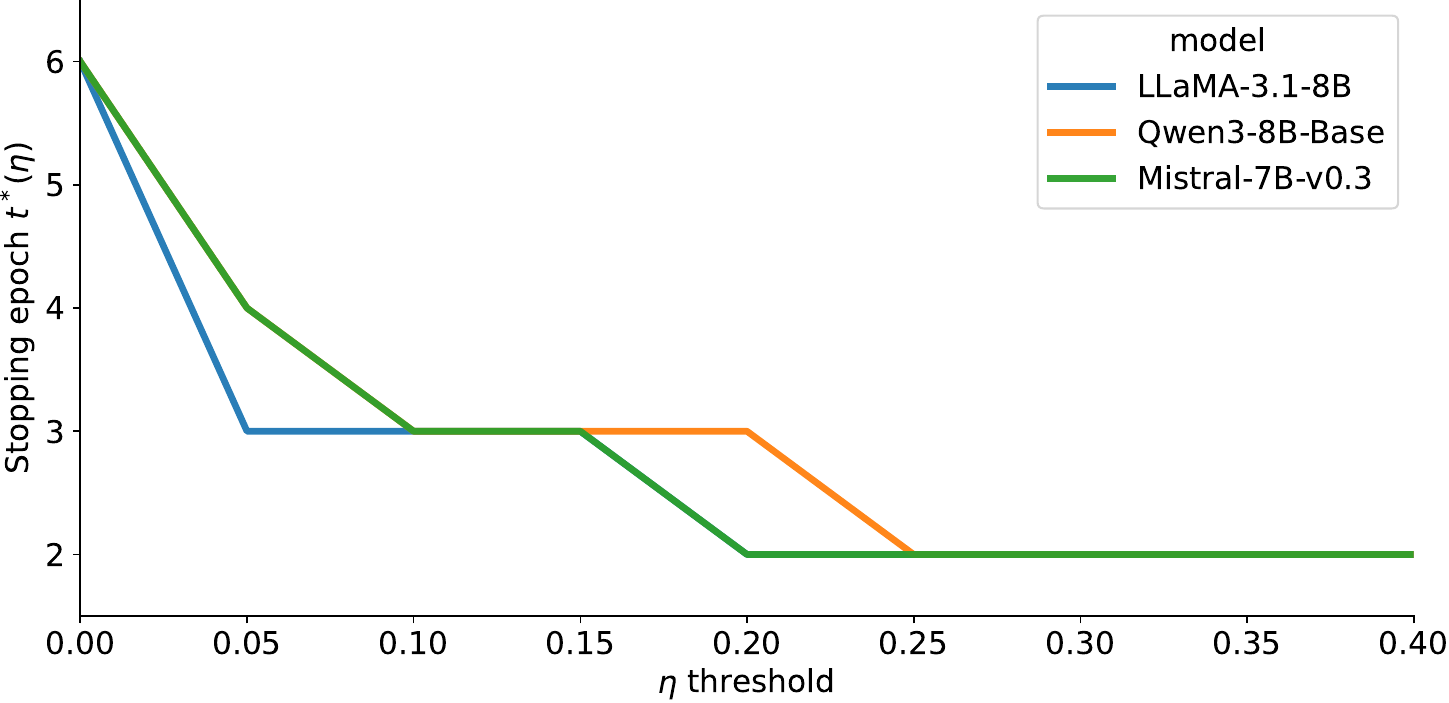} \\

\small \texttt{SA:IMDB} & \small \texttt{QA:MMLU}
\end{tabular}

\caption{Topology-based stopping epochs under full fine-tuning. The plots show selected stopping epochs across $\eta$ thresholds for \texttt{SA:IMDB} and \texttt{QA:MMLU} across the three model families.}
\label{fig:eta-stop-datasets}
\end{figure}
\FloatBarrier
\section{Cross-task Representation Geometry}
\label{app:cross-task-geometry}

For each model, projection $p\in\{K,Q,V,O\}$, layer $\ell$, and dataset pair $(D_a,D_b)$, we compute the Wasserstein distance between their final-epoch persistence diagrams:
\[
d_{p,\ell}(D_a,D_b)
=
W\!\left(PD^{(6)}_{D_a,p,\ell},PD^{(6)}_{D_b,p,\ell}\right).
\]
We then normalize this distance within each layer and projection:
\[
s_{p,\ell}(D_a,D_b)
=
1-
\frac{d_{p,\ell}(D_a,D_b)}
{\max_{D_i,D_j} d_{p,\ell}(D_i,D_j)}.
\]
Thus, $1$ indicates identical diagrams and $0$ indicates the most different dataset pair for that layer and projection. We average $s_{p,\ell}$ over layers to obtain the reported cross-dataset similarity.

Tables~\ref{tab:cross-dataset-sim-llama}--\ref{tab:cross-dataset-sim-mistral} report Wasserstein similarities between the task-induced representations. Across models, \texttt{SA:IMDB}, \texttt{SA:SST-2}, and \texttt{QA:MMLU} show higher mutual similarity than their similarities with \texttt{QA:GSM8K}. This separation is most visible for\texttt{Qwen3-8B-Base}, where \texttt{SA:IMDB}--\texttt{QA:MMLU} reaches $0.50/0.44/0.45/0.38$ across $K/Q/V/O$, while \texttt{SA:SST-2}--\texttt{QA:GSM8K} is near zero. \texttt{LLaMA-3.1-8B} and \texttt{Mistral-7B-v0.3} show the same pattern: NLU-style tasks are more aligned with one another, whereas \texttt{QA:GSM8K} remains the least aligned task in most projection families. These results suggest that mathematical reasoning induces a different topological adaptation pattern than the sentiment and general QA-style tasks.

\begin{table}[!htbp]
\centering
\small
\caption{Cross-dataset topological similarity for full-finetuned \texttt{LLaMA-3.1-8B}. Each entry reports normalized Wasserstein similarity between final-epoch persistence diagrams, averaged over 32 layers for the $K/Q/V/O$ projections.}
\setlength{\tabcolsep}{4pt}
\resizebox{\textwidth}{!}{
\begin{tabular}{lcccc}
\toprule
 & \texttt{SA:IMDB} $(K/Q/V/O)$ & \texttt{SA:SST-2} $(K/Q/V/O)$ & \texttt{QA:MMLU} $(K/Q/V/O)$ & \texttt{QA:GSM8K} $(K/Q/V/O)$ \\
\midrule
\texttt{SA:IMDB} 
& $1/1/1/1$ &  &  &  \\
\texttt{SA:SST-2}
& $0.36/0.29/0.33/0.28$ 
& $1/1/1/1$ &  &  \\
\texttt{QA:MMLU} 
& $0.14/0.15/0.13/0.12$ 
& $0.23/0.19/0.18/0.16$ 
& $1/1/1/1$ &  \\
\texttt{QA:GSM8K} 
& $0.10/0.11/0.09/0.08$ 
& $0.17/0.10/0.12/0.11$ 
& $0.01/0.07/0.02/0.02$ 
& $1/1/1/1$ \\
\bottomrule
\end{tabular}}
\label{tab:cross-dataset-sim-llama}
\end{table}

\begin{table}[!htbp]
\centering
\small
\caption{Cross-dataset topological similarity for full-finetuned \texttt{Qwen3-8B-Base}. Each entry reports normalized Wasserstein similarity between final-epoch persistence diagrams, averaged over 36 layers for the $K/Q/V/O$ projections.}
\setlength{\tabcolsep}{4pt}
\resizebox{\textwidth}{!}{
\begin{tabular}{lcccc}
\toprule
 & \texttt{SA:IMDB} $(K/Q/V/O)$ & \texttt{SA:SST-2} $(K/Q/V/O)$ & \texttt{QA:MMLU} $(K/Q/V/O)$ & \texttt{QA:GSM8K} $(K/Q/V/O)$ \\
\midrule
\texttt{SA:IMDB} 
& $1/1/1/1$ &  &  &  \\
\texttt{SA:SST-2}
& $0.41/0.35/0.38/0.33$ 
& $1/1/1/1$ &  &  \\
\texttt{QA:MMLU} 
& $0.50/0.44/0.45/0.38$ 
& $0.36/0.31/0.33/0.28$ 
& $1/1/1/1$ &  \\
\texttt{QA:GSM8K} 
& $0.06/0.05/0.05/0.05$ 
& $0.01/0.00/0.00/0.01$ 
& $0.06/0.05/0.05/0.05$ 
& $1/1/1/1$ \\
\bottomrule
\end{tabular}}
\label{tab:cross-dataset-sim-qwen}
\end{table}

\begin{table}[!htbp]
\centering
\small
\caption{Cross-dataset topological similarity for full-finetuned \texttt{Mistral-7B-v0.3}. Each entry reports normalized Wasserstein similarity between final-epoch persistence diagrams, averaged over 32 layers for the $K/Q/V/O$ projections.}
\setlength{\tabcolsep}{4pt}
\resizebox{\textwidth}{!}{
\begin{tabular}{lcccc}
\toprule
 & \texttt{SA:IMDB} $(K/Q/V/O)$ & \texttt{SA:SST-2} $(K/Q/V/O)$ & \texttt{QA:MMLU} $(K/Q/V/O)$ & \texttt{QA:GSM8K} $(K/Q/V/O)$ \\
\midrule
\texttt{SA:IMDB} 
& $1/1/1/1$ &  &  &  \\
\texttt{SA:SST-2}
& $0.29/0.27/0.32/0.27$ 
& $1/1/1/1$ &  &  \\
\texttt{QA:MMLU} 
& $0.12/0.12/0.13/0.12$ 
& $0.17/0.14/0.18/0.15$ 
& $1/1/1/1$ &  \\
\texttt{QA:GSM8K} 
& $0.07/0.04/0.08/0.05$ 
& $0.11/0.05/0.12/0.10$ 
& $0.02/0.02/0.01/0.03$ 
& $1/1/1/1$ \\
\bottomrule
\end{tabular}}
\label{tab:cross-dataset-sim-mistral}
\end{table}

\FloatBarrier
\section{Catastrophic Forgetting Evaluation}
\label{app:catastrophic-forgetting}



The retention results show a second advantage of selective tuning. Tables~\ref{tab:forgetting-llama-11datasets}~\ref{tab:forgetting-mistral-11datasets} evaluate the models on 11 different datasets after prior fine-tuning on \texttt{QA:GSM8K}. The clearest retention gains appear for \texttt{LLaMA-3.1-8B} and \texttt{Qwen3-8B-Base}. For both models, \method improves over LoRA on \texttt{QA:MMLU}, \texttt{SA:SST-2}, \texttt{SA:IMDB}, \texttt{IR:SQuAD}, and \texttt{IR:HotpotQA}, showing stronger preservation on question-answering, sentiment-analysis, and information-retrieval evaluations.

\texttt{Qwen3-8B-Base} shows a more mixed pattern. LoRA remains stronger on several generation-style metrics, especially summarization, instruction following, and code generation. However, \method stays very close to LoRA on most metrics and still improves over LoRA on \texttt{SA:IMDB} and \texttt{IR:HotpotQA}. Overall, these results suggest that selective fine-tuning reduces forgetting most clearly on question-answering, sentiment-analysis, and information-retrieval tasks, while its effect on generation-style evaluations depend on architecture.

\begin{table*}[!htbp]
\centering
\scriptsize
\caption{Catastrophic forgetting evaluation on \texttt{QA:GSM8K} fine-tuned \texttt{LLaMA-3.1-8B}. Metrics are avg.\ $\pm$ std over three fixed, non-overlapping evaluation subsets. Base and full finetuned models are shown as reference rows. \textbf{Bold} indicates the better result between LoRA and \method for each metric.}
\label{tab:forgetting-llama-11datasets}
\setlength{\tabcolsep}{2pt}

\resizebox{\textwidth}{!}{
\begin{tabular}{l|c|cc|cccc}
\toprule
& \multicolumn{1}{c|}{\textit{Question Answering}}
& \multicolumn{2}{c|}{\textit{Sentiment Analysis}}
& \multicolumn{4}{c}{\textit{Information Retrieval}} \\
& \multicolumn{1}{c|}{\texttt{QA:MMLU}}
& \multicolumn{1}{c}{\texttt{SA:SST-2}}
& \multicolumn{1}{c|}{\texttt{SA:IMDB}}
& \multicolumn{2}{c|}{\texttt{IR:SQuAD v1.1}}
& \multicolumn{2}{c}{\texttt{IR:HotpotQA}} \\
Model
& Acc.
& Acc. & Acc.
& EM & F1
& EM & F1 \\
\midrule

\graycell{Base model}
& \graycell{$54.5\pm1.25$}
& \graycell{$94.0\pm0.72$}
& \graycell{$89.8\pm0.38$}
& \graycell{$76.0\pm0.32$}
& \graycell{$86.9\pm0.20$}
& \graycell{$23.1\pm1.25$}
& \graycell{$31.9\pm0.64$} \\

\graycell{Full finetuned}
& \graycell{$52.9\pm0.44$}
& \graycell{$94.3\pm0.80$}
& \graycell{$89.0\pm0.47$}
& \graycell{$74.4\pm0.39$}
& \graycell{$85.0\pm0.15$}
& \graycell{$21.3\pm1.40$}
& \graycell{$29.4\pm0.76$} \\

\midrule

LoRA
& $51.8\pm1.01$
& $92.7\pm0.52$
& $89.3\pm0.48$
& $72.3\pm0.49$
& $83.6\pm0.66$
& $19.2\pm1.03$
& $27.2\pm0.97$ \\

\method
& $\mathbf{53.6\pm0.51}$
& $\mathbf{94.4\pm0.19}$
& $\mathbf{90.3\pm0.36}$
& $\mathbf{76.9\pm0.24}$
& $\mathbf{86.0\pm0.48}$
& $\mathbf{21.8\pm1.12}$
& $\mathbf{30.2\pm0.71}$ \\

\bottomrule
\end{tabular}
}

\vspace{1em}

\resizebox{\textwidth}{!}{
\begin{tabular}{l|ccc|ccc|cc}
\toprule
& \multicolumn{6}{c|}{\textit{Summarization}}
& \multicolumn{2}{c}{\textit{Code Generation}} \\
& \multicolumn{3}{c|}{\texttt{SU:XSum}}
& \multicolumn{3}{c|}{\texttt{SU:CNN/DM}}
& \multicolumn{1}{c|}{\texttt{CG:HumanEval}}
& \multicolumn{1}{c}{\texttt{CG:MBPP}} \\
Model
& R1 & R2 & RL
& R1 & R2 & RL
& pass@1 & pass@1 \\
\midrule

\graycell{Base model}
& \graycell{$25.2\pm0.05$}
& \graycell{$6.7\pm0.08$}
& \graycell{$18.8\pm0.10$}
& \graycell{$31.7\pm0.29$}
& \graycell{$13.6\pm0.18$}
& \graycell{$20.6\pm0.23$}
& \graycell{$0.0\pm0.00$}
& \graycell{$2.7\pm1.35$} \\

\graycell{Full finetuned}
& \graycell{$14.8\pm0.16$}
& \graycell{$4.1\pm0.11$}
& \graycell{$11.7\pm0.17$}
& \graycell{$29.5\pm0.34$}
& \graycell{$12.4\pm0.24$}
& \graycell{$18.9\pm0.28$}
& \graycell{$1.8\pm1.85$}
& \graycell{$3.5\pm3.07$} \\

\midrule

LoRA
& $\mathbf{27.2\pm0.21}$
& $\mathbf{7.8\pm0.22}$
& $\mathbf{21.6\pm0.35}$
& $\mathbf{29.8\pm0.27}$
& $\mathbf{12.6\pm0.22}$
& $\mathbf{19.2\pm0.19}$
& $1.2\pm1.05$
& $\mathbf{4.7\pm3.48}$ \\

\method
& $\mathbf{27.2\pm0.17}$
& $7.7\pm0.08$
& $21.0\pm0.21$
& $27.4\pm0.45$
& $11.8\pm0.25$
& $17.7\pm0.34$
& $\mathbf{1.8\pm3.21}$
& $3.1\pm1.33$ \\

\bottomrule
\end{tabular}
}

\vspace{1em}

\resizebox{\textwidth}{!}{
\begin{tabular}{l|cccc|cccc}
\toprule
& \multicolumn{8}{c}{\textit{Instruction Following}} \\
& \multicolumn{4}{c|}{\texttt{IF:Dolly-15k}}
& \multicolumn{4}{c}{\texttt{IF:Alpaca}} \\
Model
& R1 & R2 & RL & METEOR
& R1 & R2 & RL & METEOR \\
\midrule

\graycell{Base model}
& \graycell{$38.0\pm0.32$}
& \graycell{$19.3\pm0.25$}
& \graycell{$29.9\pm0.25$}
& \graycell{$27.2$}
& \graycell{$32.6\pm0.27$}
& \graycell{$15.0\pm0.25$}
& \graycell{$26.1\pm0.18$}
& \graycell{$25.7$} \\

\graycell{Full finetuned}
& \graycell{$38.5\pm0.17$}
& \graycell{$19.4\pm0.14$}
& \graycell{$30.5\pm0.09$}
& \graycell{$27.1$}
& \graycell{$39.4\pm0.21$}
& \graycell{$19.0\pm0.28$}
& \graycell{$32.3\pm0.17$}
& \graycell{$27.2$} \\

\midrule

LoRA
& $\mathbf{38.0\pm0.31}$
& $\mathbf{19.5\pm0.14}$
& $\mathbf{30.5\pm0.20}$
& $27.0$
& $\mathbf{37.0\pm0.15}$
& $\mathbf{18.2\pm0.16}$
& $\mathbf{30.7\pm0.13}$
& $\mathbf{26.0}$ \\

\method
& $37.2\pm0.09$
& $18.9\pm0.26$
& $29.4\pm0.18$
& $\mathbf{27.5}$
& $31.9\pm0.27$
& $14.3\pm0.27$
& $25.7\pm0.28$
& $25.5$ \\

\bottomrule
\end{tabular}
}

\end{table*}

\begin{table*}[!htbp]
\centering
\scriptsize
\caption{Catastrophic forgetting evaluation on \texttt{QA:GSM8K} fine-tuned \texttt{Qwen3-8B-Base}. Metrics are avg.\ $\pm$ std over three fixed, non-overlapping evaluation subsets. Base and full finetuned models are shown as reference rows. \textbf{Bold} indicates the better result between LoRA and \method for each metric.}
\label{tab:forgetting-qwen-11datasets}
\setlength{\tabcolsep}{2pt}

\resizebox{\textwidth}{!}{
\begin{tabular}{l|c|cc|cccc}
\toprule
& \multicolumn{1}{c|}{\textit{Question Answering}}
& \multicolumn{2}{c|}{\textit{Sentiment Analysis}}
& \multicolumn{4}{c}{\textit{Information Retrieval}} \\
& \multicolumn{1}{c|}{\texttt{QA:MMLU}}
& \multicolumn{1}{c}{\texttt{SA:SST-2}}
& \multicolumn{1}{c|}{\texttt{SA:IMDB}}
& \multicolumn{2}{c|}{\texttt{IR:SQuAD v1.1}}
& \multicolumn{2}{c}{\texttt{IR:HotpotQA}} \\
Model
& Acc.
& Acc. & Acc.
& EM & F1
& EM & F1 \\
\midrule

\graycell{Base model}
& \graycell{$64.2\pm0.78$}
& \graycell{$94.0\pm0.79$}
& \graycell{$90.0\pm0.26$}
& \graycell{$67.2\pm0.57$}
& \graycell{$82.9\pm0.38$}
& \graycell{$21.7\pm0.94$}
& \graycell{$30.4\pm0.60$} \\

\graycell{Full finetuned}
& \graycell{$64.4\pm0.66$}
& \graycell{$93.9\pm0.99$}
& \graycell{$89.6\pm0.25$}
& \graycell{$75.2\pm0.19$}
& \graycell{$87.3\pm0.44$}
& \graycell{$21.9\pm0.40$}
& \graycell{$30.4\pm0.05$} \\

\midrule

LoRA
& $\mathbf{64.4\pm0.58}$
& $\mathbf{94.4\pm0.79}$
& $89.9\pm0.35$
& $\mathbf{77.8\pm0.19}$
& $\mathbf{88.7\pm0.20}$
& $22.0\pm0.39$
& $30.6\pm0.24$ \\

\method
& $64.3\pm0.63$
& $93.8\pm0.91$
& $\mathbf{90.3\pm0.22}$
& $74.4\pm0.25$
& $86.9\pm0.36$
& $\mathbf{22.2\pm0.64}$
& $\mathbf{30.7\pm0.16}$ \\

\bottomrule
\end{tabular}
}

\vspace{1em}

\resizebox{\textwidth}{!}{
\begin{tabular}{l|ccc|ccc|cc}
\toprule
& \multicolumn{6}{c|}{\textit{Summarization}}
& \multicolumn{2}{c}{\textit{Code Generation}} \\
& \multicolumn{3}{c|}{\texttt{SU:XSum}}
& \multicolumn{3}{c|}{\texttt{SU:CNN/DM}}
& \multicolumn{1}{c|}{\texttt{CG:HumanEval}}
& \multicolumn{1}{c}{\texttt{CG:MBPP}} \\
Model
& R1 & R2 & RL
& R1 & R2 & RL
& pass@1 & pass@1 \\
\midrule

\graycell{Base model}
& \graycell{$25.9\pm0.09$}
& \graycell{$6.4\pm0.04$}
& \graycell{$18.2\pm0.03$}
& \graycell{$38.5\pm0.13$}
& \graycell{$16.2\pm0.08$}
& \graycell{$24.8\pm0.09$}
& \graycell{$6.1\pm1.13$}
& \graycell{$0.4\pm0.67$} \\

\graycell{Full finetuned}
& \graycell{$25.8\pm0.08$}
& \graycell{$6.4\pm0.04$}
& \graycell{$18.1\pm0.07$}
& \graycell{$38.6\pm0.13$}
& \graycell{$16.2\pm0.10$}
& \graycell{$24.8\pm0.08$}
& \graycell{$4.9\pm2.18$}
& \graycell{$0.8\pm0.68$} \\

\midrule

LoRA
& $\mathbf{26.3\pm0.08}$
& $\mathbf{6.6\pm0.07}$
& $\mathbf{18.5\pm0.04}$
& $\mathbf{38.6\pm0.06}$
& $\mathbf{16.3\pm0.09}$
& $\mathbf{24.9\pm0.08}$
& $\mathbf{8.6\pm2.22}$
& $\mathbf{1.2\pm1.18}$ \\

\method
& $26.0\pm0.04$
& $6.5\pm0.02$
& $18.3\pm0.05$
& $\mathbf{38.6\pm0.16}$
& $\mathbf{16.3\pm0.12}$
& $\mathbf{24.9\pm0.09}$
& $7.3\pm3.74$
& $0.8\pm0.68$ \\

\bottomrule
\end{tabular}
}

\vspace{1em}

\resizebox{\textwidth}{!}{
\begin{tabular}{l|cccc|cccc}
\toprule
& \multicolumn{8}{c}{\textit{Instruction Following}} \\
& \multicolumn{4}{c|}{\texttt{IF:Dolly-15k}}
& \multicolumn{4}{c}{\texttt{IF:Alpaca}} \\
Model
& R1 & R2 & RL & METEOR
& R1 & R2 & RL & METEOR \\
\midrule

\graycell{Base model}
& \graycell{$36.8\pm0.21$}
& \graycell{$17.6\pm0.16$}
& \graycell{$27.5\pm0.16$}
& \graycell{$27.0$}
& \graycell{$37.6\pm0.22$}
& \graycell{$17.9\pm0.20$}
& \graycell{$28.8\pm0.13$}
& \graycell{$29.2$} \\

\graycell{Full finetuned}
& \graycell{$37.2\pm0.29$}
& \graycell{$18.0\pm0.17$}
& \graycell{$27.9\pm0.20$}
& \graycell{$27.5$}
& \graycell{$37.3\pm0.18$}
& \graycell{$17.5\pm0.15$}
& \graycell{$28.5\pm0.06$}
& \graycell{$28.9$} \\

\midrule

LoRA
& $\mathbf{38.0\pm0.33}$
& $\mathbf{18.7\pm0.14}$
& $\mathbf{28.7\pm0.17}$
& $\mathbf{28.0}$
& $\mathbf{38.3\pm0.22}$
& $\mathbf{18.3\pm0.26}$
& $\mathbf{29.4\pm0.17}$
& $\mathbf{29.5}$ \\

\method
& $37.3\pm0.24$
& $18.1\pm0.16$
& $28.0\pm0.14$
& $27.3$
& $37.8\pm0.26$
& $18.0\pm0.31$
& $29.0\pm0.21$
& $29.3$ \\

\bottomrule
\end{tabular}
}

\end{table*}

\begin{table*}[!htbp]
\centering
\scriptsize
\caption{Catastrophic forgetting evaluation on \texttt{QA:GSM8K} fine-tuned \texttt{Mistral-7B-v0.3}. Metrics are avg.\ $\pm$ std over three fixed, non-overlapping evaluation subsets. Base and full finetuned models are shown as reference rows. \textbf{Bold} indicates the better result between LoRA and \method for each metric.}
\label{tab:forgetting-mistral-11datasets}
\setlength{\tabcolsep}{2pt}

\resizebox{\textwidth}{!}{
\begin{tabular}{l|c|cc|cccc}
\toprule
& \multicolumn{1}{c|}{\textit{Question Answering}}
& \multicolumn{2}{c|}{\textit{Sentiment Analysis}}
& \multicolumn{4}{c}{\textit{Information Retrieval}} \\
& \multicolumn{1}{c|}{\texttt{QA:MMLU}}
& \multicolumn{1}{c}{\texttt{SA:SST-2}}
& \multicolumn{1}{c|}{\texttt{SA:IMDB}}
& \multicolumn{2}{c|}{\texttt{IR:SQuAD v1.1}}
& \multicolumn{2}{c}{\texttt{IR:HotpotQA}} \\
Model
& Acc.
& Acc. & Acc.
& EM & F1
& EM & F1 \\
\midrule

\graycell{Base model}
& \graycell{$54.5\pm0.41$}
& \graycell{$89.0\pm0.61$}
& \graycell{$61.4\pm0.48$}
& \graycell{$71.3\pm1.02$}
& \graycell{$83.9\pm0.72$}
& \graycell{$23.6\pm1.08$}
& \graycell{$31.8\pm0.80$} \\

\graycell{Full finetuned}
& \graycell{$52.9\pm0.33$}
& \graycell{$89.0\pm1.84$}
& \graycell{$60.2\pm0.50$}
& \graycell{$73.2\pm0.64$}
& \graycell{$84.4\pm0.54$}
& \graycell{$21.6\pm1.28$}
& \graycell{$29.0\pm0.83$} \\

\midrule

LoRA
& $50.2\pm0.79$
& $70.6\pm2.81$
& $59.2\pm0.56$
& $69.8\pm0.98$
& $81.6\pm0.80$
& $18.7\pm0.88$
& $26.5\pm0.66$ \\

\method
& $\mathbf{53.2\pm0.33}$
& $\mathbf{89.9\pm0.86}$
& $\mathbf{60.6\pm0.56}$
& $\mathbf{70.2\pm0.74}$
& $\mathbf{83.0\pm0.63}$
& $\mathbf{22.0\pm1.34}$
& $\mathbf{29.9\pm0.86}$ \\

\bottomrule
\end{tabular}
}

\vspace{1em}

\resizebox{\textwidth}{!}{
\begin{tabular}{l|ccc|ccc|cc}
\toprule
& \multicolumn{6}{c|}{\textit{Summarization}}
& \multicolumn{2}{c}{\textit{Code Generation}} \\
& \multicolumn{3}{c|}{\texttt{SU:XSum}}
& \multicolumn{3}{c|}{\texttt{SU:CNN/DM}}
& \multicolumn{1}{c|}{\texttt{CG:HumanEval}}
& \multicolumn{1}{c}{\texttt{CG:MBPP}} \\
Model
& R1 & R2 & RL
& R1 & R2 & RL
& pass@1 & pass@1 \\
\midrule

\graycell{Base model}
& \graycell{$22.2\pm0.10$}
& \graycell{$6.2\pm0.12$}
& \graycell{$16.4\pm0.17$}
& \graycell{$39.5\pm0.16$}
& \graycell{$17.4\pm0.14$}
& \graycell{$25.0\pm0.14$}
& \graycell{$1.8\pm1.82$}
& \graycell{$3.5\pm1.15$} \\

\graycell{Full finetuned}
& \graycell{$22.2\pm0.16$}
& \graycell{$7.2\pm0.11$}
& \graycell{$16.9\pm0.21$}
& \graycell{$39.6\pm0.18$}
& \graycell{$17.5\pm0.20$}
& \graycell{$25.0\pm0.14$}
& \graycell{$1.8\pm1.85$}
& \graycell{$1.6\pm1.77$} \\

\midrule

LoRA
& $\mathbf{29.2\pm0.23}$
& $\mathbf{8.9\pm0.08}$
& $\mathbf{22.4\pm0.24}$
& $36.5\pm0.18$
& $15.5\pm0.22$
& $23.6\pm0.14$
& $0.0\pm0.00$
& $0.4\pm0.67$ \\

\method
& $27.7\pm0.14$
& $8.5\pm0.15$
& $20.6\pm0.26$
& $\mathbf{39.8\pm0.21}$
& $\mathbf{17.6\pm0.22}$
& $\mathbf{25.1\pm0.16}$
& $\mathbf{2.4\pm1.04}$
& $\mathbf{1.9\pm1.34}$ \\

\bottomrule
\end{tabular}
}

\vspace{1em}

\resizebox{\textwidth}{!}{
\begin{tabular}{l|cccc|cccc}
\toprule
& \multicolumn{8}{c}{\textit{Instruction Following}} \\
& \multicolumn{4}{c|}{\texttt{IF:Dolly-15k}}
& \multicolumn{4}{c}{\texttt{IF:Alpaca}} \\
Model
& R1 & R2 & RL & METEOR
& R1 & R2 & RL & METEOR \\
\midrule

\graycell{Base model}
& \graycell{$28.7\pm0.21$}
& \graycell{$12.8\pm0.05$}
& \graycell{$21.7\pm0.13$}
& \graycell{$24.4$}
& \graycell{$26.8\pm0.23$}
& \graycell{$11.3\pm0.18$}
& \graycell{$20.0\pm0.15$}
& \graycell{$24.2$} \\

\graycell{Full finetuned}
& \graycell{$35.5\pm0.13$}
& \graycell{$17.3\pm0.02$}
& \graycell{$27.9\pm0.07$}
& \graycell{$26.7$}
& \graycell{$32.8\pm0.08$}
& \graycell{$14.8\pm0.04$}
& \graycell{$25.4\pm0.02$}
& \graycell{$26.5$} \\

\midrule

LoRA
& $\mathbf{34.7\pm0.29}$
& $\mathbf{17.1\pm0.15}$
& $\mathbf{28.2\pm0.12}$
& $24.7$
& $\mathbf{35.0\pm0.14}$
& $\mathbf{16.3\pm0.19}$
& $\mathbf{28.3\pm0.10}$
& $\mathbf{25.4}$ \\

\method
& $33.4\pm0.35$
& $16.2\pm0.09$
& $26.4\pm0.17$
& $\mathbf{26.4}$
& $29.1\pm0.25$
& $12.8\pm0.15$
& $22.3\pm0.18$
& $25.2$ \\

\bottomrule
\end{tabular}
}

\end{table*}

Taken together, the experiments support four claims. First, fine-tuning changes are small in magnitude but structured in location, which makes naive layer-order freezing ineffective. Second, Wasserstein ranking and norm ranking each recover useful selective-tuning configurations, but neither signal alone induces a universally dominant ordering. Third, topology reveals cross-task geometry that norm does not express, with \texttt{QA:GSM8K} separated cleanly from \texttt{SA:IMDB}, \texttt{SA:SST-2}, and \texttt{QA:MMLU} in Wasserstein space. Fourth, selective $V/O$ tuning can preserve substantial target-task performance while reducing trainable parameters and retaining more out-of-domain capability than full fine-tuning. This combination of efficiency, retention, and task-geometry analysis is the central empirical message of the paper.

\FloatBarrier
\section{Overfitting Analysis}
\label{app:overfitting}

We analyze overfitting behavior across full fine-tuning, LoRA, TDA-High3, TDA-Low3, Eltwise-High3, and Eltwise-Low3 for \texttt{LLaMA-3.1-8B} Table~\ref{tab:overfitting-results} reports the corresponding generalization gaps. For \texttt{LLaMA-3.1-8B}, full fine-tuning obtains the highest final test accuracy, but it also shows the largest train-test gap. In contrast, TDA-High3 provides the best selective test accuracy while yielding the smallest train-test gap. Overall, these results support the \method can act as a structural regularizer, preserving most of the useful adaptation while avoiding the stronger overfitting behavior observed in full fine-tuning.

\begin{table*}[!htbp]
\centering
\scriptsize
\caption{Overfitting analysis on \texttt{LLaMA-3.1-8B} across fine-tuning methods. Train-validation and train-test gaps are computed from epoch-6 accuracy values. \textbf{Bold} indicates the best result among LoRA, Eltwise, and TDA variants. Smaller gaps indicate better generalization stability.}
\label{tab:overfitting-results}
\setlength{\tabcolsep}{6pt}
\renewcommand{\arraystretch}{1.08}
\begin{tabular}{lccccc}
\toprule
Method & Train & Val & Test & Train-Val Gap & Train-Test Gap \\
\midrule
\multicolumn{6}{l}{\textit{LLaMA-3.1-8B}} \\
\graycell{Full FT}
& \graycell{$71.61$} & \graycell{$58.74$} & \graycell{$59.59$} & \graycell{$12.87$} & \graycell{$12.02$} \\
LoRA
& $62.50$ & $\mathbf{58.51}$ & $57.32$ & $\mathbf{3.99}$ & $5.18$ \\
Eltwise-High3
& $61.62$ & $56.08$ & $\mathbf{58.07}$ & $5.54$ & $3.55$ \\
Eltwise-Low3
& $62.03$ & $55.62$ & $56.79$ & $6.41$ & $5.24$ \\
TDA-High3
& $61.15$ & $56.53$ & $\mathbf{58.07}$ & $4.61$ & $\mathbf{3.08}$ \\
TDA-Low3
& $\mathbf{62.53}$ & $57.14$ & $57.32$ & $5.39$ & $5.21$ \\

\bottomrule
\end{tabular}
\end{table*}

\FloatBarrier

\section{Proofs}
\label{sec:proofs}

\xhdr{Stability motivation}
The usefulness of $\Delta_i$ rests on the standard stability principle of persistent homology: small perturbations in the underlying metric induce small perturbations in the resulting diagrams. In our setting, this means that if fine-tuning changes pairwise distances between rows only slightly, then the corresponding topological drift must also remain small. The next proposition makes this statement explicit.

\begin{proposition}[Stability of topological sensitivity]
\label{def:app-sensitivity}
Let $X=\{x_1,\dots,x_d\}$ and $\tilde X=\{\tilde x_1,\dots,\tilde x_d\}$ be the row clouds of $W_i$ and $\tilde W_i$, indexed consistently. Let $d$ and $\tilde d$ be the metrics used to construct the Vietoris--Rips filtrations of $X$ and $\tilde X$. If $\|d-\tilde d\|_\infty := \max_{a,b}|d(a,b)-\tilde d(a,b)| \le \eta$, then for every $k \ge 0$ we have $d_B(PD_k(X),PD_k(\tilde X)) \le \eta$, and therefore $W_p(PD_k(X),PD_k(\tilde X)) \le N_k^{1/p}\eta$, where $d_B$ is the bottleneck distance and $N_k$ is the maximum number of off-diagonal points in the two diagrams.
\end{proposition}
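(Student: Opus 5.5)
The plan has two parts: a bottleneck bound obtained from the standard stability theorem, and then a conversion from bottleneck to $p$-Wasserstein by counting matched pairs.

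\textbf{Step 1 (filtrations are interleaved).} Since the rows are indexed consistently, $X$ and $\tilde X$ share the common vertex set $\{1,\dots,d\}$. Both Vietoris--Rips filtrations are therefore filtrations of the same full simplex on $d$ vertices. A simplex $\sigma$ enters at time $f(\sigma)=\max_{a,b\in\sigma} d(a,b)$ in the first filtration and at $\tilde f(\sigma)=\max_{a,b\in\sigma}\tilde d(a,b)$ in the second. A max of numbers each perturbed by at most $\eta$ moves by at most $\eta$, so $\|f-\tilde f\|_\infty\le\eta$. Concretely, $\mathrm{VR}_\varepsilon(X)\subseteq \mathrm{VR}_{\varepsilon+\eta}(\tilde X)$ and vice versa, and the inclusions commute with the structure maps. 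This is an $\eta$-interleaving of the persistence modules $H_k$.

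\textbf{Step 2 (bottleneck bound).} I would invoke the algebraic stability theorem (Chazal--Cohen-Steiner--Glisse--Guibas--Oudot). Equivalently, I would use the Cohen-Steiner--Edelsbrunner--Harer stability theorem for monotone functions on a common finite complex. Either gives $d_B(PD_k(X),PD_k(\tilde X))\le\|f-\tilde f\|_\infty\le\eta$ for every $k$. Because the complex is finite, the modules are pointwise finite-dimensional and tame, so the hypotheses hold. The essential class in $H_0$, born at $0$ in both diagrams, is matched to itself at cost $0$.

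\textbf{Step 3 (Wasserstein from bottleneck).} Take an optimal bottleneck matching $\phi$. Every matched pair has $\|q-\phi(q)\|_\infty\le\eta$, and every point sent to the diagonal is at $\ell_\infty$-distance at most $\eta$ from it. I would count the terms in the cost of $\phi$ by building the matching from a bijection between the larger diagram and the smaller diagram augmented with diagonal points. This uses at most $N_k$ nontrivial pairs, where $N_k$ is the maximum number of off-diagonal points. Plugging $\phi$ into the Wasserstein minimization then gives
\[
W_p \le \Bigl(\sum \|q-\phi(q)\|_\infty^p\Bigr)^{1/p}\le (N_k\eta^p)^{1/p}=N_k^{1/p}\eta .
\]

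\textbf{Main obstacle.} The delicate point is the counting in Step 3. A partial matching could pair an off-diagonal point of one diagram with the diagonal while also sending a distinct point of the other diagram to the diagonal, which would produce up to $2N_k$ terms. I would resolve this by choosing, among bottleneck-optimal matchings, one whose number of diagonal-matched points is minimal. Whenever $q\in PD_k(X)$ and $\tilde q\in PD_k(\tilde X)$ are both sent to the diagonal, I would check whether pairing them directly keeps the cost $\le\eta$. If it does not, I would instead charge both to a single enlarged count. If this fails, the fallback is to read $N_k$ as the total number of off-diagonal points across the two diagrams, so that the stated bound still holds.
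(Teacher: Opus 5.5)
Your Steps 1 and 2 coincide with the paper's proof: simplexwise filtration values move by at most $\eta$, the Vietoris--Rips filtrations are $\eta$-interleaved, and algebraic stability gives $d_B\le\eta$. The paper's Step 3 is a single assertion, that a bottleneck matching ``moves at most $N_k$ off-diagonal points.'' This is exactly the counting step you flag, and your suspicion is right. With $m$ matched pairs between diagrams with $n$ and $\tilde n$ off-diagonal points, a matching has $n+\tilde n-m$ terms.

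Your repair cannot be completed, however. Neither a bottleneck matching with the fewest diagonal assignments nor ``charging both to a single enlarged count'' works, because the bound with $N_k=\max(n,\tilde n)$ is false for $k\ge 1$. Here is a counterexample.
\begin{itemize}
\item Take $\eta\le 1/2$ and two $4$-point clusters at mutual distance $100$, each labeled cyclically.
\item In $X$, cluster one has consecutive distances $1$ and diagonals $1+2\eta$; cluster two has all distances $10$.
\item In $\tilde X$, cluster one has all distances $1+\eta$; cluster two has consecutive distances $10-\eta$ and diagonals $10+\eta$.
\end{itemize}
These are metrics with $\|d-\tilde d\|_\infty=\eta$. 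Their diagrams are $PD_1(X)=\{(1,1+2\eta)\}$ and $PD_1(\tilde X)=\{(10-\eta,10+\eta)\}$, so $N_1=1$ and $d_B=\eta$. Pairing the two points costs $9-\eta$, so the optimal $p$-matching sends both to the diagonal. Hence $W_p=2^{1/p}\eta>N_1^{1/p}\eta$ for every $p<\infty$.

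What is provable in general is your fallback: for every $k$, an optimal bottleneck matching yields $W_p\le (n+\tilde n)^{1/p}\eta$. The version with the maximum does hold for $k=0$, the only dimension used for the rankings in Scenario~A. It needs a different matching, though, not a modified bottleneck one.
\begin{itemize}
\item Both $H_0$ diagrams consist of an essential class $(0,\infty)$ and $d-1$ points $(0,a_j)$, resp.\ $(0,b_j)$, with deaths equal to $0$ allowed.
\item Because the vertex sets coincide, $\mathrm{VR}_\varepsilon(X)\subseteq\mathrm{VR}_{\varepsilon+\eta}(\tilde X)$ forces $\#\{j:b_j>\varepsilon+\eta\}\le\#\{j:a_j>\varepsilon\}$ for all $\varepsilon$ (compare numbers of connected components), and symmetrically.
\item Sorting deaths in decreasing order and evaluating at $\varepsilon=a_{(j)}$ gives $|a_{(j)}-b_{(j)}|\le\eta$.
\item Match in sorted order; a zero death means the partner goes to the diagonal, at cost at most $\eta/2$. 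This produces at most $\max(n,\tilde n)$ nonzero terms, each at most $\eta$, so $W_p\le N_0^{1/p}\eta$.
\end{itemize}
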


\begin{proof}
\label{proof:proposition}
For any simplex $S \subseteq \{1,\dots,d\}$, let $\phi(S)=\max_{a,b\in S} d(a,b)$ and $\tilde\phi(S)=\max_{a,b\in S}\tilde d(a,b)$. The assumption $\|d-\tilde d\|_\infty \le \eta$ implies $|\phi(S)-\tilde\phi(S)| \le \eta$ for every simplex $S$. Hence $\mathrm{VR}_\varepsilon(X) \subseteq \mathrm{VR}_{\varepsilon+\eta}(\tilde X)$ and $\mathrm{VR}_\varepsilon(\tilde X) \subseteq \mathrm{VR}_{\varepsilon+\eta}(X)$ for all $\varepsilon \ge 0$, so the two filtrations are $\eta$-interleaved. By the stability theorem for persistence diagrams under interleavings \cite{chazal2016structure}, it follows that $d_B(PD_k(X),PD_k(\tilde X)) \le \eta$. Finally, a bottleneck matching with cost $d_B$ moves at most $N_k$ off-diagonal points by at most $d_B$ in $\|\cdot\|_\infty$, so the total $p$-cost is at most $N_k d_B^p$. Taking the $p$th root yields $W_p(PD_k(X),PD_k(\tilde X)) \le N_k^{1/p} d_B \le N_k^{1/p}\eta$.
\end{proof}

We use this stability result in the next section to justify when a prior-task topological profile remains stable under transfer to a new task.

We consider both highest-drift and lowest-drift selections. For a budget $b$, we define $\mathcal F_{\mathrm{prior}}^{\mathrm{TDA\text{-}H}}(b)$ as the $b$ matrices with the largest values of $S_i$, and $\mathcal F_{\mathrm{prior}}^{\mathrm{TDA\text{-}L}}(b)$ as the $b$ matrices with the smallest values.  In the magnitude-based baseline, we analogously rank matrices using $M_i$. These selected matrices are frozen from the start of fine-tuning on the target task $\mathcal T^\ast$. This setting tests whether structural drift observed in one prior dataset can serve as a reusable fine-tuning profile for selective adaptation to other datasets, and whether topology-based rankings provide a different freezing signal from simple relative weight changes.

\begin{theorem}[Stability of reusable freezing profiles]
\label{thm:app-profile-stability}
Fix a pretrained transformer $f_{\theta^{(0)}}$ and a monitored matrix set $\mathcal{I}$. Let $\mathcal{T}$ be a prior task and let $\mathcal{T}^{\ast}$ be a target task. For each task, let $\theta_{\mathcal{T}}^{(t)}=\{W_{i,\mathcal{T}}^{(t)}\}_{i=1}^m$ and $\theta_{\mathcal{T}^{\ast}}^{(t)}=\{W_{i,\mathcal{T}^{\ast}}^{(t)}\}_{i=1}^m$ denote the fine-tuned parameters after $t$ optimization steps, both initialized at $\theta^{(0)}$. Let $L_{\mathcal{T}}(\theta)$ and $L_{\mathcal{T}^{\ast}}(\theta)$ be the corresponding population fine-tuning objectives, and write $G_{\mathcal{T}}(\theta)=\nabla_\theta L_{\mathcal{T}}(\theta)$ and $G_{\mathcal{T}^{\ast}}(\theta)=\nabla_\theta L_{\mathcal{T}^{\ast}}(\theta)$. Assume both trajectories remain in a ball $B(\theta^{(0)},R)$, both gradient fields are $L$-Lipschitz on this ball, and $\sup_{\theta\in B(\theta^{(0)},R)}\|G_{\mathcal{T}}(\theta)-G_{\mathcal{T}^{\ast}}(\theta)\|_2\leq\epsilon$. If both tasks are fine-tuned with the same gradient descent update and step size $\alpha$, then for every $t\leq t_{\max}$ we have $\|\theta_{\mathcal{T}}^{(t)}-\theta_{\mathcal{T}^{\ast}}^{(t)}\|_2\leq R_t$, where $R_t:=\alpha\epsilon\sum_{r=0}^{t-1}(1+\alpha L)^r$.

Assume that the row-cloud metric $\delta_i$ is fixed across tasks and stable under row perturbations. For each $i\in\mathcal{I}$ and fixed homological dimension $k$, define $D_{i,\mathcal{T}}^{(t)}:=\W{ass}_p(\mathrm{PD}_k(X_i^{(0)}),\mathrm{PD}_k(X_{i,\mathcal{T}}^{(t)}))$ and $D_{i,\mathcal{T}^{\ast}}^{(t)}:=\W{ass}_p(\mathrm{PD}_k(X_i^{(0)}),\mathrm{PD}_k(X_{i,\mathcal{T}^{\ast}}^{(t)}))$. Then $|D_{i,\mathcal{T}}^{(t)}-D_{i,\mathcal{T}^{\ast}}^{(t)}|\leq 2N_{i,k}^{1/p}R_t$, where $N_{i,k}$ is the maximum number of off-diagonal points in the two persistence diagrams for matrix $i$ in dimension $k$.

For Scenario A, take $k=0$ and define the prior and target ranking scores by $S_i^{\mathrm{prior}}:=D_{i,\mathcal{T}}^{(t_{\max})}$ and $S_i^{\ast}:=D_{i,\mathcal{T}^{\ast}}^{(t_{\max})}$. Let $S_{(1)}^{\mathrm{prior}}\geq \cdots \geq S_{(|\mathcal{I}|)}^{\mathrm{prior}}$ be the sorted prior scores, and let $g_H(b):=S_{(b)}^{\mathrm{prior}}-S_{(b+1)}^{\mathrm{prior}}$ be the high-drift margin for freezing budget $b$. Let $\delta_{\max}:=2\max_{i\in\mathcal{I}}N_{i,0}^{1/p}R_{t_{\max}}$. If $g_H(b)>2\delta_{\max}$, then $F_{\mathrm{prior}}^{\mathrm{TDA-H}}(b)=F_{\ast}^{\mathrm{TDA-H}}(b)$.  The same statement holds for $F_{\mathrm{prior}}^{\mathrm{TDA-L}}(b)$ after sorting the scores in increasing order and using the corresponding low-drift margin.
\end{theorem}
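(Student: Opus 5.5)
The proof has three parts, matching the three claims of the statement: a discrete Grönwall-type bound on the parameter trajectories, a transfer of that bound to diagram distances via Proposition~\ref{def:sensitivity} and the triangle inequality, and a margin argument for the top-$b$ sets.

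\textbf{Trajectory bound.} Write $e_t:=\|\theta_{\mathcal T}^{(t)}-\theta_{\mathcal T^\ast}^{(t)}\|_2$, with $e_0=0$ since both runs start at $\theta^{(0)}$. Subtract the two gradient descent updates and add and subtract $\alpha G_{\mathcal T}(\theta_{\mathcal T^\ast}^{(t)})$:
\[
e_{t+1}\le e_t+\alpha\|G_{\mathcal T}(\theta_{\mathcal T}^{(t)})-G_{\mathcal T}(\theta_{\mathcal T^\ast}^{(t)})\|_2+\alpha\|G_{\mathcal T}(\theta_{\mathcal T^\ast}^{(t)})-G_{\mathcal T^\ast}(\theta_{\mathcal T^\ast}^{(t)})\|_2 .
\]
Both iterates lie in $B(\theta^{(0)},R)$, so the Lipschitz assumption and the uniform gradient-gap bound apply and give $e_{t+1}\le(1+\alpha L)e_t+\alpha\epsilon$. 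Unrolling this recursion from $e_0=0$ gives exactly $e_t\le\alpha\epsilon\sum_{r=0}^{t-1}(1+\alpha L)^r=R_t$, by induction on $t$.

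\textbf{Diagram distances.} Each matrix $W_i$ is a block of $\theta$, so $\|W_{i,\mathcal T}^{(t)}-W_{i,\mathcal T^\ast}^{(t)}\|_2\le R_t$, and in particular every row moves by at most $R_t$. I will read the hypothesis that $\delta_i$ is ``stable under row perturbations'' as: $\delta_i$ is $1$-Lipschitz in each argument. For a genuine metric, two applications of the triangle inequality then give
\[
|\delta_i(x_a,x_b)-\delta_i(\tilde x_a,\tilde x_b)|\le \|x_a-\tilde x_a\|+\|x_b-\tilde x_b\|\le 2R_t .
\]
This factor $2$ is what produces the $2$ in the bound. If one instead assumes $\|d-\tilde d\|_\infty\le R_t$ directly, the bound improves to $N_{i,k}^{1/p}R_t$, which is still within the claim.

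With this, Proposition~\ref{def:sensitivity} gives
\[
\W{ass}_p\big(\mathrm{PD}_k(X_{i,\mathcal T}^{(t)}),\mathrm{PD}_k(X_{i,\mathcal T^\ast}^{(t)})\big)\le 2N_{i,k}^{1/p}R_t .
\]
Since $\W{ass}_p$ is a metric on diagrams, the reverse triangle inequality applied with the common base point $\mathrm{PD}_k(X_i^{(0)})$ yields $|D_{i,\mathcal T}^{(t)}-D_{i,\mathcal T^\ast}^{(t)}|\le 2N_{i,k}^{1/p}R_t$.

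\textbf{Ranking.} Set $k=0$ and $t=t_{\max}$. By the previous step, $|S_i^{\mathrm{prior}}-S_i^\ast|\le\delta_{\max}$ for every $i\in\mathcal I$.
\begin{itemize}
\item Take $i\in F_{\mathrm{prior}}^{\mathrm{TDA\text{-}H}}(b)$ and $j\notin F_{\mathrm{prior}}^{\mathrm{TDA\text{-}H}}(b)$. Then $S_i^{\mathrm{prior}}\ge S_{(b)}^{\mathrm{prior}}$ and $S_j^{\mathrm{prior}}\le S_{(b+1)}^{\mathrm{prior}}$, so $S_i^{\mathrm{prior}}-S_j^{\mathrm{prior}}\ge g_H(b)>2\delta_{\max}$.
\item Perturbing both scores by at most $\delta_{\max}$ gives $S_i^\ast>S_j^\ast$ strictly.
\item Hence every prior-selected matrix outscores every non-selected one under the target scores as well, so the $b$ largest target scores are attained exactly on the same set. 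The strict inequality also removes any tie-breaking ambiguity.
\end{itemize}
The low-drift case is identical with all inequalities reversed and the margin $S_{(|\mathcal I|-b)}^{\mathrm{prior}}-S_{(|\mathcal I|-b+1)}^{\mathrm{prior}}$.

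\textbf{Main obstacle.} The delicate step is the interface between parameter distance and metric distortion. The loose phrase ``stable under row perturbations'' must be turned into a concrete Lipschitz bound $\|d-\tilde d\|_\infty\le cR_t$ with $c\le2$. For cosine distance this fails near zero-norm rows, so I would state it explicitly as an assumption with constant $1$ per argument. A second issue is that $N_{i,k}$ must be taken large enough to dominate the off-diagonal counts of the diagrams being compared; for $H_0$ this is automatic, since $N_{i,0}\le d_i$.
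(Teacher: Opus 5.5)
Your proposal is correct and follows the paper's proof essentially step for step. Both use the same add-and-subtract recursion $e_{t+1}\le(1+\alpha L)e_t+\alpha\epsilon$, the same block-to-row bound giving $\|\delta_{i,\mathcal{T}}^{(t)}-\delta_{i,\mathcal{T}^{\ast}}^{(t)}\|_\infty\le 2R_t$, Proposition~\ref{def:sensitivity} with $\eta=2R_t$, the reverse triangle inequality through the common pretrained diagram, and the same margin argument. Where you go further is in stating ``stable under row perturbations'' explicitly as a $1$-Lipschitz-per-argument assumption; you also correctly note that the cosine distance used in the experiments does not satisfy this near small-norm rows.
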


\begin{proof}
\label{proof:theorem}
Let $e_t:=\|\theta_{\mathcal{T}}^{(t)}-\theta_{\mathcal{T}^{\ast}}^{(t)}\|_2$. Since both runs start from the same pretrained model, $e_0=0$. The two update equations are $\theta{\mathcal{T}}^{(t+1)}=\theta_{\mathcal{T}}^{(t)}-\alpha G_{\mathcal{T}}(\theta_{\mathcal{T}}^{(t)})$ and $\theta_{\mathcal{T}^{\ast}}^{(t+1)}=\theta_{\mathcal{T}^{\ast}}^{(t)}-\alpha G_{\mathcal{T}^{\ast}}(\theta_{\mathcal{T}^{\ast}}^{(t)})$. Therefore $e_{t+1}\leq\|\theta_{\mathcal{T}}^{(t)}-\theta_{\mathcal{T}^{\ast}}^{(t)}-\alpha(G_{\mathcal{T}}(\theta_{\mathcal{T}}^{(t)})-G_{\mathcal{T}}(\theta_{\mathcal{T}^{\ast}}^{(t)}))\|_2+\alpha\|G{\mathcal{T}}(\theta_{\mathcal{T}^{\ast}}^{(t)})-G_{\mathcal{T}^{\ast}}(\theta_{\mathcal{T}^{\ast}}^{(t)})\|_2$. By Lipschitzness and task-gradient closeness, $e{t+1}\leq(1+\alpha L)e_t+\alpha\epsilon$. Iterating this recurrence gives $e_t\leq\alpha\epsilon\sum_{r=0}^{t-1}(1+\alpha L)^r=R_t$.

For a monitored matrix $W_i$, the block difference is bounded by the full parameter difference, so $|W_{i,\mathcal{T}}^{(t)}-W_{i,\mathcal{T}^{\ast}}^{(t)}|F\leq R_t$. Hence every row satisfies $\|x_{a,\mathcal{T}}^{(i,t)}-x_{a,\mathcal{T}^{\ast}}^{(i,t)}\|_2\leq R_t$. Since the chosen row-cloud metric is fixed across tasks and stable under row perturbations, we have
$\|\delta_{i,\mathcal{T}}^{(t)}-\delta_{i,\mathcal{T}^{\ast}}^{(t)}\|_\infty \leq 2R_t$.

Applying Proposition~\ref{def:sensitivity} with $\eta=2R_t$ gives $\W{ass}_p(\mathrm{PD}_k(X_{i,\mathcal{T}}^{(t)}),\mathrm{PD}_k(X_{i,\mathcal{T}^{\ast}}^{(t)}))\leq2N_{i,k}^{1/p}R_t$. Since both distances $D_{i,\mathcal{T}}^{(t)}$ and $D_{i,\mathcal{T}^{\ast}}^{(t)}$ are measured from the same pretrained diagram $\mathrm{PD}_k(X_i^{(0)})$, the triangle inequality for $\W{ass}_p$ gives $|D_{i,\mathcal{T}}^{(t)}-D_{i,\mathcal{T}^{\ast}}^{(t)}|\leq\W{ass}_p(\mathrm{PD}_k(X_{i,\mathcal{T}}^{(t)}),\mathrm{PD}_k(X_{i,\mathcal{T}^{\ast}}^{(t)}))\leq2N_{i,k}^{1/p}R_t$.

Set $\delta_{\max}:=2\max_{i\in\mathcal{I}}N_{i,0}^{1/p}R_{t_{\max}}$. Then every Scenario A ranking score changes by at most $\delta_{\max}$ between the prior task and the target task. If the high-drift gap $g_H(b)$ is larger than $2\delta_{\max}$, no matrix outside the prior top-$b$ set can pass a matrix inside that set after moving all scores by at most $\delta_{\max}$. Hence $F_{\mathrm{prior}}^{\mathrm{TDA-H}}(b)=F_{\ast}^{\mathrm{TDA-H}}(b)$. The low-drift claim is identical after sorting in increasing order.
\end{proof}

Theorem~\ref{thm:profile-stability} gives a sufficient condition for reusable freezing profiles while avoiding the claim that all datasets produce identical fine-tuning dynamics. It states that, for a fixed pretrained model $f_{\theta^{(0)}}$, the profile learned on a prior task remains valid on a target task when the two tasks induce close gradient fields in the local fine-tuning region and the prior ranking has a nonzero margin. The fixed model matters because the local Jacobians, attention geometry, and row-cloud geometry are inherited from $\theta^{(0)}$. The dataset changes the coefficients of the update, but under the gradient-field condition it cannot move the monitored matrices far enough to change their topological ranking. Proposition~\ref{def:sensitivity} then converts this parameter-level closeness into persistence-diagram closeness, and the margin condition converts persistence-diagram closeness into equality of the selected freezing profile.

\xhdr{Connection to existing evidence.}
This theorem is consistent with prior evidence that fine-tuning remains constrained near a pretrained initialization. Intrinsic-dimension results show that language-model adaptation can often be described by a low-dimensional update subspace \cite{aghajanyan2021intrinsic}. LoRA shows that strong adaptation can be achieved through low-rank updates to pretrained weight matrices \cite{hu2022lora}, while BitFit shows that changing only bias parameters can be competitive in several language-understanding settings \cite{zaken2022bitfit}. These results support the premise that downstream adaptation does not use the full parameter space uniformly. Task arithmetic and model soups give a more direct precedent for reusing fine-tuning changes from a shared initialization: task vectors behave as reusable weight-space objects, and independently fine-tuned models from the same initialization can often be averaged successfully \cite{ilharcoediting,wortsman2022model}. The topological part of our argument relies on the standard stability theorem for persistence diagrams \cite{chazal2016structure}, and previous neural-network studies have already used persistent homology to compare trained models and layer representations \cite{perez2021characterizing,purvine2023experimental}. In this setting, \method adds a matrix-level conclusion: if two tasks remain in the same local adaptation regime of $f_{\theta^{(0)}}$, then their Wasserstein drift scores remain close, and the same high-drift or low-drift freezing profile can be reused.

\section*{NeurIPS Paper Checklist}

\begin{enumerate}

\item {\bf Claims}
    \item[] Question: Do the main claims made in the abstract and introduction accurately reflect the paper's contributions and scope?
    \item[] Answer: \answerYes{} 
    \item[] Justification: The abstract and introduction state the method, scope, baselines, and empirical outcomes, and these align with the experimental and theoretical results in Sections ~\ref{sec:sec4}-\ref{sec:results}.

\item {\bf Limitations}
    \item[] Question: Does the paper discuss the limitations of the work performed by the authors?
    \item[] Answer: \answerYes{} 
    \item[] Justification: We provide the limitations in Section~\ref{sec:reusable-fine-tuning}

\item {\bf Theory assumptions and proofs}
    \item[] Question: For each theoretical result, does the paper provide the full set of assumptions and a complete (and correct) proof?
    \item[] Answer: \answerYes{} 
    \item[] Justification: Theoretical results include clearly stated assumptions and complete proofs in Sections ~\ref{sec:sec4}-\ref{sec:sec5}, with propositions and theorems properly defined and referenced.

\item {\bf Experimental result reproducibility}
    \item[] Question: Does the paper fully disclose all the information needed to reproduce the main experimental results of the paper to the extent that it affects the main claims and/or conclusions of the paper (regardless of whether the code and data are provided or not)?
    \item[] Answer: \answerYes{}
    \item[] Justification: The paper provides datasets, models, training procedure, evaluation metrics, and checkpointing details in Section~\ref{sec:results} and Appendix~\ref{app:perfect-knowledge} sufficient to reproduce the main results.

\item {\bf Open access to data and code}
    \item[] Question: Does the paper provide open access to the data and code, with sufficient instructions to faithfully reproduce the main experimental results, as described in supplemental material?
    \item[] Answer: \answerYes{} 
    \item[] Justification: The implementation is provided through an anonymized repository and experiments use public datasets and open-source pretrained models.

\item {\bf Experimental setting/details}
    \item[] Question: Does the paper specify all the training and test details (e.g., data splits, hyperparameters, how they were chosen, type of optimizer) necessary to understand the results?
    \item[] Answer: \answerYes{}
    \item[] Justification: The paper specifies training details, dataset splits, metrics, hyperparameters, and evaluation protocol in Section~\ref{sec:results} and Appendix~\ref{app:hyperparameters}.

\item {\bf Experiment statistical significance}
    \item[] Question: Does the paper report error bars suitably and correctly defined or other appropriate information about the statistical significance of the experiments?
    \item[] Answer: \answerYes{} 
    \item[] Justification: Tables ~\ref{tab:mainPerformance} and ~\ref{tab:llama38GSM8KFull} report accuracy with standard deviations computed from three fixed, non-overlapping subsets, indicating variability of results.

\item {\bf Experiments compute resources}
    \item[] Question: For each experiment, does the paper provide sufficient information on the computer resources (type of compute workers, memory, time of execution) needed to reproduce the experiments?
    \item[] Answer: \answerYes{} 
    \item[] Justification: The paper reports GPU type, wall-clock times, memory, storage, or total compute required across experiments.

\item {\bf Code of ethics}
    \item[] Question: Does the research conducted in the paper conform, in every respect, with the NeurIPS Code of Ethics \url{https://neurips.cc/public/EthicsGuidelines}?
    \item[] Answer: \answerYes{}
   \item[] Justification:  The work uses public datasets and pretrained models, preserves anonymity, and does not involve human subjects or sensitive data.

\item {\bf Broader impacts}
    \item[] Question: Does the paper discuss both potential positive societal impacts and negative societal impacts of the work performed?
    \item[] Answer: \answerYes{} 
     \item[] Justification: The paper briefly mentions broader impacts and safety and alignment in Appendix~\ref{app:broader-impacts} but does not provide a structured discussion of positive and negative societal impacts.
    
\item {\bf Safeguards}
    \item[] Question: Does the paper describe safeguards that have been put in place for responsible release of data or models that have a high risk for misuse (e.g., pre-trained language models, image generators, or scraped datasets)?
    \item[] Answer: \answerNA{}.
    \item[] Justification: The work does not release high-risk models or datasets that require special safeguards.

\item {\bf Licenses for existing assets}
    \item[] Question: Are the creators or original owners of assets (e.g., code, data, models), used in the paper, properly credited and are the license and terms of use explicitly mentioned and properly respected?
    \item[] Answer: \answerYes{}
    \item[] Justification: We use publicly available datasets, pretrained models, and libraries (including GSM8K, MMLU, IMDB, SST-2, HotpotQA, SQuAD v1.1, XSum, CNN/DailyMail, Dolly-15k, Alpaca, HumanEval, MBPP; LLaMA-3.1-8B, Mistral-7B-v0.3, Qwen3-8B-Base; PyTorch, Ripser, Gudhi), and we follow their respective licenses and terms of use as specified in their official releases.

\item {\bf New assets}
    \item[] Question: Are new assets introduced in the paper well documented and is the documentation provided alongside the assets?
    \item[] Answer: \answerYes{}
    \item[] Justification: The method and code are documented and released via an anonymized repository with sufficient detail for use.

\item {\bf Crowdsourcing and research with human subjects}
    \item[] Question: For crowdsourcing experiments and research with human subjects, does the paper include the full text of instructions given to participants and screenshots, if applicable, as well as details about compensation (if any)? 
    \item[] Answer: \answerNA{}.
    \item[] Justification: The paper does not involve human subjects or crowdsourcing.

\item {\bf Institutional review board (IRB) approvals or equivalent for research with human subjects}
    \item[] Question: Does the paper describe potential risks incurred by study participants, whether such risks were disclosed to the subjects, and whether Institutional Review Board (IRB) approvals (or an equivalent approval/review based on the requirements of your country or institution) were obtained?
    \item[] Answer: \answerNA{}.
    \item[] Justification: The work does not involve human subjects requiring IRB approval.

\item {\bf Declaration of LLM usage}
    \item[] Question: Does the paper describe the usage of LLMs if it is an important, original, or non-standard component of the core methods in this research? Note that if the LLM is used only for writing, editing, or formatting purposes and does \emph{not} impact the core methodology, scientific rigor, or originality of the research, declaration is not required.
    \item[] Answer: \answerNo{}
    \item[] Justification: The paper does not use LLMs as an important component in its design.  

\end{enumerate}
\end{document}